\newif\iftwocol
\newcommand{\xmark}{\ding{55}}%
\newcommand{\nosemic}{\renewcommand{\@endalgocfline}{\relax}}% Drop semi-colon ;
\newcommand{\dosemic}{\renewcommand{\@endalgocfline}{\algocf@endline}}% Reinstate semi-colon ;
\let\oldnl\nl% Store \nl in \oldnl
\newcommand{\nonl}{\renewcommand{\nl}{\let\nl\oldnl}}% Remove line number for one line
\theoremstyle{plain}
\newtheorem{theorem}{Theorem}
\newtheorem*{theorem*}{Theorem}
\newtheorem{lemma}{Lemma}[section]
\newtheorem{cor}{Corollary}[theorem]
\newtheorem*{cor*}{Corollary}
\theoremstyle{definition}
\newtheorem{definition}{Definition}
\newtheorem{assump}{Assumption}
\theoremstyle{remark}
\newtheorem{remark}{Remark}
\newtheorem*{rem}{Remark}
\newcommand{\mc}{\mathcal}
\newcommand{\mco}{\mathcal O}
\newcommand{\mbb}{\mathbb}
\newcommand{\mbf}{\mathbf}
\newcommand{\mbe}{\mathbb E}
\newcommand{\lp}{\left(}
\newcommand{\rp}{\right)}
\newcommand{\lcb}{\left\{}
\newcommand{\rcb}{\right\}}
\newcommand{\lb}{\left[}
\newcommand{\rb}{\right]}
\newcommand{\lnr}{\left\|}
\newcommand{\rnr}{\right\|}
\newcommand{\lan}{\left\langle}
\newcommand{\ran}{\right\rangle}
\newcommand\norm[1]{\lnr#1\rnr}
\newcommand\normb[1]{\big\|#1\big\|}
\newcommand{\bx}{{\mathbf x}}
\newcommand{\bxt}{{\mathbf x^{(t)}}}
\newcommand{\bxtp}{{\mathbf x^{(t+1)}}}
\newcommand{\bxit}{{\mathbf x_{i}^{(t)}}}
\newcommand{\bxitk}{{\mathbf{x}_{i}^{(t,k)}}}
\newcommand{\bxitj}{{\mathbf{x}_{i}^{(t,j)}}}
\newcommand{\bxitkp}{{\mathbf{x}_{i}^{(t,k+1)}}}
\newcommand{\Tbx}{{\widetilde{\bx}}}
\newcommand{\bbx}{\Bar{\bx}}
\newcommand{\bbxT}{\Bar{\bx}^{(T)}}
\newcommand{\bbxt}{\Bar{\bx}^{(t)}}
\newcommand{\Hbx}{{\widehat{\bx}}}
\newcommand{\Hbxs}{\Hbx^{(s)}}
\newcommand{\by}{{\mathbf y}}
\newcommand{\byt}{{\mathbf y^{(t)}}}
\newcommand{\bytp}{{\mathbf y^{(t+1)}}}
\newcommand{\byit}{{\mathbf y_{i}^{(t)}}}
\newcommand{\byitk}{{\mathbf{y}_{i}^{(t,k)}}}
\newcommand{\byitj}{{\mathbf{y}_{i}^{(t,j)}}}
\newcommand{\byitkp}{{\mathbf{y}_{i}^{(t,k+1)}}}
\newcommand{\Tby}{{\widetilde{\by}}}
\newcommand{\bd}{{\mathbf g}}
\newcommand{\bdxt}{{\bd^{(t)}_{\bx}}}
\newcommand{\bdxit}{{\bd^{(t)}_{\bx,i}}}
\newcommand{\bhxit}{{\mathbf h^{(t)}_{\bx,i}}}
\newcommand{\bhxjt}{{\mathbf h^{(t)}_{\bx,j}}}
\newcommand{\bdyt}{{\bd^{(t)}_{\by}}}
\newcommand{\bdyit}{{\bd^{(t)}_{\by,i}}}
\newcommand{\bhyit}{{\mathbf h^{(t)}_{\by,i}}}
\newcommand{\xiitj}{{\xi_{i}^{(t,j)}}}
\newcommand{\xiitk}{{\xi_{i}^{(t,k)}}}
\newcommand{\xiit}{{\xi_{i}^{(t)}}}
\newcommand{\fedsgda}{Fed-Norm-SGDA}
\newcommand{\fedsgdaplus}{Fed-Norm-SGDA+}
\newcommand{\Aw}{A_w}
\newcommand{\Bw}{B_w}
\newcommand{\Cw}{C_w}
\newcommand{\Dw}{D}
\newcommand{\Ew}{E_w}
\newcommand{\Fw}{F_w}
\newcommand{\Dp}{D}
\newcommand{\bai}{\boldsymbol{a}_i}
\newcommand{\nai}{{\left\| \bai \right\|}}
\newcommand{\bait}{{\bar{\boldsymbol{a}}_{i}^{(t)}}}
\newcommand{\bajt}{{\bar{\boldsymbol{a}}_{j}^{(t)}}}
\newcommand{\nait}{{\| \bait \|}}
\newcommand{\najt}{{\| \bajt \|}}
\newcommand{\aijk}{a_{i}^{(j)} (k)}
\newcommand{\aitk}{a_{i}^{(t,k)}}
\newcommand{\aik}{a_{i}^{(k)}}
\newcommand{\aikt}{a_{i}^{(k)} (\sync_i)}
\newcommand{\clientset}{\mc C^{(t)}}
\newcommand{\St}{\clientset}
\newcommand{\obj}{F}
\newcommand{\surloss}{\widetilde{F}}
\newcommand{\csqdist}{\chi^2_{\mathbf{p}\|\mathbf{w}}}
\newcommand{\TPhi}{{\widetilde{\Phi}}}
\newcommand{\TF}{{\widetilde{F}}}
\newcommand{\sumin}{\sum_{i=1}^n}
\newcommand{\sumintext}{\textstyle \sum_{i=1}^n}
\newcommand{\sumijk}{\sum_{j=0}^{k-1}}
\newcommand{\sumikt}{\sum_{k=0}^{\sync_i-1}}
\newcommand{\sumiktt}{\sum_{k=0}^{\syncit-1}}
\newcommand{\sumtT}{\sum_{t=0}^{T-1}}
\newcommand{\sumtTtext}{\textstyle \sum_{t=0}^{T-1}}
\newcommand{\sumiS}{\sum_{i \in \St}}
\newcommand{\avgtT}{\frac{1}{T} \sumtT}
\newcommand{\TMa}{M_{\mbf a_{-1}}}
\newcommand{\G}{\nabla}
\newcommand{\Gx}{\nabla_x}
\newcommand{\Gy}{\nabla_y}
\newcommand{\CExytk}{\Delta^{(t,k)}_{\bx,\by}}
\newcommand{\CExytj}{\Delta^{(t,j)}_{\bx,\by}}
\newcommand{\CEytk}{\Delta^{(t,k)}_{\by}}
\newcommand{\Lp}{L_{\Phi}}
\newcommand{\Lf}{L_f}
\newcommand{\lrcx}{\eta_x^c}
\newcommand{\lrcy}{\eta_y^c}
\newcommand{\lrcxsq}{[\lrcx]^2}
\newcommand{\lrcysq}{[\lrcy]^2}
\newcommand{\lrsx}{\gamma_x^s}
\newcommand{\lrsy}{\gamma_y^s}
\newcommand{\lrsxsq}{[\lrsx]^2}
\newcommand{\lrsysq}{[\lrsy]^2}
\newcommand{\sync}{\tau}
\newcommand{\syncit}{\tau_i^{(t)}}
\newcommand{\seff}{\sync_{\text{eff}}}
\newcommand{\sefft}{\sync_{\text{eff}}^{(t)}}
\newcommand{\localvar}{\sigma_L}
\newcommand{\varscale}{\beta_L}
\newcommand{\hetero}{\sigma_G}
\newcommand{\heteroscale}{\beta_G}
\newcommand{\selclients}{P}
\newcommand{\numclients}{n}
\newcommand{\wi}{w_i}
\newcommand{\wj}{w_j}
\newcommand{\twi}{\Tilde{w}_i}
\newcommand{\twj}{\Tilde{w}_j}
\newcommand{\nn}{\nonumber}
\DeclareMathOperator*{\argmin}{arg\,min}
\DeclareMathOperator*{\argmax}{arg\,max}
\tikzstyle{startstop} = [rectangle, draw, rounded corners, align=center, minimum width=3cm, minimum height=1cm,text centered]
\tikzstyle{decision} = [diamond, draw, fill=blue!20, 
\tikzstyle{block} = [rectangle, draw, fill=blue!10, align=center, rounded corners, minimum width=3cm, minimum height=1cm]
\tikzstyle{blockcast} = [rectangle, draw, fill=red!10, align=center, rounded corners, minimum width=3cm, minimum height=0.45cm]
\tikzstyle{line} = [draw, -latex']
\tikzstyle{cloud} = [draw, ellipse,fill=red!20, node distance=3cm,
\title{Federated Minimax Optimization with Client Heterogeneity}
\author{Pranay Sharma,
Rohan Panda, and
Gauri Joshi\\
Department of Electrical and Computer Engineering,\\
Carnegie Mellon University, Pittsburgh, PA 15213\\
\{pranaysh, rohanpan, gaurij\}@andrew.cmu.edu}
\begin{document}

\maketitle
% As a general rule, do not put math, special symbols or citations
% in the abstract
\begin{abstract}
Minimax optimization has seen a surge in interest with the advent of modern applications such as GANs, and it is inherently more challenging than simple minimization. The difficulty is exacerbated by the training data residing at multiple edge devices or \textit{clients}, especially when these clients can have heterogeneous datasets and local computation capabilities. We propose a general federated minimax optimization framework that subsumes such settings and several existing methods like Local SGDA. We show that naive aggregation of heterogeneous local progress results in optimizing a mismatched objective function -- a phenomenon previously observed in standard federated minimization. To fix this problem, we propose normalizing the client updates by the number of local steps undertaken between successive communication rounds. We analyze the convergence of the proposed algorithm for classes of nonconvex-concave and nonconvex-nonconcave functions and characterize the impact of heterogeneous client data, partial client participation, and heterogeneous local computations. Our analysis works under more general assumptions on the intra-client noise and inter-client heterogeneity than so far considered in the literature. 
For all the function classes considered, we significantly improve the existing computation and communication complexity results. Experimental results support our theoretical claims.
\end{abstract}

\section{Introduction}
\label{sec:intro}

The massive surge in machine learning (ML) research in the past decade has brought forth new applications that cannot be modeled as simple minimization problems. Many of these problems, including generative adversarial networks (GANs) \cite{goodfellow14GANs_neurips, arjovsky17WGANs_icml, sanjabi18GANs_neurips}, adversarial neural network training \cite{madry18adversarial_iclr}, robust optimization \cite{namkoong16SG_DRO_neurips, mohajerin18DRO_mathprog}, distributed nonconvex optimization \cite{lu19block_icassp}, and fair machine learning \cite{madras18learning_icml, mohri19agnosticFL_icml}, have an underlying min-max structure. However, the underlying problem is often nonconvex, while classical minimax theory deals almost exclusively with convex-concave problems.

Another feature of modern ML applications is the inherently distributed nature of the training data \cite{xing2016strategies}. The data collection is often outsourced to edge devices or \textit{clients}. However, the clients may then be unable (due to resource constraints) or unwilling (due to privacy concerns) to share their data with a \textit{central server}. Federated Learning (FL) \cite{konevcny16federated, kairouz19advancesFL_arxiv} was proposed to alleviate this problem. In exchange for retaining control of their data, the clients shoulder some of the computational load, and run part of the training process locally, using only their own data. The communication with the server is infrequent, leading to further resource savings. Since its introduction, FL has been an active area of research, with some remarkable successes \cite{smith20FL_SPmag, wang21field_arxiv}.
Research has shown practical benefits of, and provided theoretical justifications for commonly used practical techniques, such as, multiple local updates at the clients \cite{stich18localSGD_iclr, khaled20localSGD_aistats, koloskova20unified_localSGD_icml, wang21coopSGD_jmlr}, partial client participation \cite{yang21partial_client_iclr}, communication compression \cite{hamer20fedboost_icml, chen21communication_pnas}. Further, impact of heterogeneity in the clients' local data \cite{zhao18FL_noniid_arxiv, sattler19robust_ieeetnn}, as well as their system capabilities \cite{joshi20fednova_neurips, mitra21FedLin_neurips} has been studied. However, all this research has been focused almost solely on simple minimization problems. 

\begin{table*}[t]
\begin{center}
\begin{threeparttable}
\caption{Comparison of (\textbf{per client}) stochastic gradient complexity and the number of communication rounds  needed to reach an $\epsilon$-stationary solution (see \cref{defn:stationarity}), for different classes of nonconvex minimax problems. Here $n$ is the total number of clients. For a fair comparison with existing works, our results in this table are specialized to the case when all clients (i) have equal weights ($p_i=1/\numclients$), (ii) perform equal number of local updates ($\sync_i = \sync$), and (iii) use the same local update algorithm SGDA. However, our results (\cref{sec:algo_theory}) apply under more general settings when (i)-(iii) do not hold. 
}
\label{table:comparison}
\vskip 0.35in
\begin{small}
% \begin{sc}
\begin{tabular}{|c|c|c|cc|}
\hline
\multirow{3}{*}{Work} & \multicolumn{2}{c|}{Setting and Assumptions} & \multicolumn{2}{c|}{Full Client Participation (FCP)} \\
\cline{2-3} \cline{4-5}
 & \makecell{System \\ Heterogeneity\tnote{a}} & \makecell{Partial Client \\ Participation} & \makecell{Stochastic Gradient \\ Complexity} & \makecell{Communication \\ Rounds} \\
\hline
\multicolumn{5}{|c|}{Nonconvex-Strongly-concave (NC-SC)/Nonconvex-Polyak-{\L}ojasiewicz (NC-PL)} \\
\hline
($n=1$) \cite{lin_GDA_icml20} & - & - & $\mco ( 1/\epsilon^{4} )$ & -  \\
\cite{sharma22FedMinimax_ICML} & \xmark & \xmark & $\mco ( 1/(n \epsilon^{4}) )$ & $\mco ( 1/\epsilon^{3} )$ \\
\cite{yang22sagda_neurips}\tnote{b} & \xmark & \checkmark & $\mco ( 1/(n \epsilon^{4}) )$ & $\mco ( 1/\epsilon^{2} )$ \\
\rowcolor{Gainsboro!60} \textbf{Our Work} (\cref{thm:NC_SC}, \cref{cor:NC_SC_comm_cost}) & \checkmark & \checkmark & {\color{red}$\mco \lp 1/(n \epsilon^{4}) \rp$} & {\color{red}$\mco \lp 1/\epsilon^{2} \rp$} \\
\hline
\multicolumn{5}{|c|}{Nonconvex-Concave (NC-C)} \\
\hline
($n=1$) \cite{lin_GDA_icml20} & - & - & $\mco ( 1/\epsilon^{8} )$ & - \\
\cite{sharma22FedMinimax_ICML} & \xmark & \xmark & $\mco (1/(n \epsilon^8))$ & $\mco (1/\epsilon^7)$ \\
\rowcolor{Gainsboro!60} \textbf{Our Work:} (\cref{thm:NC_C}, \cref{cor:NC_C_comm_cost}) & \checkmark & \checkmark & {\color{red}$\mco \lp 1/(n \epsilon^{8}) \rp$} & {\color{red}$\mco \lp 1/\epsilon^{4} \rp$} \\
\hline
\multicolumn{5}{|c|}{Nonconvex-One-point-concave (NC-1PC)} \\
\hline
\cite{mahdavi21localSGDA_aistats} & \xmark & \xmark & $\mco ( 1/\epsilon^{12} )$ & $\mco ( n^{1/6}/\epsilon^{8} )$ \\
\cite{sharma22FedMinimax_ICML} & \xmark & \xmark & $\mco ( 1/\epsilon^{8} )$ & $\mco ( 1/\epsilon^{7} )$ \\
\rowcolor{Gainsboro!60} \textbf{Our Work:} (\cref{thm:NC_1PC}) & \checkmark & \checkmark & {\color{red}$\mco \lp 1/(n \epsilon^8) \rp$} & {\color{red}$\mco \lp 1/\epsilon^4 \rp$} \\
\hline
\end{tabular}
\begin{tablenotes}
% \small
    \item[a] Individual clients can run an unequal number of local iterations, using different local optimizers (see \cref{sec:algo_theory}). 
    \item[b] We came across \cite{yang22sagda_neurips} during the preparation of this paper. Our algorithm \fedsgda (\cref{alg_NC_minimax}) strictly generalizes their algorithm FSGDA.
\end{tablenotes}
% \end{sc}
\end{small}
\vskip -0.1in
\end{threeparttable}
\end{center}
\end{table*}

With its increasing usage in large-scale applications, FL systems must adapt to a wide range of clients. Data heterogeneity has received significant attention from the community. However, system-level heterogeneity remains relatively unexplored. The effect of client variability or \textit{heterogeneity} can be controlled by forcing all the clients to carry out an equal number of local updates and utilize the same local optimizer \cite{yu19icml_momentum, haddadpour19local_SHD_neurips}. However, this approach is inefficient if the client dataset sizes are widely different. Also, it would entail faster clients sitting idle for long durations \cite{reisizadeh22stragglerFL_jsait, tziotis22stragglerFL_arxiv}, waiting for stragglers to finish. Additionally, using the same optimizer might be inefficient or expensive for clients, depending on their system capabilities. Therefore, adapting to system-level heterogeneity forms a desideratum for real-world FL schemes.

\paragraph{Contributions.} We consider a general federated minimax optimization framework, in presence of both inter-client data and system heterogeneity. We consider the problem
\iftwocol
    % {\small
    \begin{align}
        \min_{\bx \in
        % \mc X \subseteq 
        \mbb R^{d_x}} \max_{\by \in 
        % \mc Y \subseteq 
        \mbb R^{d_y}} \lcb F (\bx, \by) := \textstyle \sum_{i=1}^n p_i f_i(\bx, \by) \rcb, \label{eq:problem}
    \end{align}
    % }%
\else
    \begin{align}
        \min_{\bx \in \mbb R^{d_x}} \max_{\by \in \mbb R^{d_y}} \lcb F (\bx, \by) := \textstyle \sum_{i=1}^n p_i f_i(\bx, \by) \rcb, \label{eq:problem}
    \end{align}
\fi
where 
% $\bx \in \mc X \subseteq \mbb R^{d_x}$, $\by \in \mc Y \subseteq \mbb R^{d_y}$, 
$f_i$ is the local loss of client $i$, $p_i$ is the weight assigned to client $i$, which is often the relative sample size at client $i$, and $n$ is the total number of clients. We study several classes of nonconvex minimax problems. Further,
%\paragraph{Contributions.}
%This paper proposes a general algorithm for nonconvex minimax problems and consider several classes of nonconvex-concave and nonconvex-nonconcave minimax problems. We consider a realistic federated setting, where in addition to data heterogeneity, there is also system heterogeneity across clients. 
\begin{itemize}[leftmargin=*]
    \setlength\itemsep{-0.5em}
    \item In our generalized federated minimax algorithm, the participating clients in a round may each perform different number of local steps, potentially with different local optimizers. In this setting, naive aggregation of local model updates (as done in existing methods like Local Stochastic Gradient Descent Ascent) can lead to convergence in terms of a mismatched global objective. We propose a simple normalization strategy to fix this problem.
    \item Using independent server and client learning rates, we achieve order-optimal or state-of-the-art computation complexity, and significantly improve the communication complexity of existing methods. 
    \item Under the special case where all the clients (i) are assigned equal weights $p_i=1/n$ in \eqref{eq:problem}, (ii) carry out equal number of local updates ($\sync_i = \sync$ for all $i$), and (iii) utilize the same local-update algorithm, our results become directly comparable with existing work (see \cref{table:comparison}) and improve upon them as follows.
    \begin{enumerate}
        \item For nonconvex-strongly-concave (NC-SC) and nonconvex-PL (NC-PL) problems, our method has the order-optimal gradient complexity $\mco (1/(\numclients \epsilon^4))$. Further, we improve the communication from $\mco (1/\epsilon^3)$ in \cite{sharma22FedMinimax_ICML} to $\mco (1/\epsilon^2)$.\footnote{During the preparation of this manuscript, we came across the recent work \cite{yang22sagda_neurips}, which proposes FSGDA algorithm and achieves $\mco (1/ \epsilon^{2})$ communication cost for NC-PL functions. However, our work is more general since we allow a heterogeneous number of local updates at the clients.}
        \item For nonconvex-concave (NC-C) and nonconvex-one-point-concave (NC-1PC) problems, we achieve state-of-the-art gradient complexity, while significantly improving the communication costs from $\mco (1/\epsilon^7)$ in \cite{sharma22FedMinimax_ICML} to $\mco (1/\epsilon^4)$. For NC-1PC functions, we prove the linear speedup in gradient complexity with $\numclients$ that was conjectured in \cite{sharma22FedMinimax_ICML}, thereby solving an open problem.
        \item As an intermediate result in our proof, we prove the theoretical convergence of Local SGD for one-point-convex function minimization (see \cref{lem:NC_1PC_PCP_local_SGA_concave} in \cref{app:NC_1PC}). The achieved convergence rate is the same as that achieved for convex minimization problems. Therefore, we generalize the convergence of Local SGD to a much larger class of functions.
    \end{enumerate}
\end{itemize}

\section{Related Work}
\label{sec:related}

\subsection{Single-client minimax}
\paragraph{Nonconvex-Strongly-concave (NC-SC).}
To our knowledge, \cite{lin_GDA_icml20} is the first work to analyze a single-loop algorithm for stochastic (and deterministic) NC-SC problems. Although the $\mco (\kappa^3/\epsilon^4)$ complexity shown is optimal in $\epsilon$, the algorithm required $\mco (\epsilon^{-2})$ batch-size. \cite{qiu20single_timescale_ncsc} utilized momentum to achieve $\mco (\epsilon^{-4})$ convergence with $\mco (1)$ batch-size. Recent works \cite{yang22NC_PL_aistats, sharma22FedMinimax_ICML} achieve the same rate without momentum. \cite{yang22NC_PL_aistats} also improved the dependence on the condition number $\kappa$. Second-order stationarity for NC-SC has been recently studied in \cite{luo21SoS_NCSC_arxiv}. Lower bounds for this problem class have appeared in \cite{luo20SREDA_ncsc_neurips, li21lower_bd_NCSC_neurips, zhang21NCSC_uai}.

\paragraph{Nonconvex-Concave (NC-C).}
Again, \cite{lin_GDA_icml20} was the first to analyze a single-loop algorithm for stochastic NC-C problems, proving $\mco (\epsilon^{-8})$ complexity. In deterministic problems, this has been improved using nested \cite{nouiehed19minimax_neurips19, thekumparampil19NC_C_neurips} as well as single-loop \cite{lan_unified_ncc_arxiv20, zhang_1_loop_ncc_neurips20} algorithms. For stochastic problems, \cite{rafique18WCC_oms} and the recent work \cite{zhang2022sapd_NC_C_arxiv} improved the complexity to $\mco (\epsilon^{-6})$. However, both the algorithms have a nested structure, which at every step, solve a simpler problem iteratively. Achieving $\mco (\epsilon^{-6})$ complexity with a single-loop algorithm has so far proved elusive.

\subsection{Distributed/Federated Minimax}
Recent years have also seen an increasing body of work in distributed minimax optimization. Some of this work is focused on decentralized settings, as in \cite{rogozin21dec_local_global_var_cc_arxiv, gasnikov21decen_deter_cc_icoa, beznosikov21dist_sp_neurips, metelev22decSPP_arxiv}.

Of immediate relevance to us is the federated setting, where clients carry out multiple local updates between successive communication rounds. The relevant works which focused on convex-concave problems include \cite{reisizadeh20robustfl_neurips, hou21FedSP_arxiv, liao21local_AdaGrad_CC_arxiv, sun22comm_SCSC_arxiv}. Special classes of nonconvex minimax problems in the federated setting have been studied in recent works, such as, nonconvex-linear \cite{mahdavi20dist_robustfl_neurips}, nonconvex-PL \cite{mahdavi21localSGDA_aistats, xie21NC_PL_FL_arxiv}, and nonconvex-one-point-concave \cite{mahdavi21localSGDA_aistats}. The complexity guarantees for several function classes considered in \cite{mahdavi21localSGDA_aistats} were further improved in \cite{sharma22FedMinimax_ICML}. However, all these works consider specialized federated settings, either assuming full-client participation, or system-wise identical clients, each carrying out equal number of local updates. As we see in this paper, partial client participation is the most source of error in simple FL algorithms. Also, system-level heterogeneity can have crucial implications on the algorithm performance.

\paragraph{Differences from Related Existing Work.}
\cite{joshi20fednova_neurips} was the first work to consider the problem of system heterogeneity in simple minimization problems, and proposed a normalized averaging scheme to avoid optimizing an inconsistent objective. Compared to \cite{joshi20fednova_neurips}, we consider a more challenging problem and achieve higher communication savings (\cref{table:comparison}). \cite{yang21partial_client_iclr} analyzed partial client participation in FL and demonstrated the theoretical benefit of using separate client/server learning rates. \cite{mahdavi21localSGDA_aistats, sharma22FedMinimax_ICML} studied minimax problems in the federated setting but assumed a homogeneous number of local updates, with full client participation. The very recent work \cite{yang22sagda_neurips} considers NC-SC problem and achieves similar communication savings as ours. However, our work considers a more general minimax FL framework with system-level client heterogeneity, and partial client participation. We consider several classes of functions and improve the communication and computation complexity of existing minimax algorithms.
\section{Preliminaries}
\label{sec:prelim}

\paragraph{Notations.} We let $\| \cdot \|$ denote the Euclidean norm $\| \cdot \|_2$.
Given a positive integer $m$, the set $\{ 1, 2, \hdots, m \}$ is denoted by $[m]$. 
Vectors at client $i$ are denoted with subscript $i$, e.g., $\bx_i$, while iteration indices are denoted using superscripts, e.g., $\by^{(t)}$ or $\by^{(t,k)}$.
Given a function $g$, we define its gradient vector as $\lb \Gx g(\bx, \by)^{\top}, \Gy g(\bx, \by)^{\top} \rb^{\top}$,
and its stochastic gradient as $\G g(\bx, \by; \xi)$, where $\xi$ denotes the randomness.

\paragraph{Convergence Metrics.}
\iftwocol

\else
    In presence of nonconvexity, we can only prove convergence to an \textit{approximate} stationary point, which is defined next.
\fi

\begin{definition}[$\epsilon$-Stationarity]
\label{defn:stationarity}
A point $\bx$ is an $\epsilon$-stationary point of a differentiable function $g$ if $\norm{\G g (\bx)} \leq \epsilon$.
\end{definition}

\iftwocol

\else
\begin{definition}
Stochastic Gradient (SG) complexity is the total number of gradients computed by a single client during the course of the algorithm.
\end{definition}
\fi

\begin{definition}[Communication Rounds]
During a single communication round, the server sends its \textit{global} model to a set of clients, which carry out multiple local updates starting from the same model, and return their \textit{local} vectors to the server. The server then aggregates these local vectors to arrive at a new global model. Throughout this paper, we denote the number of communication rounds by $T$.
\end{definition}
Next, we discuss some assumptions used in the paper.

\begin{assump}[Smoothness]
\label{assum:smoothness}
Each local function $f_i$ is differentiable and has Lipschitz continuous gradients.
That is, there exists a constant $\Lf > 0$ such that at each client $i \in [n]$, for all $\bx, \bx' \in \mbb R^{d_1}$ and $\by, \by' \in \mbb R^{d_2}$,
\iftwocol
    \newline
    $\lnr \G f_i(\bx, \by) - \G f_i(\bx', \by') \rnr \leq \Lf \lnr (\bx, \by) - (\bx', \by') \rnr$.
\else
    \begin{align*}
        \lnr \G f_i(\bx, \by) - \G f_i(\bx', \by') \rnr \leq \Lf \lnr (\bx, \by) - (\bx', \by') \rnr.
    \end{align*}
\fi
\end{assump}

\begin{assump}[\textit{Local} Variance]
\label{assum:bdd_var}
The stochastic gradient oracle at each client is \textit{unbiased}. Also, there exist constants $\localvar, \varscale \geq 0$ such that at each client $i \in [n]$, for all $\bx, \by$,
\iftwocol
    $\mbe_{\xi_i} \| \G f_i(\bx, \by; \xi_i) - \G f_i(\bx, \by) \|^2 \leq \varscale^2 \norm{\G f_i(\bx, \by)}^2 + \localvar^2$.
\else
    \begin{align*}
        \mbe_{\xi_i} [ \G f_i(\bx, \by; \xi_i) ] &= \G f_i(\bx, \by), \\
        \mbe_{\xi_i} \| \G f_i(\bx, \by; \xi_i) - \G f_i(\bx, \by) \|^2 & \leq \varscale^2 \norm{\G f_i(\bx, \by)}^2 + \localvar^2.
    \end{align*}
\fi

\end{assump}

\begin{assump}[\textit{Global} Heterogeneity]
\label{assum:bdd_hetero}
For any set of non-negative weights $\{ w_i \}_{i=1}^n$ such that $\textstyle \sum_{i=1}^n w_i=1$, there exist constants $\heteroscale \geq 1, \hetero \geq 0$ such that for all $\bx, \by$,
\iftwocol
    {\small
    $\sum_{i=1}^n w_i \lnr \Gx f_i \lp \bx, \by \rp \rnr^2 \leq \heteroscale^2 \lnr \textstyle \sum_{i=1}^n w_i \Gx f_i \lp \bx, \by \rp \rnr^2 + \hetero^2$,
    \newline
    $\sum_{i=1}^n w_i \lnr \Gy f_i \lp \bx, \by \rp \rnr^2 \leq \heteroscale^2 \lnr \sum_{i=1}^n w_i \Gy f_i \lp \bx, \by \rp \rnr^2 + \hetero^2$.
    }%
\else
    \begin{align*}
        \sumin w_i \lnr \Gx f_i \lp \bx, \by \rp \rnr^2 & \leq \heteroscale^2 \lnr \sumin w_i \Gx f_i \lp \bx, \by \rp \rnr^2 + \hetero^2, \\
        \sumin w_i \lnr \Gy f_i \lp \bx, \by \rp \rnr^2 & \leq \heteroscale^2 \lnr \sumin w_i \Gy f_i \lp \bx, \by \rp \rnr^2 + \hetero^2.
    \end{align*}
\fi
If all $f_i$'s are identical, we have $\heteroscale = 1$, and $\hetero = 0$.
\end{assump}
Most existing work uses simplified versions of Assumptions \ref{assum:bdd_var}, \ref{assum:bdd_hetero}, assuming $\varscale=0$ and/or $\heteroscale=0$.
\section{Algorithm for Heterogeneous Federated Minimax Optimization}
\label{sec:algo_theory}
In this section, we propose a federated minimax algorithm to handle system heterogeneity across clients.

\subsection{Limitations of Local SGDA}
Following the success of FedAvg \cite{fedavg17aistats} in FL, \cite{mahdavi21localSGDA_aistats} was the first to explore a simple extension Local stochastic gradient descent-ascent (SGDA) in minimax problems. Between successive communication rounds, clients take multiple simultaneous descent/ascent steps to respectively update the min-variable $\bx$ and max-variable $\by$. Subsequent work in \cite{sharma22FedMinimax_ICML} improved the convergence results and showed that LocalSGDA achieves optimal gradient complexity for several classes of nonconvex minimax problems. However, existing work on LocalSGDA also assumes the participation of all $n$ clients in every communication round. More crucially, as observed with simple minimization problems \cite{joshi20fednova_neurips}, if clients carry out an unequal number of local updates, or if their local optimizers are not all the same, LocalSGDA (like FedAvg) might converge to the stationary point of a different objective. This is further discussed in \cref{sec:NC_SC}, \ref{sec:NC_C}, and illustrated in \cref{fig:localSGD_hetero},
where the learning process gets disproportionately skewed towards the clients carrying out more local updates.
\iftwocol
    \begin{figure}[h]
         \centering
         \includegraphics[width=0.4\textwidth]{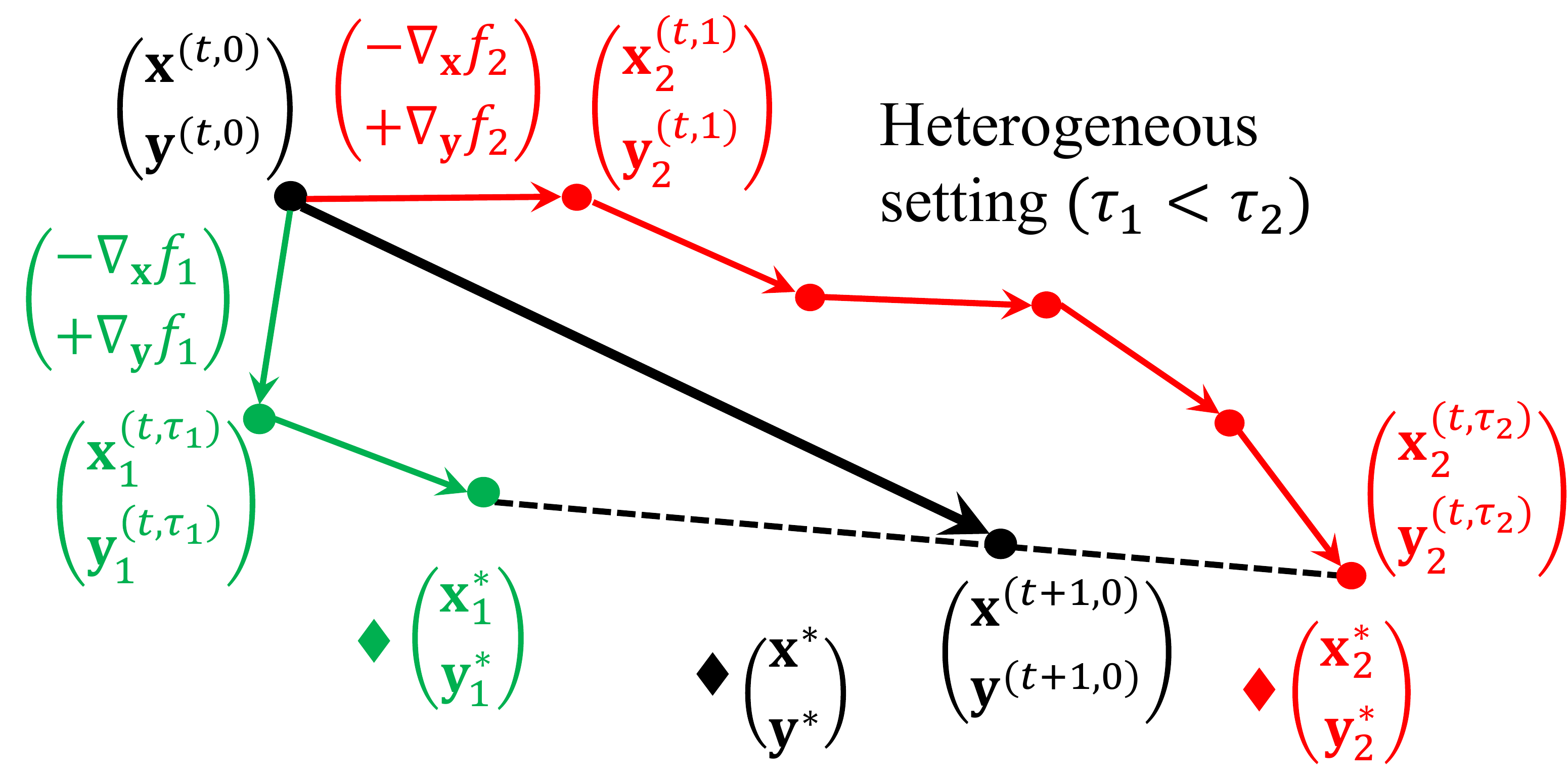}
         \vspace{-3mm}
         \caption{FedAvg with heterogeneous local updates. The green (red) triangle represents the local optimizer of $f_1$ ($f_2$), while $(\bx^*, \by^*)$ is the global optimizer. The number of local updates at client $i$ is $\sync_i$, where $\sync_1 = 2$, $\sync_2 = 5$.
         \label{fig:localSGD_hetero}
         }
    \end{figure}
\else
    \begin{figure}[h]
         \centering
         \includegraphics[width=0.65\textwidth]{figures/hetero_v2.pdf}
         \caption{FedAvg with heterogeneous local updates. The green (red) triangle represents the local optimizer of $f_1 (f_2)$, while $(\bx^*, \by^*)$ is the global optimizer. The number of local updates at client $i$ is $\sync_i$, where $\sync_1 = 2$, $\sync_2 = 5$).
         \label{fig:localSGD_hetero}
         }
    \end{figure}
\fi
\paragraph{Generalized Local SGDA Update Rule.} To understand this mismatched convergence phenomenon with naive aggregation in local SGDA, recall that Local SGDA updates are of the form 
\iftwocol
    $\bxtp = \bxt + \lrsx \sumin p_i \Delta_{\bx, i}^{(t)}, \bytp = \byt + \lrsy \sumin p_i \Delta_{\by, i}^{(t)},$
    % \newline
    % {\footnotesize
    % $$\bxtp = \bxt + \lrsx \sumin p_i \Delta_{\bx, i}^{(t)}, \bytp = \byt + \lrsy \sumin p_i \Delta_{\by, i}^{(t)},$$}%
    where $\lrsx, \lrsy$ are the server learning rates, $\Delta_{\bx, i}^{(t)} = (1/\lrcx) \big( \bx_i^{(t,\syncit)} - \bxt \big)$, $\Delta_{\by, i}^{(t)} = (1/\lrcy) \big( \by_i^{(t,\syncit)} - \byt \big)$ are the scaled local updates. $\bx_i^{(t,\syncit)}$ is the iterate at client $i$ after taking $\syncit$ local steps, and $\lrcx, \lrcy$ are the client learning rates. Let us consider a generalized version of this update rule where $\Delta_{\bx, i}^{(t)}, \Delta_{\by, i}^{(t)}$ are linear combinations of local stochastic gradients computed by client $i$, as $\Delta_{\by, i}^{(t)} = \sumiktt \aitk \Gy f_i(\bxitk, \byitk; \xiitk)$, where $\aitk \geq 0$. 
    Commonly used client optimizers, such as, SGD, local momentum, variable local learning rates can be accommodated in this general form (see \cref{app:grad_agg} for some examples). For this more general form, we can rewrite the $\bx, \by$ updates at the server as follows
    {\small
    \begin{align}
        & \bxtp = \bxt - \lrsx \textstyle\sumin p_i \mbf G_{\bx,i}^{(t)} \frac{\bait}{\nait_1} \nait_1 \nn \\
        &= \bxt - \underbrace{\Big( \sum_{j=1}^{\numclients} p_j \najt_1 \Big)}_{\sefft} \lrsx \sumin \underbrace{\mfrac{p_i \nait_1}{\textstyle \sum_{j=1}^\numclients p_j \najt_1}}_{\wi} \underbrace{\mfrac{\mbf G_i^{(t)} \bait}{\nait_1}}_{\bdxit}, \nn \\
        & \bytp = \byt + \sefft \lrsy \textstyle\sumin \wi \bdyit, \label{eq:wtd_server_updates}
    \end{align}
    }% 
\else
    $$\bxtp = \bxt + \lrsx \sumin p_i \Delta_{\bx, i}^{(t)}, \qquad \bytp = \byt + \lrsy \sumin p_i \Delta_{\by, i}^{(t)},$$
    % \newline
    % {\footnotesize
    % $$\bxtp = \bxt + \lrsx \sumin p_i \Delta_{\bx, i}^{(t)}, \bytp = \byt + \lrsy \sumin p_i \Delta_{\by, i}^{(t)},$$}%
    where $\lrsx, \lrsy$ are the server learning rates, $\Delta_{\bx, i}^{(t)} = \frac{1}{\lrcx} \big( \bx_i^{(t,\syncit)} - \bxt \big)$, $\Delta_{\by, i}^{(t)} = \frac{1}{\lrcy} \big( \by_i^{(t,\syncit)} - \byt \big)$ are the scaled local updates. $\bx_i^{(t,\syncit)}$ is the iterate at client $i$ after taking $\syncit$ local steps, and $\lrcx, \lrcy$ are the client learning rates. Let us consider a generalized version of this update rule where $\Delta_{\bx, i}^{(t)}, \Delta_{\by, i}^{(t)}$ are linear combinations of local stochastic gradients computed by client $i$, as $\Delta_{\by, i}^{(t)} = \sumiktt \aitk \Gy f_i(\bxitk, \byitk; \xiitk)$, where $\aitk \geq 0$. 
    Commonly used client optimizers, such as, SGD, local momentum, variable local learning rates can be accommodated in this general form (see \cref{app:grad_agg} for some examples). For this more general form, we can rewrite the $\bx, \by$ updates at the server as follows
    \begin{equation}
        \begin{aligned}
            & \bxtp = \bxt - \lrsx \textstyle\sumin p_i \mbf G_{\bx,i}^{(t)} \frac{\bait}{\nait_1} \nait_1 \\
            &= \bxt - \underbrace{\Big( \sum_{j=1}^{\numclients} p_j \najt_1 \Big)}_{\sefft} \lrsx \sumin \underbrace{\mfrac{p_i \nait_1}{\textstyle \sum_{j=1}^\numclients p_j \najt_1}}_{\wi} \underbrace{\mfrac{\mbf G_i^{(t)} \bait}{\nait_1}}_{\bdxit}, \\
            & \bytp = \byt + \sefft \lrsy \textstyle\sumin \wi \bdyit,
        \end{aligned}
        \label{eq:wtd_server_updates}
    \end{equation}
\fi
where $G_{\bx,i}^{(t)} = [\Gy f_i(\bxitk, \byitk; \xiitk)]_{k=0}^{\syncit} \in \mbb R^{d_x \times \syncit}$ contains the $\syncit$ stochastic gradients stacked column-wise, $\bait = [a_i^{t,0}, a_i^{t,1}, \dots, a_i^{t,\syncit-1}]^\top$, $\bdxit, \bdyit$ are the normalized aggregates of the stochastic gradients and $\sefft$ is the \textit{effective} number of local steps.
% The $\by$ updates at the server is similarly given by $\bytp = \byt + \lrsy \sefft \lrsx \sumin \wi \bdyit$.
\iftwocol
    \begin{figure}[thb]
         \centering
         \includegraphics[width=0.4\textwidth]{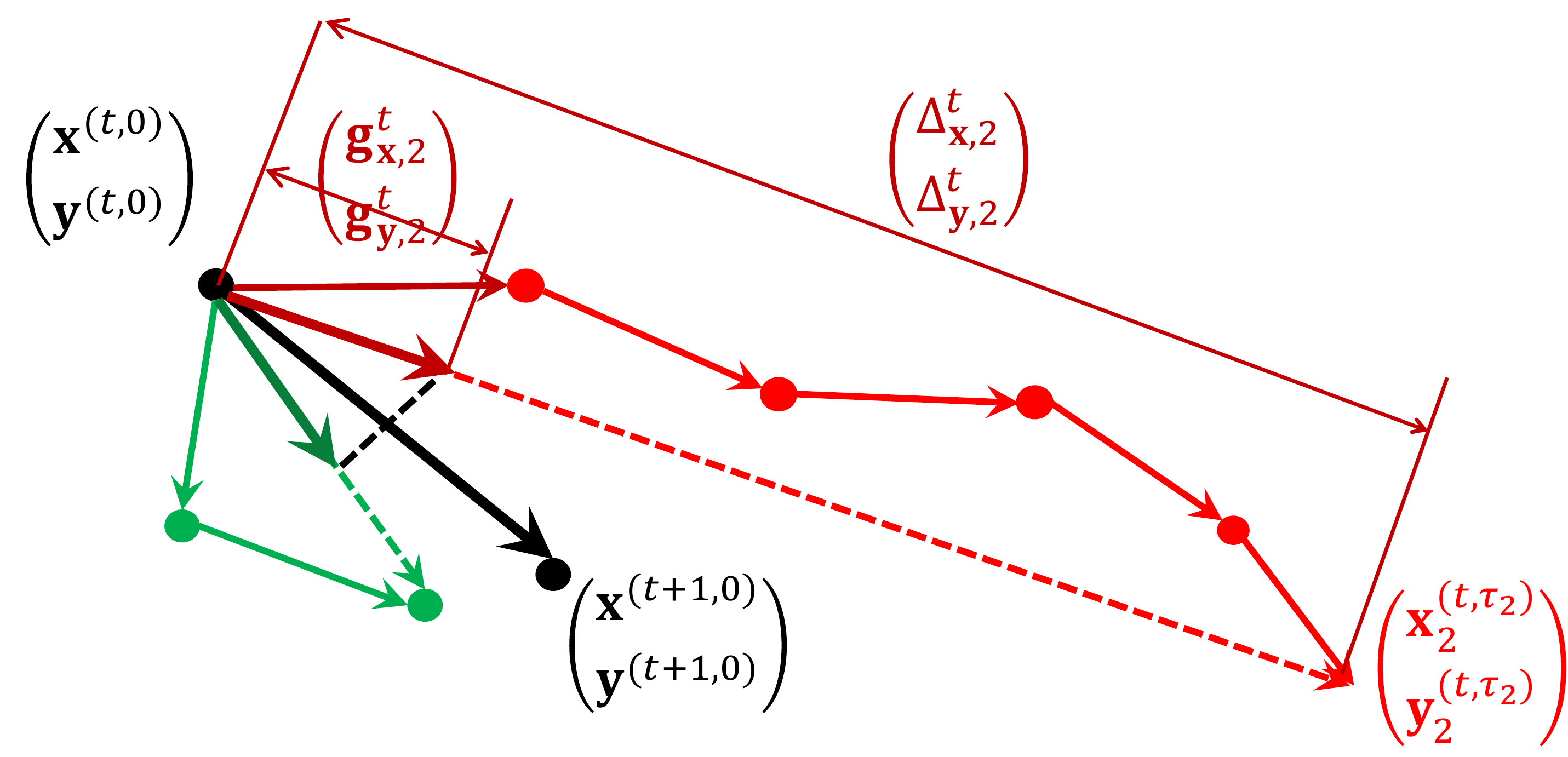}
         \vspace{-3mm}
         \caption{Generalized update rule in \eqref{eq:wtd_server_updates}. Note that $(\bdxit, \bdyit) = 1/\sync_i (\Delta_{\bx, i}^{(t)}, \Delta_{\by, i}^{(t)})$. Also, at the server, the weighted sum $\sumin \wi \bdxit$ gets scaled by $\sefft$.
         }
    \end{figure}
\else
    \begin{figure}[thb]
         \centering
         \includegraphics[width=0.65\textwidth]{figures/agg_v2.pdf}
         \caption{Generalized update rule in \eqref{eq:wtd_server_updates}. Note that $(\bdxit, \bdyit) = \frac{1}{\sync_i} (\Delta_{\bx, i}^{(t)}, \Delta_{\by, i}^{(t)})$. Also, at the server, the weighted sum $\sumin \wi \bdxit$ gets scaled by $\sefft$.
         }
    \end{figure}
\fi
Similar to the observation for simple minimization problems in \cite{joshi20fednova_neurips}, we see in Theorems \ref{thm:NC_SC}, \ref{thm:NC_C} that the resulting iterates of this general algorithm end up converging to the stationary point of a different objective $\TF = \sumin \wi f_i$. Further, in \cref{cor:obj_inconsistent}, we observe that this mismatch is a result of using weights $\wi$ in \eqref{eq:wtd_server_updates} to weigh the clients' contribution.

\iftwocol
    \begin{algorithm}[t!]
    \caption{
    \colorbox{blue!20}{\fedsgda} and \colorbox{red!20}{\fedsgdaplus}
    }
    \label{alg_NC_minimax}
    \begin{algorithmic}[1]
        \STATE{\textbf{Input:} {\small$\bx^{(0)}, \by^{(0)}$}, $T$, learning rates: client {\small$\{ \lrcx, \lrcy \}$}, server {\small$\{ \lrsx, \lrsy \}$}, \#local-updates {\small$\{ \syncit \}_{i,t}$}, \colorbox{red!20}{$S$, $s=-1$}}
    	\FOR{$t=0$ to $T-1$}
                \STATE{Server selects client set $\clientset$; sends them $(\bxt, \byt)$}    
                \IF{\colorbox{red!20}{$t$ mod $S = 0$}}
                    \STATE{\colorbox{red!20}{$s \gets s+1$}}
                    \STATE{\colorbox{red!20}{Server sends $\Hbxs = \bxt$ to clients in $\clientset$}}
                \ENDIF
    	   % \FOR{each client $i \in \clientset$}
    	        \STATE{{\small$\bx^{(t,0)}_i = \bxt$, $\by^{(t,0)}_i = \byt$} for {\small$i \in \clientset$}}
    	        \FOR{$k = 0, \hdots, \syncit-1$} \label{line:alg_NC_minimax:client_update_start}
    	           % \STATE{{\small Client $i$ samples minibatch $\xiitk$ from local data}}
    	            \STATE{{\scriptsize$\bxitkp = \bxitk - \lrcx \aitk \Gx f_i (\bxitk, \byitk; \xiitk)$}}
    	            \STATE{\colorbox{red!20}{{\scriptsize$\byitkp = \byitk + \lrcy \aitk \Gy f_i ({\color{red}\Hbxs}, \byitk; \xiitk)$}}}
    	            \STATE{\colorbox{blue!20}{{\scriptsize$\byitkp = \byitk + \lrcy \aitk \Gy f_i ({\color{blue}\bxitk}, \byitk; \xiitk)$}}}
    	           % \STATE{$\bxitk = \bxt - \lrcx \sumijk \aij \Gx f_i (\bxitj, \byitj; \xiitj)$}
    	           % \STATE{$\byitk = \byt + \lrcy \sumijk \aij \Gy f_i (\bxitj, \byitj; \xiitj)$}
    	        \ENDFOR \label{line:alg_NC_minimax:client_update_end}
    	       % \STATE{$\Gxit \triangleq \lb \Gx f_i (\bx^{(t,0)}_{i}, \by^{(t,0)}_{i}; \xi^{(t,0)}_{i}), \hdots, \Gx f_i (\bx^{(t,\syncit-1)}_{i}, \by^{(t,\syncit-1)}_{i}; \xi^{(t,\syncit-1)}_{i}) \rb \in \mbb R^{d_1 \times \syncit}$}
        	   % \STATE{$\Gyit \triangleq \lb \Gy f_i (\Hbxs, \by^{(t,0)}_{i}; \xi^{(t,0)}_{i}), \hdots, \Gy f_i (\Hbxs, \by^{(t,\syncit-1)}_{i}; \xi^{(t,\syncit-1)}_{i}) \rb \in \mbb R^{d_2 \times \syncit}$}
        	   \STATE{{\small Client $i$ aggregates its gradients to compute $\bdxit, \bdyit$}} \label{line:alg_NC_minimax:client_agg_start}
        	   \STATE{{\small$\bdxit = \sumiktt \frac{\aitk}{\nait_1} \Gx f_i ( \bxitk, \byitk; \xiitk )$}}
    	       \STATE{\colorbox{red!20}{{\small$\bdyit = \sumiktt \frac{\aitk}{\nait_1} \Gy f_i ({\color{red}\Hbxs}, \byitk; \xiitk )$}}}
    	       \STATE{\colorbox{blue!20}{{\small$\bdyit = \sumiktt \frac{\aitk}{\nait_1} \Gy f_i ({\color{blue}\bxitk}, \byitk; \xiitk )$}}}
    	       \label{line:alg_NC_minimax:client_agg_end}
        	   %\STATE{$\bdxit = \frac{\Gxit \bait}{\nait_1}$, $\bdyit = \frac{\Gyit \bait}{\nait_1}$}
    	   % \ENDFOR
    	    \STATE{{\small Clients $i \in \clientset$ communicate $\{ \bdxit, \bdyit \}$ to the server}}
            \STATE{{\small Server computes aggregate vectors $\{ \bdxt, \bdyt\}$ using \eqref{eq:server_agg_WOR}}}
            % \label{line:alg_NC_minimax:server_agg_start}
            % \STATE{\colorbox{gray!70}{{\small \textbf{(FCP):} $\bdxt = \sumin \wi \bdxit$, $\bdyt = \sumin \wi \bdyit$, OR}}}
            % \STATE{\colorbox{lightgray!50}{{\small \textbf{(PCP):} \eqref{eq:server_agg_WR} for \textbf{(WR)} \hspace{2mm} OR \hspace{2mm} \eqref{eq:server_agg_WOR} for \textbf{(WOR)} }}} \label{line:alg_NC_minimax:server_agg_end}
            % \STATE{\colorbox{lightgray!50}{{\small \textbf{(WOR)}} {\scriptsize$\bdxt = \sumiS \twi \bdxit$, $\bdyt = \sumiS \twi \bdyit$}}}
            \STATE{{\small Server step: 
            $\left\{\begin{matrix}
            \bxtp = \bxt - \sefft \lrsx \bdxt \\ 
            \bytp = \byt + \sefft \lrsy \bdyt
            \end{matrix}\right.$}}
    	\ENDFOR
    	\STATE{\textbf{Return: }$\bbxT$ drawn uniformly at random from {\small$\{ \bxt \}_{t}$}}
    \end{algorithmic}
    \end{algorithm}
\else
    \begin{algorithm}[t!]
    \caption{
    \colorbox{blue!20}{\fedsgda} and \colorbox{red!20}{\fedsgdaplus}
    }
    \label{alg_NC_minimax}
    \begin{algorithmic}[1]
        \STATE{\textbf{Input:} initialization $\bx^{(0)}, \by^{(0)}$, Number of communication rounds $T$, learning rates: client $\{ \lrcx, \lrcy \}$, server $\{ \lrsx, \lrsy \}$, \#local-updates $\{ \syncit \}_{i,t}$, \colorbox{red!20}{$S$, $s=-1$}}
    	\FOR{$t=0$ to $T-1$}
                \STATE{Server selects client set $\clientset$; sends them $(\bxt, \byt)$}    
                \IF{\colorbox{red!20}{$t$ mod $S = 0$}}
                    \STATE{\colorbox{red!20}{$s \gets s+1$}}
                    \STATE{\colorbox{red!20}{Server sends $\Hbxs = \bxt$ to clients in $\clientset$}}
                \ENDIF
    	   % \FOR{each client $i \in \clientset$}
    	        \STATE{$\bx^{(t,0)}_i = \bxt$, $\by^{(t,0)}_i = \byt$ for $i \in \clientset$}
    	        \FOR{$k = 0, \hdots, \syncit-1$} \label{line:alg_NC_minimax:client_update_start}
    	            \STATE{$\bxitkp = \bxitk - \lrcx \aitk \Gx f_i (\bxitk, \byitk; \xiitk)$}
    	            \STATE{\colorbox{red!20}{$\byitkp = \byitk + \lrcy \aitk \Gy f_i ({\color{red}\Hbxs}, \byitk; \xiitk)$}\hfill {\color{red}\# $\by$-update for \fedsgdaplus}}
    	            \STATE{\colorbox{blue!20}{$\byitkp = \byitk + \lrcy \aitk \Gy f_i ({\color{blue}\bxitk}, \byitk; \xiitk)$}\hfill {\color{blue}\# $\by$-update for \fedsgda}}
    	        \ENDFOR \label{line:alg_NC_minimax:client_update_end}
        	   \STATE{Client $i$ aggregates its gradients to compute $\bdxit, \bdyit$}\label{line:alg_NC_minimax:client_agg_start}
        	   \STATE{$\bdxit = \sumiktt \mfrac{\aitk}{\nait_1} \Gx f_i ( \bxitk, \byitk; \xiitk )$}
    	       \STATE{\colorbox{red!20}{$\bdyit = \sumiktt \mfrac{\aitk}{\nait_1} \Gy f_i ({\color{red}\Hbxs}, \byitk; \xiitk )$}}
    	       \STATE{\colorbox{blue!20}{$\bdyit = \sumiktt \mfrac{\aitk}{\nait_1} \Gy f_i ({\color{blue}\bxitk}, \byitk; \xiitk )$}}
    	       \label{line:alg_NC_minimax:client_agg_end}
    	   \STATE{Clients $i \in \clientset$ communicate $\{ \bdxit, \bdyit \}$ to the server}
            \STATE{ Server computes aggregate vectors $\{ \bdxt, \bdyt\}$ using \eqref{eq:server_agg_WOR}}
            \STATE{ Server step: 
            $\left\{\begin{matrix}
            \bxtp = \bxt - \sefft \lrsx \bdxt, \quad 
            \bytp = \byt + \sefft \lrsy \bdyt
            \end{matrix}\right.$}
    	\ENDFOR
    	\STATE{\textbf{Return: }$\bbxT$ drawn uniformly at random from $\{ \bxt \}_{t=1}^T$}
    \end{algorithmic}
    \end{algorithm}
\fi

\subsection{Proposed Normalized Federated Minimax Algorithm} 
From the generalized update rule, we can see that setting the weights $w_i$ equal to $p_i$ will ensure that the surrogate objective $\tilde{F}$ matches with the original global objective $F$. Setting $w_i = p_i$ results in normalization of the local progress at each client before their aggregation at the server. As a result, we can preserve convergence to a stationary point of the original objective function $F$, even with heterogeneous $\{ \syncit \}$, as we see in \cref{thm:NC_SC} and \cref{thm:NC_C}. 

The algorithm follows the steps given in \Cref{alg_NC_minimax}. In each communication round $t$, the server selects a client set $\clientset$ and communicates its model parameters $(\bxt, \byt)$ to these clients. The selected clients then run multiple local stochastic gradient steps.
% using local stochastic gradients (\cref{line:alg_NC_minimax:client_update_start}-\ref{line:alg_NC_minimax:client_update_end}). 
The number of local steps $\{ \syncit \}$ can vary across clients and across rounds. At the end of $\syncit$ local steps, client $i$ aggregates its local stochastic gradients into $\{ \bdxit, \bdyit \}$,
% (\cref{line:alg_NC_minimax:client_agg_start}-\ref{line:alg_NC_minimax:client_agg_end}), 
which are then sent to the server. 
\iftwocol
    Note that the $\{ \bdxit, \bdyit \}$ contains local gradients normalized by $\nait_1$, where $\bait = [a_i^{t,0}, a_i^{t,1}, \dots, a_i^{t,\syncit-1}]^\top$ is the vector of weights assigned to individual stochastic gradients in the local updates.\footnote{\scriptsize For LocalSGDA \cite{mahdavi21localSGDA_aistats, sharma22FedMinimax_ICML}, $\aitk = 1$ for all $i \in [n], t \in [T], k \in [\syncit]$ and $\nait_1 = \syncit$. Therefore, $\bdxit, \bdyit$ are simply the average of the stochastic gradients computed in the $t$-th round.}
\else
    Note that the gradients at client $i$, $\{ \G f_i (\cdot, \cdot; \xiitk) \}_{k=0}^{\syncit}$, are normalized by $\nait_1$, where $\bait = [a_i^{t,0}, a_i^{t,1}, \dots, a_i^{t,\syncit-1}]^\top$ is the vector of weights assigned to individual stochastic gradients in the local updates.\footnote{For LocalSGDA \cite{mahdavi21localSGDA_aistats, sharma22FedMinimax_ICML}, $\aitk = 1$ for all $i \in [n], t \in [T], k \in [\syncit]$ and $\nait_1 = \syncit$. Therefore, $\bdxit, \bdyit$ are simply the average of the stochastic gradients computed in the $t$-th round.} 
\fi
The server aggregates these local vectors to compute global direction estimates $\bdxt, \bdyt$,
% (\cref{line:alg_NC_minimax:server_agg_start}-\ref{line:alg_NC_minimax:server_agg_end}), depending on the sampling scheme employed by the server. These directions $(\bdxt, \bdyt)$
which are then used to update the server model parameters $(\bxt, \byt)$.

\paragraph{Client Selection.} In each round $t$, the server samples $|\clientset|$ clients uniformly at random \textit{without replacement} (WOR). While aggregating client updates at the server, client $i$ update is weighed by $\twi = \wi \numclients/|\clientset|$, i.e.,
% . The aggregates $(\bdxt, \bdyt)$ computed at the server are of the form
\iftwocol
    % {\small
    \begin{align}
        \bdxt = \textstyle\sumiS \twi \bdxit, \quad \bdyt = \textstyle\sumiS \twi \bdyit, \label{eq:server_agg_WOR}
    \end{align}
    % }%
\else
    \begin{align}
        \bdxt = \sumiS \twi \bdxit, \qquad \bdyt = \sumiS \twi \bdyit. \label{eq:server_agg_WOR}
    \end{align}
\fi
Note that $\mbe_{\clientset} [ \bdxt ] = \sumin \wi \bdxit, \mbe_{\clientset} [ \bdyt ] = \sumin \wi \bdyit$.
% We analyze both these modes of client selection in \cref{sec:conv_results}.

\section{Convergence Results}
\label{sec:conv_results}
Next, we present the convergence results for different classes of nonconvex minimax problems. For simplicity, throughout this section we assume the parameters utilized in \cref{alg_NC_minimax} to be fixed across $t$. Therefore, $\aitk \equiv \aik$, $\bait \equiv \bai$, $\syncit \equiv \sync_i$, $\sefft \equiv \seff$ and $|\clientset| = \selclients$, for all $t$. 

\subsection{Non-convex-Strongly-Concave (NC-SC) Case}
\label{sec:NC_SC}

\begin{assump}[$\mu$-Strong-concavity (SC) in $\by$]
\label{assum:SC_y}
A function $f$ is $\mu$-strong concave ($\mu > 0$) in $\by$ if for all $\bx, \bar{\by}, \by_2$,
\iftwocol
    {\footnotesize$-f(\bx, \by_2) \geq -f(\bx, \bar{\by}) - \lan \Gy f(\bx, \bar{\by}), \by_2-\bar{\by} \ran + \frac{\mu}{2} \normb{\by_2-\bar{\by}}^2$}.
\else
    $$-f(\bx, \by_2) \geq -f(\bx, \bar{\by}) - \lan \Gy f(\bx, \bar{\by}), \by_2-\bar{\by} \ran + \frac{\mu}{2} \normb{\by_2-\bar{\by}}^2.$$
\fi
\end{assump}

\paragraph{General Convergence Result.}
We first show that using the local updates of \cref{alg_NC_minimax}, the iterates converge to the stationary point of a surrogate objective $\TF$, $\TF (\bx, \by) \triangleq \textstyle \sum_{i=1}^n w_i f_i(\bx, \by)$. 
See \cref{app:NC_SC} for the full statement.

\begin{theorem}
\label{thm:NC_SC}
Suppose the local loss functions $\{ f_i \}_i$ satisfy Assumptions \ref{assum:smoothness}, \ref{assum:bdd_var}, \ref{assum:bdd_hetero}, \ref{assum:SC_y}. Suppose the server selects $|\clientset| = \selclients$ clients in each round $t$. Given appropriate choices of client and server learning rates, $(\lrcx, \lrcy)$ and $(\lrsx, \lrsy)$ respectively, the iterates generated by \fedsgda \ satisfy
\iftwocol
    {\footnotesize
    \begin{align}
        & \min_{t \in [T]} \mbe \big\| \G \TPhi(\bxt) \big\|^2 \leq \mco \Big( \underbrace{\kappa^2 \mfrac{(\bar{\sync}/\seff) + \Aw \localvar^2 + \Bw \varscale^2 \hetero^2}{\sqrt{\selclients \bar{\sync} T}}}_{\text{Error with full synchronization}} \Big) \nn \\
        & \quad + \underbrace{\mco \Big( \kappa^2 \mfrac{\Cw \localvar^2 + \Dw \hetero^2}{\bar{\sync}^2 T} \Big)}_{\substack{\text{Error due to local updates}}} + \underbrace{\mco \Big( \mfrac{\numclients - \selclients}{\numclients - 1} \cdot \mfrac{\kappa^2 \Ew \seff \hetero^2}{\sqrt{\selclients \bar{\sync} T}} \Big)}_{\substack{\text{Partial Participation Error}}}. \label{eq:thm:NC_SC}
    \end{align}
    }%
    where, 
    % $\kappa = \Lf/\mu$ is the condition number,
    {\small$\TPhi(\bx) \triangleq \max_\by \TF (\bx, \by)$} is the envelope function, 
    {\small$\bar{\sync} = \frac{1}{\numclients} \sumin \sync_i$},
    % {\small$\Delta_{\TPhi} \triangleq \TPhi (\bx^{(0)}) - \min_\bx \TPhi (\bx)$}, 
    {\small$\Aw \triangleq n \seff \sumin \frac{w_i^2 \nai_2^2}{\nai_1^2}$}, 
    {\small$\Bw \triangleq n \seff \max_i \frac{w_i \nai_2^2}{\nai_1^2}$}, 
    {\small $\Cw \triangleq \sumin \wi ( \nai_2^2 - [\alpha^{(t,\sync_i-1)}_{i}]^2 )$}, 
    {\small $\Dw \triangleq \max_i (\varscale^2\norm{\mbf a_{i,-1}}_2^2 + \norm{\mbf a_{i,-1}}_1^2)$}, where {\small$\mbf a_{i,-1}  \triangleq [a_i^{(0)}, a_i^{(1)}, \dots, a_i^{(\sync_i-2)}]^\top$} for all $i$ 
    and
    {\small$\Ew \triangleq \numclients \max_i w_i$}.
\else
    \begin{align}
        & \min_{t \in [T]} \mbe \big\| \G \TPhi(\bxt) \big\|^2 \leq \mco \Big( \underbrace{\kappa^2 \mfrac{(\bar{\sync}/\seff) + \Aw \localvar^2 + \Bw \varscale^2 \hetero^2}{\sqrt{\selclients \bar{\sync} T}}}_{\text{Error with full synchronization}} \Big) + \underbrace{\mco \Big( \kappa^2 \mfrac{\Cw \localvar^2 + \Dw \hetero^2}{\bar{\sync}^2 T} \Big)}_{\substack{\text{Error due to local updates}}} + \underbrace{\mco \Big( \mfrac{\numclients - \selclients}{\numclients - 1} \cdot \mfrac{\kappa^2 \Ew \seff \hetero^2}{\sqrt{\selclients \bar{\sync} T}} \Big)}_{\substack{\text{Partial Participation Error}}}. \label{eq:thm:NC_SC}
    \end{align}
    where, 
    $\kappa = \Lf/\mu$ is the condition number,
    $\TPhi(\bx) \triangleq \max_\by \TF (\bx, \by)$ is the envelope function, 
    $\bar{\sync} = \frac{1}{\numclients} \sumin \sync_i$,
    % $\Delta_{\TPhi} \triangleq \TPhi (\bx^{(0)}) - \min_\bx \TPhi (\bx)$, 
    $\Aw \triangleq n \seff \sumin \frac{w_i^2 \nai_2^2}{\nai_1^2}$, 
    $\Bw \triangleq n \seff \max_i \frac{w_i \nai_2^2}{\nai_1^2}$, 
     $\Cw \triangleq \sumin \wi ( \nai_2^2 - [\alpha^{(t,\sync_i-1)}_{i}]^2 )$, 
     $\Dw \triangleq \max_i (\varscale^2\norm{\mbf a_{i,-1}}_2^2 + \norm{\mbf a_{i,-1}}_1^2)$, where $\mbf a_{i,-1}  \triangleq [a_i^{(0)}, a_i^{(1)}, \dots, a_i^{(\sync_i-2)}]^\top$ for all $i$ 
    and
    $\Ew \triangleq \numclients \max_i w_i$.
\fi
\end{theorem}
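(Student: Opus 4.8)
The plan is to follow the standard potential-function approach for nonconvex--strongly-concave problems (as in \cite{lin_GDA_icml20}), adapted to the federated setting with normalized aggregation and partial participation, tracking the coupled pair $\big(\mbe\,\TPhi(\bxt),\ \mbe\|\by^*(\bxt)-\byt\|^2\big)$ across communication rounds, where $\by^*(\bx) \triangleq \argmax_\by \TF(\bx,\by)$. First I would record the structural facts about the surrogate: since $\TF = \sumin \wi f_i$ inherits $\Lf$-smoothness and $\mu$-strong concavity in $\by$ from the $f_i$'s (the weights sum to one), Danskin's theorem gives that $\TPhi$ is differentiable with $\G\TPhi(\bx) = \Gx\TF(\bx,\by^*(\bx))$, is $\Lp = \mco(\kappa\Lf)$-smooth, and $\by^*(\cdot)$ is $\kappa$-Lipschitz.

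For the $\bx$-recursion I would apply the descent lemma to $\TPhi$ along the server step $\bxtp = \bxt - \sefft\lrsx\bdxt$. The crucial point is that $\bdxit = \sumiktt \frac{\aitk}{\nait_1}\Gx f_i(\bxitk,\byitk;\xiitk)$ is a \emph{convex} combination of stochastic gradients, so conditional on the history $\mbe[\bdxit]$ is a convex combination of exact gradients $\Gx f_i$ at the drifted local iterates, and the $\sefft$-scaling together with $\twi$-reweighting makes $\mbe[\sefft\bdxt]$ line up with $\sefft\,\Gx\TF(\bxt,\byt)$ up to client-drift error. Hence the inner-product term splits into a $-\Omega(\sefft\lrsx)\|\G\TPhi(\bxt)\|^2$ descent contribution, a term proportional to $\mbe\|\by^*(\bxt)-\byt\|^2$ (via $\Lf$-smoothness of $\TF$ in $\by$), and client-drift terms; the squared-norm term $\mbe\|\bxtp-\bxt\|^2$ contributes the stochastic-gradient variance (controlled by Assumption~\ref{assum:bdd_var}, producing the $\Aw\localvar^2$ and $\Bw\varscale^2\hetero^2$ dependence through $w_i^2\nai_2^2/\nai_1^2$) together with the partial-participation variance $\mbe\|\bdxt-\mbe_{\St}\bdxt\|^2$, which for uniform sampling of $\selclients$ out of $\numclients$ clients without replacement carries the factor $\frac{\numclients-\selclients}{\numclients-1}\cdot\frac{1}{\selclients}$ and, after applying Assumption~\ref{assum:bdd_hetero}, the $\Ew\hetero^2$ term.

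Next I would establish two supporting lemmas. (i) A \emph{client-drift} bound: using $\Lf$-smoothness and unrolling the local SGD recursion with small client learning rates, $\frac{1}{\nait_1}\sumiktt\aitk\big(\mbe\|\bxitk-\bxt\|^2+\mbe\|\byitk-\byt\|^2\big)$ is $\mco(\lrcx^2+\lrcy^2)$ times (gradient norms $+\localvar^2+\hetero^2$); only the first $\sync_i-1$ gradients enter the drift (the last one is applied after the relevant iterate), which is where $\Cw$ and $\Dw$ come from via $\|\mbf a_{i,-1}\|_1^2$ and $\|\mbf a_{i,-1}\|_2^2$. (ii) A \emph{$\by$-error contraction}: writing $\by^*(\bxtp)-\bytp = \big(\by^*(\bxtp)-\by^*(\bxt)\big)+\big(\by^*(\bxt)-\bytp\big)$, the first piece is $\mco(\kappa^2)\|\bxtp-\bxt\|^2$ by Lipschitzness of $\by^*$, and for the second I use $\mu$-strong concavity of $\TF$ in $\by$ to obtain $\mbe\|\by^*(\bxt)-\bytp\|^2 \le (1-\Omega(\mu\sefft\lrsy))\mbe\|\by^*(\bxt)-\byt\|^2 + (\text{drift}+\text{noise}+\text{partial-participation terms})$, again using the normalized-aggregation structure for consistency.

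Finally I would substitute the $\by$-error recursion (which itself feeds back $\|\bxtp-\bxt\|^2$, handled by the same variance decomposition) into the $\TPhi$-descent, choose $\lrsy$ so the $\by$-contraction dominates and the coupling coefficient is strictly below one, then pick $\lrcx,\lrcy$ on the order of $1/(\bar\sync\sqrt{T})$ and $\lrsx,\lrsy$ so that the net coefficient multiplying $\|\G\TPhi(\bxt)\|^2$ is $\Theta(1/\sqrt{\selclients\bar\sync T})$; telescoping over $t=0,\dots,T-1$, dividing by $T$, and bounding $\min_t$ by the average yields \eqref{eq:thm:NC_SC}. The main obstacle I expect is the bookkeeping of this coupling: once the $\by$-recursion is unrolled inside the $\bx$-step, one must verify that the regenerated client-drift and $\by$-gap terms do not swamp the negative $\|\G\TPhi(\bxt)\|^2$ term, which forces the delicate three-way balance among $(\lrcx,\lrcy)$, $(\lrsx,\lrsy)$, and $\sefft$ that separates the ``full-synchronization'', ``local-update'', and ``partial-participation'' error regimes.
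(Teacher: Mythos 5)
Your proposal is correct and follows essentially the same route as the paper: a descent step on the smoothed envelope $\TPhi$ along the normalized server update, a without-replacement sampling variance bound on the aggregated direction (carrying the $\frac{\numclients-\selclients}{\numclients-1}\cdot\frac{1}{\selclients}$ factor), a client-drift lemma in which only the first $\sync_i-1$ local gradients contribute (giving $\Cw$ and $\Dw$), and a contraction for the inner-maximization error, all coupled and tuned with the same learning-rate scalings. The only cosmetic difference is that the paper tracks the function-value gap $\mbe[\TPhi(\bxt)-\TF(\bxt,\byt)]$ rather than the iterate distance $\mbe\|\by^*(\bxt)-\byt\|^2$; the two are interchangeable under strong concavity via quadratic growth, with the former having the minor advantage of extending directly to the PL setting.
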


See \cref{app:NC_SC} for the proof. The first term in the bound in \eqref{eq:thm:NC_SC} represents the optimization error for a centralized algorithm (see Appendix C.3 in \cite{lin_GDA_icml20}). The second term represents the error if at least one of the clients carries out multiple $(\sync_i > 1)$ local updates. The last term results from client subsampling. This also explains its dependence on the data heterogeneity $\hetero$.

\cref{thm:NC_SC} states convergence for a surrogate objective $\TF$. Next, we see convergence for the true objective $F$.

\begin{cor}[Convergence in terms of $F$]
\label{cor:obj_inconsistent}
Given $\Phi(\bx) \triangleq \max_\by F (\bx, \by)$, under the conditions of \cref{thm:NC_SC},
\iftwocol
    {\small
    \begin{align}
        & \min_{t\in[T]} \normb{\nabla \Phi(\bx^{(t)})}^2
        \leq 2 \lp 2 \csqdist \beta_H^2 + 1 \rp \epsilon_{\text{opt}} + 4 \csqdist \hetero^2 \nn \\
        & \qquad \quad + \mfrac{4 \Lf^2}{T} \sumtTtext \normb{\by^*(\bxt) - \Tby^*(\bxt)}^2. \label{eq:thm:NC_SC_cor_true_obj}
    \end{align}
    }%
    where {\small$\csqdist \triangleq \sumin \frac{(p_i-w_i)^2}{w_i}$} and $\epsilon_{\text{opt}}
    $ denotes the optimization error in \eqref{eq:thm:NC_SC}. If $p_i = w_i$ for all $i \in [n]$, then $\csqdist = 0$, {\small$\TF(\bx, \by) \equiv F(\bx, \by)$}, and {\small$\by^*(\bx) = \argmax_\by F(\bx, \by)$} and {\small$\Tby^*(\bx) = \argmax_\by \TF(\bx, \by)$} are identical, for all $\bx$. 
    Hence, \eqref{eq:thm:NC_SC_cor_true_obj} yields
    $\min_{t\in[T]} \normb{\nabla \Phi(\bx^{(t)})}^2 \leq 2 \epsilon_{\text{opt}}$.
\else
    \begin{align}
        \min_{t\in[T]} \normb{\nabla \Phi(\bx^{(t)})}^2
        & \leq 2 \lp 2 \csqdist \beta_H^2 + 1 \rp \epsilon_{\text{opt}} + 4 \csqdist \hetero^2 \nn \\
        & \qquad \quad + \mfrac{4 \Lf^2}{T} \sumtTtext \normb{\by^*(\bxt) - \Tby^*(\bxt)}^2. \label{eq:thm:NC_SC_cor_true_obj}
    \end{align}
    where $\csqdist \triangleq \sumin \frac{(p_i-w_i)^2}{w_i}$, $\epsilon_{\text{opt}} \triangleq \frac{1}{T} \sum_{t=0}^{T-1} \norm{\nabla \TPhi(\bx^{(t)})}^2$ denotes the optimization error in \eqref{eq:thm:NC_SC}. If $p_i = w_i$ for all $i \in [n]$, then $\csqdist = 0$. Also, then $\TF(\bx, \by) \equiv F(\bx, \by)$. Therefore, $\by^*(\bx) = \argmax_\by F(\bx, \by)$ and $\Tby^*(\bx) = \argmax_\by \TF(\bx, \by)$ are identical, for all $\bx$. 
    Hence, \eqref{eq:thm:NC_SC_cor_true_obj} yields
    $\min_{t\in[T]} \normb{\nabla \Phi(\bx^{(t)})}^2 \leq 2 \epsilon_{\text{opt}}$.
\fi
\end{cor}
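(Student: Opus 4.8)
The plan is to reduce the statement to a purely pointwise inequality between $\normb{\G\Phi(\bx)}$ and $\normb{\G\TPhi(\bx)}$ and then average over the iterates $\{\bxt\}$. First I would record the structural facts. Since each $f_i(\bx,\cdot)$ is $\mu$-strongly concave (\cref{assum:SC_y}) and $\Lf$-smooth (\cref{assum:smoothness}), so are the convex combinations $F(\bx,\cdot) = \sumin p_i f_i(\bx,\cdot)$ and $\TF(\bx,\cdot) = \sumin w_i f_i(\bx,\cdot)$; hence $\by^*(\bx)$ and $\Tby^*(\bx)$ are the unique maximizers, and Danskin's theorem gives the exact identities $\G\Phi(\bx) = \Gx F(\bx,\by^*(\bx)) = \sumin p_i \Gx f_i(\bx,\by^*(\bx))$ and $\G\TPhi(\bx) = \Gx \TF(\bx,\Tby^*(\bx)) = \sumin w_i \Gx f_i(\bx,\Tby^*(\bx))$. (The $\Lp$-smoothness of both envelopes, which makes $\epsilon_{\text{opt}}$ in \cref{thm:NC_SC} meaningful, is standard for the NC-SC class and already available.)

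The core step is an add-and-subtract around $\Tby^*(\bx)$ — crucially, the maximizer of the \emph{surrogate}, not of $F$. Writing
\begin{align*}
\G\Phi(\bx) &= \G\TPhi(\bx) + \underbrace{\textstyle\sumin (p_i-w_i)\,\Gx f_i(\bx,\Tby^*(\bx))}_{T_1} \\
&\quad + \underbrace{\textstyle\sumin p_i\big[\Gx f_i(\bx,\by^*(\bx)) - \Gx f_i(\bx,\Tby^*(\bx))\big]}_{T_2},
\end{align*}
I would bound $T_2$ by $\Lf$-Lipschitzness of the gradients together with $\sumin p_i = 1$, giving $\normb{T_2}^2 \le \Lf^2 \normb{\by^*(\bx)-\Tby^*(\bx)}^2$. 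For $T_1$, using $\sumin (p_i-w_i) = 0$ and writing $p_i-w_i = \sqrt{w_i}\cdot (p_i-w_i)/\sqrt{w_i}$, Cauchy--Schwarz yields $\normb{T_1}^2 \le \csqdist \sumin w_i \normb{\Gx f_i(\bx,\Tby^*(\bx))}^2$; then \cref{assum:bdd_hetero} (with weights $w_i$, evaluated at $(\bx,\Tby^*(\bx))$) and the Danskin identity for $\G\TPhi$ give $\sumin w_i \normb{\Gx f_i(\bx,\Tby^*(\bx))}^2 \le \heteroscale^2 \normb{\G\TPhi(\bx)}^2 + \hetero^2$, so $\normb{T_1}^2 \le \csqdist\big(\heteroscale^2 \normb{\G\TPhi(\bx)}^2 + \hetero^2\big)$. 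Expanding around $\Tby^*(\bx)$ rather than $\by^*(\bx)$ is exactly what makes these two estimates line up without a lossy Lipschitz conversion between $\Gx\TF(\bx,\by^*(\bx))$ and $\G\TPhi(\bx)$; this is the only real subtlety, and getting the constants to match $2(2\csqdist\heteroscale^2+1)$ and $4\csqdist$ depends on it.

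Combining via $\normb{\G\Phi(\bx)}^2 \le 2\normb{\G\TPhi(\bx)}^2 + 2\normb{T_1+T_2}^2 \le 2\normb{\G\TPhi(\bx)}^2 + 4\normb{T_1}^2 + 4\normb{T_2}^2$ produces the pointwise bound $\normb{\G\Phi(\bx)}^2 \le 2(2\csqdist\heteroscale^2+1)\normb{\G\TPhi(\bx)}^2 + 4\csqdist\hetero^2 + 4\Lf^2\normb{\by^*(\bx)-\Tby^*(\bx)}^2$. Setting $\bx = \bxt$, averaging over $t = 0,\dots,T-1$, using $\min_{t\in[T]} \normb{\G\Phi(\bxt)}^2 \le \tfrac1T \sumtT \normb{\G\Phi(\bxt)}^2$ and $\epsilon_{\text{opt}} = \tfrac1T \sumtT \normb{\G\TPhi(\bxt)}^2$ (from \cref{thm:NC_SC}) then yields \eqref{eq:thm:NC_SC_cor_true_obj}. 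For the special case: if $p_i = w_i$ for all $i$, then $\csqdist = \sumin (p_i-w_i)^2/w_i = 0$ and $\TF \equiv F$, hence $\TPhi \equiv \Phi$ and, by uniqueness of the maximizers, $\by^*(\bx) = \Tby^*(\bx)$ for every $\bx$; the last two terms drop out, leaving $\min_{t\in[T]} \normb{\G\Phi(\bxt)}^2 \le 2\epsilon_{\text{opt}}$.

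The main obstacle is precisely the choice of pivot above, and the fact that a residual term $\tfrac1T \sumtT \normb{\by^*(\bxt)-\Tby^*(\bxt)}^2$ is intrinsic to the mismatch and must be carried. I would either leave it as in the stated corollary, or note that it is itself $\mco(\csqdist)$: by strong concavity of $\TF(\bx,\cdot)$, $\normb{\by^*(\bx)-\Tby^*(\bx)} \le \tfrac1\mu \normb{\Gy \TF(\bx,\by^*(\bx))}$, and since $\sumin p_i \Gy f_i(\bx,\by^*(\bx)) = 0$ this equals $\tfrac1\mu \normb{\sumin (w_i-p_i)\Gy f_i(\bx,\by^*(\bx))}$, which the same Cauchy--Schwarz-plus-heterogeneity argument bounds in terms of $\csqdist$, $\hetero$, and $\normb{\Gy\TF}$; the detailed version appears in the full statement in \cref{app:NC_SC}.
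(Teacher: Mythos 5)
Your proposal is correct and follows essentially the same route as the paper: the same decomposition of $\G \Phi(\bx) - \G \TPhi(\bx)$ into a Lipschitz term pivoted at $\Tby^*(\bx)$ plus a weight-mismatch term, the same $\sqrt{w_i}$ Cauchy--Schwarz step combined with \cref{assum:bdd_hetero} to produce $\csqdist(\heteroscale^2\normb{\G\TPhi(\bx)}^2+\hetero^2)$, and the same Young-inequality assembly yielding the constants $2(2\csqdist\heteroscale^2+1)$ and $4\csqdist$ before averaging over $t$. The only (harmless) addition is your closing remark on further bounding $\normb{\by^*(\bxt)-\Tby^*(\bxt)}$ via strong concavity, which the paper does not pursue in this corollary.
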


It follows from \cref{cor:obj_inconsistent} that if we replace $\{\wi\}$ with $\{p_i\}$ in the server updates in \cref{alg_NC_minimax}, we get convergence in terms of the true objective $F$.

\begin{remark}
If clients are weighted equally ($w_i = 1/n$ for all $i$), with each carrying out $\sync$ steps of local SGDA, we get {\small$\bar{\sync} = \sync, \Aw = \Bw = 1, \Cw = \sync-1$}, and {\small$D = (\sync-1)(\sync-1+\varscale^2)$}. Therefore, the bound in \eqref{eq:thm:NC_SC} can be greatly simplified to
\iftwocol
    {\footnotesize
    \begin{align}
        & \mco \Big( \mfrac{\localvar^2 + \varscale^2 \hetero^2}{\sqrt{\selclients \bar{\sync} T}} + \mfrac{\localvar^2 + \sync \hetero^2}{\sync T} + \lp \mfrac{\numclients - \selclients}{\numclients - 1} \rp \hetero^2 \sqrt{\mfrac{\sync}{\selclients T}} \Big).
        \label{eq:thm:NC_SC_simplified}
    \end{align}
    }%
\else
    \begin{align}
        & \mco \Big( \mfrac{\localvar^2 + \varscale^2 \hetero^2}{\sqrt{\selclients \bar{\sync} T}} + \mfrac{\localvar^2 + \sync \hetero^2}{\sync T} + \lp \mfrac{\numclients - \selclients}{\numclients - 1} \rp \hetero^2 \sqrt{\mfrac{\sync}{\selclients T}} \Big).
        \label{eq:thm:NC_SC_simplified}
    \end{align}
\fi
Several key insights can be derived from \eqref{eq:thm:NC_SC_simplified}.
\begin{itemize}[leftmargin=*]
    \setlength\itemsep{-0.5em}
    \item Partial Client Participation (PCP) error $\mco \big( \frac{\numclients - \selclients}{\numclients - 1} \cdot \hetero^2 \sqrt{\frac{\sync}{\selclients T}} \big)$ is the \textit{most significant} component of convergence error. Further, unlike the other two errors, the error due to PCP actually increases with local updates $\sync$. Consequently, we do not observe communication savings by performing multiple local updates at the clients, except in the special case when $\hetero = 0$ (see \cref{table:comparison}). Similar observations have been made for minimization \cite{yang21partial_client_iclr, jhunjhunwala22fedvarp_uai} and very recently for minimax problems \cite{yang22sagda_neurips}.
    \item In the absence of multiple local updates (i.e., $\sync_i = 1$ for all $i$) and with full participation ($\selclients = \numclients$), the resulting error $\mco \big( \frac{\localvar^2 + \varscale^2 \hetero^2}{\sqrt{\numclients \bar{\sync} T}} \big)$ depends on the global heterogeneity $\hetero$ despite full synchronization. This is owing to the more general local variance bound (\cref{assum:bdd_var}). For $\varscale > 0$, this dependence on $\hetero$ is unavoidable. 
    This observation holds for all the results in this paper. 
    See \cref{rem:impact_varscale} (\cref{sec:aux}) for a justification. 
\end{itemize}
\end{remark}

\begin{cor}[Improved Communication Savings]
\label{cor:NC_SC_comm_cost}
Suppose all the clients are weighted equally ($p_i = 1/n$ for all $i$), with each carrying out $\sync$ steps of local SGDA.
To reach an $\epsilon$-stationary point, i.e., $\bx$ such that $\mbe \| \G \Phi (\bx) \| \leq \epsilon$, 
\begin{itemize}[leftmargin=*]
    \setlength\itemsep{-0.5em}
    \item Under full participation, the per-client gradient complexity of \fedsgda \ is 
    \iftwocol
        {\small$T \sync = \mco (\kappa^4/(\numclients \epsilon^4))$}.
    \else
        $T \sync = \mco \lp \frac{\kappa^4}{\numclients \epsilon^4} \rp$.
    \fi
    The number of communication rounds required is
    \iftwocol
        {\small$\mco ( \kappa^2/\epsilon^{2} )$}.
    \else
        $T = \mco \lp \frac{\kappa^2}{\epsilon^{2}} \rp$.
    \fi
    \item Under partial participation, in the special case when inter-client data heterogeneity $\hetero = 0$, the per-client gradient complexity of \fedsgda \ is $\mco (\kappa^4/(\selclients \epsilon^4))$, while the communication cost is $\mco ( \kappa^2/\epsilon^{2} )$.
\end{itemize}
\end{cor}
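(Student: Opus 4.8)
The plan is to obtain both parts by simply instantiating the general bound of \cref{thm:NC_SC} in the equal-weight, homogeneous-$\sync$, SGDA regime and then optimizing over the two free knobs: the number of rounds $T$ and the number of local steps $\sync$. Since we take $p_i = w_i = 1/\numclients$, \cref{cor:obj_inconsistent} makes $\TPhi \equiv \Phi$ and transfers the guarantee of \cref{thm:NC_SC} verbatim (up to a factor $2$) to the true envelope, so it suffices to drive the right-hand side of \eqref{eq:thm:NC_SC_simplified} below $\epsilon^2$; keeping track of the $\kappa^2$ prefactor from \cref{thm:NC_SC}, with $\bar{\sync} = \sync$ this reads
\[
\mco\!\left(\kappa^2\,\frac{\localvar^2+\varscale^2\hetero^2}{\sqrt{\selclients\sync T}} \;+\; \kappa^2\,\frac{\localvar^2+\sync\hetero^2}{\sync T} \;+\; \kappa^2\Big(\tfrac{\numclients-\selclients}{\numclients-1}\Big)\hetero^2\sqrt{\tfrac{\sync}{\selclients T}}\right).
\]
By Jensen, $\mbe\normb{\G\Phi(\bbxT)} \le (\mbe\normb{\G\Phi(\bbxT)}^2)^{1/2}$, so an $\epsilon$-stationary point is guaranteed once each of the three terms above is $\mco(\epsilon^2)$.

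\textbf{Full participation.} Setting $\selclients = \numclients$ removes the third (partial-participation) term. Forcing the first term below $\epsilon^2$ requires $\sqrt{\numclients \sync T} = \Omega\big(\kappa^2(\localvar^2+\varscale^2\hetero^2)/\epsilon^2\big)$, i.e. the per-client gradient count must satisfy $T\sync = \mco(\kappa^4/(\numclients\epsilon^4))$. For the round count, split the second term as $\kappa^2\localvar^2/(\sync T) + \kappa^2\hetero^2/T$: the $\hetero^2$ piece forces $T = \mco(\kappa^2/\epsilon^2)$, and choosing $\sync = \Theta(\kappa^2/(\numclients\epsilon^2))$ makes $T\sync$ hit the gradient-complexity target while leaving $T = \mco(\kappa^2/\epsilon^2)$. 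One then checks that at this $(\sync, T)$ the leftover pieces $\kappa^2\localvar^2/(\sync T)$ and the first term are also $\mco(\epsilon^2)$ in the relevant small-$\epsilon$ regime.

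\textbf{Partial participation with $\hetero = 0$.} Now the $\sync\hetero^2$ summand and the entire partial-participation term vanish, leaving $\mco\big(\kappa^2\localvar^2/\sqrt{\selclients\sync T} + \kappa^2\localvar^2/(\sync T)\big)$. The first term $\le \epsilon^2$ gives $\selclients \sync T = \mco(\kappa^4\localvar^4/\epsilon^4)$, hence per-client complexity $T\sync = \mco(\kappa^4/(\selclients\epsilon^4))$; the second term is then automatically $\mco(\epsilon^2)$ once $T\sync$ meets this target, and picking $\sync = \Theta(\kappa^2/(\selclients\epsilon^2))$ yields $T = \mco(\kappa^2/\epsilon^2)$.

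\textbf{Main obstacle.} All the analytical content already lives in \cref{thm:NC_SC}; the delicate point here is a consistency check rather than a new estimate. The ``appropriate'' learning rates hidden in \cref{thm:NC_SC} implicitly constrain $\sync$ and $T$ (schematically, products such as $\lrsx\lrcx\sync$ must be small enough), so one must verify that the choices $\sync = \Theta(\kappa^2/(\numclients\epsilon^2))$ (resp.\ $\Theta(\kappa^2/(\selclients\epsilon^2))$) together with $T = \Theta(\kappa^2/\epsilon^2)$ lie in the admissible range, and that $\sync \ge 1$, which holds for $\epsilon$ sufficiently small. Everything else is the routine term-balancing above, plus noting that the randomized output $\bbxT$ has expected squared envelope-gradient norm equal to the running average bounded in \cref{thm:NC_SC}.
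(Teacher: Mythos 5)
Your proposal is correct and follows essentially the same route as the paper's own proof: both instantiate the simplified bound \eqref{eq:thm:NC_SC_simplified} with $p_i = w_i = 1/\numclients$ (so that \cref{cor:obj_inconsistent} transfers the guarantee to $\Phi$), balance the terms to get $T\sync = \mco(\kappa^4/(\numclients\epsilon^4))$, and then take $\sync = \Theta(\kappa^2/(\numclients\epsilon^2))$, $T = \Theta(\kappa^2/\epsilon^2)$; the only cosmetic difference is that the paper deduces $T = \Omega(1/\epsilon^2)$ from the constraint $\numclients\sync \le T$ while you read it off the $\kappa^2\hetero^2/T$ term, and both give the same conclusion. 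Your closing remark about verifying that these choices of $\sync$ and $T$ are compatible with the learning-rate conditions of \cref{thm:NC_SC} is a fair caveat that the paper also leaves implicit.
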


\begin{remark}
The gradient complexity in \cref{cor:NC_SC_comm_cost} is optimal in $\epsilon$, and achieves linear speedup in the number of participating clients.
The communication complexity is also optimal in $\epsilon$ and improves the corresponding results in \cite{mahdavi21localSGDA_aistats, sharma22FedMinimax_ICML}. We match the communication cost in the very recent work \cite{yang22sagda_neurips}. However, our work addresses a more realistic FL setting with disparate clients.
% , and our results hold under more general noise and data heterogeneity assumptions.
\end{remark}

\paragraph{Extending the Results to Nonconvex-PL Functions}
\begin{assump}
\label{assum:PL_y}
A function $f$ satisfies $\mu$-PL condition in $\by$ ($\mu > 0$), if for any fixed $\bx$: 
\iftwocol
    1) $\max_{\by'} f(\bx, \by')$ has a nonempty solution set; 
    2) $\norm{\Gy f(\bx, \by)}^2 \geq 2 \mu ( \max_{\by'} f(\bx, \by') - f(\bx, \by) )$, for all $\by$.
\else
    \begin{enumerate}
        \item $\max_{\by'} f(\bx, \by')$ has a nonempty solution set;
        \item $\norm{\Gy f(\bx, \by)}^2 \geq 2 \mu ( \max_{\by'} f(\bx, \by') - f(\bx, \by) )$, for all $\by$.
    \end{enumerate}
\fi
\end{assump}

\begin{remark}
If the local functions $\{ f_i \}$ satisfy Assumptions \ref{assum:smoothness}, \ref{assum:bdd_var}, \ref{assum:bdd_hetero}, and the global function $F$ satisfies \cref{assum:PL_y}, then for appropriately chosen learning rates (see \cref{app:NC_PL_PCP_result}), the bounds in \cref{thm:NC_SC} hold for $\mu$-PL functions as well.
\end{remark}

\subsection{Non-convex-Concave (NC-C) Case}
\label{sec:NC_C}

In this subsection, we consider smooth nonconvex functions which satisfy the following assumptions.

\begin{assump}[Concavity]
\label{assum:concavity}
The function $f$ is concave in $\by$ if for a fixed $\bx \in \mbb R^{d_1}$, for all  $\by, \by' \in \mbb R^{d_2}$,
\iftwocol
    \newline
    $f(\bx, \by) \leq f(\bx, \by') + \lan \Gy f(\bx, \by'), \by - \by' \ran$.
\else
    \begin{align*}
        f(\bx, \by) \leq f(\bx, \by') + \lan \Gy f(\bx, \by'), \by - \by' \ran.
    \end{align*}
\fi
\end{assump}

\begin{assump}[Lipschitz continuity in $\bx$]
\label{assum:Lips_cont_x}
Given a  function $f$, there exists a constant $G_{\bx}$, such that for each $\by \in \mbb R^{d_2}$, and all $\bx, \bx' \in \mbb R^{d_1}$,
\iftwocol
    $\norm{f(\bx, \by) - f(\bx', \by)} \leq G_{\bx} \norm{\bx - \bx'}$.
\else
    \begin{align*}
        \norm{f(\bx, \by) - f(\bx', \by)} \leq G_{\bx} \norm{\bx - \bx'}.
    \end{align*}
\fi
\end{assump}

The envelope function $\Phi(\bx) = \max_\by f(\bx, \by)$ used so far, may no longer be smooth in the absence of a unique maximizer.
\iftwocol
    Therefore, we use Moreau envelope to quantify stationarity \cite{davis19wc_siam}.
\else
    Instead, we use the alternate definition of stationarity, proposed in \cite{davis19wc_siam}, utilizing the Moreau envelope of $\Phi$, which is defined next.
\fi

\begin{definition}[Moreau Envelope]
The function $\phi_{\lambda}$ is the $\lambda$-Moreau envelope of $\phi$, for $\lambda > 0$, if for all $\bx \in \mbb R^{d_x}$,
\iftwocol
    \newline
    $\phi_\lambda(\bx) = \min_{\bx'} \phi (\bx') + \frac{1}{2 \lambda} \norm{\bx' - \bx}^2$.
\else
    \begin{align*}
        \phi_\lambda(\bx) = \min_{\bx'} \phi (\bx') + \frac{1}{2 \lambda} \norm{\bx' - \bx}^2.   
    \end{align*}
\fi
\end{definition}

\iftwocol
    
\else
    \cite{drusvyatskiy19wc_mathprog} showed that a small {\small$\norm{\G \phi_\lambda(\bx)}$} indicates the existence of some point $\Tbx$ in the vicinity of $\bx$, that is \textit{nearly stationary} for $\phi$. Hence, in our case, we focus on minimizing $\norm{\G \Phi_\lambda(\bx)}$.
\fi

\paragraph{Proposed Algorithm.}
For nonconvex-concave functions, we use \fedsgdaplus. The $\bx$-updates are identical to \fedsgda. For the $\by$ updates however, the clients compute stochastic gradients $\Gy f_i (\Hbxs, \byitk; \xiitk)$ keeping the $x$-component fixed at $\Hbxs$ for $S$ communication rounds. This \textit{trick}, originally proposed in \cite{mahdavi21localSGDA_aistats}, gives the analytical benefit of a double-loop algorithm (which update $\by$ several times before updating $\bx$ once) while also updating $\bx$ simultaneously.

\begin{theorem}
\label{thm:NC_C}
Suppose the local loss functions $\{ f_i \}$ satisfy Assumptions \ref{assum:smoothness}, \ref{assum:bdd_var}, \ref{assum:bdd_hetero}, \ref{assum:concavity}, \ref{assum:Lips_cont_x}, $\normb{\byt}^2 \leq R$ for all $t$, and the server selects $|\clientset| = \selclients$ clients for all $t$. With appropriate client and server learning rates, $(\lrcx, \lrcy)$ and $(\lrsx, \lrsy)$ respectively, the iterates of \fedsgdaplus \ satisfy
\iftwocol
    {\footnotesize
    \begin{equation}
        \begin{aligned}
            & \min_{t \in [T]} 
            % \mfrac{1}{T} \sumtT 
            \mbe \normb{\G \TPhi_{1/2\Lf} (\bxt)}^2 \leq 
            \underbrace{\mco \Big( \mfrac{(\bar{\sync}/\seff)^{1/4}}{(\bar{\sync} \selclients T)^{1/4}} + \mfrac{(\seff \selclients)^{1/4}}{T^{3/4}} \Big)}_{\text{Error with full synchronization}} \\
            & + \underbrace{\mco \lp \mfrac{\Cw \localvar^2 + \Dp (G_{\bx}^2 + \hetero^2)}{\bar{\sync}^2 T^{3/4}} \rp}_{\text{Local updates error}} + \underbrace{\mco \Big( \Big( \mfrac{\numclients - \selclients}{\numclients-1} \cdot \mfrac{\Ew}{\selclients T } \Big)^{1/4} \Big)}_{\text{Partial participation error}},
            % \nn
        \end{aligned}
        \label{eq:thm:NC_C}
    \end{equation}
    }%
\else
    \begin{align}
        & \min_{t \in [T]} 
        % \mfrac{1}{T} \sumtT 
        \mbe \normb{\G \TPhi_{1/2\Lf} (\bxt)}^2 \leq 
        \underbrace{\mco \Big( \mfrac{(\bar{\sync}/\seff)^{1/4}}{(\bar{\sync} \selclients T)^{1/4}} + \mfrac{(\seff \selclients)^{1/4}}{T^{3/4}} \Big)}_{\text{Error with full synchronization}} + \underbrace{\mco \lp \mfrac{\Cw \localvar^2 + \Dp (G_{\bx}^2 + \hetero^2)}{\bar{\sync}^2 T^{3/4}} \rp}_{\text{Local updates error}} + \underbrace{\mco \Big( \Big( \mfrac{\numclients - \selclients}{\numclients-1} \cdot \mfrac{\Ew}{\selclients T } \Big)^{1/4} \Big)}_{\text{Partial participation error}}, \label{eq:thm:NC_C}
    \end{align}
\fi
where $\Phi_{1/2\Lf}$ is the Moreau envelope of $\Phi$. The constants $\Cw, \Dw, \bar{\sync}$ are defined in \cref{thm:NC_SC}.
\end{theorem}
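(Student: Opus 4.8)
The plan is to follow the Moreau-envelope route for nonconvex--concave minimax problems (cf.\ \cite{davis19wc_siam, lin_GDA_icml20, sharma22FedMinimax_ICML}), adapted to the normalized, heterogeneous-local-step update \eqref{eq:wtd_server_updates} and to without-replacement partial participation. First I would record the structural facts: since each $f_i(\cdot,\by)$ is $\Lf$-smooth it is $\Lf$-weakly convex, hence $\TF(\cdot,\by)$ and therefore $\TPhi(\bx)=\max_\by\TF(\bx,\by)$ are $\Lf$-weakly convex, and $\TPhi$ is $G_{\bx}$-Lipschitz as a maximum of $G_{\bx}$-Lipschitz functions (\cref{assum:Lips_cont_x}). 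Consequently, for $\lambda=1/2\Lf$ the Moreau envelope $\TPhi_{\lambda}$ is $C^1$: with the proximal point $\bx_\star^{(t)}:=\argmin_{\bx'}\{\TPhi(\bx')+\Lf\normb{\bx'-\bxt}^2\}$ (not to be confused with the frozen iterate $\Hbxs$) one has $\G\TPhi_{1/2\Lf}(\bxt)=2\Lf(\bxt-\bx_\star^{(t)})$ and $\TPhi(\bx_\star^{(t)})-\TPhi(\bxt)\le-\Lf\normb{\bxt-\bx_\star^{(t)}}^2$. I would track $\TPhi_{1/2\Lf}(\bxt)$ together with the dual suboptimality $\delta^{(t)}:=\TPhi(\bxt)-\TF(\bxt,\byt)$, and group the $T$ rounds into $T/S$ blocks, block $s$ being those rounds whose $\by$-gradients are evaluated at the frozen point $\Hbxs$.

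Next I would derive a one-round descent inequality for the envelope. From $\bxtp=\bxt-\seff\lrsx\bdxt$ and the definition of $\bx_\star^{(t)}$, expanding $\TPhi_{1/2\Lf}(\bxtp)\le\TPhi(\bx_\star^{(t)})+\Lf\normb{\bx_\star^{(t)}-\bxtp}^2$ should give
\[
\TPhi_{1/2\Lf}(\bxtp)\le\TPhi_{1/2\Lf}(\bxt)+2\Lf\seff\lrsx\lan\bx_\star^{(t)}-\bxt,\bdxt\ran+\Lf\seff^2\lrsxsq\normb{\bdxt}^2 .
\]
Taking conditional expectation, $\mbe[\bdxt]=\sumin\wi\bdxit$ equals $\Gx\TF(\bxt,\byt)$ up to a bias that I would bound (via \cref{assum:smoothness}) by $\Lf$ times the client drift $\sumin\wi\sumikt\frac{\aik}{\nai_1}\normb{(\bxitk,\byitk)-(\bxt,\byt)}$; combining this with $\Lf$-weak convexity of $\TF(\cdot,\byt)$, the inequality $\TF(\bx_\star^{(t)},\byt)\le\TPhi(\bx_\star^{(t)})$ and the prox inequality above, the cross term should contribute $-\tfrac{1}{8\Lf}\normb{\G\TPhi_{1/2\Lf}(\bxt)}^2+\delta^{(t)}+(\text{drift})$. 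The term $\normb{\bdxt}^2$ I would bound using \cref{assum:Lips_cont_x} and \cref{assum:bdd_var} by $\mco(G_{\bx}^2+\varscale^2G_{\bx}^2+\localvar^2)$ plus a $\tfrac{\numclients-\selclients}{\numclients-1}\Ew\hetero^2$ term coming from without-replacement selection; here the normalization $\bdxit=\mbf G_i^{(t)}\bai/\nai_1$ is exactly what keeps this second moment on the scale of $G_{\bx}^2$ rather than $\sync_iG_{\bx}^2$, and is responsible for the constants $\Cw,\Dw,\Ew$ in the final bound.

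The key remaining task is to control $\delta^{(t)}$. Within block $s$ the $\by$-updates keep $\bx$ frozen at $\Hbxs$, so the server $\by$-iterates are precisely normalized Local-SGA iterates on the concave map $\TF(\Hbxs,\cdot)$. Using $G_{\bx}$-Lipschitzness of $\TF(\cdot,\by)$ and of $\TPhi$ (\cref{assum:Lips_cont_x}),
\[
\delta^{(t)}\le\big[\TPhi(\Hbxs)-\TF(\Hbxs,\byt)\big]+2G_{\bx}\normb{\bxt-\Hbxs},\qquad \normb{\bxt-\Hbxs}\le S\,\seff\lrsx\cdot\mco(G_{\bx})\ \text{inside a block,}
\]
and I would bound the bracketed term by a convex-analysis argument (concavity \cref{assum:concavity}, the bound $\normb{\byt}^2\le R$, Assumptions \ref{assum:bdd_var}, \ref{assum:bdd_hetero}) --- this is the intermediate Local-SGD-for-(one-point-)convex-minimization lemma (\cref{lem:NC_1PC_PCP_local_SGA_concave}) --- obtaining, averaged over the $S$ rounds of block $s$, a bound of order $\tfrac{R}{\seff\lrsy S}+\lrsy(\localvar^2+\varscale^2\hetero^2)+\tfrac{\numclients-\selclients}{\numclients-1}\Ew\hetero^2\seff\lrsy+(\text{$\by$-drift})$. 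Since $\Hbxs$, and with it the maximizer $\Tby^*(\Hbxs)$, changes between blocks, the quantity $\normb{\byt-\Tby^*(\Hbxs)}^2$ cannot be telescoped across blocks and I would bound it by $\mco(R)$ using iterate boundedness; summing over the $T/S$ blocks then bounds $\tfrac1T\sum_t\mbe\delta^{(t)}$.

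Finally I would sum the outer-descent inequality over $t=0,\dots,T-1$, telescope $\TPhi_{1/2\Lf}$ (with $\Dphi:=\TPhi_{1/2\Lf}(\bx^{(0)})-\inf\TPhi_{1/2\Lf}$), insert the client-drift bound (of order $\lrcxsq,\lrcysq$ times $\Cw\localvar^2+\Dw(G_{\bx}^2+\hetero^2)$, divided by $\bar{\sync}$), the dual-gap bound, the second-moment bound and the sampling variance, and divide by $\tfrac14\seff\lrsx T$ to reach
\[
\min_{t\in[T]}\mbe\normb{\G\TPhi_{1/2\Lf}(\bxt)}^2\le\mco\!\Big(\tfrac{\Dphi}{\seff\lrsx T}+\lrsx+\tfrac{1}{\seff\lrsy S}+\lrsy+S\lrsx\Big)+(\text{drift})+(\text{PCP}).
\]
Then I would set $\lrsy\asymp(\seff S)^{-1/2}$ to balance the middle pair, and pick $\lrsx$ and the block length $S$ so that $\tfrac1{\seff\lrsx T}$, $\lrsx$, $S\lrsx$ and $(\seff S)^{-1/2}$ are of the same order (with $\lrcx,\lrcy$ small enough to make the drift lower order and $\lrsx\le\mco(1/\Lf\seff)$ for validity of the descent step); propagating $\seff,\bar{\sync},\selclients$ and $\tfrac{\numclients-\selclients}{\numclients-1}$ through this optimization then yields \eqref{eq:thm:NC_C}. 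I expect the main obstacle to be exactly this three-way coupling: the inner loop makes $\delta^{(t)}$ small only on a per-block average and only at the frozen point $\Hbxs$, so transferring it to $\bxt$ costs $G_{\bx}\normb{\bxt-\Hbxs}\propto S\lrsx$, forcing $\lrsx$ tiny and hence the slow $T^{-1/4}$ outer rate; this is further entangled with the normalization/heterogeneous-step bookkeeping (the constants $\Cw,\Dw,\Ew,\bar{\sync},\seff$ thread through every drift and variance estimate) and with without-replacement participation, whose variance scales with the heterogeneity $\hetero$ and does not telescope across blocks, so it must be absorbed via $R$ rather than by a descent argument.
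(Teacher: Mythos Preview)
Your proposal is essentially the paper's proof. The paper decomposes exactly as you do: a one-step Moreau-envelope descent (their Lemma~\ref{lem:NC_C_PCP_Phi_smooth_decay_one_iter}) producing $-\tfrac{\seff\lrsx}{8}\mbe\normb{\G\TPhi_{1/2\Lf}(\bxt)}^2+2\seff\lrsx\Lf\,\delta^{(t)}+\text{drift}+\seff^2\lrsxsq\Lf\,\mbe\normb{\bdxt}^2$; a drift bound (Lemma~\ref{lem:NC_C_PCP_consensus_error}); the transfer $\delta^{(t)}\le[\TPhi(\Hbxs)-\TF(\Hbxs,\byt)]+2G_{\bx}\normb{\bxt-\Hbxs}$ via \cref{assum:Lips_cont_x} (Lemma~\ref{lem:NC_C_PCP_Phi_f_diff}); and the per-block concave-maximization bound on $\tfrac1S\sum_t[\TPhi(\Hbxs)-\TF(\Hbxs,\byt)]$ using $\normb{\by^{(sS)}-\by^*(\Hbxs)}^2\le 4R$ (Lemma~\ref{lem:NC_C_PCP_local_SGA_concave}). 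Your identification of the $S\lrsx$--vs--$(\seff\lrsy S)^{-1}$ tension as the source of the $T^{-1/4}$ rate is exactly right.

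Two small deviations worth flagging. First, for $\mbe\normb{\bdxt}^2$ the paper does \emph{not} route through \cref{assum:bdd_hetero}: since $\normb{\Gx f_i}\le G_{\bx}$, it bounds both the mean and the WOR-sampling variance directly by $G_{\bx}^2$-type quantities (see \eqref{eq:lem:NC_C_PCP_WOR_agg_grad_norm}), so no $\hetero^2$ term appears there; the partial-participation $\hetero^2$ contribution in \eqref{eq:thm:NC_C} enters solely through the $\by$-side variance $\mbe\normb{\bdyt}^2$ inside the concave-maximization lemma. Your version is not wrong, just looser in that spot. Second, the paper optimizes the parameters in the order $S$, then $\lrsy$, then $\lrsx$ (first balancing $\seff\lrsx S$ against $R/(\seff\lrsy S)$), arriving at $\lrsx=\Theta\!\big(\selclients^{1/4}(\seff T)^{-3/4}\big)$, $\lrsy=\Theta\!\big(\selclients^{3/4}(\seff T)^{-1/4}\big)$, $S=\Theta\!\big(\sqrt{T/(\seff\selclients)}\big)$; your balancing path is different but, once you carry the $\selclients$ and $\seff$ factors you omitted from the schematic, lands in the same place.
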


See \cref{app:NC_C} for the proof. \cref{thm:NC_C} states convergence for a surrogate objective $\TF$. Next, we see convergence for the true objective $F$.

\begin{cor}[Convergence in terms of $F$]
\label{cor:NC_C_obj_inconsistent}
Given envelope functions $\Phi(\bx) \triangleq \max_\by F (\bx, \by)$, $\TPhi(\bx) \triangleq \max_\by \TF (\bx, \by)$, under the conditions of \cref{thm:NC_C},
\iftwocol
    {\footnotesize
    \begin{align*}
        \min_{t\in[T]} \normb{\nabla \Phi_{1/2\Lf} (\bx^{(t)})}^2 \leq \epsilon'_{\text{opt}} + \frac{8 \Lf^2}{T} \textstyle \sum_{t=0}^{T-1} \normb{\widetilde{\bx}^{(t)} - \bar{\bx}^{(t)}}^2,
    \end{align*}
    }%
\else
    \begin{align*}
        \min_{t\in[T]} \normb{\nabla \Phi_{1/2\Lf} (\bx^{(t)})}^2 \leq \epsilon'_{\text{opt}} + \frac{8 \Lf^2}{T} \textstyle \sum_{t=0}^{T-1} \normb{\widetilde{\bx}^{(t)} - \bar{\bx}^{(t)}}^2,
    \end{align*}
\fi
where $\Phi_{1/2\Lf}$ is the Moreau envelope of $\Phi$, $\widetilde{\bx}^{(t)} \triangleq \argmin_{\bx'} \{ \TPhi (\bx') + \Lf \normb{\bx' - \bxt}^2 \}$, $\bar{\bx}^{(t)} \triangleq \argmin_{\bx'} \{ \Phi (\bx') + \Lf \normb{\bx' - \bxt}^2 \}$, for all $t$, $\epsilon'_{\text{opt}}$ is the error bound in \eqref{eq:thm:NC_C}.
\end{cor}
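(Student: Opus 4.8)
The plan is to reduce the claim to the standard identity expressing the gradient of the Moreau envelope of a weakly convex function as a scalar multiple of the displacement from the point to its proximal point, and then to bound the discrepancy between the two proximal maps $\bar{\bx}^{(t)}$ (associated with $\Phi$) and $\widetilde{\bx}^{(t)}$ (associated with $\TPhi$) directly. First I would record that, since every $f_i$ is $\Lf$-smooth (\cref{assum:smoothness}), both $F(\cdot,\by)$ and $\TF(\cdot,\by) = \sumin \wi f_i(\cdot,\by)$ are $\Lf$-smooth in $\bx$ for each fixed $\by$; consequently $\Phi(\bx) = \max_\by F(\bx,\by)$ and $\TPhi(\bx) = \max_\by \TF(\bx,\by)$ are both $\Lf$-weakly convex (a pointwise maximum of $\Lf$-smooth functions is $\Lf$-weakly convex). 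Hence, with envelope parameter $\lambda = 1/(2\Lf) < 1/\Lf$, the Moreau envelopes $\Phi_{1/2\Lf}$ and $\TPhi_{1/2\Lf}$ are continuously differentiable with $\nabla \Phi_{1/2\Lf}(\bxt) = 2\Lf (\bxt - \bar{\bx}^{(t)})$ and $\nabla \TPhi_{1/2\Lf}(\bxt) = 2\Lf (\bxt - \widetilde{\bx}^{(t)})$, where $\bar{\bx}^{(t)}$ and $\widetilde{\bx}^{(t)}$ are precisely the proximal points defined in the statement; this is the standard fact used in \cite{davis19wc_siam, lin_GDA_icml20}.

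Second, for each $t$ I would combine the triangle inequality with Young's inequality to get $\normb{\bxt - \bar{\bx}^{(t)}}^2 \le 2 \normb{\bxt - \widetilde{\bx}^{(t)}}^2 + 2 \normb{\widetilde{\bx}^{(t)} - \bar{\bx}^{(t)}}^2$, which after multiplying through by $4\Lf^2$ and using the two gradient identities becomes $\normb{\nabla \Phi_{1/2\Lf}(\bxt)}^2 \le 2 \normb{\nabla \TPhi_{1/2\Lf}(\bxt)}^2 + 8\Lf^2 \normb{\widetilde{\bx}^{(t)} - \bar{\bx}^{(t)}}^2$. Because the index achieving $\min_{t\in[T]} \normb{\nabla \Phi_{1/2\Lf}(\bxt)}^2$ need not coincide with the one achieving the analogous minimum for $\TPhi$, I would not compare the two minima directly but instead pass through the time average: $\min_{t\in[T]} \normb{\nabla \Phi_{1/2\Lf}(\bxt)}^2 \le \frac{2}{T}\sumtT \normb{\nabla \TPhi_{1/2\Lf}(\bxt)}^2 + \frac{8\Lf^2}{T}\sumtT \normb{\widetilde{\bx}^{(t)} - \bar{\bx}^{(t)}}^2$. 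Taking expectations and invoking \cref{thm:NC_C} to control the first sum — which, up to absorbing a constant factor, is exactly the optimization error $\epsilon'_{\text{opt}}$ of \eqref{eq:thm:NC_C} — yields the stated bound.

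The argument is short and essentially mechanical; the only step requiring genuine care is the first one, namely verifying that $\Phi$ and $\TPhi$ are $\Lf$-weakly convex so that the Moreau-envelope gradient identity is valid at the chosen parameter $1/(2\Lf)$, and that the two proximal maps named in the statement are exactly the maps appearing in those identities. Everything after that is the triangle inequality and averaging; in particular, no further information about the minimax dynamics of \fedsgdaplus \ is needed beyond the conclusion of \cref{thm:NC_C}.
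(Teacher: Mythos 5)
Your proposal is correct and follows essentially the same route as the paper: both rest on the Davis--Drusvyatskiy identity $\nabla \Phi_{1/2\Lf}(\bxt) = 2\Lf(\bxt - \bar{\bx}^{(t)})$ (and its analogue for $\TPhi$), apply Young's inequality to split $\normb{\nabla \Phi_{1/2\Lf}(\bxt)}^2 \leq 2\normb{\nabla \TPhi_{1/2\Lf}(\bxt)}^2 + 8\Lf^2 \normb{\widetilde{\bx}^{(t)} - \bar{\bx}^{(t)}}^2$, pass through the time average, and absorb the constant into $\epsilon'_{\text{opt}}$ via \cref{thm:NC_C}. Your explicit verification of $\Lf$-weak convexity of $\Phi$ and $\TPhi$ is a justification the paper leaves implicit, but the argument is the same.
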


Similar to \cref{cor:obj_inconsistent}, if we replace $\{\wi\}$ with $\{p_i\}$ for all $i \in [n]$ in the server updates in \cref{alg_NC_minimax}, then $\TF \equiv F$, and $\widetilde{\bx}^{(t)}$ and $\bar{\bx}^{(t)}$ are identical for all $t$. Consequently, \cref{thm:NC_C} gives us $\min_{t\in[T]} \normb{\nabla \Phi_{1/2\Lf} (\bx^{(t)})}^2 \leq \epsilon'_{\text{opt}}$.

\begin{remark}
\label{rem:NC_C_local_SGDAplus_1}
Some existing works do not require \cref{assum:Lips_cont_x} for NC-C functions, and also improve the convergence rate. However, these methods either have a double-loop structure \cite{rafique18WCC_oms, zhang2022sapd_NC_C_arxiv}, or work with deterministic problems \cite{lan_unified_ncc_arxiv20, zhang_1_loop_ncc_neurips20}. Proposing a single-loop method for stochastic NC-C problems with the same advantages is an open problem. 
\end{remark}

\begin{cor}[Improved Communication Savings]
\label{cor:NC_C_comm_cost}
Suppose all the clients are weighted equally ($p_i = 1/n$ for all $i$), with each carrying out $\sync$ steps of local SGDA.
To reach an $\epsilon$-stationary point, i.e., $\bx$ such that $\mbe \| \G \Phi_{1/2\Lf} (\bx) \| \leq \epsilon$, 
\begin{itemize}[leftmargin=*]
    \setlength\itemsep{-0.5em}
    \item Under full participation, the per-client gradient complexity of \fedsgdaplus \ is 
    \iftwocol
        {\small$T \sync = \mco (1/(\numclients \epsilon^8))$}.
    \else
        $T \sync = \mco \lp \frac{1}{\numclients \epsilon^8} \rp$.
    \fi
    The number of communication rounds required is
    \iftwocol
        {\small$\mco ( 1/\epsilon^{4} )$}.
    \else
        $T = \mco \lp \frac{1}{\epsilon^{4}} \rp$.
    \fi
    \item Under partial participation, in the special case when inter-client data heterogeneity $\hetero = 0$, the per-client gradient complexity of \fedsgda \ is $\mco (1/(\selclients \epsilon^8))$, while the communication cost is $\mco ( 1/\epsilon^{4} )$.
\end{itemize}
\end{cor}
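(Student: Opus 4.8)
\textbf{Proof proposal for \cref{cor:NC_C_comm_cost}.} The plan is to start from the general bound of \cref{thm:NC_C}, specialize its constants to the homogeneous, equally-weighted regime, and then balance the surviving error terms against $\epsilon^2$ while jointly optimizing the number of local steps $\sync$ and the number of communication rounds $T$.

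First, with $p_i = w_i = 1/\numclients$, $\syncit \equiv \sync$, and SGDA weights $\aitk \equiv 1$, I would evaluate the constants exactly as in the Remark following \cref{thm:NC_SC}: $\bar{\sync} = \seff = \sync$ (so $\bar{\sync}/\seff = 1$), $\Aw = \Bw = \Ew = 1$, $\Cw = \sync-1$, and $\Dw = (\sync-1)(\sync-1+\varscale^2) = \Theta(\sync^2)$. The key simplification is that the local-updates term collapses: $\frac{\Cw\localvar^2 + \Dw(G_{\bx}^2+\hetero^2)}{\bar{\sync}^2} = \mco\!\lp \frac{\sync}{\sync^2} + \frac{\sync^2}{\sync^2} \rp = \mco(1)$, i.e.\ the $\sync$-dependence cancels, so taking more local steps never worsens this term. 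Substituting into \eqref{eq:thm:NC_C} and treating $\localvar,\varscale,G_{\bx},\hetero$ as $\mco(1)$ constants, the bound reduces to
\begin{equation*}
\min_{t \in [T]} \mbe \normb{\G \TPhi_{1/2\Lf}(\bxt)}^2 \;\leq\; \mco\!\lp \frac{1}{(\sync \selclients T)^{1/4}} + \frac{(\sync \selclients)^{1/4}}{T^{3/4}} + \frac{1}{T^{3/4}} + \lp \frac{\numclients - \selclients}{\numclients - 1}\cdot\frac{1}{\selclients T} \rp^{1/4} \rp .
\end{equation*}
Since $p_i = w_i$, \cref{cor:NC_C_obj_inconsistent} gives $\TPhi \equiv \Phi$, so the left-hand side equals $\min_t \mbe\normb{\G\Phi_{1/2\Lf}(\bxt)}^2$; because the returned iterate $\bbxT$ is uniform over $\{\bxt\}$ and $(\mbe\|\cdot\|)^2 \le \mbe\|\cdot\|^2$, it suffices to make the right-hand side at most $\epsilon^2$.

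Next I would optimize. Under full participation $\selclients = \numclients$ the last term vanishes. The slowest-decaying term is $(\sync\numclients T)^{-1/4}$, and forcing it below $\epsilon^2$ requires $\sync\numclients T = \Omega(\epsilon^{-8})$, i.e.\ per-client gradient complexity $\sync T = \Omega(1/(\numclients\epsilon^8))$; I would then check this budget is also sufficient. Fixing $\sync T = \Theta(1/(\numclients\epsilon^8))$ and maximizing $\sync$, the binding constraint turns out to be the second term: $(\sync\numclients)^{1/4}T^{-3/4}\le\epsilon^2$ combined with the budget forces $\sync = \mco(1/(\numclients\epsilon^4))$, hence $T = \mco(1/\epsilon^4)$; the remaining term $T^{-3/4}\le\epsilon^2$ (i.e.\ $T = \Omega(\epsilon^{-8/3})$) is then automatic. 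This yields the claimed $\sync T = \mco(1/(\numclients\epsilon^8))$ gradient complexity and $T = \mco(1/\epsilon^4)$ communication rounds. For partial participation I would note that the partial-participation term carries a hidden factor of the data heterogeneity (as in the analogous term of \cref{thm:NC_SC}), so when $\hetero = 0$ it drops out and the identical balancing, with $\numclients$ replaced by $\selclients$, gives gradient complexity $\mco(1/(\selclients\epsilon^8))$ and communication $\mco(1/\epsilon^4)$; when $\hetero > 0$ that term instead forces $\sync = \mco(1)$ and the communication savings disappear, consistent with \cref{table:comparison}.

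The main obstacle is the simultaneous optimization over $\sync$ and $T$: one must identify which decaying term is binding for the choice of $\sync$, confirm that $\sync = \Theta(1/(\numclients\epsilon^4))$ is consistent with $\sync\ge 1$ in the small-$\epsilon$ regime, and --- most importantly --- verify that the messy constant $\Cw\localvar^2 + \Dw(G_{\bx}^2+\hetero^2) = \Theta(\sync^2)$ is exactly cancelled by the $\bar{\sync}^2 = \sync^2$ denominator in \eqref{eq:thm:NC_C}, so that increasing $\sync$ does not degrade the local-updates term. Everything else is routine bookkeeping on top of \cref{thm:NC_C}.
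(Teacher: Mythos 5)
Your proposal is correct and follows essentially the same route as the paper: specialize $\Aw=\Bw=\Ew=1$, $\Cw=\sync-1$, $\Dw=\Theta(\sync^2)$ so the local-update term becomes $\mco(T^{-3/4})$, then balance $(\sync\numclients T)^{-1/4}$ against $(\sync\numclients)^{1/4}T^{-3/4}$ (your binding-constraint argument is equivalent to the paper's assumption $\numclients\sync\le T$), and observe that the partial-participation term in the detailed derivation of \cref{thm:NC_C} carries a factor of $\hetero$ that kills it when $\hetero=0$. No gaps.
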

In terms of communication requirements, we achieve massive savings (compared to $\mco ( 1/\epsilon^{7} )$ in \cite{sharma22FedMinimax_ICML}). Our gradient complexity results achieve linear speedup in the number of participating clients.

\subsection{Nonconvex-1-Point-Concave (NC-1PC) Case}
One-point-convexity has been observed in SGD dynamics during neural network training.
\begin{assump}[One-point-Concavity in $\by$]
\label{assum:1pc_y}
The function $f$ is said to be one-point-concave in $\by$ if fixing $\bx \in \mbb R^{d_1}$, for all  $\by \in \mbb R^{d_2}$,
\iftwocol
    $\lan \Gy f(\bx, \by), \by - \by^*(\bx) \ran \leq f(\bx, \by) - f(\bx, \by^*(\bx))$,
\else
    \begin{align*}
        \lan \Gy f(\bx, \by'), \by - \by^*(\bx) \ran \leq f(\bx, \by) - f(\bx, \by^*(\bx)),   
    \end{align*}
\fi
where $\by^*(\bx) \in \argmax_\by f(\bx, \by)$.
\end{assump}
Owing to space limitations, we only state the per-client gradient complexity, and communication complexity results under the special case when all the clients are weighted equally ($p_i = 1/n$ for all $i$), with each carrying out $\sync$ steps of local SGDA. See \cref{app:NC_1PC} for more details.

\begin{theorem}
\label{thm:NC_1PC}
Suppose the local loss functions $\{ f_i \}$ satisfy Assumptions \ref{assum:smoothness}, \ref{assum:bdd_var}, \ref{assum:bdd_hetero}, \ref{assum:Lips_cont_x}. Suppose for all $\bx$, all the $f_i$'s satisfy \cref{assum:1pc_y} at a common global minimizer $\by^* (\bx)$, and that $\normb{\byt}^2 \leq R$ for all $t$. Then, to reach an $\epsilon$-accurate point, the stochastic gradient complexity of \fedsgdaplus \ (\cref{alg_NC_minimax}) is $\mco (1/(\numclients \epsilon^8))$, and the number of communication rounds required is $T/\sync = \mco ( 1/\epsilon^{4} )$.
\end{theorem}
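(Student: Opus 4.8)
The plan is to follow the template of the nonconvex--concave analysis behind \cref{thm:NC_C}, with the role of \cref{assum:concavity} taken over by the weaker \cref{assum:1pc_y} together with a dedicated convergence guarantee for Local SGA on one-point-concave objectives. Since \fedsgdaplus\ freezes the $\bx$-argument of every $\by$-gradient at $\Hbxs$ over each window of $S$ communication rounds, the inner $\by$-process run during such a window is \emph{exactly} distributed, normalized, partial-participation Local SGA applied to the single function $\by \mapsto \TF(\Hbxs, \by)$; and because all $f_i$ share the common maximizer $\by^*(\bx)$, this function is one-point-concave at $\by^*(\Hbxs)$. The first step is therefore to establish the key auxiliary fact (stated as \cref{lem:NC_1PC_PCP_local_SGA_concave}): running Local SGA on a smooth one-point-concave function with bounded iterates, local variance \cref{assum:bdd_var} and heterogeneity \cref{assum:bdd_hetero}, and WOR client sampling, the window-averaged $\mbe[\TPhi(\Hbxs) - \TF(\Hbxs, \byt)]$ decays at the rate Local SGD attains for \emph{convex} minimization. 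One-point-concavity enters only through $-\lan \Gy f_i(\bx,\by), \by - \by^*(\bx)\ran \geq f_i(\bx,\by^*(\bx)) - f_i(\bx,\by) \geq 0$, which, after splitting off consensus error and client drift, yields a one-step inequality $\mbe\norm{\byt - \by^*}^2 \le \norm{\by^{(t-1)} - \by^*}^2 - (\text{step})\,\mbe[\text{gap}] + (\text{noise}+\text{drift})$; no pairwise convexity along segments is ever invoked.

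Second, I would run the outer ($\bx$) analysis exactly as for NC-C: use $\Lf$-weak convexity of $\TPhi$ and the Moreau-envelope machinery of \cite{davis19wc_siam} to obtain a per-round inequality of the form $\mbe[\TPhi_{1/2\Lf}(\bxtp)] \le \mbe[\TPhi_{1/2\Lf}(\bxt)] - c\,\seff\lrsx\,\mbe\normb{\G\TPhi_{1/2\Lf}(\bxt)}^2 + (\text{SGDA noise and drift terms}) + c'\,\seff\lrsx\,\mbe[\TPhi(\bxt) - \TF(\bxt,\byt)]$. The coupling term is the dual suboptimality gap at the \emph{moving} iterate $\bxt$, whereas the window lemma controls it at the \emph{frozen} anchor $\Hbxs$; I bridge the two with \cref{assum:Lips_cont_x}: $\norm{\TF(\bxt,\byt) - \TF(\Hbxs,\byt)} \le G_\bx\norm{\bxt - \Hbxs}$ and $\norm{\TPhi(\bxt) - \TPhi(\Hbxs)} \le G_\bx\norm{\bxt-\Hbxs}$, where the within-window drift $\norm{\bxt - \Hbxs}$ is $\mco(S\,\seff\lrsx)$ times a gradient bound controlled via \cref{assum:smoothness}, \cref{assum:bdd_var}, \cref{assum:bdd_hetero}. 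Summing the per-round inequality over $t$, telescoping the Moreau envelope, and inserting the window-averaged dual-gap bound gives a master inequality in $\lrcx,\lrcy,\lrsx,\lrsy$, $S$, $\sync$, $\localvar$, $\hetero$, $G_\bx$ and the bound $R$ on $\normb{\byt}^2$.

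Finally, I would tune the learning rates and $S$ to balance the four groups of terms (centralized optimization error, local-update error, partial-participation error, and the dual-gap/drift error), mirroring the balancing behind \eqref{eq:thm:NC_C}; because the dual gap only decays like $\mco(1/\sqrt{\cdot})$ (the convex-minimization rate), driving $\min_t \mbe\normb{\G\Phi_{1/2\Lf}(\bxt)} \le \epsilon$ costs $T = \mco(1/\epsilon^4)$ rounds and $T\sync = \mco(1/(n\epsilon^8))$ per-client gradients, and the passage from $\TF$ to $F$ is identical to \cref{cor:NC_C_obj_inconsistent} once $w_i \equiv p_i$. The main obstacle is the first step: showing Local SGA keeps the convex-minimization rate under only one-point-concavity \emph{simultaneously} with the distributed ingredients — multiple clients each drifting under their own one-point-concave $f_i$ but all aimed at the same $\by^*(\Hbxs)$, heterogeneous gradients bounded only through \cref{assum:bdd_hetero}, and the extra WOR partial-participation variance. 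Controlling the cross terms between client drift and the $\norm{\byt - \by^*}^2$ descent without any convexity along segments is the delicate part; everything downstream is bookkeeping analogous to the NC-C proof.
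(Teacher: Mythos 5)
Your proposal is correct and follows essentially the same route as the paper: the appendix proves \cref{thm:NC_1PC} by observing that Lemmas \ref{lem:NC_C_PCP_Phi_smooth_decay_one_iter}, \ref{lem:NC_C_PCP_consensus_error}, and \ref{lem:NC_C_PCP_Phi_f_diff} never use concavity, and that the only place concavity enters the NC-C analysis is the single inner-product step \eqref{eq_proof:lem:NC_C_PCP_local_SGA_concave_3}, which requires exactly one-point-concavity of each $f_i$ at the common maximizer $\by^*(\Hbxs)$ — yielding \cref{lem:NC_1PC_PCP_local_SGA_concave} and hence the same bound and tuning as \cref{thm:NC_C}. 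You correctly identified both the key lemma (Local SGA retains the convex-minimization rate under one-point-concavity) and the precise location where the assumption is invoked.
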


\begin{remark}
\cref{thm:NC_1PC} proves the conjecture posed in \cite{sharma22FedMinimax_ICML} that linear speedup should be achievable for NC-1PC functions. Further, we improve their communication complexity from $\mco (1/\epsilon^7)$ to $\mco (1/\epsilon^4)$. As an intermediate result in our proof, we show convergence of Local SGD for one-point-convex functions, extending convex minimization bounds to a much larger class of functions.
\end{remark}

\section{Experiments}
\label{sec:exp}

In this section, we evaluate the empirical performance of the proposed algorithms. We consider a robust neural training problem \cite{sinha17certifiable_robust_iclr, madry18adversarial_iclr}, and a fair classification problem \cite{mohri19agnosticFL_icml, mahdavi20dist_robustfl_neurips}. Due to space constraints, additional details of our experiments, and some additional results are included in \cref{app:add_exp}.
Our experiments were run on a network of $n=15$ clients, each equipped with an NVIDIA TitanX GPU. We model data heterogeneity across clients using Dirichlet distribution \cite{wang19FL_iclr} with parameter $\alpha$, $\text{Dir}_{\numclients}(\alpha)$. Small $\alpha \Rightarrow$ higher heterogeneity across clients.

\paragraph{Robust NN training.}
We consider the following robust neural network (NN) training problem.
\iftwocol
    {\small
    \begin{align}
        \min_\bx \max_{\norm{\by}^2 \leq 1} \textstyle\sum_{j=1}^N \ell \lp h_\bx (\mbf a_i + \by), b_i \rp, \label{eq:robustNN}
    \end{align}
    }%
\else
    \begin{align}
        \min_\bx \max_{\norm{\by}^2 \leq 1} \sum_{j=1}^N \ell \lp h_\bx (\mbf a_i + \by), b_i \rp, \label{eq:robustNN}
    \end{align}
\fi
where $\bx$ denotes the NN parameters, $(a_i, b_i)$ denote the feature and label of the $i$-th sample, $\by$ denotes the adversarially added feature perturbation, and $h_\bx$ denotes the NN output.
\iftwocol
    \vspace{-5mm}
    \begin{figure}[h!]
        \centering
        \includegraphics[width=0.35\textwidth]{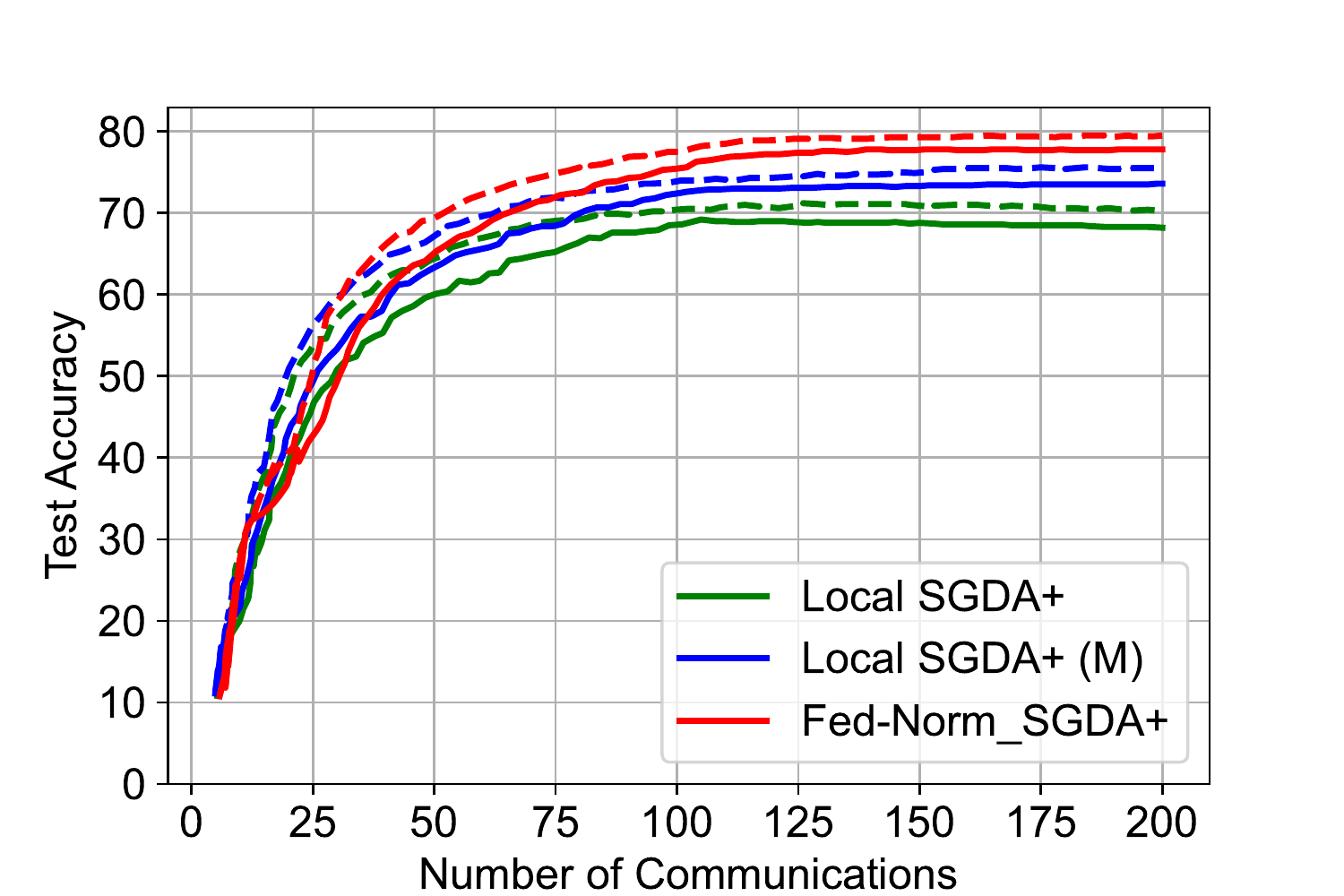}
        \vspace{-3mm}
        \caption{Comparison of the effect of heterogeneous number of local updates $\{ \sync_i \}$ on the performance of \fedsgdaplus \ (\cref{alg_NC_minimax}), Local SGDA+, and Local SGDA+ with momentum, while solving \eqref{eq:robustNN} on CIFAR10 dataset, with VGG11 model. The solid (dashed) curves are for $E = 5$ ($E=7$), and $\alpha = 0.1$. \label{fig:robustNN_hetero_epoch}}
    \end{figure}
    \vspace{-7mm}
\else
    \begin{figure}[h!]
        \centering
        \includegraphics[width=0.55\textwidth]{figures/robustNN/robustNN_hetero_epoch.pdf}
        \caption{Comparison of the effect of heterogeneous number of local updates $\{ \sync_i \}$ on the performance of \fedsgdaplus \ (\cref{alg_NC_minimax}), Local SGDA+, and Local SGDA+ with momentum, while solving \eqref{eq:robustNN} on CIFAR10 dataset, with VGG11 model. The solid (dashed) curves are for $E = 5$ ($E=7$), and $\alpha = 0.1$. \label{fig:robustNN_hetero_epoch}}
    \end{figure}
\fi

\paragraph{Impact of system heterogeneity.}
In \cref{fig:robustNN_hetero_epoch}, we compare the effect of heterogeneous  number of local updates across clients, on the performance of our proposed \fedsgdaplus. We compare with Local SGDA+ \cite{mahdavi21localSGDA_aistats}, and Local SGDA+ with momentum \cite{sharma22FedMinimax_ICML}. Clients sample the number of epochs they run locally via $\sync_i \sim Unif[2,E]$. We observe that \fedsgdaplus \ adapts well to system heterogeneity and outperforms both existing methods.
\iftwocol
    \vspace{-5mm}
    \begin{figure}[h!]
        \centering
        \includegraphics[width=0.35\textwidth]{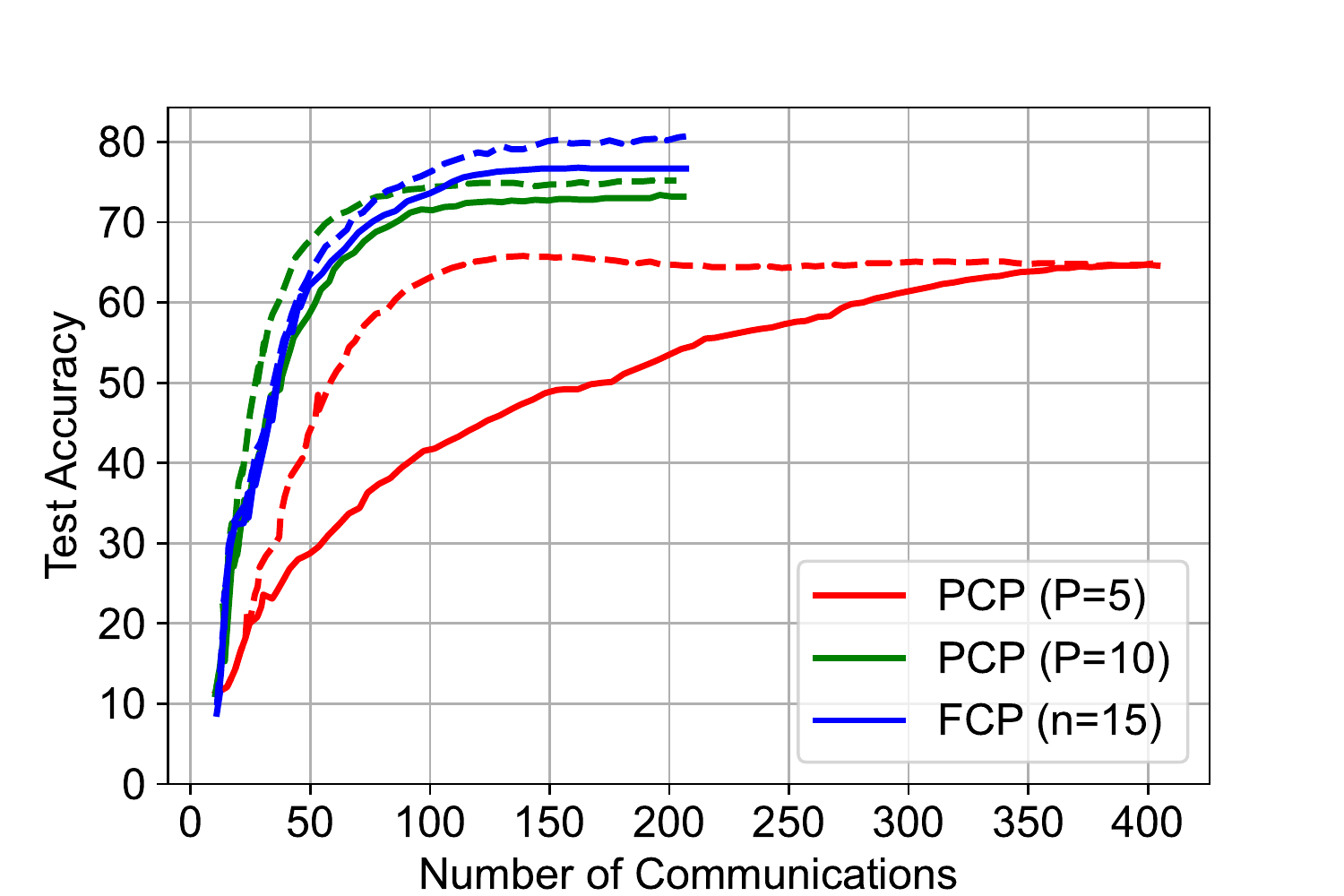}
        \vspace{-3mm}
        \caption{Comparison of the effects of partial client participation (PCP) on the performance of \fedsgdaplus, for the robust NN training problem on the CIFAR10 dataset, with the VGG11 model. The figure shows the robust test accuracy. The solid (dashed) curves are for $\alpha = 0.1$ ($\alpha=1.0$). \label{fig:robustnn_partial}}
    \end{figure}
    \vspace{-5mm}
\else
    \begin{figure}[h!]
        \centering
        \includegraphics[width=0.55\textwidth]{figures/robustNN/robustNN_pcp.pdf}
        \caption{Comparison of the effects of partial client participation (PCP) on the performance of \fedsgdaplus, for the robust NN training problem on the CIFAR10 dataset, with the VGG11 model. The figure shows the robust test accuracy. The solid (dashed) curves are for $\alpha = 0.1$ ($\alpha=1.0$). \label{fig:robustnn_partial}}
    \end{figure}
\fi

\paragraph{Impact of partial participation and heterogeneity.} Next, we compare the impact of different levels of partial client participation on performance. We compare the full participation setting ($\numclients=15$) with $\selclients = 5, 10$. Clients sample the number of epochs they run locally via $\sync_i \sim Unif[2,5]$. We plot the results for two different values of the data heterogeneity parameter $\alpha = 0.1, 1.0$. As seen in all our theoretical results where partial participation was the most significant component of convergence error, smaller values of $\selclients$ result in performance loss. Further, higher inter-client heterogeneity (modeled by smaller values of $\alpha$) results in worse performance. We further explore the impact of $\alpha$ on performance in \cref{app:add_exp}.

\iftwocol
    \vspace{-3mm}
    \begin{figure}[h!]
        \centering
        \includegraphics[width=0.35\textwidth]{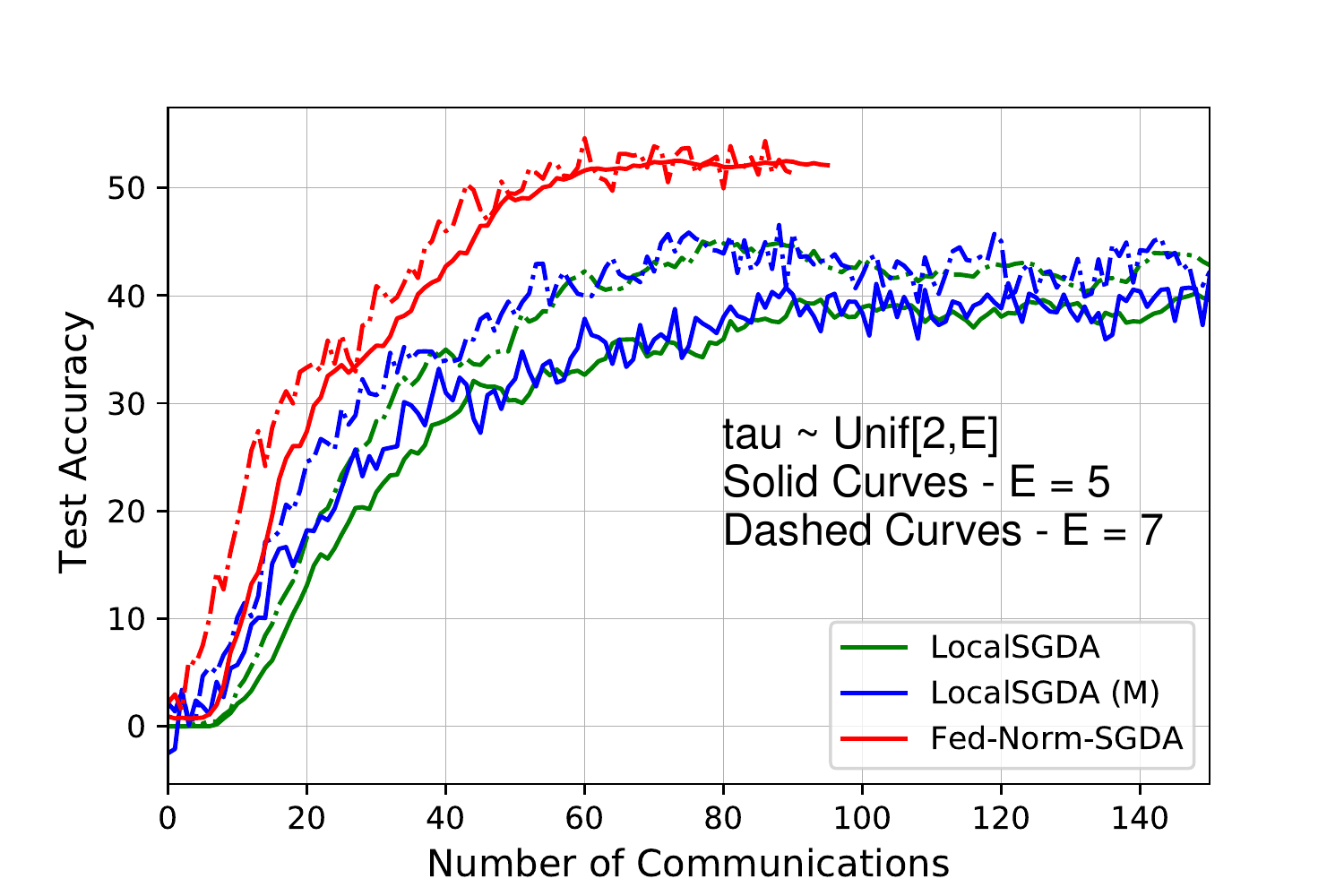}
        \vspace{-3mm}
        \caption{Comparison of Local SGDA, Local SGDA with momentum, and \fedsgda, for the fair classification task on the CIFAR10 dataset, with the VGG11 model. The solid (dashed) curves are for $E = 5$ ($E=7$), $\alpha = 0.1$. \label{fig:fairclass}}
    \end{figure}
    \vspace{-7mm}
\else
    \begin{figure}[h!]
        \centering
        \includegraphics[width=0.55\textwidth]{figures/fairclass/fair_varyE_comparison.pdf}
        \caption{Comparison of Local SGDA, Local SGDA with momentum, and \fedsgda, for the fair classification task on the CIFAR10 dataset, with the VGG11 model. The solid (dashed) curves are for $E = 5$ ($E=7$), $\alpha = 0.1$. \label{fig:fairclass}}
    \end{figure}
\fi

\paragraph{Fair Classification.}
We consider minimax formulation of the fair classification problem \cite{mohri19agnosticFL_icml, nouiehed19minimax_neurips19}.
\iftwocol
    {\small
    \begin{align}
        \min_\bx \max_{\by \in \Delta_C} \textstyle\sum_{c=1}^C y_c F_c(\bx) -\mfrac{\lambda}{2} \norm{\by}^2,
        \label{eq:exp_fair_2}
    \end{align}
    }%
\else
    \begin{align}
        \min_\bx \max_{\by \in \mc Y} \sum_{c=1}^C y_c F_c(\bx) -\frac{\lambda}{2} \norm{\by}^2,
        \label{eq:exp_fair_2}
    \end{align}
\fi
where $\bx$ denotes the parameters of the NN, $\{F_c\}_{c=1}^C$ denote the loss corresponding to class $c$, and {\small$\Delta_C$} is the $C$-dimensional probability simplex. In \cref{fig:fairclass}, we plot the worst distribution test accuracy achieved by \fedsgda, Local SGDA \cite{mahdavi21localSGDA_aistats} and Local SGDA with momentum \cite{sharma22FedMinimax_ICML}. As in \cref{fig:robustNN_hetero_epoch}, clients sample $\sync_i \sim Unif[2,E]$. We plot the test accuracy on the worst distribution in each case. Again, \fedsgda \ outperforms existing methods.
\section{Conclusion}
\label{sec_conclude}
% \vspace{-2mm}
In this work, we considered nonconvex minimax problems in the federated setting, where in addition to inter-client data heterogeneity and partial client participation, there is system heterogeneity as well. We observed that existing methods, such as Local SGDA, might converge to the stationary point of an objective quite different from the original intended objective. We show that normalizing individual client contributions solves this problem. We analyze several classes of nonconvex minimax functions, and significantly improve existing computation and communication complexity results. Potential future directions include analyzing federated systems with unpredictable client presence \cite{yang22anarchic_icml}.

\nocite{lin20near_opt_det_colt, nesterov18book, yoon21acc_ncc_icml, ouyang21lower_cc_bilinear_mathprog, wang20improved_cc_neurips, li21lower_bd_NCSC_neurips, lei21stability_Minimax_icml, zhang21NCSC_uai, kiyavash20catalyst_neurips, lee21NCNC_structured_neurips, lu20HiBSA_NC_C_ieee, tran20hybrid_NCLin_neurips, jin20local_opt_NCNC_icml, liang20proxGDA_KL_iclr, luo21near_opt_FS_cc_arxiv, xie20lower_FS_cc_icml, gasnikov21decen_deter_cc_icoa, ozdaglar19dec_prox_sp_arxiv, richtarik21dist_VI_comm_arxiv, gasnikov21dec_person_FL_arxiv, jhunjhunwala22fedvarp_uai}

\section*{Acknowledgments}
This work was supported in part by NSF grants CCF 2045694, CNS-2112471, and ONR N00014-23-1-2149. Jiarui Li helped with some figures in the paper.

\bibliography{References}

\begin{thebibliography}{83}
\providecommand{\natexlab}[1]{#1}
\providecommand{\url}[1]{\texttt{#1}}
\expandafter\ifx\csname urlstyle\endcsname\relax
  \providecommand{\doi}[1]{doi: #1}\else
  \providecommand{\doi}{doi: \begingroup \urlstyle{rm}\Url}\fi

\bibitem[Arjovsky et~al.(2017)Arjovsky, Chintala, and
  Bottou]{arjovsky17WGANs_icml}
Arjovsky, M., Chintala, S., and Bottou, L.
\newblock Wasserstein generative adversarial networks.
\newblock In \emph{International conference on machine learning}, pp.\
  214--223. PMLR, 2017.

\bibitem[Beznosikov et~al.(2021{\natexlab{a}})Beznosikov, Richt{\'a}rik,
  Diskin, Ryabinin, and Gasnikov]{richtarik21dist_VI_comm_arxiv}
Beznosikov, A., Richt{\'a}rik, P., Diskin, M., Ryabinin, M., and Gasnikov, A.
\newblock Distributed methods with compressed communication for solving
  variational inequalities, with theoretical guarantees.
\newblock \emph{arXiv preprint arXiv:2110.03313}, 2021{\natexlab{a}}.

\bibitem[Beznosikov et~al.(2021{\natexlab{b}})Beznosikov, Rogozin, Kovalev, and
  Gasnikov]{gasnikov21decen_deter_cc_icoa}
Beznosikov, A., Rogozin, A., Kovalev, D., and Gasnikov, A.
\newblock Near-optimal decentralized algorithms for saddle point problems over
  time-varying networks.
\newblock In \emph{International Conference on Optimization and Applications},
  pp.\  246--257. Springer, 2021{\natexlab{b}}.

\bibitem[Beznosikov et~al.(2021{\natexlab{c}})Beznosikov, Scutari, Rogozin, and
  Gasnikov]{beznosikov21dist_sp_neurips}
Beznosikov, A., Scutari, G., Rogozin, A., and Gasnikov, A.
\newblock Distributed saddle-point problems under similarity.
\newblock In \emph{Advances in Neural Information Processing Systems},
  volume~34, 2021{\natexlab{c}}.

\bibitem[Beznosikov et~al.(2021{\natexlab{d}})Beznosikov, Sushko, Sadiev, and
  Gasnikov]{gasnikov21dec_person_FL_arxiv}
Beznosikov, A., Sushko, V., Sadiev, A., and Gasnikov, A.
\newblock Decentralized personalized federated min-max problems.
\newblock \emph{arXiv preprint arXiv:2106.07289}, 2021{\natexlab{d}}.

\bibitem[Chen et~al.(2021)Chen, Shlezinger, Poor, Eldar, and
  Cui]{chen21communication_pnas}
Chen, M., Shlezinger, N., Poor, H.~V., Eldar, Y.~C., and Cui, S.
\newblock Communication-efficient federated learning.
\newblock \emph{Proceedings of the National Academy of Sciences}, 118\penalty0
  (17):\penalty0 e2024789118, 2021.

\bibitem[Chen et~al.(2020)Chen, Zhou, Xu, and Liang]{liang20proxGDA_KL_iclr}
Chen, Z., Zhou, Y., Xu, T., and Liang, Y.
\newblock Proximal gradient descent-ascent: Variable convergence under k{\l}
  geometry.
\newblock In \emph{International Conference on Learning Representations}, 2020.

\bibitem[Cho \& Yun(2022)Cho and Yun]{cho2022sgda_shuffle}
Cho, H. and Yun, C.
\newblock Sgda with shuffling: faster convergence for nonconvex-p
  $\{$$\backslash$L$\}$ minimax optimization.
\newblock \emph{arXiv preprint arXiv:2210.05995}, 2022.

\bibitem[Davis \& Drusvyatskiy(2019)Davis and Drusvyatskiy]{davis19wc_siam}
Davis, D. and Drusvyatskiy, D.
\newblock Stochastic model-based minimization of weakly convex functions.
\newblock \emph{SIAM Journal on Optimization}, 29\penalty0 (1):\penalty0
  207--239, 2019.

\bibitem[Deng \& Mahdavi(2021)Deng and Mahdavi]{mahdavi21localSGDA_aistats}
Deng, Y. and Mahdavi, M.
\newblock Local stochastic gradient descent ascent: Convergence analysis and
  communication efficiency.
\newblock In \emph{International Conference on Artificial Intelligence and
  Statistics}, pp.\  1387--1395. PMLR, 2021.

\bibitem[Deng et~al.(2020)Deng, Kamani, and
  Mahdavi]{mahdavi20dist_robustfl_neurips}
Deng, Y., Kamani, M.~M., and Mahdavi, M.
\newblock Distributionally robust federated averaging.
\newblock In \emph{Advances in Neural Information Processing Systems},
  volume~33, pp.\  15111--15122, 2020.

\bibitem[Drusvyatskiy \& Paquette(2019)Drusvyatskiy and
  Paquette]{drusvyatskiy19wc_mathprog}
Drusvyatskiy, D. and Paquette, C.
\newblock Efficiency of minimizing compositions of convex functions and smooth
  maps.
\newblock \emph{Mathematical Programming}, 178\penalty0 (1):\penalty0 503--558,
  2019.

\bibitem[Goodfellow et~al.(2014)Goodfellow, Pouget-Abadie, Mirza, Xu,
  Warde-Farley, Ozair, Courville, and Bengio]{goodfellow14GANs_neurips}
Goodfellow, I., Pouget-Abadie, J., Mirza, M., Xu, B., Warde-Farley, D., Ozair,
  S., Courville, A., and Bengio, Y.
\newblock Generative adversarial nets.
\newblock In \emph{Advances in neural information processing systems},
  volume~27, 2014.

\bibitem[Haddadpour et~al.(2019)Haddadpour, Kamani, Mahdavi, and
  Cadambe]{haddadpour19local_SHD_neurips}
Haddadpour, F., Kamani, M.~M., Mahdavi, M., and Cadambe, V.
\newblock Local sgd with periodic averaging: Tighter analysis and adaptive
  synchronization.
\newblock \emph{Advances in Neural Information Processing Systems},
  32:\penalty0 11082--11094, 2019.

\bibitem[Hamer et~al.(2020)Hamer, Mohri, and Suresh]{hamer20fedboost_icml}
Hamer, J., Mohri, M., and Suresh, A.~T.
\newblock Fedboost: A communication-efficient algorithm for federated learning.
\newblock In \emph{International Conference on Machine Learning}, pp.\
  3973--3983. PMLR, 2020.

\bibitem[Hou et~al.(2021)Hou, Thekumparampil, Fanti, and Oh]{hou21FedSP_arxiv}
Hou, C., Thekumparampil, K.~K., Fanti, G., and Oh, S.
\newblock Efficient algorithms for federated saddle point optimization.
\newblock \emph{arXiv preprint arXiv:2102.06333}, 2021.

\bibitem[Jhunjhunwala et~al.(2022)Jhunjhunwala, Sharma, Nagarkatti, and
  Joshi]{jhunjhunwala22fedvarp_uai}
Jhunjhunwala, D., Sharma, P., Nagarkatti, A., and Joshi, G.
\newblock {FedVARP}: Tackling the variance due to partial client participation
  in federated learning.
\newblock In \emph{The 38th Conference on Uncertainty in Artificial
  Intelligence}, 2022.

\bibitem[Jin et~al.(2020)Jin, Netrapalli, and Jordan]{jin20local_opt_NCNC_icml}
Jin, C., Netrapalli, P., and Jordan, M.
\newblock What is local optimality in nonconvex-nonconcave minimax
  optimization?
\newblock In \emph{International Conference on Machine Learning}, pp.\
  4880--4889. PMLR, 2020.

\bibitem[Kairouz et~al.(2019)Kairouz, McMahan, Avent, Bellet, Bennis, Bhagoji,
  Bonawitz, Charles, Cormode, Cummings, et~al.]{kairouz19advancesFL_arxiv}
Kairouz, P., McMahan, H.~B., Avent, B., Bellet, A., Bennis, M., Bhagoji, A.~N.,
  Bonawitz, K., Charles, Z., Cormode, G., Cummings, R., et~al.
\newblock Advances and open problems in federated learning.
\newblock \emph{arXiv preprint arXiv:1912.04977}, 2019.

\bibitem[Karimi et~al.(2016)Karimi, Nutini, and
  Schmidt]{schmidt16lin_conv_PL_kdd}
Karimi, H., Nutini, J., and Schmidt, M.
\newblock Linear convergence of gradient and proximal-gradient methods under
  the polyak-{\l}ojasiewicz condition.
\newblock In \emph{Joint European Conference on Machine Learning and Knowledge
  Discovery in Databases}, pp.\  795--811. Springer, 2016.

\bibitem[Khaled et~al.(2020)Khaled, Mishchenko, and
  Richt{\'a}rik]{khaled20localSGD_aistats}
Khaled, A., Mishchenko, K., and Richt{\'a}rik, P.
\newblock Tighter theory for local sgd on identical and heterogeneous data.
\newblock In \emph{International Conference on Artificial Intelligence and
  Statistics}, pp.\  4519--4529. PMLR, 2020.

\bibitem[Koloskova et~al.(2020)Koloskova, Loizou, Boreiri, Jaggi, and
  Stich]{koloskova20unified_localSGD_icml}
Koloskova, A., Loizou, N., Boreiri, S., Jaggi, M., and Stich, S.
\newblock A unified theory of decentralized sgd with changing topology and
  local updates.
\newblock In \emph{International Conference on Machine Learning}, pp.\
  5381--5393. PMLR, 2020.

\bibitem[Kone{\v{c}}n{\`y} et~al.(2016)Kone{\v{c}}n{\`y}, McMahan, Ramage, and
  Richt{\'a}rik]{konevcny16federated}
Kone{\v{c}}n{\`y}, J., McMahan, H.~B., Ramage, D., and Richt{\'a}rik, P.
\newblock Federated optimization: Distributed machine learning for on-device
  intelligence.
\newblock \emph{arXiv preprint arXiv:1610.02527}, 2016.

\bibitem[Lee \& Kim(2021)Lee and Kim]{lee21NCNC_structured_neurips}
Lee, S. and Kim, D.
\newblock Fast extra gradient methods for smooth structured
  nonconvex-nonconcave minimax problems.
\newblock In \emph{Advances in Neural Information Processing Systems},
  volume~34, 2021.

\bibitem[Lei et~al.(2021)Lei, Yang, Yang, and
  Ying]{lei21stability_Minimax_icml}
Lei, Y., Yang, Z., Yang, T., and Ying, Y.
\newblock Stability and generalization of stochastic gradient methods for
  minimax problems.
\newblock In \emph{International Conference on Machine Learning}, pp.\
  6175--6186. PMLR, 2021.

\bibitem[Li et~al.(2021)Li, Tian, Zhang, and
  Jadbabaie]{li21lower_bd_NCSC_neurips}
Li, H., Tian, Y., Zhang, J., and Jadbabaie, A.
\newblock Complexity lower bounds for nonconvex-strongly-concave min-max
  optimization.
\newblock In \emph{Advances in Neural Information Processing Systems},
  volume~34, 2021.

\bibitem[Li et~al.(2020)Li, Sahu, Talwalkar, and Smith]{smith20FL_SPmag}
Li, T., Sahu, A.~K., Talwalkar, A., and Smith, V.
\newblock Federated learning: Challenges, methods, and future directions.
\newblock \emph{IEEE Signal Processing Magazine}, 37\penalty0 (3):\penalty0
  50--60, 2020.

\bibitem[Liao et~al.(2021)Liao, Shen, Duan, Kolar, and
  Tao]{liao21local_AdaGrad_CC_arxiv}
Liao, L., Shen, L., Duan, J., Kolar, M., and Tao, D.
\newblock Local adagrad-type algorithm for stochastic convex-concave minimax
  problems.
\newblock \emph{arXiv preprint arXiv:2106.10022}, 2021.

\bibitem[Lin et~al.(2020{\natexlab{a}})Lin, Jin, and Jordan]{lin_GDA_icml20}
Lin, T., Jin, C., and Jordan, M.
\newblock On gradient descent ascent for nonconvex-concave minimax problems.
\newblock In \emph{International Conference on Machine Learning}, pp.\
  6083--6093. PMLR, 2020{\natexlab{a}}.

\bibitem[Lin et~al.(2020{\natexlab{b}})Lin, Jin, and
  Jordan]{lin20near_opt_det_colt}
Lin, T., Jin, C., and Jordan, M.~I.
\newblock Near-optimal algorithms for minimax optimization.
\newblock In \emph{Conference on Learning Theory}, pp.\  2738--2779. PMLR,
  2020{\natexlab{b}}.

\bibitem[Liu et~al.(2019)Liu, Mokhtari, Ozdaglar, Pattathil, Shen, and
  Zheng]{ozdaglar19dec_prox_sp_arxiv}
Liu, W., Mokhtari, A., Ozdaglar, A., Pattathil, S., Shen, Z., and Zheng, N.
\newblock A decentralized proximal point-type method for saddle point problems.
\newblock \emph{arXiv preprint arXiv:1910.14380}, 2019.

\bibitem[Lu et~al.(2019)Lu, Tsaknakis, and Hong]{lu19block_icassp}
Lu, S., Tsaknakis, I., and Hong, M.
\newblock Block alternating optimization for non-convex min-max problems:
  algorithms and applications in signal processing and communications.
\newblock In \emph{ICASSP 2019-2019 IEEE International Conference on Acoustics,
  Speech and Signal Processing (ICASSP)}, pp.\  4754--4758. IEEE, 2019.

\bibitem[Lu et~al.(2020)Lu, Tsaknakis, Hong, and Chen]{lu20HiBSA_NC_C_ieee}
Lu, S., Tsaknakis, I., Hong, M., and Chen, Y.
\newblock Hybrid block successive approximation for one-sided non-convex
  min-max problems: algorithms and applications.
\newblock \emph{IEEE Transactions on Signal Processing}, 68:\penalty0
  3676--3691, 2020.

\bibitem[Luo \& Chen(2021)Luo and Chen]{luo21SoS_NCSC_arxiv}
Luo, L. and Chen, C.
\newblock Finding second-order stationary point for nonconvex-strongly-concave
  minimax problem.
\newblock \emph{arXiv preprint arXiv:2110.04814}, 2021.

\bibitem[Luo et~al.(2020)Luo, Ye, Huang, and Zhang]{luo20SREDA_ncsc_neurips}
Luo, L., Ye, H., Huang, Z., and Zhang, T.
\newblock Stochastic recursive gradient descent ascent for stochastic
  nonconvex-strongly-concave minimax problems.
\newblock In \emph{Advances in Neural Information Processing Systems},
  volume~33, pp.\  20566--20577, 2020.

\bibitem[Luo et~al.(2021)Luo, Xie, Zhang, and Zhang]{luo21near_opt_FS_cc_arxiv}
Luo, L., Xie, G., Zhang, T., and Zhang, Z.
\newblock Near optimal stochastic algorithms for finite-sum unbalanced
  convex-concave minimax optimization.
\newblock \emph{arXiv preprint arXiv:2106.01761}, 2021.

\bibitem[Madras et~al.(2018)Madras, Creager, Pitassi, and
  Zemel]{madras18learning_icml}
Madras, D., Creager, E., Pitassi, T., and Zemel, R.
\newblock Learning adversarially fair and transferable representations.
\newblock In \emph{International Conference on Machine Learning}, pp.\
  3384--3393. PMLR, 2018.

\bibitem[Madry et~al.(2018)Madry, Makelov, Schmidt, Tsipras, and
  Vladu]{madry18adversarial_iclr}
Madry, A., Makelov, A., Schmidt, L., Tsipras, D., and Vladu, A.
\newblock Towards deep learning models resistant to adversarial attacks.
\newblock In \emph{International Conference on Learning Representations}, 2018.

\bibitem[McMahan et~al.(2017)McMahan, Moore, Ramage, Hampson, and
  y~Arcas]{fedavg17aistats}
McMahan, B., Moore, E., Ramage, D., Hampson, S., and y~Arcas, B.~A.
\newblock Communication-efficient learning of deep networks from decentralized
  data.
\newblock In \emph{Artificial Intelligence and Statistics}, pp.\  1273--1282.
  PMLR, 2017.

\bibitem[Metelev et~al.(2022)Metelev, Rogozin, Gasnikov, and
  Kovalev]{metelev22decSPP_arxiv}
Metelev, D., Rogozin, A., Gasnikov, A., and Kovalev, D.
\newblock Decentralized saddle-point problems with different constants of
  strong convexity and strong concavity.
\newblock \emph{arXiv preprint arXiv:2206.00090}, 2022.

\bibitem[Mitra et~al.(2021)Mitra, Jaafar, Pappas, and
  Hassani]{mitra21FedLin_neurips}
Mitra, A., Jaafar, R., Pappas, G.~J., and Hassani, H.
\newblock Linear convergence in federated learning: Tackling client
  heterogeneity and sparse gradients.
\newblock \emph{Advances in Neural Information Processing Systems},
  34:\penalty0 14606--14619, 2021.

\bibitem[Mohajerin~Esfahani \& Kuhn(2018)Mohajerin~Esfahani and
  Kuhn]{mohajerin18DRO_mathprog}
Mohajerin~Esfahani, P. and Kuhn, D.
\newblock Data-driven distributionally robust optimization using the
  wasserstein metric: Performance guarantees and tractable reformulations.
\newblock \emph{Mathematical Programming}, 171\penalty0 (1):\penalty0 115--166,
  2018.

\bibitem[Mohri et~al.(2019)Mohri, Sivek, and Suresh]{mohri19agnosticFL_icml}
Mohri, M., Sivek, G., and Suresh, A.~T.
\newblock Agnostic federated learning.
\newblock In \emph{International Conference on Machine Learning}, pp.\
  4615--4625. PMLR, 2019.

\bibitem[Namkoong \& Duchi(2016)Namkoong and Duchi]{namkoong16SG_DRO_neurips}
Namkoong, H. and Duchi, J.~C.
\newblock Stochastic gradient methods for distributionally robust optimization
  with f-divergences.
\newblock In \emph{Advances in Neural Information Processing Systems},
  volume~29, 2016.

\bibitem[Nesterov(2018)]{nesterov18book}
Nesterov, Y.
\newblock \emph{Lectures on convex optimization}, volume 137.
\newblock Springer, 2018.

\bibitem[Nouiehed et~al.(2019)Nouiehed, Sanjabi, Huang, Lee, and
  Razaviyayn]{nouiehed19minimax_neurips19}
Nouiehed, M., Sanjabi, M., Huang, T., Lee, J.~D., and Razaviyayn, M.
\newblock Solving a class of non-convex min-max games using iterative first
  order methods.
\newblock In \emph{Advances in Neural Information Processing Systems},
  volume~32, pp.\  14934--14942, 2019.

\bibitem[Ouyang \& Xu(2021)Ouyang and Xu]{ouyang21lower_cc_bilinear_mathprog}
Ouyang, Y. and Xu, Y.
\newblock Lower complexity bounds of first-order methods for convex-concave
  bilinear saddle-point problems.
\newblock \emph{Mathematical Programming}, 185\penalty0 (1):\penalty0 1--35,
  2021.

\bibitem[Qiu et~al.(2020)Qiu, Yang, Wei, Ye, and
  Wang]{qiu20single_timescale_ncsc}
Qiu, S., Yang, Z., Wei, X., Ye, J., and Wang, Z.
\newblock Single-timescale stochastic nonconvex-concave optimization for smooth
  nonlinear {TD} learning.
\newblock \emph{arXiv preprint arXiv:2008.10103}, 2020.

\bibitem[Rafique et~al.(2021)Rafique, Liu, Lin, and Yang]{rafique18WCC_oms}
Rafique, H., Liu, M., Lin, Q., and Yang, T.
\newblock Weakly-convex--concave min--max optimization: provable algorithms and
  applications in machine learning.
\newblock \emph{Optimization Methods and Software}, pp.\  1--35, 2021.

\bibitem[Reisizadeh et~al.(2020)Reisizadeh, Farnia, Pedarsani, and
  Jadbabaie]{reisizadeh20robustfl_neurips}
Reisizadeh, A., Farnia, F., Pedarsani, R., and Jadbabaie, A.
\newblock Robust federated learning: The case of affine distribution shifts.
\newblock In \emph{Advances in Neural Information Processing Systems},
  volume~33, pp.\  21554--21565, 2020.

\bibitem[Reisizadeh et~al.(2022)Reisizadeh, Tziotis, Hassani, Mokhtari, and
  Pedarsani]{reisizadeh22stragglerFL_jsait}
Reisizadeh, A., Tziotis, I., Hassani, H., Mokhtari, A., and Pedarsani, R.
\newblock Straggler-resilient federated learning: Leveraging the interplay
  between statistical accuracy and system heterogeneity.
\newblock \emph{IEEE Journal on Selected Areas in Information Theory}, 2022.

\bibitem[Rogozin et~al.(2021)Rogozin, Beznosikov, Dvinskikh, Kovalev,
  Dvurechensky, and Gasnikov]{rogozin21dec_local_global_var_cc_arxiv}
Rogozin, A., Beznosikov, A., Dvinskikh, D., Kovalev, D., Dvurechensky, P., and
  Gasnikov, A.
\newblock Decentralized distributed optimization for saddle point problems.
\newblock \emph{arXiv preprint arXiv:2102.07758}, 2021.

\bibitem[Sanjabi et~al.(2018)Sanjabi, Ba, Razaviyayn, and
  Lee]{sanjabi18GANs_neurips}
Sanjabi, M., Ba, J., Razaviyayn, M., and Lee, J.~D.
\newblock On the convergence and robustness of training gans with regularized
  optimal transport.
\newblock \emph{Advances in Neural Information Processing Systems}, 31, 2018.

\bibitem[Sattler et~al.(2019)Sattler, Wiedemann, M{\"u}ller, and
  Samek]{sattler19robust_ieeetnn}
Sattler, F., Wiedemann, S., M{\"u}ller, K.-R., and Samek, W.
\newblock Robust and communication-efficient federated learning from non-iid
  data.
\newblock \emph{IEEE transactions on neural networks and learning systems},
  31\penalty0 (9):\penalty0 3400--3413, 2019.

\bibitem[Sharma et~al.(2022)Sharma, Panda, Joshi, and
  Varshney]{sharma22FedMinimax_ICML}
Sharma, P., Panda, R., Joshi, G., and Varshney, P.
\newblock Federated minimax optimization: Improved convergence analyses and
  algorithms.
\newblock In \emph{International Conference on Machine Learning}, pp.\
  19683--19730. PMLR, 2022.

\bibitem[Sinha et~al.(2017)Sinha, Namkoong, and
  Duchi]{sinha17certifiable_robust_iclr}
Sinha, A., Namkoong, H., and Duchi, J.
\newblock Certifiable distributional robustness with principled adversarial
  training.
\newblock In \emph{International Conference on Learning Representations}, 2017.

\bibitem[Stich(2018)]{stich18localSGD_iclr}
Stich, S.~U.
\newblock Local sgd converges fast and communicates little.
\newblock In \emph{International Conference on Learning Representations}, 2018.

\bibitem[Sun \& Wei(2022)Sun and Wei]{sun22comm_SCSC_arxiv}
Sun, Z. and Wei, E.
\newblock A communication-efficient algorithm with linear convergence for
  federated minimax learning.
\newblock \emph{arXiv preprint arXiv:2206.01132}, 2022.

\bibitem[Thekumparampil et~al.(2019)Thekumparampil, Jain, Netrapalli, and
  Oh]{thekumparampil19NC_C_neurips}
Thekumparampil, K.~K., Jain, P., Netrapalli, P., and Oh, S.
\newblock Efficient algorithms for smooth minimax optimization.
\newblock In \emph{Advances in Neural Information Processing Systems},
  volume~32, 2019.

\bibitem[Tran-Dinh et~al.(2020)Tran-Dinh, Liu, and
  Nguyen]{tran20hybrid_NCLin_neurips}
Tran-Dinh, Q., Liu, D., and Nguyen, L.~M.
\newblock Hybrid variance-reduced sgd algorithms for minimax problems with
  nonconvex-linear function.
\newblock In \emph{Advances in Neural Information Processing Systems},
  volume~33, pp.\  11096--11107, 2020.

\bibitem[Tziotis et~al.(2022)Tziotis, Shen, Pedarsani, Hassani, and
  Mokhtari]{tziotis22stragglerFL_arxiv}
Tziotis, I., Shen, Z., Pedarsani, R., Hassani, H., and Mokhtari, A.
\newblock Straggler-resilient personalized federated learning.
\newblock \emph{arXiv preprint arXiv:2206.02078}, 2022.

\bibitem[Wang et~al.(2019)Wang, Yurochkin, Sun, Papailiopoulos, and
  Khazaeni]{wang19FL_iclr}
Wang, H., Yurochkin, M., Sun, Y., Papailiopoulos, D., and Khazaeni, Y.
\newblock Federated learning with matched averaging.
\newblock In \emph{International Conference on Learning Representations}, 2019.

\bibitem[Wang \& Joshi(2021)Wang and Joshi]{wang21coopSGD_jmlr}
Wang, J. and Joshi, G.
\newblock Cooperative {SGD}: A unified framework for the design and analysis of
  local-update sgd algorithms.
\newblock \emph{Journal of Machine Learning Research}, 22\penalty0
  (213):\penalty0 1--50, 2021.

\bibitem[Wang et~al.(2020)Wang, Liu, Liang, Joshi, and
  Poor]{joshi20fednova_neurips}
Wang, J., Liu, Q., Liang, H., Joshi, G., and Poor, H.~V.
\newblock Tackling the objective inconsistency problem in heterogeneous
  federated optimization.
\newblock In \emph{Advances in Neural Information Processing Systems},
  volume~33, pp.\  7611--7623, 2020.

\bibitem[Wang et~al.(2021)Wang, Charles, Xu, Joshi, McMahan, Al-Shedivat,
  Andrew, Avestimehr, Daly, Data, et~al.]{wang21field_arxiv}
Wang, J., Charles, Z., Xu, Z., Joshi, G., McMahan, H.~B., Al-Shedivat, M.,
  Andrew, G., Avestimehr, S., Daly, K., Data, D., et~al.
\newblock A field guide to federated optimization.
\newblock \emph{arXiv preprint arXiv:2107.06917}, 2021.

\bibitem[Wang \& Li(2020)Wang and Li]{wang20improved_cc_neurips}
Wang, Y. and Li, J.
\newblock Improved algorithms for convex-concave minimax optimization.
\newblock In \emph{Advances in Neural Information Processing Systems},
  volume~33, pp.\  4800--4810, 2020.

\bibitem[Woodworth et~al.(2020)Woodworth, Patel, and
  Srebro]{woodworth2020minibatch_NeurIPS20}
Woodworth, B.~E., Patel, K.~K., and Srebro, N.
\newblock Minibatch vs local sgd for heterogeneous distributed learning.
\newblock \emph{Advances in Neural Information Processing Systems},
  33:\penalty0 6281--6292, 2020.

\bibitem[Xie et~al.(2020)Xie, Luo, Lian, and Zhang]{xie20lower_FS_cc_icml}
Xie, G., Luo, L., Lian, Y., and Zhang, Z.
\newblock Lower complexity bounds for finite-sum convex-concave minimax
  optimization problems.
\newblock In \emph{International Conference on Machine Learning}, pp.\
  10504--10513. PMLR, 2020.

\bibitem[Xie et~al.(2021)Xie, Zhang, Zhang, Shen, and
  Qian]{xie21NC_PL_FL_arxiv}
Xie, J., Zhang, C., Zhang, Y., Shen, Z., and Qian, H.
\newblock A federated learning framework for nonconvex-pl minimax problems.
\newblock \emph{arXiv preprint arXiv:2105.14216}, 2021.

\bibitem[Xing et~al.(2016)Xing, Ho, Xie, and Wei]{xing2016strategies}
Xing, E.~P., Ho, Q., Xie, P., and Wei, D.
\newblock Strategies and principles of distributed machine learning on big
  data.
\newblock \emph{Engineering}, 2\penalty0 (2):\penalty0 179--195, 2016.

\bibitem[Xu et~al.(2020)Xu, Zhang, Xu, and Lan]{lan_unified_ncc_arxiv20}
Xu, Z., Zhang, H., Xu, Y., and Lan, G.
\newblock A unified single-loop alternating gradient projection algorithm for
  nonconvex-concave and convex-nonconcave minimax problems.
\newblock \emph{arXiv preprint arXiv:2006.02032}, 2020.

\bibitem[Yang et~al.(2021)Yang, Fang, and Liu]{yang21partial_client_iclr}
Yang, H., Fang, M., and Liu, J.
\newblock Achieving linear speedup with partial worker participation in non-iid
  federated learning.
\newblock In \emph{International Conference on Learning Representations}, 2021.

\bibitem[Yang et~al.(2022{\natexlab{a}})Yang, Liu, Zhang, and
  Liu]{yang22sagda_neurips}
Yang, H., Liu, Z., Zhang, X., and Liu, J.
\newblock Sagda: Achieving {$\mathcal O(\epsilon^{-2})$} communication
  complexity in federated min-max learning.
\newblock \emph{arXiv preprint arXiv:2210.00611}, 2022{\natexlab{a}}.

\bibitem[Yang et~al.(2022{\natexlab{b}})Yang, Zhang, Khanduri, and
  Liu]{yang22anarchic_icml}
Yang, H., Zhang, X., Khanduri, P., and Liu, J.
\newblock Anarchic federated learning.
\newblock In \emph{International Conference on Machine Learning}, pp.\
  25331--25363. PMLR, 2022{\natexlab{b}}.

\bibitem[Yang et~al.(2020)Yang, Zhang, Kiyavash, and
  He]{kiyavash20catalyst_neurips}
Yang, J., Zhang, S., Kiyavash, N., and He, N.
\newblock A catalyst framework for minimax optimization.
\newblock In \emph{Advances in Neural Information Processing Systems},
  volume~33, pp.\  5667--5678, 2020.

\bibitem[Yang et~al.(2022{\natexlab{c}})Yang, Orvieto, Lucchi, and
  He]{yang22NC_PL_aistats}
Yang, J., Orvieto, A., Lucchi, A., and He, N.
\newblock Faster single-loop algorithms for minimax optimization without strong
  concavity.
\newblock In \emph{International Conference on Artificial Intelligence and
  Statistics}, pp.\  5485--5517. PMLR, 2022{\natexlab{c}}.

\bibitem[Yoon \& Ryu(2021)Yoon and Ryu]{yoon21acc_ncc_icml}
Yoon, T. and Ryu, E.~K.
\newblock Accelerated algorithms for smooth convex-concave minimax problems
  with o (1/k\^{}2) rate on squared gradient norm.
\newblock In \emph{International Conference on Machine Learning}, pp.\
  12098--12109. PMLR, 2021.

\bibitem[Yu et~al.(2019)Yu, Jin, and Yang]{yu19icml_momentum}
Yu, H., Jin, R., and Yang, S.
\newblock On the linear speedup analysis of communication efficient momentum
  {SGD} for distributed non-convex optimization.
\newblock In \emph{International Conference on Machine Learning}, pp.\
  7184--7193. PMLR, 2019.

\bibitem[Yun et~al.(2022)Yun, Rajput, and Sra]{yun2021minibatch_ICLR22}
Yun, C., Rajput, S., and Sra, S.
\newblock Minibatch vs local sgd with shuffling: Tight convergence bounds and
  beyond.
\newblock In \emph{International Conference on Learning Representations}, 2022.

\bibitem[Zhang et~al.(2020)Zhang, Xiao, Sun, and
  Luo]{zhang_1_loop_ncc_neurips20}
Zhang, J., Xiao, P., Sun, R., and Luo, Z.
\newblock A single-loop smoothed gradient descent-ascent algorithm for
  nonconvex-concave min-max problems.
\newblock In \emph{Advances in Neural Information Processing Systems},
  volume~33, pp.\  7377--7389, 2020.

\bibitem[Zhang et~al.(2021)Zhang, Yang, Guzm{\'a}n, Kiyavash, and
  He]{zhang21NCSC_uai}
Zhang, S., Yang, J., Guzm{\'a}n, C., Kiyavash, N., and He, N.
\newblock The complexity of nonconvex-strongly-concave minimax optimization.
\newblock In \emph{Conference on Uncertainty in Artificial Intelligence}, pp.\
  482--492. PMLR, 2021.

\bibitem[Zhang et~al.(2022)Zhang, Aybat, and
  Gurbuzbalaban]{zhang2022sapd_NC_C_arxiv}
Zhang, X., Aybat, N.~S., and Gurbuzbalaban, M.
\newblock Sapd+: An accelerated stochastic method for nonconvex-concave minimax
  problems.
\newblock \emph{arXiv preprint arXiv:2205.15084}, 2022.

\bibitem[Zhao et~al.(2018)Zhao, Li, Lai, Suda, Civin, and
  Chandra]{zhao18FL_noniid_arxiv}
Zhao, Y., Li, M., Lai, L., Suda, N., Civin, D., and Chandra, V.
\newblock Federated learning with non-iid data.
\newblock \emph{arXiv preprint arXiv:1806.00582}, 2018.

\end{thebibliography}
\bibliographystyle{icml2023}

% \section*{Acknowledgments}

\onecolumn
\appendix
\tableofcontents
\newpage

\begin{center}
    {\LARGE \textbf{Appendix}}
\end{center}

The appendix are organized as follows. 
In Section \ref{app:bkgrnd} we mention some basic mathematical results and inequalities which are used throughout the paper.
In \cref{app:NC_SC} we prove the non-asymptotic convergence of \fedsgda \ (\cref{alg_NC_minimax}) for smooth nonconvex-strongly-concave (and nonconvex-P{\L}) functions, and derive gradient complexity and communication cost of the algorithm to achieve an $\epsilon$-stationary point.
In \cref{app:NC_C}, we prove the non-asymptotic convergence of \fedsgdaplus \ (\cref{alg_NC_minimax}) for smooth nonconvex-concave and nonconvex-one-point-concave functions.
Finally, in \cref{app:add_exp} we provide the details of the additional experiments we performed.
% \newpage
\section{Background}
\label{app:bkgrnd}

\subsection{Gradient Aggregation with Different Solvers at Clients}
\label{app:grad_agg}

\paragraph{Local SGDA.}
Suppose $\syncit = \sefft = \sync$ for all $i \in [n]$, $t \in [T]$. Also, $\aitk = 1$ for all $k \in [\sync], t$. Then, the local iterate updates in \cref{alg_NC_minimax}-\fedsgda \ reduce to (the updates for \fedsgdaplus \ are analogous) 
\begin{align*}
    \bxitkp &= \bxitk - \lrcx \Gx f_i (\bxitk, \byitk; \xiitk), \\
    \byitkp &= \byitk + \lrcy \Gy f_i (\bxitk, \byitk; \xiitk),
\end{align*}
for $k \in \{0, \dots, \sync-1 \}$
and the gradient aggregate vectors $(\bdxit, \bdyit)$ are simply the average of individual gradients
\begin{align*}
    \bdxit = \frac{1}{\sync} \sum_{k=0}^{\sync-1} \Gx f_i (\bxitk, \byitk; \xiitk), \quad \bdyit = \frac{1}{\sync} \sum_{k=0}^{\sync-1} \Gy f_i (\bxitk, \byitk; \xiitk)
\end{align*}
Note that these are precisely the iterates of LocalSGDA proposed in \cite{mahdavi21localSGDA_aistats, sharma22FedMinimax_ICML}, with the only difference that in LocalSGDA, the clients communicate the iterates $\{ \bx_i^{(t,\sync)}, \by_i^{(t,\sync)} \}$ to the server, which averages them to compute $\{ \bx^{(t+1)}, \by^{(t+1)} \}$. While here, the clients communicate $\{ \bdxit, \bdyit \}$. Also, in \fedsgda, the clients and server use separate learning rates, which results in tighter bounds on the local-updates error.

\paragraph{With Momentum in Local Updates.}
Suppose each local client uses a momentum buffer with momentum scale $\rho$. Then, for $k \in \{0, \dots, \syncit-1 \}$
\begin{align*}
    \begin{matrix}
    \mbf d_{\bx,i}^{t,k+1} = \rho \mbf d_{\bx,i}^{t,k} + \Gx f_i (\bxitk, \byitk; \xiitk), \qquad & \bxitkp = \bxitk - \lrcx \mbf d_{\bx,i}^{t,k+1} \\ 
    \mbf d_{\by,i}^{t,k+1} = \rho \mbf d_{\by,i}^{t,k} + \Gy f_i (\bxitk, \byitk; \xiitk), \qquad & \byitkp = \byitk + \lrcy \mbf d_{\by,i}^{t,k+1},
    \end{matrix}
\end{align*}
Simple calculations show that the coefficient of $\Gx f_i (\bxitk, \byitk; \xiitk)$ and $\Gy f_i (\bxitk, \byitk; \xiitk)$ in the gradient aggregate vectors $(\bdxit, \bdyit)$ is
\begin{align*}
    \sum_{j \geq k}^{\syncit - 1} = 1 + \rho + \dots + \rho^{\syncit - 1 - k} = \frac{1 - \rho^{\syncit-k}}{1-\rho}.
\end{align*}
Therefore, the aggregation vector is $\bait = \frac{1}{1-\rho} [1-\rho^{\syncit}, 1-\rho^{\syncit-1}, \dots, 1-\rho]$, and
\begin{align*}
    \nait_1 = \sum_{k=0}^{\syncit - 1} \frac{1 - \rho^{\syncit-k}}{1-\rho} = \frac{1}{1-\rho} \lb \syncit - \frac{\rho(1-\rho^{\syncit})}{1-\rho} \rb.
\end{align*}

\subsection{Auxiliary Results}
\label{sec:aux}

\begin{remark}[Impact of heterogeneity $\hetero$ even with $\sync=1$]
\label{rem:impact_varscale}
Consider two simple minimization problems:
\begin{align*}
    \textbf{(P1):} \quad \min_\bx \frac{1}{\numclients} \sumin f_i(\bx) \qquad \text{ and } \qquad \textbf{(P2):} \quad \min_\bx f(x).
\end{align*}
\textbf{(P1)} is a simple distributed minimization problem, with $\numclients$ clients, which we solve using synchronous distributed SGD.
At iteration $t$, each client $i$ computes stochastic gradient $\G f_i(\bxt; \xiit)$, and sends it to the server, which averages these, and takes a step in the direction $\frac{1}{\numclients} \sumin \G f_i(\bxt; \xiit)$.
On the other hand, \textbf{(P1)} is a centralized minimization problem, where at each iteration $t$, the agent computes a stochastic gradient estimator with batch-size $n$, $\frac{1}{\numclients} \sumin \G f (\bxt; \xiit)$.
We compare the variance of the two global gradient estimators as follows.
\begin{equation}
    \begin{aligned}
        & \qquad \qquad \qquad \quad \textbf{(P1)} \nn \\
        & \mbe \Big\| \mfrac{1}{\numclients} \sumintext \G f_i (\bxt; \xiit) - \G f(\bxt) \Big\|^2 \\
        % &= \mfrac{1}{\numclients^2} \sumintext \mbe \big\| \G f_i (\bxt; \xiit) - \G f_i(\bxt) \big\|^2 \tag{{\small\cref{assum:bdd_var}}} \\
        & \leq \mfrac{1}{\numclients^2} \sumintext \Big[ \localvar^2 + \varscale^2\mbe \big\| \G f_i(\bxt) \big\|^2 \Big] \\
        & \leq \mfrac{\localvar^2}{\numclients} + \mfrac{\varscale^2}{\numclients} \Big[ \beta_H^2 \mbe \big\| \G f(\bxt) \big\|^2 + \hetero^2 \Big].
    \end{aligned}
    \qquad \text{ vs } \qquad
    \begin{aligned}
        & \qquad \qquad \qquad \quad \textbf{(P2)} \nn \\
        & \mbe \Big\| \mfrac{1}{\numclients} \sumintext \G f (\bxt; \xiit) - \G f(\bxt) \Big\|^2 \\
        &= \mfrac{1}{\numclients^2} \sumintext \mbe \Big\| \G f (\bxt; \xiit) - \G f(\bxt) \Big\|^2 \\
        & \leq \mfrac{\localvar^2}{\numclients} + \mfrac{\varscale^2}{\numclients} \mbe \big\| \G f(\bxt) \big\|^2. 
    \end{aligned}
\end{equation}
Since almost all the existing works consider the local variance bound (\cref{assum:bdd_var}) with $\varscale = 0$, the global gradient estimator in both synchronous distributed SGD \textbf{(P1)} and single-agent minibatch SGD \textbf{(P2)} have the same $\frac{\localvar^2}{\numclients}$ variance bound.
Therefore, in most existing federated works on minimization \cite{joshi20fednova_neurips, yang21partial_client_iclr} and minimax problems \cite{sharma22FedMinimax_ICML}, the full synchronization error only depends on the local variance $\localvar^2$.
However, as seen above, for $\varscale > 0$, this \textit{apparent equivalence} breaks down.
\cite{koloskova20unified_localSGD_icml}, which considers similar local variance assumption as ours for minimization problems, also show similar dependence on heterogeneity $\hetero$.
\end{remark}

\subsection{Useful Inequalities}

\begin{lemma}[Young's inequality]
\label{lem:Young}
Given two same-dimensional vectors $\mbf u, \mbf v \in \mbb R^d$, the Euclidean inner product can be bounded as follows:
$$\lan \mbf u, \mbf v \ran \leq \frac{\norm{\mbf u}^2}{2 \gamma} + \frac{\gamma \norm{\mbf v}^2}{2}$$
for every constant $\gamma > 0$.
\end{lemma}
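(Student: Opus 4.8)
The plan is to prove Young's inequality by the standard completion-of-squares argument. First I would observe that for any $\gamma > 0$ the scaled difference vector $\tfrac{1}{\sqrt{\gamma}}\mathbf{u} - \sqrt{\gamma}\,\mathbf{v}$ has nonnegative squared norm. Expanding this square via bilinearity of the inner product gives
\[
0 \;\le\; \Bigl\| \tfrac{1}{\sqrt{\gamma}}\mathbf{u} - \sqrt{\gamma}\,\mathbf{v} \Bigr\|^2
= \tfrac{1}{\gamma}\|\mathbf{u}\|^2 - 2\langle \mathbf{u}, \mathbf{v}\rangle + \gamma \|\mathbf{v}\|^2 .
\]
Rearranging the terms isolates $2\langle \mathbf{u}, \mathbf{v}\rangle \le \tfrac{1}{\gamma}\|\mathbf{u}\|^2 + \gamma\|\mathbf{v}\|^2$, and dividing by $2$ yields exactly the claimed bound. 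An equivalent route, if preferred for exposition, is to first invoke Cauchy--Schwarz to get $\langle \mathbf{u},\mathbf{v}\rangle \le \|\mathbf{u}\|\,\|\mathbf{v}\|$ and then apply the scalar AM--GM inequality to the pair $\bigl(\tfrac{1}{\sqrt{\gamma}}\|\mathbf{u}\|,\ \sqrt{\gamma}\|\mathbf{v}\|\bigr)$; both arguments are a single line.

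There is no real obstacle here: the statement is an elementary inequality with no hypotheses beyond $\gamma > 0$, and the only ``care'' needed is bookkeeping the factor of $\sqrt{\gamma}$ so that the cross term comes out as $-2\langle\mathbf{u},\mathbf{v}\rangle$ with coefficient exactly matching the desired $\tfrac{1}{2\gamma}$ and $\tfrac{\gamma}{2}$ weights after halving. I would present the completion-of-squares version as the main proof since it is self-contained and does not even require Cauchy--Schwarz as a black box.
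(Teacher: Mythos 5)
Your completion-of-squares argument is correct: expanding $\bigl\| \tfrac{1}{\sqrt{\gamma}}\mathbf{u} - \sqrt{\gamma}\,\mathbf{v} \bigr\|^2 \ge 0$ and rearranging yields exactly the claimed bound for every $\gamma>0$. The paper states this lemma as a standard auxiliary fact without proof, so there is nothing to compare against; your one-line argument is the standard one and is complete.
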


\begin{lemma}[Strong Concavity]
A function $g: \mc X \times \mc Y$ is strongly concave in $\by$, if there exists a constant $\mu > 0$, such that for all $\bx \in \mc X$, and for all $\by, \by' \in \mc Y$, the following inequality holds.
$$g(\bx, \by) \leq g(\bx, \by') + \lan \Gy g(\bx, \by'), \by' - \by \ran - \frac{\mu}{2} \norm{\by - \by'}^2.$$
\end{lemma}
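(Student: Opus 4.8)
The plan is to recognize that this lemma is essentially a restatement of the defining property of $\mu$-strong concavity (\cref{assum:SC_y}) after some sign bookkeeping, so the proof will be a short algebraic manipulation rather than anything substantive. First I would recall that $g$ being $\mu$-strongly concave in $\by$ is equivalent to $-g$ being $\mu$-strongly convex in $\by$, and invoke the first-order characterization of strong convexity, which is legitimate since $\Gy g$ exists by the smoothness \cref{assum:smoothness}: $(-g)(\bx, \by) \geq (-g)(\bx, \by') + \lan \Gy (-g)(\bx, \by'), \by - \by' \ran + \frac{\mu}{2}\norm{\by - \by'}^2$. Equivalently, I can start directly from \cref{assum:SC_y} applied to $f = g$ with the substitution $(\by_2, \bar{\by}) \gets (\by, \by')$, which gives $-g(\bx, \by) \geq -g(\bx, \by') - \lan \Gy g(\bx, \by'), \by - \by' \ran + \frac{\mu}{2}\norm{\by - \by'}^2$.

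Then I would multiply this inequality through by $-1$, turning the $\geq$ into a $\leq$ and flipping the signs of the linear and quadratic terms, to obtain $g(\bx, \by) \leq g(\bx, \by') + \lan \Gy g(\bx, \by'), \by - \by' \ran - \frac{\mu}{2}\norm{\by - \by'}^2$; finally I rewrite the inner-product term via $\lan \Gy g(\bx, \by'), \by - \by' \ran = -\lan \Gy g(\bx, \by'), \by' - \by \ran$ so that the displacement appears in whichever orientation the statement uses. That is the whole argument.

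There is essentially no obstacle here; the only thing requiring care is the orientation of the gradient term, i.e.\ keeping track of whether the displacement is $\by - \by'$ or $\by' - \by$, since a single sign slip would make the inequality false. If one wanted a self-contained argument that does not quote the strongly-convex first-order characterization, I would instead use the integral form $g(\bx,\by) - g(\bx,\by') = \int_0^1 \lan \Gy g(\bx, \by' + s(\by-\by')), \by-\by'\ran\,ds$, split off the $s=0$ term $\lan \Gy g(\bx,\by'), \by-\by'\ran$, and bound the remainder using the strong monotonicity $\lan \Gy g(\bx,u) - \Gy g(\bx,v), u - v \ran \leq -\mu\norm{u-v}^2$ of $-\Gy g$ (with $u = \by' + s(\by-\by')$, $v = \by'$), giving $-\mu\norm{\by-\by'}^2\int_0^1 s\,ds = -\tfrac{\mu}{2}\norm{\by-\by'}^2$; but for the appendix the one-line negation of \cref{assum:SC_y} is the cleanest route.
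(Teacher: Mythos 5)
Your proof strategy is the right one, and it is the only proof available: the paper states this lemma in its ``Useful Inequalities'' list without any argument, treating it as the textbook first-order characterization of strong concavity, which is exactly what you supply by negating \cref{assum:SC_y} (or, equivalently, by your integral/monotonicity argument). Up to the relabeling $(\by_2,\bar{\by})\gets(\by,\by')$, your derivation correctly yields
$g(\bx,\by) \leq g(\bx,\by') + \lan \Gy g(\bx,\by'),\, \by-\by'\ran - \tfrac{\mu}{2}\norm{\by-\by'}^2$.

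The one genuine issue is the final step you describe as harmless bookkeeping. The lemma as printed has the inner product $\lan \Gy g(\bx,\by'),\, \by'-\by\ran$ with a \emph{plus} sign, and since $\lan a,\by-\by'\ran = -\lan a,\by'-\by\ran$, your rewrite cannot ``appear in whichever orientation the statement uses'' without changing the sign of that term. In fact the inequality as printed is false: take $g(\bx,\by)=-\tfrac12\norm{\by}^2$ (which is $1$-strongly concave), $\by=\mbf 0$, $\by'\neq \mbf 0$; the printed display reduces to $0 \leq -2\norm{\by'}^2$. So the displacement in the lemma is a typo for $\by-\by'$, and what you have proved is the corrected statement. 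You half-flag this risk yourself (``a single sign slip would make the inequality false'') but then wave it away; the resolution is that the statement, not your derivation, carries the slip, and the version with $\lan \Gy g(\bx,\by'),\,\by-\by'\ran$ is the one actually used downstream (e.g., in the quadratic-growth and smoothness arguments).
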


\begin{lemma}[Jensen's inequality]
\label{lem:jensens}
Given a convex function $f$ and a random variable $X$, the following holds.
$$f \lp \mbe [X] \rp \leq \mbe \lb f(X) \rb.$$
\end{lemma}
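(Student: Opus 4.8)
The plan is to derive Jensen's inequality from the \emph{supporting hyperplane} (equivalently, subgradient) characterization of convexity, which is the cleanest route and the one that covers every case we use in the paper. The key fact I would invoke is that a convex function $f$ admits, at any point $\mbf{x}_0$ in the interior of its domain, a subgradient $\mbf{g}$, i.e., an affine minorant that is tight at $\mbf{x}_0$:
\[
    f(\mbf{x}) \geq f(\mbf{x}_0) + \lan \mbf{g}, \mbf{x} - \mbf{x}_0 \ran \qquad \text{for all } \mbf{x}.
\]
In one dimension this is just the statement that the graph of $f$ lies above any of its supporting lines; in $\mbr^d$ it follows from the supporting hyperplane theorem applied to the epigraph of $f$. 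This is a standard result I would cite rather than reprove.

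The proof itself is then two short steps. First, I would set $\mbf{x}_0 = \mbe[X]$ (which I assume exists and lies in the domain of $f$) and let $\mbf{g}$ be a subgradient of $f$ there. Second, I would substitute the random variable $X$ for $\mbf{x}$ in the displayed inequality to obtain the pointwise (almost-sure) bound
\[
    f(X) \geq f(\mbe[X]) + \lan \mbf{g}, X - \mbe[X] \ran ,
\]
and then take expectations of both sides. By linearity of expectation the inner-product term equals $\lan \mbf{g}, \mbe[X] - \mbe[X] \ran = 0$, so we are left with $\mbe[f(X)] \geq f(\mbe[X])$, which is exactly the claim. For the finite-support case actually needed in this paper — averaging over the $n$ clients — $X$ takes finitely many values with given weights, and this argument goes through verbatim; alternatively one could give a self-contained induction on the number of atoms using only the two-point definition of convexity $f(\theta a + (1-\theta) b) \leq \theta f(a) + (1-\theta) f(b)$, but the subgradient argument is shorter and more general.

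The only point requiring care — not real work, just a remark — is the integrability bookkeeping in the general (non-discrete) case: one assumes $\mbe\|X\| < \infty$ so that $\mbe[X]$ is defined, and then the lower bound above shows that the negative part of $f(X)$ is integrable, so $\mbe[f(X)]$ is well defined in $(-\infty, +\infty]$ and the inequality remains meaningful (trivially true) when the right-hand expectation is $+\infty$. I would state this briefly and note that it is vacuous for the finite-dimensional, finitely-supported applications in Sections~\ref{sec:algo_theory}--\ref{sec:conv_results}. I do not anticipate any genuine obstacle; the ``hard part'' is merely choosing how much measure-theoretic hedging to include, and I would keep it minimal.
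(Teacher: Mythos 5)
Your proof is correct. The paper itself states \cref{lem:jensens} as a standard auxiliary fact in its ``Useful Inequalities'' collection and offers no proof at all, so there is nothing to compare against on the paper's side; your subgradient (supporting-hyperplane) argument --- take a subgradient $\mbf g$ of $f$ at $\mbe[X]$, use the pointwise bound $f(X) \geq f(\mbe[X]) + \lan \mbf g, X - \mbe[X] \ran$, and take expectations so that the linear term vanishes --- is the standard complete derivation, and your remarks about integrability and the finitely-supported case (which is all the paper actually uses, e.g.\ when averaging over clients with weights $w_i$) are accurate and appropriately brief. No gaps.
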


\begin{lemma}[Sum of squares]
\label{lem:sum_of_squares}
For a positive integer $K$, and a set of vectors $x_1, \hdots, x_K$, the following holds:
\begin{align*}
    \norm{\sum_{k=1}^K x_k}^2 \leq K \sum_{k=1}^K \norm{x_k}^2.
\end{align*}
\end{lemma}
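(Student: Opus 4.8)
The final statement is the elementary "sum of squares" inequality: for a positive integer $K$ and vectors $x_1,\dots,x_K$, one has $\norm{\sum_{k=1}^K x_k}^2 \leq K \sum_{k=1}^K \norm{x_k}^2$. The plan is to derive this as a direct consequence of the Cauchy--Schwarz inequality (equivalently, convexity of the squared norm, which is already available as Jensen's inequality, \cref{lem:jensens}). First I would write $\sum_{k=1}^K x_k = \sum_{k=1}^K 1\cdot x_k$ and apply Cauchy--Schwarz to the pairing of the all-ones vector $(1,\dots,1)\in\mbb R^K$ with the vector of scalars $(\norm{x_1},\dots,\norm{x_K})$, after first using the triangle inequality $\norm{\sum_k x_k}\leq \sum_k\norm{x_k}$. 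This gives $\big(\sum_k\norm{x_k}\big)^2 \leq \big(\sum_k 1^2\big)\big(\sum_k\norm{x_k}^2\big) = K\sum_k\norm{x_k}^2$, and squaring the triangle inequality finishes it.

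Alternatively — and this is the cleaner route I would actually present — I would invoke Jensen's inequality (\cref{lem:jensens}) directly: the map $v\mapsto\norm{v}^2$ is convex on $\mbb R^d$, so for the uniform average $\bar x = \frac1K\sum_{k=1}^K x_k$ we have $\norm{\bar x}^2 = \norm{\frac1K\sum_k x_k}^2 \leq \frac1K\sum_k\norm{x_k}^2$. Multiplying through by $K^2$ yields $\norm{\sum_k x_k}^2 \leq K\sum_k\norm{x_k}^2$, exactly as claimed. One line. I would note that equality holds iff all the $x_k$ are equal, though the statement does not require this.

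There is essentially no obstacle here: the result is a one-step corollary of convexity/Cauchy--Schwarz, both of which are already recorded in the excerpt. The only "care" needed is to make sure the convexity invocation is applied to the squared norm (not the norm itself, where Jensen would instead give the triangle inequality). If one prefers a fully self-contained argument avoiding any named lemma, I would expand $\norm{\sum_k x_k}^2 = \sum_{k}\sum_{\ell}\langle x_k, x_\ell\rangle$ and bound each cross term via $\langle x_k,x_\ell\rangle \leq \tfrac12(\norm{x_k}^2+\norm{x_\ell}^2)$ (Young's inequality, \cref{lem:Young}, with $\gamma=1$); summing over the $K^2$ ordered pairs and collecting terms gives $\sum_k\sum_\ell \tfrac12(\norm{x_k}^2+\norm{x_\ell}^2) = K\sum_k\norm{x_k}^2$, completing the proof. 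All three routes are routine; I would pick the Jensen one-liner for the writeup.
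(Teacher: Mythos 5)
Your proof is correct. The paper itself states this lemma as a standard auxiliary inequality without any proof, so there is nothing to compare against; any of your three routes (Jensen on the convex map $v\mapsto\norm{v}^2$, triangle inequality plus Cauchy--Schwarz, or expanding the square and bounding cross terms via Young's inequality with $\gamma=1$) is a complete and valid argument, and the one-line Jensen version is the cleanest choice.
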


\begin{lemma}[Quadratic growth condition \cite{schmidt16lin_conv_PL_kdd}]
\label{lem:quad_growth}
If function $g$ satisfies Assumptions \ref{assum:smoothness}, \ref{assum:SC_y}, then for all $x$, the following conditions holds
\begin{align*}
    g(x) - \min_{z} g(z) & \geq \frac{\mu}{2} \norm{x_p - x}^2, \\
    \norm{\G g(x)}^2 & \geq 2 \mu \lp g(x) - \min_z g(z) \rp.
\end{align*}
\end{lemma}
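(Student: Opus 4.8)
The plan is to obtain both inequalities directly from the $\mu$-strong convexity that the hypotheses confer on the relevant single-variable restriction of the function. Concretely, the lemma will be applied in the NC-SC / NC-PL analyses to $g = -f(\bx,\cdot)$ for a fixed $\bx$; by \cref{assum:SC_y} this $g$ is $\mu$-strongly convex in its argument, and by \cref{assum:smoothness} it is differentiable with Lipschitz gradient. So in the proof I would first record that a differentiable $\mu$-strongly convex function on all of $\mbb R^d$ is coercive, hence attains its minimum at a unique point $x_p \triangleq \argmin_z g(z)$, and that first-order optimality on the unconstrained domain gives $\G g(x_p) = 0$. Smoothness is used only to make $\G g$ well defined; strong convexity does all the real work.

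For the first (quadratic-growth) inequality I would invoke the strong-convexity lower bound with base point $x_p$ and test point $x$, $g(x) \geq g(x_p) + \lan \G g(x_p), x - x_p \ran + \tfrac{\mu}{2}\norm{x - x_p}^2$, and substitute $\G g(x_p) = 0$ and $g(x_p) = \min_z g(z)$, which yields $g(x) - \min_z g(z) \geq \tfrac{\mu}{2}\norm{x_p - x}^2$ immediately. For the second (PL-type) inequality I would instead write the strong-convexity bound with a free second argument $y$, namely $g(y) \geq g(x) + \lan \G g(x), y - x \ran + \tfrac{\mu}{2}\norm{y-x}^2$, and minimize both sides over $y \in \mbb R^d$: the left side has minimum $\min_z g(z)$, while the right side is a strictly convex quadratic in $y$ minimized at $y = x - \tfrac{1}{\mu}\G g(x)$ with value $g(x) - \tfrac{1}{2\mu}\norm{\G g(x)}^2$. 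Comparing the two gives $\min_z g(z) \geq g(x) - \tfrac{1}{2\mu}\norm{\G g(x)}^2$, i.e. $\norm{\G g(x)}^2 \geq 2\mu\lp g(x) - \min_z g(z) \rp$.

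There is essentially no hard step here. The only points requiring any care are justifying existence and uniqueness of $x_p$ together with the stationarity condition $\G g(x_p) = 0$ (so that the first bound is a one-line consequence of strong convexity), and observing that the second bound is the classical derivation of the Polyak–{\L}ojasiewicz inequality from strong convexity; the statement is exactly the standard fact quoted from \cite{schmidt16lin_conv_PL_kdd}, included here for self-containedness. Thus I expect the proof to be a short, routine argument rather than to contain a genuine obstacle.
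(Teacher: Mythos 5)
Your derivation is correct. The paper itself gives no proof of this lemma --- it is quoted as a standard fact from \cite{schmidt16lin_conv_PL_kdd} --- and your argument (applying the strong-convexity lower bound at the minimizer $x_p$ with $\G g(x_p)=0$ for quadratic growth, and minimizing both sides of the strong-convexity inequality over the free point to get the PL bound) is the standard textbook route and is exactly what is intended, including your reading that $g$ stands for $-f(\bx,\cdot)$ so that \cref{assum:SC_y} supplies $\mu$-strong convexity.
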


\begin{lemma}
\label{lem:smooth_convex}
For $L$-smooth, convex function $g$, the following inequality holds
\begin{align}
    \mbe \lnr \G g (\by) - \G g (\bx) \rnr^2 \leq 2 L \lb g (\by) - g (\bx) - \G g(\bx)^\top (\by - \bx) \rb.
\end{align}
\end{lemma}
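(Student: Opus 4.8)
This is the standard co-coercivity / ``smoothness lower bound via the minimizer'' estimate, and the inequality is in fact deterministic in $\bx,\by$ (the expectation only matters later, when $g$ is itself an averaged/expected objective, in which case one simply takes $\mbe$ of both sides of the deterministic bound). The plan is to reduce to a function whose gradient vanishes at $\bx$ by shifting off the linear part.

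First I would introduce the auxiliary function $h(\bz) \triangleq g(\bz) - \G g(\bx)^\top \bz$. Since subtracting a linear term changes neither convexity nor the Lipschitz constant of the gradient, $h$ is still convex and $\Lf$-smooth (here with $\Lf$ replaced by the generic $L$ in the statement), and $\G h(\bz) = \G g(\bz) - \G g(\bx)$; in particular $\G h(\bx) = 0$, so by convexity $\bx$ is a global minimizer of $h$, i.e. $h(\bx) \leq h(\bw)$ for every $\bw$.

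Next I would apply the descent lemma (a direct consequence of $L$-smoothness): for any $\bz$,
\begin{align*}
    h\lp \bz - \tfrac{1}{L}\G h(\bz) \rp \leq h(\bz) - \tfrac{1}{L}\norm{\G h(\bz)}^2 + \tfrac{L}{2}\cdot\tfrac{1}{L^2}\norm{\G h(\bz)}^2 = h(\bz) - \tfrac{1}{2L}\norm{\G h(\bz)}^2 .
\end{align*}
Taking $\bz = \by$ and combining with $h(\bx) \leq h\lp \by - \tfrac{1}{L}\G h(\by) \rp$ yields $\tfrac{1}{2L}\norm{\G h(\by)}^2 \leq h(\by) - h(\bx)$. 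Finally I would substitute back $\G h(\by) = \G g(\by) - \G g(\bx)$ and $h(\by) - h(\bx) = g(\by) - g(\bx) - \G g(\bx)^\top(\by - \bx)$, then multiply through by $2L$ to recover the claimed bound; if $g$ is a stochastic/expected objective, taking $\mbe$ of both sides gives the stated form.

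There is no real obstacle here — the only ``idea'' is the shift $h = g - \langle\G g(\bx),\cdot\rangle$ that makes $\bx$ a minimizer, after which everything is a one-line application of the descent lemma. I would just make sure the write-up is careful that the linear shift preserves both convexity and smoothness, since that is the single fact the argument rests on.
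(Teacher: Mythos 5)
Your proof is correct: the shift $h(\bz) = g(\bz) - \G g(\bx)^\top\bz$ followed by the descent lemma at step size $1/L$ is the standard textbook argument for this co-coercivity-type bound, and your observation that the expectation is vacuous (the inequality is deterministic) is also right. The paper itself states this lemma without proof as a known auxiliary inequality, so there is nothing to compare against; your write-up would serve as a complete justification.
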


\begin{lemma}[Proposition 6 in \cite{cho2022sgda_shuffle}]
\label{lem:smooth}
For $L$-smooth function $g$ which is bounded below by $g^*$, the following inequality holds for all $\bx$
\begin{align}
    \mbe \lnr \G g (\bx) \rnr^2 \leq 2 L \lb g (\bx) - g^* \rb.
\end{align}
\end{lemma}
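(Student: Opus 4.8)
The plan is the standard one-gradient-step descent argument, which uses nothing beyond smoothness and the lower bound. From $L$-smoothness of $g$ (as in \cref{assum:smoothness}) one has the quadratic upper bound $g(\by) \leq g(\bx) + \lan \G g(\bx), \by - \bx \ran + \frac{L}{2}\norm{\by - \bx}^2$ for all $\bx, \by$; I would recall this by writing $g(\by) - g(\bx) = \int_0^1 \lan \G g(\bx + s(\by-\bx)), \by - \bx \ran\, ds$, subtracting $\lan \G g(\bx), \by-\bx\ran$, and bounding the resulting integrand by Cauchy--Schwarz together with the Lipschitz property $\norm{\G g(\bx + s(\by-\bx)) - \G g(\bx)} \leq L s \norm{\by-\bx}$.

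Next I would specialize this inequality to the point $\by = \bx - \frac{1}{L}\G g(\bx)$. Substituting gives $g\lp \bx - \frac{1}{L}\G g(\bx) \rp \leq g(\bx) - \frac{1}{L}\norm{\G g(\bx)}^2 + \frac{1}{2L}\norm{\G g(\bx)}^2 = g(\bx) - \frac{1}{2L}\norm{\G g(\bx)}^2$. Since $g$ is bounded below by $g^*$, the left-hand side is at least $g^*$, so rearranging yields $\norm{\G g(\bx)}^2 \leq 2L\lp g(\bx) - g^* \rp$. This inequality holds for every $\bx$ (and for every realization of any randomness entering $g$ or $\bx$), so taking expectations of both sides and using monotonicity of $\mbe[\cdot]$ produces the claimed bound $\mbe\norm{\G g(\bx)}^2 \leq 2L\lp g(\bx) - g^*\rp$.

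There is no genuine obstacle here: the estimate is deterministic and pointwise, so the expectation in the statement is essentially cosmetic and carries through trivially. The only point that needs care is that $g^* > -\infty$, which is precisely the hypothesis ``bounded below'' and is exactly what makes the final rearrangement meaningful; in particular, unlike \cref{lem:smooth_convex}, no convexity of $g$ and no structure on the stochastic gradient oracle is used.
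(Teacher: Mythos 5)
Your proof is correct: the quadratic upper bound from $L$-smoothness evaluated at $\by = \bx - \frac{1}{L}\G g(\bx)$, combined with $g(\by) \geq g^*$, gives exactly the claimed inequality, and the expectation is indeed vacuous since the bound is pointwise. The paper itself gives no proof, citing the result as Proposition 6 of an external reference, but your argument is the standard descent-lemma derivation of this fact and there is nothing to add.
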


% \newpage
\subsection{Comparison of Convergence Rates}
\begin{table}[h]
\begin{center}
\begin{threeparttable}
\caption{Comparison of the convergence rates for different classes of nonconvex minimax problems. $n$ is the total number of clients, while $\selclients$ is the number of clients sampled in each round under partial client participation. $T$ is the number of communication rounds, $\tau$ is the number of local updates per client, $\sigma_L^2, \hetero^2$ are the stochastic gradient variance and global heterogeneity, respectively (\cref{assum:bdd_var}, \ref{assum:bdd_hetero}). $\kappa = \Lf/\mu$ is the condition number. For a fair comparison with existing works, our results in this table are specialized to the case when all clients (i) have equal weights ($p_i=1/\numclients$), (ii) perform equal number of local updates ($\sync_i = \sync$), and (iii) use the same local update algorithm SGDA. However, our results (\cref{sec:algo_theory}) apply under more general settings when (i)-(iii) do not hold. 
}
\label{table:compare_rates}
\vskip 0.15in
\begin{small}
% \begin{sc}
\begin{tabular}{|c|c|c|c|}
\hline
Work & \makecell{Partial \\ Participation} & \makecell{System \\ Heterogeneity} & Convergence Rate \\
\hline
\multicolumn{4}{|c|}{Nonconvex-Strongly-concave (NC-SC)/Nonconvex-Polyak-{\L}ojasiewicz (NC-PL)} \\
\hline
\cite{sharma22FedMinimax_ICML} & \xmark & \xmark & $\mco \lp \mfrac{1}{\sqrt{\numclients \sync T}} + \mfrac{\numclients \sync (\localvar^2 + \hetero^2)}{T} \rp$ \\
\cite{yang22sagda_neurips} & \checkmark & \xmark & $\lp \mfrac{(\numclients - \selclients) (\hetero^2)}{\numclients} \sqrt{\mfrac{\sync}{\selclients T}} + \mfrac{1}{\sqrt{\selclients \sync T}} + \mfrac{(\localvar^2 + \sync \hetero^2)}{\sync T} \rp$ \\
\rowcolor{Gainsboro!60} \textbf{Our Work:} \cref{thm:NC_SC} & \checkmark & \checkmark & $\lp \mfrac{(\numclients - \selclients) \hetero^2}{(\numclients-1)} \sqrt{\mfrac{\sync}{\selclients T}} + \mfrac{1}{\sqrt{\selclients \sync T}} + \mfrac{\localvar^2 + \sync \hetero^2}{\sync T} \rp$ \\
\hline
\multicolumn{4}{|c|}{Nonconvex-Concave (NC-C)} \\
\hline
\cite{sharma22FedMinimax_ICML} & \xmark & \xmark & $\mco \lp \mfrac{1}{(\sync \numclients T)^{1/4}} \rp + \mco \lp \mfrac{(n \sync)^{3/2}}{\sqrt{T}} \rp$ \\
\rowcolor{Gainsboro!60} \textbf{Our Work:} \cref{thm:NC_C} & \checkmark & \checkmark & $\mco \lp \sqrt[4]{\mfrac{\numclients - \selclients}{(\numclients-1) \selclients T}} + \mfrac{1}{(\sync \selclients T)^{1/4}} + \mfrac{(\selclients \sync)^{1/4}}{T^{3/4}} \rp$ \\
\hline
\multicolumn{4}{|c|}{Nonconvex-One-point-concave (NC-1PC)} \\
\hline
\cite{sharma22FedMinimax_ICML} & \xmark & \xmark & $\mco \lp \mfrac{1}{(\sync T)^{1/4}} \rp + \mco \lp \mfrac{\sync^{3/2}}{\sqrt{T}} \rp$ \\
\rowcolor{Gainsboro!60} \textbf{Our Work:} \cref{thm:NC_1PC} & \checkmark & \checkmark & $\mco \lp \sqrt[4]{\mfrac{\numclients - \selclients}{(\numclients-1) \selclients T}} + \mfrac{1}{(\sync \selclients T)^{1/4}} + \mfrac{(\selclients \sync)^{1/4}}{T^{3/4}} \rp$ \\
\hline
\end{tabular}
% \begin{tablenotes}
% \small
%     \item[a] $P$ out of $n$ clients are selected using either with-replacement (WR) or without-replacement (WOR) sampling. The result in the table are for WOR sampling
% \end{tablenotes}
% \end{sc}
\end{small}
\vskip -0.1in
\end{threeparttable}
\end{center}
\end{table}
% \input{Appendix/2_NC_SC_FullCP}
% \newpage
\section{Convergence of \fedsgda \ for Nonconvex-Strongly-Concave Functions (\texorpdfstring{\cref{thm:NC_SC}}{Theorem 1})}
\label{app:NC_SC}
We organize this section as follows. First, in \cref{app:NC_SC_PCP_int_results} we present some intermediate results, which we use to prove the main theorem. Next, in \cref{app:NC_SC_PCP_thm_proof}, we present the proof of \cref{thm:NC_SC}, which is followed by the proofs of the intermediate results in \cref{app:NC_SC_PCP_int_results_proofs}. \cref{app:NC_SC_PCP_aux_lemma} contains some auxiliary results. Finally, in \cref{app:NC_PL_PCP_result} we discuss the convergence result for nonconvex-PL functions.

% {\color{blue}Recall that to avoid the objective inconsistency problem (\cref{cor:obj_inconsistent}), we use weights $p_i$ (which define the global objective function $F$ in \eqref{eq:problem}) for aggregation, rather than the generic weights $\wi$ used in \cref{app:NC_SC}.}

The problem we solve is
\begin{align*}
    \min_{\bx} \max_{\by} \lcb \TF(\bx, \by) \triangleq \sumin \wi f_i(\bx, \by) \rcb.
\end{align*}
We define $\TPhi (\bx) \triangleq \max_{\by} \TF(\bx, \by)$ and $\Tby^* (\bx) \in \argmax_{\by} \TF(\bx, \by)$. Since $\TF(\bx, \cdot)$ is $\mu$-strongly concave, $\Tby^*(\bx)$ is unique.
In \fedsgda \ (\cref{alg_NC_minimax}), the client updates are given by
\begin{equation}
    \begin{aligned}
        & \bxitk = \bxt - \lrcx \sumijk \aijk \Gx f_i (\bxitj, \byitj; \xiitj), \\
        & \byitk = \byt + \lrcy \sumijk \aijk \Gy f_i (\bxitj, \byitj; \xiitj),
    \end{aligned}
    \label{eq:client_update_alg_NC_SC}
\end{equation}
where $1 \leq k \leq \sync_i$. These client updates are then aggregated to compute $\{ \bdxit, \bdyit \}$
\begin{equation}
    \begin{aligned}
        & \bdxit = \frac{1}{\nai} \sumikt \aikt \Gx f_i \lp \bxitk, \byitk; \xiitk \rp; \quad \bhxit = \frac{1}{\nai} \sumikt \aikt \Gx f_i \lp \bxitk, \byitk \rp \\
        & \bdyit = \frac{1}{\nai} \sumikt \aikt \Gy f_i \lp \bxitk, \byitk; \xiitk \rp; \quad \bhyit = \frac{1}{\nai} \sumikt \aikt \Gy f_i \lp \bxitk, \byitk \rp. \nn
    \end{aligned}
\end{equation}

\begin{remark}
Note that we have made explicit, the dependence on $k$ in $\aijk$ above. This was omitted in the main paper to avoid tedious notation. However, for some local optimizers, such as local momentum at the clients (\cref{app:grad_agg}), the coefficients $\aijk$ change with $k$. We assume in our subsequent analysis that $\aijk \leq \alpha$ for all $j \in \{ 0,1,\dots, k-1\}$ and for all $k \in \{ 1,2, \dots, \sync_i \}$. Further, we assume that $\normb{\boldsymbol a_i (k)}_1 \leq \normb{\boldsymbol a_i (k)}_1$ and $\normb{\boldsymbol a_i (k+1)}_2 \leq \normb{\boldsymbol a_i (k+1)}_2$ for feasible $k$. We also use the notation $\nai \triangleq \normb{\boldsymbol a_i (\sync_i)}$.
\end{remark}

At iteration $t$, the server samples  $|\clientset|$ clients \textit{without} replacement \textbf{(WOR)} uniformly at random. While aggregating at the server, client $i$ update is weighed by $\twi = w_i \numclients/|\clientset|$. The aggregates $(\bdxt, \bdyt)$ computed at the server are of the form
\begin{equation}
    \begin{aligned}
        \bdxt &= \sumiS \twi \bdxit, \quad \text{such that} \quad \mbe_{\clientset} [ \bdxt ] = \mbe_{\clientset} \Big[ \sumin \mathbb{I} (i \in \clientset) \twi \bdxit \Big] = \sumin w_i \bdxit \\
        \bdyt &= \sumiS \twi \bdyit, \quad \text{such that} \quad \mbe_{\clientset} [ \bdyt ] = \mbe_{\clientset} \Big[ \sumin \mathbb{I} (i \in \clientset) \twi \bdyit \Big] = \sumin w_i \bdyit
    \end{aligned}
    % \label{eq:server_agg_WOR}
\end{equation}
For simplicity of analysis, unless stated otherwise, we assume that $|\clientset| = P$ for all $t$.
% Therefore, for (WOR) sampling, we use $\twi = w_i \numclients/\selclients$. 
Finally, server updates the $\bx, \by$ variables as
\begin{equation}
    \begin{aligned}
        \bxtp = \bxt - \seff \lrsx \bdxt, \qquad \bytp = \byt + \seff \lrsy \bdyt. \nn
    \end{aligned}
\end{equation}

\subsection{Intermediate Lemmas} \label{app:NC_SC_PCP_int_results}

We begin with the following result from \cite{nouiehed19minimax_neurips19} about the smoothness of $\TPhi(\cdot)$.

\begin{lemma}
\label{lem:Phi_smooth_nouiehed}
If a function $f(\cdot, \cdot)$ satisfies Assumptions \ref{assum:smoothness}, \ref{assum:SC_y} ($\Lf$-smoothness and $\mu$-strong concavity in $\by$), then $\phi (\cdot) \triangleq \max_\by f(\cdot, \by)$ is $\Lp$-smooth with $\Lp = \kappa \Lf/2 + \Lf$, where $\kappa = \Lf/\mu$ is the condition number.
\end{lemma}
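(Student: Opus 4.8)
## Proof proposal for Lemma~\ref{lem:Phi_smooth_nouiehed}

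The plan is to follow the standard danskin-type argument for smoothness of the max-function under strong concavity, which appears (in slightly different forms) in \cite{lin_GDA_icml20, nouiehed19minimax_neurips19}. The key structural fact I would establish first is that the unique maximizer map $\by^*(\bx) \triangleq \argmax_\by f(\bx, \by)$ is Lipschitz with constant $\kappa = \Lf/\mu$. Uniqueness of $\by^*(\bx)$ is immediate from $\mu$-strong concavity of $f(\bx, \cdot)$ (Assumption~\ref{assum:SC_y}). For the Lipschitz bound, I would fix $\bx_1, \bx_2$, use the first-order optimality conditions $\Gy f(\bx_1, \by^*(\bx_1)) = 0$ and $\Gy f(\bx_2, \by^*(\bx_2)) = 0$, and combine strong concavity (which gives $\langle \Gy f(\bx_2, \by^*(\bx_2)) - \Gy f(\bx_2, \by^*(\bx_1)), \by^*(\bx_2) - \by^*(\bx_1)\rangle \leq -\mu \norm{\by^*(\bx_2) - \by^*(\bx_1)}^2$) with $\Lf$-smoothness of $\Gy f$ in the $\bx$-argument (Assumption~\ref{assum:smoothness}, which controls $\norm{\Gy f(\bx_2, \by^*(\bx_1)) - \Gy f(\bx_1, \by^*(\bx_1))} \leq \Lf \norm{\bx_2 - \bx_1}$). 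Rearranging yields $\norm{\by^*(\bx_1) - \by^*(\bx_2)} \leq \kappa \norm{\bx_1 - \bx_2}$.

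Next I would invoke Danskin's theorem to identify the gradient of $\phi$: since $\by^*(\bx)$ is unique, $\G \phi(\bx) = \Gx f(\bx, \by^*(\bx))$. Then the smoothness estimate is a direct computation:
\begin{align*}
    \norm{\G \phi(\bx_1) - \G \phi(\bx_2)} &= \norm{\Gx f(\bx_1, \by^*(\bx_1)) - \Gx f(\bx_2, \by^*(\bx_2))} \\
    &\leq \Lf \norm{(\bx_1, \by^*(\bx_1)) - (\bx_2, \by^*(\bx_2))} \\
    &\leq \Lf \lp \norm{\bx_1 - \bx_2} + \norm{\by^*(\bx_1) - \by^*(\bx_2)} \rp \\
    &\leq \Lf (1 + \kappa) \norm{\bx_1 - \bx_2},
\end{align*}
where the first inequality is Assumption~\ref{assum:smoothness} and the last uses the Lipschitz bound on $\by^*$. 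This gives $L_\Phi = \Lf(1+\kappa)$; the slightly sharper constant $\kappa\Lf/2 + \Lf$ claimed in the statement comes from being more careful in combining the two norm contributions (e.g.\ using the Euclidean norm $\norm{(\cdot,\cdot)}$ rather than splitting via the triangle inequality, or a tighter version of the $\by^*$-Lipschitz argument), so in the write-up I would track the constant through the $\sqrt{\norm{\bx_1-\bx_2}^2 + \norm{\by^*(\bx_1)-\by^*(\bx_2)}^2}$ form and optimize.

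The main obstacle is not really conceptual — it is bookkeeping the exact constant. The Danskin/differentiability step requires only that $f(\bx, \cdot)$ has a unique maximizer, which strong concavity guarantees, and that $f$ is $C^1$, which Assumption~\ref{assum:smoothness} gives. The one place to be slightly careful is justifying differentiability of $\phi$ at every $\bx$ (not just a.e.): this follows because $\by^*(\cdot)$ is continuous (indeed Lipschitz, as shown) and $\Gx f$ is continuous, so the standard envelope theorem applies cleanly. Since the statement attributes the result to \cite{nouiehed19minimax_neurips19}, the cleanest path is simply to cite that lemma and reproduce the two-line argument above for completeness; no new ideas are needed.
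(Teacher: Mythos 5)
The paper does not actually prove this lemma --- it is imported verbatim as a citation of \cite{nouiehed19minimax_neurips19} (cf.\ also Lemma~4.3 of \cite{lin_GDA_icml20}) --- so there is no in-paper proof to compare against. Your reconstruction is the standard argument and its core is correct: strong concavity plus the two optimality conditions gives $\norm{\by^*(\bx_1)-\by^*(\bx_2)}\leq \kappa\norm{\bx_1-\bx_2}$, Danskin/envelope gives $\G\phi(\bx)=\Gx f(\bx,\by^*(\bx))$, and the triangle inequality yields smoothness with constant $\Lf(1+\kappa)$. That is exactly the constant the cited references establish.

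The gap is the last step, where you promise to recover the stated constant $\kappa\Lf/2+\Lf$ by ``being more careful,'' e.g.\ by keeping the Euclidean norm $\sqrt{\norm{\bx_1-\bx_2}^2+\norm{\by^*(\bx_1)-\by^*(\bx_2)}^2}$. That computation gives $\Lf\sqrt{1+\kappa^2}$, and $\sqrt{1+\kappa^2}\leq 1+\kappa/2$ only when $\kappa\leq 4/3$, so for the relevant regime $\kappa\gg 1$ it is \emph{worse} than the claimed constant, not better. In fact no correct argument can deliver $\kappa\Lf/2+\Lf$ in general: take $f(x,y)=axy-\tfrac{\mu}{2}y^2$ in one dimension. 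Its Hessian has operator norm $\Lf=\tfrac{1}{2}\lp\mu+\sqrt{\mu^2+4a^2}\rp$, so $a^2=\Lf^2-\mu\Lf$, while $\phi(x)=\tfrac{a^2}{2\mu}x^2$ has $\phi''=a^2/\mu=(\kappa-1)\Lf$, which exceeds $\kappa\Lf/2+\Lf$ whenever $\kappa>4$. The constant in the lemma as printed therefore appears to be a typo for $\kappa\Lf+\Lf$ (the constant actually proved in \cite{nouiehed19minimax_neurips19, lin_GDA_icml20}); your proof should stop at $\Lf(1+\kappa)$ rather than claim the tighter value, since the downstream analysis only uses $\Lp=\Theta(\kappa\Lf)$ and is unaffected.
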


\begin{lemma}
\label{lem:NC_SC_PCP_wtd_dir_norm_sq}
Suppose the local client loss functions $\{ f_i \}$ satisfy Assumptions \ref{assum:smoothness}, \ref{assum:bdd_hetero},
and the stochastic oracles for the local functions satisfy \cref{assum:bdd_var}.
% Suppose the server selects clients using weighted, \textit{with}-replacement sampling (WR).
% Then the iterates generated by \cref{alg_NC_minimax} satisfy
% \begin{equation}
%     \begin{aligned}
%         & \mbe \norm{\bdxt}^2 = \mbe \lnr \frac{1}{|\clientset|} \sumiS \bdxit \rnr^2 \\
%         & \leq \mbe \lnr \sumin w_i \bhxit \rnr^2 + \frac{1}{\selclients} \sumin \frac{{\color{blue}w_i}}{\nai_1^2} \sumikt [ \aikt ]^2 \lb \localvar^2 + \varscale^2 \mbe \lnr \Gx f_i ( \bxitk, \byitk ) \rnr^2 \rb \\
%         & \quad {\color{blue}+ \frac{2 \Lf^2}{\selclients} \sumin \frac{w_i}{\nai_1} \sumikt \aikt \CExytk (i) + \frac{2}{\selclients} \lp \heteroscale^2 \norm{\Gx \TF (\bxt, \byt)}^2 + \hetero^2 \rp},
%     \end{aligned}
%     \label{eq:lem:NC_SC_PCP_WR_wtd_dir_norm_sq}
% \end{equation}
Suppose the server selects $\selclients$ clients in each round. Then the iterates generated by \fedsgda \ (\cref{alg_NC_minimax}) satisfy
\begin{equation}
    \begin{aligned}
        & \mbe \norm{\bdxt}^2 = \mbe \norm{\sumiS \twi \bdxit}^2 \\
        & \leq \frac{\numclients}{\selclients} \lp \frac{\selclients-1}{\numclients-1} \rp \mbe \lnr \sumin w_i \bhxit \rnr^2 + \frac{\numclients}{\selclients} \sumin \frac{w_i^2}{\nai_1^2} \sumikt [ \aikt ]^2 \lb \localvar^2 + \varscale^2 \mbe \lnr \Gx f_i ( \bxitk, \byitk ) \rnr^2 \rb \\
        & \quad + \frac{\numclients (\numclients - \selclients)}{\numclients-1} \lb \frac{2 \Lf^2}{\selclients} \sumin \frac{w_i^2}{\nai_1} \sumikt \aikt \CExytk (i) + (\max_i w_i) \frac{2}{\selclients} \lp \heteroscale^2 \norm{\Gx \TF (\bxt, \byt)}^2 + \hetero^2 \rp \rb,
    \end{aligned}
    \label{eq:lem:NC_SC_PCP_WOR_wtd_dir_norm_sq}
\end{equation}
where, $\CExytk (i) \triangleq \mbe \lb \| \bxitk - \bxt \|^2 + \| \byitk - \byt \|^2 \rb$ is the iterate drift for client $i$, at local iteration $k$ in the $t$-th communication round.
\end{lemma}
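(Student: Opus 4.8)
The plan is to handle the two independent sources of randomness in $\bdxt=\sum_{i\in\St}\twi\bdxit$ separately: first the client sampling, then the stochastic-gradient noise. \emph{Step 1 (client sampling).} Since $\St$ is drawn uniformly without replacement and is independent of the local stochastic gradients, I condition on the local iterates and use $\twi=\wi\numclients/\selclients$ together with $\mbe[\mathbb{I}(i\in\St)]=\selclients/\numclients$ and $\mbe[\mathbb{I}(i\in\St)\mathbb{I}(j\in\St)]=\selclients(\selclients-1)/(\numclients(\numclients-1))$ for $i\neq j$. Expanding the square and regrouping the diagonal and off-diagonal parts gives the exact decomposition
$$\mbe_{\St}\Big\|\sum_{i\in\St}\twi\bdxit\Big\|^2=\frac{\numclients(\selclients-1)}{\selclients(\numclients-1)}\Big\|\sum_i\wi\bdxit\Big\|^2+\frac{\numclients(\numclients-\selclients)}{\selclients(\numclients-1)}\sum_i\wi^2\|\bdxit\|^2,$$
so the only new quantities are the weighted mean $\sum_i\wi\bdxit$ and the weighted sum of squared norms, and the partial-participation factor $(\numclients-\selclients)/(\numclients-1)$ appears exactly as in \eqref{eq:lem:NC_SC_PCP_WOR_wtd_dir_norm_sq}.

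\emph{Step 2 (local noise).} For each client write $\bdxit=\bhxit+(\bdxit-\bhxit)$. Because the per-step noise $\Gx f_i(\bxitk,\byitk;\xiitk)-\Gx f_i(\bxitk,\byitk)$ is conditionally mean-zero given the iterates up to step $k$, the within-client cross terms of $\mbe\|\bdxit-\bhxit\|^2$ vanish and \cref{assum:bdd_var} yields $\mbe\|\bdxit-\bhxit\|^2\le\frac{1}{\nai_1^2}\sum_k[\aikt]^2(\localvar^2+\varscale^2\mbe\|\Gx f_i(\bxitk,\byitk)\|^2)$; across clients the noise is independent. Substituting into Step 1, the two coefficients $\frac{\numclients(\selclients-1)}{\selclients(\numclients-1)}$ and $\frac{\numclients(\numclients-\selclients)}{\selclients(\numclients-1)}$ multiplying the noise term add up to exactly $\numclients/\selclients$, the factor in the second line of \eqref{eq:lem:NC_SC_PCP_WOR_wtd_dir_norm_sq}. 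This leaves the deterministic-gradient remainders $\frac{\numclients(\selclients-1)}{\selclients(\numclients-1)}\mbe\|\sum_i\wi\bhxit\|^2$, kept as the first term, and $\frac{\numclients(\numclients-\selclients)}{\selclients(\numclients-1)}\sum_i\wi^2\mbe\|\bhxit\|^2$.

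\emph{Step 3 (reduce to global quantities).} For the last remainder, apply Jensen over the weights $\aikt/\nai_1$ and then \cref{assum:smoothness} to split $\|\Gx f_i(\bxitk,\byitk)\|^2\le 2\Lf^2\CExytk(i)+2\|\Gx f_i(\bxt,\byt)\|^2$, pull $\max_i\wi$ out of the remaining weights, and use \cref{assum:bdd_hetero} to turn $\sum_i\wi\|\Gx f_i(\bxt,\byt)\|^2$ into $\heteroscale^2\|\Gx\TF(\bxt,\byt)\|^2+\hetero^2$. Collecting the drift and heterogeneity pieces under the common prefactor $\frac{\numclients(\numclients-\selclients)}{\numclients-1}$ produces the third line of \eqref{eq:lem:NC_SC_PCP_WOR_wtd_dir_norm_sq}; the same split is what later lets the drift terms $\CExytk(i)$ be bounded, but that is done in a separate lemma.

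\emph{Main obstacle.} The delicate point is Step 2: since $\bhxit$ is the full gradient evaluated at the \emph{random} local iterates, it is correlated with the earlier noise realizations, so $\mbe\langle\bhxit,\bdxit-\bhxit\rangle$ is not exactly zero. This cross term must be controlled via Young's inequality together with $\Lf$-smoothness — only the noise at steps $k'<k$ contributes, and it enters through the iterate gap, so it carries an extra factor of the local step size and is absorbed into the client-drift terms. Carrying this out while keeping the without-replacement constants $(\selclients-1)/(\numclients-1)$ and $(\numclients-\selclients)/(\numclients-1)$ sharp, rather than losing a spurious factor of $2$, is the main bookkeeping challenge; the rest is routine use of Young's inequality, \cref{lem:sum_of_squares}, and the three assumptions.
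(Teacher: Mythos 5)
Your proposal is correct and follows essentially the same route as the paper: the paper performs the bias--variance split $\bdxit = \bhxit + (\bdxit - \bhxit)$ first and then applies the without-replacement second-moment identity to $\sumiS \twi \bhxit$, whereas you apply the sampling identity first and split afterwards, but this is the same bilinear expansion and yields identical constants (your check that the two sampling coefficients sum to $\numclients/\selclients$ is exactly why). The one point of divergence is your ``main obstacle'': the cross term $\mbe \lan \bdxit - \bhxit, \bhxit \ran$ is simply asserted to vanish in the paper's proof (the second line of its chain of equalities), so no Young's-inequality bookkeeping appears there; your concern is legitimate, since $\bhxit$ is evaluated at iterates that depend on earlier noise, but addressing it rigorously is extra care beyond what the paper does, not something needed to reproduce its argument.
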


% \begin{rem}
% The first term in the bound \eqref{eq:lem:NC_SC_PCP_WR_wtd_dir_norm_sq} closely resembles the full participation case (\cref{lem:NC_SC_wtd_dir_norm_sq}), with $\{ w_i^2 \}$ replaced here with $\frac{w_i}{P}$. In case of full-client participation $(\selclients = \numclients)$, the bound for \textbf{(WOR)} in \eqref{eq:lem:NC_SC_PCP_WOR_wtd_dir_norm_sq} reduces exactly to the full participation result in \cref{lem:NC_SC_wtd_dir_norm_sq}.
% \end{rem}

\begin{lemma}
\label{lem:NC_SC_PCP_Phi_decay_one_iter}
Suppose the local client loss functions $\{ f_i \}$ satisfy Assumptions \ref{assum:smoothness}, \ref{assum:bdd_hetero}, \ref{assum:SC_y},
and the stochastic oracles for the local functions satisfy \cref{assum:bdd_var}. 
% Suppose the server selects clients using weighted, \textit{with}-replacement sampling (WR). Also, the learning rate $\lrsx$ satisfies $64 \seff \lrsx \Lp \varscale^2 \heteroscale^2 \frac{1}{\selclients} ( \max_i \nai_2^2/\nai_1^2 ) \leq 1$, $8 \seff \lrsx \Lp \frac{1}{\selclients} \max \{ 8 \heteroscale^2, 1 \} \leq 1$, and $8 \seff \lrsx \Lp \varscale^2 \frac{1}{\selclients} ( \max_{i,k} \aikt/\nai_1 ) \leq 1$.
% Then the iterates generated by \cref{alg_NC_minimax} satisfy
% \begin{equation}
%     \begin{aligned}
%         & \mbe \lb \TPhi(\bxtp) - \TPhi(\bxt) \rb \leq - \frac{7 \seff \lrsx}{16} \mbe \norm{\G \TPhi(\bxt)}^2 - \frac{\seff \lrsx}{2} (1 - \seff \lrsx \Lp) \mbe \norm{\sumin w_i \bhxit}^2 \\
%         & \quad + \frac{5}{4} \seff \lrsx \Lf^2 \sumin \frac{w_i}{\nai_1} \sumikt \aikt \CExytk (i) + \frac{9 \seff \lrsx \Lf^2}{4 \mu} \mbe \lb \TPhi (\bxt) - F (\bxt, \byt) \rb \\
%         & \quad + \frac{\seff^2 \lrsxsq \Lp}{2} \lb \frac{\localvar^2}{{\color{blue}\selclients}} \sumin \frac{{\color{blue}w_i} \nai_2^2}{\nai_1^2} + \frac{\hetero^2}{{\color{blue}\selclients}} \lp {\color{blue}2} + 2 \varscale^2 \max_i \frac{\nai_2^2}{\nai_1^2} \rp \rb.
%     \end{aligned}
%     \label{eq:lem:NC_SC_PCP_WR_Phi_decay_one_iter}
% \end{equation}
% Suppose the server selects clients using without-replacement sampling scheme (WOR). 
Also, the server learning rate $\lrsx$ satisfies $64 \seff \lrsx \Lp \varscale^2 \heteroscale^2 \frac{\numclients}{P} ( \max_i w_i \nai_2^2/\nai_1^2 ) \leq 1$, $8 \seff \lrsx \Lp (\max_i w_i) \frac{\numclients}{\selclients} \lp \frac{\numclients - \selclients}{\numclients-1} \rp \max \{ 8 \heteroscale^2, 1 \} \leq 1$, and $8 \seff \lrsx \Lp \varscale^2 \frac{\numclients}{\selclients} ( \max_{i,k} w_i \aikt/\nai_1 ) \leq 1$. Then the iterates generated by \cref{alg_NC_minimax} satisfy
\begin{equation}
    \begin{aligned}
        & \mbe \lb \TPhi(\bxtp) - \TPhi(\bxt) \rb \leq - \frac{7 \seff \lrsx}{16} \mbe \norm{\G \TPhi(\bxt)}^2 - \frac{\seff \lrsx}{2} \lp 1 - \frac{\numclients (\selclients-1)}{\selclients (\numclients-1)} \seff \lrsx \Lp \rp \mbe \norm{\sumin w_i \bhxit}^2 \\
        & \quad + \frac{5}{4} \seff \lrsx \Lf^2 \sumin \frac{w_i}{\nai_1} \sumikt \aikt \CExytk (i) + \frac{9 \seff \lrsx \Lf^2}{4 \mu} \mbe \lb \TPhi (\bxt) - \TF (\bxt, \byt) \rb \\
        & \quad + \frac{\seff^2 \lrsxsq \Lp}{2} \frac{\numclients}{\selclients} \lb \localvar^2 \sumin \frac{w_i^2 \nai_2^2}{\nai_1^2} + \hetero^2 \lp 2 (\max_i w_i) \frac{\numclients - \selclients}{\numclients-1} + 2 \varscale^2 \max_i \frac{w_i \nai_2^2}{\nai_1^2} \rp \rb.
    \end{aligned}
    \label{eq:lem:NC_SC_PCP_WOR_Phi_decay_one_iter}
\end{equation}
\end{lemma}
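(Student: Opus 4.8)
The plan is to run a descent-lemma argument on the envelope $\TPhi$, which is $\Lp$-smooth by \cref{lem:Phi_smooth_nouiehed}. Using $\Lp$-smoothness together with the server step $\bxtp - \bxt = -\seff\lrsx\bdxt$, and taking expectation over the client sampling $\clientset$ and the stochastic oracles in round $t$ (so that, by unbiasedness, the linear term picks up $\sumin w_i\bhxit$), we get
$$\mbe\lb \TPhi(\bxtp) - \TPhi(\bxt) \rb \leq -\seff\lrsx\, \mbe\lan \G\TPhi(\bxt),\, \textstyle\sumin w_i\bhxit \ran + \tfrac{\seff^2\lrsxsq\Lp}{2}\, \mbe\norm{\bdxt}^2.$$
For the inner-product term I would apply the polarization identity $-\lan\mbf a,\mbf b\ran = \tfrac12(\norm{\mbf a-\mbf b}^2 - \norm{\mbf a}^2 - \norm{\mbf b}^2)$ with $\mbf a = \G\TPhi(\bxt)$ and $\mbf b = \sumin w_i\bhxit$; this already produces the $-\tfrac12\seff\lrsx\norm{\G\TPhi(\bxt)}^2$ and $-\tfrac12\seff\lrsx\norm{\sumin w_i\bhxit}^2$ terms of \eqref{eq:lem:NC_SC_PCP_WOR_Phi_decay_one_iter}, plus a residual $\tfrac12\seff\lrsx\norm{\G\TPhi(\bxt) - \sumin w_i\bhxit}^2$ to be controlled.

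To bound the residual, note that $\sumikt\aikt = \nai_1$, so $\sumin w_i\bhxit = \sumin\tfrac{w_i}{\nai_1}\sumikt\aikt\Gx f_i(\bxitk,\byitk)$ is a convex-weight average of local gradients, while by Danskin's theorem $\G\TPhi(\bxt) = \Gx\TF(\bxt,\Tby^*(\bxt)) = \sumin w_i\Gx f_i(\bxt,\Tby^*(\bxt))$. Hence the difference equals $\sumin\tfrac{w_i}{\nai_1}\sumikt\aikt\big[\Gx f_i(\bxt,\Tby^*(\bxt)) - \Gx f_i(\bxitk,\byitk)\big]$; applying Jensen with the same convex weights, then $\Lf$-smoothness (\cref{assum:smoothness}), then $\norm{\Tby^*(\bxt)-\byitk}^2 \leq 2\norm{\Tby^*(\bxt)-\byt}^2 + 2\norm{\byt-\byitk}^2$, and finally the quadratic-growth bound $\norm{\Tby^*(\bxt)-\byt}^2 \leq \tfrac{2}{\mu}\big(\TPhi(\bxt)-\TF(\bxt,\byt)\big)$ (\cref{lem:quad_growth}), I get the residual bounded by a constant times the per-client drift $\sumin\tfrac{w_i}{\nai_1}\sumikt\aikt\CExytk(i)$ plus a constant times the ``$\by$-gap'' $\TPhi(\bxt)-\TF(\bxt,\byt)$, which is the form of the third and fourth terms of the claim.

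Next I would substitute the bound on $\mbe\norm{\bdxt}^2$ from \cref{lem:NC_SC_PCP_wtd_dir_norm_sq}, scaled by $\tfrac{\seff^2\lrsxsq\Lp}{2}$. Its $\tfrac{\numclients(\selclients-1)}{\selclients(\numclients-1)}\norm{\sumin w_i\bhxit}^2$ piece merges with the $-\tfrac12\seff\lrsx\norm{\sumin w_i\bhxit}^2$ already present to give exactly the $-\tfrac{\seff\lrsx}{2}\big(1 - \tfrac{\numclients(\selclients-1)}{\selclients(\numclients-1)}\seff\lrsx\Lp\big)$ coefficient. Its iterate-drift and $\localvar^2$ pieces fold (using $w_i^2 \leq w_i$ and $\seff\lrsx\Lp \lesssim 1$) into the drift and final-bracket terms. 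The pieces carrying $\heteroscale^2\norm{\Gx\TF(\bxt,\byt)}^2$ (both the $\varscale^2$ one and the partial-participation one) I would convert via $\norm{\Gx\TF(\bxt,\byt)}^2 \leq 2\norm{\G\TPhi(\bxt)}^2 + \tfrac{4\Lf^2}{\mu}\big(\TPhi(\bxt)-\TF(\bxt,\byt)\big)$ (again smoothness plus quadratic growth); the three stated learning-rate conditions on $\lrsx$ are precisely what makes each resulting $\norm{\G\TPhi(\bxt)}^2$ contribution at most $\tfrac{1}{16}\seff\lrsx$ in size, so the $\norm{\G\TPhi(\bxt)}^2$ coefficient improves from $-\tfrac12$ to $-\tfrac{7}{16}$, while the $\by$-gap contributions arising here and in the residual collect into the single $\tfrac{9\seff\lrsx\Lf^2}{4\mu}\mbe[\TPhi(\bxt)-\TF(\bxt,\byt)]$ term and the leftover $\localvar^2,\hetero^2$ constants assemble into the last bracket of \eqref{eq:lem:NC_SC_PCP_WOR_Phi_decay_one_iter}.

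The main obstacle is the coefficient bookkeeping: tracking every contribution to $\norm{\G\TPhi(\bxt)}^2$ (from the residual and from the $\norm{\Gx\TF}^2\to\norm{\G\TPhi}^2$ conversions of the \cref{lem:NC_SC_PCP_wtd_dir_norm_sq} terms) and verifying that the three $\lrsx$-smallness conditions are exactly tight enough to land the coefficient at $-\tfrac{7}{16}$, together with ensuring the several separately-arising $\by$-gap and drift terms combine with the stated numerical constants. Everything else — the descent lemma, polarization, Jensen, Young, $\Lf$-smoothness and quadratic growth — is routine.
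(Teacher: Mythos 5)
Your proposal follows essentially the same route as the paper's proof: a descent lemma on the $\Lp$-smooth envelope $\TPhi$, the polarization identity to expose the $-\tfrac{1}{2}\seff\lrsx\norm{\G\TPhi(\bxt)}^2$ and $-\tfrac{1}{2}\seff\lrsx\norm{\sumin w_i\bhxit}^2$ terms, a residual bound via Danskin, $\Lf$-smoothness, Jensen and quadratic growth yielding the drift and $\by$-gap terms, and then the substitution of the second-moment bound on $\bdxt$ with the conversion $\norm{\Gx\TF(\bxt,\byt)}^2 \lesssim \norm{\G\TPhi(\bxt)}^2 + \tfrac{\Lf^2}{\mu}(\TPhi-\TF)$ and the learning-rate conditions absorbing the extra $\norm{\G\TPhi(\bxt)}^2$ contributions into the $\tfrac{7}{16}$ coefficient. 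The plan is correct and matches the paper's argument (including its auxiliary lemmas for $\mbe\norm{\bdxt}^2$ and for the $\varscale^2$-weighted local gradient norms), with only the coefficient bookkeeping left to verify.
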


\begin{rem}
The bound in \cref{eq:lem:NC_SC_PCP_WOR_Phi_decay_one_iter} looks very similar to the corresponding one-step decay bound for simple smooth minimization problems. The major difference is the presence of $\mbe \lb \TPhi (\bxt) - \TF (\bxt, \byt) \rb$, which quantifies the inaccuracy of $\byt$ in solving the \textit{max} problem $\max_\by \TF(\bxt, \by)$.
The term $\sumin \frac{w_i}{\nai_1} \sumikt \aikt \CExytk (i)$ is the client drift and is bounded in \cref{lem:NC_SC_PCP_consensus_error}.
\end{rem}

% \begin{rem}
% Again, the bounds in \eqref{eq:lem:NC_SC_PCP_WR_Phi_decay_one_iter} and \eqref{eq:lem:NC_SC_PCP_WOR_Phi_decay_one_iter} closely resemble the full participation case (\cref{lem:NC_SC_Phi_decay_one_iter}). The crucial difference is the additional coefficient of $\hetero^2$ term in the last line. This term is a consequence of \cref{lem:NC_SC_PCP_wtd_dir_norm_sq}. To see the implication of this term, consider the special case with $\aikt \equiv 1, \sync_i = \sync$, for all $i$, and for all $k$. Notice that due to the presence of constant in the coefficient of $\hetero^2$, it no longer decreases to zero with increasing $\sync$.
% \end{rem}

\begin{lemma}
\label{lem:NC_SC_PCP_consensus_error}
Suppose the local loss functions $\{ f_i \}$ satisfy Assumptions \ref{assum:smoothness}, \ref{assum:bdd_hetero}, \ref{assum:SC_y},
and the stochastic oracles for the local functions satisfy \cref{assum:bdd_var}.
Further, in \cref{alg_NC_minimax}, we choose learning rates $\lrcx, \lrcy$ such that $\max \{ \lrcx, \lrcy \} \leq \frac{1}{2 \Lf ( \max_i \nai_1 ) \sqrt{2 (1 + \varscale^2)}}$.
Then, the iterates $\{ \bxit, \byit \}$ generated by \fedsgda \ (\cref{alg_NC_minimax}) satisfy
\begin{equation}
    \begin{aligned}
        & \Lf^2 \sumin \frac{w_i}{\nai_1} \sumikt \aikt \CExytk (i) \leq 2 \lp \lrcxsq + \lrcysq \rp \Lf^2 \localvar^2 \sumin w_i \norm{\boldsymbol a_{i,-1}}_2^2 + 4 \Lf^2 \TMa \lp \lrcxsq + \lrcysq \rp \hetero^2 \nn \\
        & \qquad + 8 \Lf^2 \TMa \heteroscale^2 \lrcxsq \mbe \lnr \G \TPhi (\bxt) \rnr^2 + 8 \Lf^3 \TMa \heteroscale^2 \lp 2 \kappa \lrcxsq + \lrcysq \rp \mbe \lb \TPhi (\bxt) - \TF (\bxt, \byt) \rb,
    \end{aligned}
    % \label{eq:lem:NC_SC_PCP_consensus_error}
\end{equation}
where $\TMa \triangleq \max_i \lp \norm{\boldsymbol a_{i,-1}}_1^2 + \varscale^2 \norm{\boldsymbol a_{i,-1}}_2^2 \rp$.
\end{lemma}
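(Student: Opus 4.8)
The plan is to control the per-client iterate drift $\CExytk(i)$ by a recursion in the local-step index $k$, aggregate that recursion over clients and steps, and then close it using the stated learning-rate restriction. \textbf{Step 1 (unroll the drift).} From the local update rule \eqref{eq:client_update_alg_NC_SC}, $\bxitk-\bxt=-\lrcx\sumijk\aijk\Gx f_i(\bxitj,\byitj;\xiitj)$, and similarly for $\byitk-\byt$. Applying weighted Jensen with weights $\aijk$ (whose $\ell_1$-mass is at most $\nai_1$ by the assumed monotonicity of $\normb{\boldsymbol a_i(k)}$), taking expectations, and invoking unbiasedness and the variance bound of \cref{assum:bdd_var} on each stochastic gradient (bounding each of $\|\Gx f_i(\cdot;\xi)\|^2$, $\|\Gy f_i(\cdot;\xi)\|^2$ by $\|\G f_i(\cdot;\xi)\|^2$) yields
$$\CExytk(i)\le(\lrcxsq+\lrcysq)\,\nai_1\sumijk\aijk\big[(1+\varscale^2)\,\mbe\|\G f_i(\bxitj,\byitj)\|^2+\localvarsq\big].$$
\textbf{Step 2 (smoothness recursion).} By \cref{assum:smoothness}, $\|\G f_i(\bxitj,\byitj)\|^2\le 2\Lf^2\CExytj(i)+2\|\G f_i(\bxt,\byt)\|^2$; substituting turns the display into a recursive inequality in which $\CExytk(i)$ is bounded by a weighted combination of the earlier drifts $\{\CExytj(i)\}_{j<k}$ plus a forcing term built only from $\|\G f_i(\bxt,\byt)\|^2$ and $\localvarsq$.

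\textbf{Step 3 (aggregate and close the recursion).} Multiply the recursion by $\Lf^2 w_i\aikt/\nai_1$ and sum over $i\in[n]$ and $k\in\{0,\dots,\sync_i-1\}$. Reorganizing the nested sum $\sum_k\aikt\sum_{j<k}\aijk(\cdot)$ and bounding the coefficient products by $\normb{\boldsymbol a_{i,-1}}_1^2$ and $\normb{\boldsymbol a_{i,-1}}_2^2$ (hence by $\TMa$ wherever a client-uniform bound is needed, since $\boldsymbol a_{i,-1}$ is exactly $\boldsymbol a_i$ with its last, non-contributing coordinate removed) produces an inequality of the shape
$$\Lf^2\sumin\frac{w_i}{\nai_1}\sumikt\aikt\CExytk(i)\le 2(1+\varscale^2)\Lf^2(\lrcxsq+\lrcysq)\big(\textstyle\max_i\nai_1\big)^2\cdot\Big[\Lf^2\sumin\frac{w_i}{\nai_1}\sumikt\aikt\CExytk(i)\Big]+(\text{drift-free terms}).$$
The hypothesis $\max\{\lrcx,\lrcy\}\le\big(2\Lf(\max_i\nai_1)\sqrt{2(1+\varscale^2)}\big)^{-1}$ makes the self-referential coefficient at most $1/2$, so it is absorbed into the left-hand side at the cost of an overall factor $2$.

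\textbf{Step 4 (convert the forcing terms).} The drift-free terms carry $\localvarsq$, $\hetero^2$, and $\sum_i(\cdot)\|\G f_i(\bxt,\byt)\|^2$; the first stays attached to its per-client coefficient $\normb{\boldsymbol a_{i,-1}}_2^2$, giving the $2(\lrcxsq+\lrcysq)\Lf^2\localvarsq\sumin w_i\normb{\boldsymbol a_{i,-1}}_2^2$ term. For the gradient term, pull out the client-uniform coefficient $\TMa$ and apply \cref{assum:bdd_hetero}: $\sumin w_i\|\G f_i(\bxt,\byt)\|^2\le\heteroscale^2\big(\|\Gx\TF(\bxt,\byt)\|^2+\|\Gy\TF(\bxt,\byt)\|^2\big)+2\hetero^2$. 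Then (i) by Danskin's theorem $\G\TPhi(\bxt)=\Gx\TF(\bxt,\Tby^*(\bxt))$, so $\Lf$-smoothness together with the quadratic-growth bound $\|\byt-\Tby^*(\bxt)\|^2\le\frac{2}{\mu}(\TPhi(\bxt)-\TF(\bxt,\byt))$ (\cref{lem:quad_growth} applied to $-\TF(\bxt,\cdot)$) gives $\|\Gx\TF(\bxt,\byt)\|^2\le 2\|\G\TPhi(\bxt)\|^2+4\Lf\kappa(\TPhi(\bxt)-\TF(\bxt,\byt))$; and (ii) \cref{lem:smooth} applied to the $\Lf$-smooth function $-\TF(\bxt,\cdot)$, which is bounded below by $-\TPhi(\bxt)$, gives $\|\Gy\TF(\bxt,\byt)\|^2\le 2\Lf(\TPhi(\bxt)-\TF(\bxt,\byt))$. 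The $\Gx$-originated pieces are multiplied by $\lrcxsq$ and the $\Gy$-originated piece by $\lrcysq$, which is precisely what produces the asymmetric factor $2\kappa\lrcxsq+\lrcysq$; collecting terms gives the stated inequality.

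\textbf{Main obstacle.} The bottleneck is the combinatorial bookkeeping of Step 3: reorganizing the double sum over $(k,j)$ with step-dependent coefficients $\aijk$, pinning down which coefficient norm ($\normb{\boldsymbol a_{i,-1}}_1$, $\normb{\boldsymbol a_{i,-1}}_2$, or the uniform $\TMa$) governs each resulting term, and checking that the self-referential coefficient is genuinely dominated by the assumed learning-rate bound so that the recursion closes. Once that is done, Steps 1, 2, and 4 are routine applications of the assumptions and the auxiliary lemmas.
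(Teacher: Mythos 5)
Your outline follows the same route as the paper's proof: unroll the local updates, use $\Lf$-smoothness to turn $\|\G f_i(\bxitj,\byitj)\|^2$ into a drift term plus $\|\G f_i(\bxt,\byt)\|^2$, aggregate with weights $w_i\aikt/\nai_1$, close the self-referential recursion using the stated learning-rate condition (the paper's $A_m\le 1/2$), and then convert $\sumin w_i\|\G f_i(\bxt,\byt)\|^2$ via \cref{assum:bdd_hetero} together with the Danskin/quadratic-growth bound for the $\bx$-gradient and the smooth-concave bound for the $\by$-gradient. Steps 2--4 are essentially the paper's argument.

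However, your Step 1 display is too coarse to deliver the constants in the lemma, and Steps 3--4 then silently use information that the Step 1 bound has already thrown away. Two concrete losses: (a) after separating noise from mean via unbiasedness, the martingale part must keep the coefficient $\sumijk[\aijk]^2$ (cross terms vanish), which is what produces the $\norm{\boldsymbol a_{i,-1}}_2^2$ weight on $\localvar^2$; your display instead attaches $\localvar^2$ to $\nai_1\sumijk\aijk$, i.e.\ an $\ell_1\times\ell_1$ coefficient, which is larger by a factor of order $\sync_i$ (for plain SGDA, $\nai_1\norm{\boldsymbol a_{i,-1}}_1=\sync(\sync-1)$ versus $\norm{\boldsymbol a_{i,-1}}_2^2=\sync-1$), so the claim in Step 4 that $\localvar^2$ ``stays attached to $\norm{\boldsymbol a_{i,-1}}_2^2$'' does not follow from Step 1. (b) Replacing $\lrcxsq\|\Gx f_i\|^2+\lrcysq\|\Gy f_i\|^2$ by $(\lrcxsq+\lrcysq)\|\G f_i\|^2$ destroys the pairing of each learning rate with its own partial gradient; after applying \cref{assum:bdd_hetero} and the two conversions in Step 4 you would then obtain coefficients $8\Lf^2\TMa\heteroscale^2(\lrcxsq+\lrcysq)$ on $\|\G\TPhi(\bxt)\|^2$ and $8\Lf^3\TMa\heteroscale^2(2\kappa+1)(\lrcxsq+\lrcysq)$ on $\TPhi(\bxt)-\TF(\bxt,\byt)$, not the asymmetric $8\Lf^2\TMa\heteroscale^2\lrcxsq$ and $8\Lf^3\TMa\heteroscale^2(2\kappa\lrcxsq+\lrcysq)$ of the statement, so the assertion in Step 4 that ``the $\Gx$-originated pieces are multiplied by $\lrcxsq$'' is no longer available. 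Both defects are repaired by keeping the $\bx$- and $\by$-recursions separate and retaining the $\ell_2^2$ variance coefficient, exactly as the paper does in its displays \eqref{eq_proof:lem:NC_SC_PCP_consensus_error_1}--\eqref{eq_proof:lem:NC_SC_PCP_consensus_error_5}; as written, though, your proposal proves only a weaker variant of the lemma (which would also loosen the downstream learning-rate conditions in \cref{lem:NC_SC_PCP_Phi_decay_one_iter} and \cref{lem:NC_SC_PCP_phi_f_diff} that absorb the $\lrcxsq\|\G\TPhi\|^2$ term).
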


\begin{lemma}
\label{lem:NC_SC_PCP_phi_f_diff}
Suppose the local loss functions $\{ f_i \}$ satisfy Assumptions \ref{assum:smoothness}, \ref{assum:bdd_hetero}, \ref{assum:SC_y}, and the stochastic oracles for the local functions satisfy \cref{assum:bdd_var}.
The server learning rates $\lrsx, \lrsy$ satisfy the following conditions:
{\small
$$2 \seff \lrsy \Lf \leq 1, \seff \lrsy \kappa \Lf \heteroscale^2 \frac{\numclients}{\selclients} \max \lcb \varscale^2 \max_i \frac{w_i \nai_2^2}{\nai_1^2}, \frac{\numclients - \selclients}{\numclients-1} \max_i w_i \rcb \leq \frac{1}{64}, \lrsx \leq \frac{\lrsy}{156 \kappa^2},$$
$$8 \seff \lrsx \Lp \varscale^2 \frac{\numclients}{\selclients} \max \lcb \max_{i,k} \frac{w_i \aikt}{\nai_1}, \frac{\numclients - \selclients}{\numclients-1} \max_i w_i \rcb \leq 1, \seff \lrsx \Lf \heteroscale \sqrt{ \frac{\numclients}{\selclients}} \max \lcb \frac{\numclients - \selclients}{\numclients-1} \max_i w_i, \varscale \sqrt{\max_i \frac{w_i \nai_2^2}{\nai_1^2}} \rcb \leq \frac{1}{40 \kappa}$$
}%
The client learning rates $\lrcx, \lrcy$ satisfy $\lrcy \Lf \heteroscale \leq \frac{1}{16 \sqrt{\kappa \TMa}}$ and $\lrcx$: $\lrcx \Lf \heteroscale \leq \frac{1}{64 \kappa \sqrt{\TMa}}$, respectively.
Then the iterates generated by \fedsgda \ (\cref{alg_NC_minimax}) satisfy
\begin{equation}
    \begin{aligned}
        & \frac{1}{T} \sumtT \mbe \lb \TPhi (\bxt) - \TF (\bxt, \byt) \rb \\
        & \leq \frac{4 \lb \TPhi (\bx^{(0)}) - \TF(\bx^{(0)}, \by^{(0)}) \rb}{\seff \lrsy \mu T} + \frac{1}{12 \mu \kappa^2} \frac{1}{T} \sumtT \mbe \norm{\G \TPhi(\bxt)}^2 + \frac{8 \seff \lrsxsq \Lp}{\lrsy \mu} \frac{\numclients (\selclients-1)}{\selclients (\numclients-1)} \mbe \lnr \sumin w_i \bhxit \rnr^2 \\
        & \quad + 6 \seff \lrsy \kappa \frac{\numclients}{\selclients} \lb \localvar^2 \sumin \frac{w_i^2 \nai_2^2}{\nai_1^2} + 2 \hetero^2 \lp \frac{\numclients - \selclients}{\numclients-1} \max_i w_i + \varscale^2 \max_i \frac{w_i \nai_2^2}{\nai_1^2} \rp \rb \\
        & \quad + 18 \kappa \Lf \lp \lrcxsq + \lrcysq \rp \lb \localvar^2 \sumin w_i \norm{\boldsymbol a_{i,-1}}_2^2 + 2 \hetero^2 \TMa \rb \\
        & \quad + \frac{8 \seff \lrsxsq \kappa}{\lrsy} \frac{\numclients}{\selclients} \lp \frac{\numclients - \selclients}{\numclients-1} \max_i w_i + \varscale^2 \max_{i,k} \frac{w_i \aikt}{\nai_1} \rp \lp \lrcxsq + \lrcysq \rp \Lf^2 \lb \localvar^2 \sumin w_i \norm{\boldsymbol a_{i,-1}}_2^2 + 2 \hetero^2 \TMa \rb.
    \end{aligned}
    \label{eq:lem:NC_SC_PCP_WOR_phi_f_diff}
\end{equation}
\end{lemma}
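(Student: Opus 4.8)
The plan is to control the nonnegative quantity $\delta^{(t)} \triangleq \TPhi(\bxt) - \TF(\bxt,\byt)$, which measures the suboptimality of $\byt$ for the inner problem $\max_\by \TF(\bxt,\by)$, by deriving a contracting one-step recursion for $\mbe[\delta^{(t)}]$ and unrolling it. First I would write $\delta^{(t+1)} - \delta^{(t)} = \lb \TPhi(\bxtp) - \TPhi(\bxt) \rb - \lb \TF(\bxtp,\bytp) - \TF(\bxt,\byt) \rb$. The first bracket is exactly what \cref{lem:NC_SC_PCP_Phi_decay_one_iter} bounds (alternatively: $\Lp$-smoothness of $\TPhi$ from \cref{lem:Phi_smooth_nouiehed} together with the server $\bx$-step), contributing terms in $\mbe\norm{\G\TPhi(\bxt)}^2$, $\mbe\normb{\sumin w_i\bhxit}^2$, the client drift $\CExytk(i)$, the stochastic and partial-participation variance, and $\delta^{(t)}$ itself. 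For the second bracket I split $\TF(\bxtp,\bytp) - \TF(\bxt,\byt) = \lb\TF(\bxtp,\bytp) - \TF(\bxt,\bytp)\rb + \lb\TF(\bxt,\bytp) - \TF(\bxt,\byt)\rb$; the first piece is lower bounded using $\Lf$-smoothness in $\bx$ and Young's inequality, again producing $\mbe\norm{\bxtp-\bxt}^2$-type terms that fold into $\mbe\normb{\sumin w_i \bhxit}^2$ and the variance of $\bdxt$.

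The crucial term is the $\by$-ascent piece $\TF(\bxt,\bytp) - \TF(\bxt,\byt)$. Using $\Lf$-smoothness in $\by$ on $\TF(\bxt,\cdot)$ with $\bytp = \byt + \seff\lrsy\bdyt$, and taking expectations so that $\mbe[\bdyt] = \sumin w_i\bhyit$, gives $\mbe\lb \TF(\bxt,\bytp) - \TF(\bxt,\byt) \rb \geq \seff\lrsy\,\mbe\lan \Gy\TF(\bxt,\byt), \sumin w_i\bhyit \ran - \tfrac{\Lf}{2}\seff^2\lrsysq\,\mbe\norm{\bdyt}^2$. I would then write $\sumin w_i\bhyit = \Gy\TF(\bxt,\byt) + \lp \sumin w_i\bhyit - \Gy\TF(\bxt,\byt)\rp$ and bound the mismatch via $\Lf$-smoothness in terms of $\CExytk(i)$ (note that in \fedsgda\ the $\by$-updates use the moving $\bxitk$, so both $\bx$- and $\by$-consensus errors enter here). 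The leading term $\seff\lrsy\norm{\Gy\TF(\bxt,\byt)}^2$ is lower bounded by $2\mu\,\seff\lrsy\,\delta^{(t)}$ via the PL/quadratic-growth inequality (\cref{lem:quad_growth} applied to $-\TF(\bxt,\cdot)$), which supplies the contraction factor $1 - \Theta(\seff\lrsy\mu)$. The variance $\mbe\norm{\bdyt}^2$ (and $\mbe\norm{\bdxt}^2$) is substituted from \cref{lem:NC_SC_PCP_wtd_dir_norm_sq}, and the drift $\CExytk(i)$ from \cref{lem:NC_SC_PCP_consensus_error}.

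Assembling everything yields a recursion of the form $\mbe[\delta^{(t+1)}] \leq (1 - c\,\seff\lrsy\mu)\,\mbe[\delta^{(t)}] + a_1\,\mbe\norm{\G\TPhi(\bxt)}^2 + a_2\,\mbe\normb{\sumin w_i\bhxit}^2 + (\text{noise} + \text{PCP} + \text{local-update constants})$. The catch is that \cref{lem:NC_SC_PCP_consensus_error} reinjects both $\mbe\norm{\G\TPhi(\bxt)}^2$ and $\delta^{(t)}$, so the many stated learning-rate conditions ($\lrsx \leq \lrsy/(156\kappa^2)$, the $\seff\lrsx\Lf\heteroscale\sqrt{\numclients/\selclients}\,\cdots \leq 1/(40\kappa)$ bound, $\lrcy\Lf\heteroscale \leq 1/(16\sqrt{\kappa\TMa})$, $\lrcx\Lf\heteroscale \leq 1/(64\kappa\sqrt{\TMa})$, and the rest) must be used to (i) keep the reinjected $\delta^{(t)}$ below half the contraction margin and (ii) force the net coefficient $a_1$ down to $\tfrac{1}{12\mu\kappa^2}\cdot c\,\seff\lrsy\mu$, so that after normalization it becomes exactly the $\tfrac{1}{12\mu\kappa^2}$ factor in \eqref{eq:lem:NC_SC_PCP_WOR_phi_f_diff}.

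Finally I would unroll over $t=0,\dots,T-1$: telescoping the contraction gives $c\,\seff\lrsy\mu\sumtT\mbe[\delta^{(t)}] \leq \delta^{(0)} + \sumtT(\text{error terms})$, hence $\tfrac1T\sumtT\mbe[\delta^{(t)}] \leq \tfrac{\delta^{(0)}}{c\,\seff\lrsy\mu T} + \tfrac{1}{c\,\seff\lrsy\mu}\cdot\tfrac1T\sumtT(\text{error terms})$; identifying $\delta^{(0)} = \TPhi(\bx^{(0)}) - \TF(\bx^{(0)},\by^{(0)})$ and tracking constants reproduces \eqref{eq:lem:NC_SC_PCP_WOR_phi_f_diff}, with the $\mbe\normb{\sumin w_i\bhxit}^2$ term inheriting the $\seff\lrsxsq\Lp/(\lrsy\mu)$ prefactor from the $\bx$-drift contribution. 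I expect the main obstacle to be precisely this bookkeeping: because the $\by$-dynamics are coupled to the moving $\bx$-iterates and the consensus lemma feeds $\mbe\norm{\G\TPhi(\bxt)}^2$ and $\delta^{(t)}$ back into the recursion, one must check that all of the step-size inequalities hold simultaneously, produce a genuine contraction, \emph{and} pin the gradient-norm coefficient down to exactly $\tfrac1{12\mu\kappa^2}$; the smoothness expansions, Young's inequality, and telescoping are otherwise routine.
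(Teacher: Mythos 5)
Your proposal is correct and follows essentially the same route as the paper: a contracting recursion on the potential $\TPhi(\bxt)-\TF(\bxt,\byt)$ driven by the quadratic-growth/PL inequality applied to the $\by$-ascent step, with the $\bx$-motion, client drift (\cref{lem:NC_SC_PCP_consensus_error}), and aggregate-direction variance (\cref{lem:NC_SC_PCP_wtd_dir_norm_sq}) treated as perturbations absorbed by the step-size conditions before telescoping. The only difference is cosmetic: the paper expands smoothness in $\by$ at the updated point $(\bxtp,\byt)$ and then relates $\TPhi(\bxtp)-\TF(\bxtp,\byt)$ back to $\delta^{(t)}$ via \cref{lem:NC_SC_PCP_Phi_decay_one_iter}, whereas you pass through the intermediate point $(\bxt,\bytp)$ — both orderings invoke the same lemmas and yield the same bookkeeping.
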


\subsection{Proof of \texorpdfstring{\cref{thm:NC_SC}}{Theorem 2}}
\label{app:NC_SC_PCP_thm_proof}

For the sake of completeness, we first state the full statement of \Cref{thm:NC_SC} here.

\begin{theorem*}
Suppose the local loss functions $\{ f_i \}_i$ satisfy Assumptions \ref{assum:smoothness}, \ref{assum:bdd_var}, \ref{assum:bdd_hetero}, \ref{assum:SC_y}.
Suppose the server selects clients using without-replacement sampling scheme \textbf{(WOR)}.
Also, the server learning rates $\lrsx, \lrsy$ and the client learning rates $\lrcx, \lrcy$ satisfy the conditions specified in \cref{lem:NC_SC_PCP_phi_f_diff}.
Then the iterates generated by \fedsgda \ (\cref{alg_NC_minimax}) satisfy
\begin{equation}
    \begin{aligned}
        \min_{t \in [0:T-1]} \mbe \norm{\G \TPhi(\bxt)}^2 & \leq \frac{1}{T} \sumtT \mbe \norm{\G \TPhi (\bxt)}^2 \leq \underbrace{\mco \lp \kappa^2 \lb \frac{\Delta_{\TPhi}}{\seff \lrsy T} + \frac{\lrsy \Lf}{P} \lp \Aw \localvar^2 + \Bw \varscale^2 \hetero^2 \rp \rb \rp}_{\text{Error with full synchronization}}\nn \\
        & \qquad + \underbrace{\mco \lp \kappa^2 \lp \lrcxsq + \lrcysq \rp \Lf^2 \lb \Cw \localvar^2 + D \hetero^2 \rb \rp}_{\text{Error due to local updates}} + \underbrace{\mco \lp \kappa^2 \frac{\numclients - \selclients}{\numclients-1} \frac{\lrsy \Lf \Ew \seff \hetero^2}{P} \rp}_{\text{Partial Participation Error}},
    \end{aligned}
    % \label{eq:thm_NC_SC_PCP_WOR}
\end{equation}
where $\kappa = \Lf/\mu$ is the condition number, $\TPhi(\bx) \triangleq \max_\by \TF (\bx, \by)$ is the envelope function, 
% $\bar{\sync} = \frac{1}{\numclients} \sumin \sync_i$,
$\Delta_{\TPhi} \triangleq \TPhi (\bx^{(0)}) - \min_\bx \TPhi (\bx)$,
$\Aw \triangleq n \seff \sumin \frac{w_i^2 \nai_2^2}{\nai_1^2}$, 
$\Bw \triangleq n \seff \max_i \frac{w_i \nai_2^2}{\nai_1^2}$, 
$\Cw \triangleq \sumin \wi ( \nai_2^2 - [\alpha^{(t,\sync_i-1)}_{i}]^2 )$, 
$\Dw \triangleq \max_i (\varscale^2\norm{\mbf a_{i,-1}}_2^2 + \norm{\mbf a_{i,-1}}_1^2)$, where $\mbf a_{i,-1}  \triangleq [a_i^{(0)}, a_i^{(1)}, \dots, a_i^{(\sync_i-2)}]^\top$ for all $i$ and
$\Ew \triangleq \numclients \max_i w_i$.

Using $\lrsy = \Theta \big( \frac{1}{\Lf} \sqrt{\frac{\selclients}{\bar{\sync} T}} \big)$ and $\lrcx \leq \lrcy = \Theta \big( \frac{1}{\Lf \bar{\sync} \sqrt{T}} \big)$, where $\bar{\sync} = \frac{1}{\numclients} \sumin \sync_i$ in the bounds above, we get
\begin{align*}
    \min_{t \in [T]} \mbe \norm{\G \TPhi(\bxt)}^2 \leq \underbrace{\mco \lp \kappa^2 \frac{(\bar{\sync}/\seff) + \Aw \localvar^2 + \Bw \varscale^2 \hetero^2}{\sqrt{\selclients \bar{\sync} T}}\rp}_{\text{Error with full synchronization}} + \underbrace{\mco \lp \kappa^2 \frac{\Cw \localvar^2 + \Dp \hetero^2}{\bar{\sync}^2 T}\rp}_{\substack{\text{Local updates error}}} + \underbrace{\mco \lp \frac{\numclients - \selclients}{\numclients - 1} \cdot \frac{\kappa^2 \Ew \seff \hetero^2}{\sqrt{\selclients \bar{\sync} T}}\rp}_{\substack{\text{Partial participation} \\ \text{error}}}.
\end{align*}
\end{theorem*}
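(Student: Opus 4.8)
The plan is to assemble the four lemmas of \cref{app:NC_SC_PCP_int_results} and telescope over the communication rounds. The backbone is the one-step descent estimate \cref{lem:NC_SC_PCP_Phi_decay_one_iter}: since $\TPhi$ is $\Lp$-smooth with $\Lp=\mco(\kappa\Lf)$ by \cref{lem:Phi_smooth_nouiehed}, that lemma controls $\mbe[\TPhi(\bxtp)-\TPhi(\bxt)]$ by $-\tfrac{7\seff\lrsx}{16}\mbe\norm{\G\TPhi(\bxt)}^2$, a nonpositive multiple of the aggregated server direction $\mbe\norm{\sumin w_i\bhxit}^2$, and three nonnegative remainders: the client-drift quantity $\seff\lrsx\Lf^2\sumin\tfrac{w_i}{\nai_1}\sumikt\aikt\CExytk(i)$, the max-player sub-optimality $\tfrac{\seff\lrsx\Lf^2}{\mu}\mbe[\TPhi(\bxt)-\TF(\bxt,\byt)]$, and the server-side variance $\seff^2\lrsxsq\Lp\tfrac{\numclients}{\selclients}(\dots)$ which already carries the partial-participation factor $\tfrac{\numclients-\selclients}{\numclients-1}$ inherited from \cref{lem:NC_SC_PCP_wtd_dir_norm_sq}. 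Summing over $t=0,\dots,T-1$, telescoping the left side down to $\Delta_{\TPhi}=\TPhi(\bx^{(0)})-\min_\bx\TPhi(\bx)$, and dividing by $\tfrac{7\seff\lrsx}{16}T$ puts $\tfrac1T\sumtT\mbe\norm{\G\TPhi(\bxt)}^2$ against $\tfrac{\Delta_{\TPhi}}{\seff\lrsx T}$ and the time-averaged remainders.

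Next I would eliminate the two recursive remainders. For the client drift, substitute \cref{lem:NC_SC_PCP_consensus_error}: its bound splits into a pure-noise piece $(\lrcxsq+\lrcysq)\Lf^2(\localvar^2\Cw+\hetero^2 D)$ — using $\sumin w_i\norm{\mbf a_{i,-1}}_2^2=\Cw$ (the last local stochastic gradient does not enter the drift) and $\TMa=D$ — and a feedback piece $\TMa\heteroscale^2(\lrcxsq\mbe\norm{\G\TPhi(\bxt)}^2+\Lf(\kappa\lrcxsq+\lrcysq)\mbe[\TPhi(\bxt)-\TF(\bxt,\byt)])$, whose $\norm{\G\TPhi}^2$ coefficient is a tiny fraction of $\tfrac{7}{16}$ by the client-step-size conditions $\lrcy\Lf\heteroscale\le(16\sqrt{\kappa\TMa})^{-1}$, $\lrcx\Lf\heteroscale\le(64\kappa\sqrt{\TMa})^{-1}$, so it moves to the left. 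For the sub-optimality term, substitute \cref{lem:NC_SC_PCP_phi_f_diff}: this introduces $\tfrac{\TPhi(\bx^{(0)})-\TF(\bx^{(0)},\by^{(0)})}{\seff\lrsy\mu T}$ (an $\mco(\Delta_{\TPhi})$ constant, up to the range of $\TPhi$), the feedback $\tfrac{1}{12\mu\kappa^2}\cdot\tfrac1T\sumtT\mbe\norm{\G\TPhi(\bxt)}^2$ (also absorbed on the left — after the prefactor $\tfrac{9\seff\lrsx\Lf^2}{4\mu}$ and the division its coefficient is $\tfrac{3}{7}<1$), a $\tfrac{\seff\lrsxsq\Lp}{\lrsy\mu}\tfrac{\numclients(\selclients-1)}{\selclients(\numclients-1)}\mbe\norm{\sumin w_i\bhxit}^2$ term, and further stochastic terms of the form $\seff\lrsy\kappa\tfrac{\numclients}{\selclients}(\dots)$ and $\kappa\Lf(\lrcxsq+\lrcysq)(\dots)$. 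The $\mbe\norm{\sumin w_i\bhxit}^2$ contributions must now cancel: the descent lemma supplies $-\tfrac{\seff\lrsx}{2}(1-\mco(\seff\lrsx\Lp))\mbe\norm{\sumin w_i\bhxit}^2$ on the correct side, while the above feeds back a positive multiple proportional to $\tfrac{\seff\lrsxsq\Lp}{\lrsy\mu}$, and the ratio condition $\lrsx\le\lrsy/(156\kappa^2)$ together with the magnitude conditions on $\lrsx$ is exactly what forces the negative term to dominate, so this entire family drops out.

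What remains, after all absorptions and cancellations, is $\tfrac1T\sumtT\mbe\norm{\G\TPhi(\bxt)}^2\le\mco\big(\tfrac{\kappa^2\Delta_{\TPhi}}{\seff\lrsx T}\big)+\mco\big(\kappa^2\tfrac{\lrsy\Lf}{\selclients}(\Aw\localvar^2+\Bw\varscale^2\hetero^2)\big)+\mco\big(\kappa^2(\lrcxsq+\lrcysq)\Lf^2(\Cw\localvar^2+D\hetero^2)\big)+\mco\big(\kappa^2\tfrac{\numclients-\selclients}{\numclients-1}\tfrac{\lrsy\Lf\Ew\seff\hetero^2}{\selclients}\big)$, where the extra factor $\kappa$ on the stochastic and partial-participation pieces (relative to a naive count) appears because they are routed through the sub-optimality bound, which carries its own $\kappa$ via the $\tfrac{9\seff\lrsx\Lf^2}{4\mu}$ prefactor; collecting coefficients into $\Aw,\Bw$ (server noise), $\Cw,D$ (client-drift noise) and $\Ew=\numclients\max_i w_i$ (partial participation), and taking $\lrsx=\Theta(\lrsy/\kappa^2)$ — its largest admissible value — turns $\tfrac{\kappa^2\Delta_{\TPhi}}{\seff\lrsx T}$ into $\tfrac{\kappa^2\Delta_{\TPhi}}{\seff\lrsy T}$ and gives exactly the first displayed bound. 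Finally, inserting $\lrsy=\Theta\big(\tfrac1{\Lf}\sqrt{\selclients/(\bar{\sync}T)}\big)$ and $\lrcx\le\lrcy=\Theta\big(\tfrac1{\Lf\bar{\sync}\sqrt{T}}\big)$ — which one checks, for $T$ large enough, satisfies every step-size hypothesis of \cref{lem:NC_SC_PCP_phi_f_diff} — and simplifying the $\mco(\cdot)$ expressions yields the stated three-term rate. The main obstacle is the constant bookkeeping of the second and third steps: one must verify that the several $\mbe\norm{\G\TPhi(\bxt)}^2$ feedback coefficients sum to strictly less than $7/16$ and that the $\mbe\norm{\sumin w_i\bhxit}^2$ terms cancel exactly, and these two requirements are precisely what pin down the constants in the learning-rate conditions stated in the lemmas.
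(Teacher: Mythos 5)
Your proposal follows essentially the same route as the paper's proof: telescope the one-step descent of \cref{lem:NC_SC_PCP_Phi_decay_one_iter}, substitute the drift bound of \cref{lem:NC_SC_PCP_consensus_error} and the averaged sub-optimality bound of \cref{lem:NC_SC_PCP_phi_f_diff}, absorb the $\norm{\G\TPhi}^2$ and $\norm{\sumin w_i\bhxit}^2$ feedback terms using the stated step-size conditions, and then insert the learning-rate choices. The only slip is a double-counting of $\kappa^2$ in your penultimate display: the telescoped term is $\Delta_{\TPhi}/(\seff\lrsx T)$ with coefficient $\mco(1)$, and it is the substitution $\lrsx=\Theta(\lrsy/\kappa^2)$ that produces $\kappa^2\Delta_{\TPhi}/(\seff\lrsy T)$ — writing $\kappa^2\Delta_{\TPhi}/(\seff\lrsx T)$ before that substitution would yield $\kappa^4/\lrsy$ after it.
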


\begin{proof}
Using \cref{lem:NC_SC_PCP_Phi_decay_one_iter}, and substituting in the bound on iterates' drift from \cref{lem:NC_SC_PCP_consensus_error}, we can bound
\begin{align}
    & \mbe \lb \TPhi(\bxtp) - \TPhi(\bxt) \rb \leq - \frac{7 \seff \lrsx}{16} \mbe \norm{\G \TPhi(\bxt)}^2 - \frac{\seff \lrsx}{2} \lp 1 - \frac{\numclients (\selclients-1)}{\selclients (\numclients-1)} \seff \lrsx \Lp \rp \mbe \norm{\sumin w_i \bhxit}^2 \nn \\
    & \quad + \frac{9 \seff \lrsx\Lf^2}{4 \mu} \mbe \lb \TPhi (\bxt) - \TF (\bxt, \byt) \rb \nn \\
    & \quad + \frac{\seff^2 \lrsxsq \Lp}{2} \frac{\numclients}{\selclients} \lb \localvar^2 \sumin \frac{w_i^2 \nai_2^2}{\nai_1^2} + \hetero^2 \lp 2 (\max_i w_i) \frac{\numclients - \selclients}{\numclients-1} + 2 \varscale^2 \max_i \frac{w_i \nai_2^2}{\nai_1^2} \rp \rb \nn \\
    & \quad + \frac{5}{2} \seff \lrsx \lp \lrcxsq + \lrcysq \rp \Lf^2 \lb \localvar^2 \sumin w_i \norm{\boldsymbol a_{i,-1}}_2^2 + 2 \hetero^2 \TMa \rb \nn \\
    & \quad + 10 \seff \lrsx \Lf^2 \TMa \heteroscale^2 \lb \lrcxsq \mbe \lnr \G \TPhi (\bxt) \rnr^2 + \Lf \lp 2 \kappa \lrcxsq + \lrcysq \rp \mbe \lb \TPhi (\bxt) - \TF (\bxt, \byt) \rb \rb. \label{eq_proof:thm_NC_SC_PCP_WOR_1}
\end{align}
Summing \eqref{eq_proof:thm_NC_SC_PCP_WOR_1} over $t = 0, \hdots, T-1$, substituting the bound on $\mbe \lb \TPhi (\bxt) - \TF (\bxt, \byt) \rb$ from \cref{lem:NC_SC_PCP_phi_f_diff}, and rearranging the terms, we get
\begin{align}
    & \frac{1}{T} \sumtT \mbe \norm{\G \TPhi(\bxt)}^2 \nn \\
    &= \mco \lp \frac{\kappa^2 \Delta_{\TPhi}}{\seff \lrsy T} + \seff \lrsy \Lf \kappa^2 \frac{\numclients}{\selclients} \lb \localvar^2 \sumin \frac{w_i^2 \nai_2^2}{\nai_1^2} + \hetero^2 \lp \frac{\numclients - \selclients}{\numclients-1} \max_i w_i + \varscale^2 \max_i \frac{w_i \nai_2^2}{\nai_1^2} \rp \rb \rp \nn \\
    & \quad + \mco \lp \kappa^2 \lp \lrcxsq + \lrcysq \rp \Lf^2 \lb \localvar^2 \sumin w_i \lp \nai_2^2 - [a^{(t,\sync_i-1)}_{i}]^2 \rp + \hetero^2 \max_i \lp \norm{\boldsymbol a_{i,-1}}_1^2 + \varscale^2 \norm{\boldsymbol a_{i,-1}}_2^2 \rp \rb \rp 
    \label{eq_proof:thm_NC_SC_PCP_WOR_3}
\end{align}
Consequently, using constants $\Aw, \Bw, \Cw, D, \Ew$, \eqref{eq_proof:thm_NC_SC_PCP_WOR_3} can be simplified to
{\small
\begin{align}
    & \frac{1}{T} \sumtT \mbe \norm{\G \TPhi(\bxt)}^2 \leq \mco \lp \kappa^2 \lb \frac{\Delta_{\TPhi}}{\seff \lrsy T} + \frac{\lrsy \Lf}{P} \lp \Aw \localvar^2 + (\Bw \varscale^2 + \Ew \seff) \hetero^2 \rp \rb + \kappa^2 \lp \lrcxsq + \lrcysq \rp \Lf^2 \lb \Cw \localvar^2 + D \hetero^2 \rb \rp. \nn
    % \label{eq_proof:thm_NC_SC_PCP_WOR_5}
\end{align}
}%
which completes the proof.
\end{proof}

\paragraph{Convergence in terms of $F$}
\label{app:NC_SC_bias_conv}
\begin{proof}[Proof of \cref{cor:obj_inconsistent}]
According to the definition of $\obj(\bx)$ and $\surloss(\bx)$, we have
\begin{align}
	& \G \Phi(\bx) - \G \TPhi(\bx)
	= \sumin \lb p_i \Gx f_i(\bx, \by^*(\bx)) - w_i \Gx f_i(\bx, \Tby^*(\bx)) \rb \tag{$\by^* (\bx) \in \argmax_{\by} F(\bx, \by)$} \\
	&= \sumin p_i \lb \Gx f_i(\bx, \by^*(\bx)) - \Gx f_i(\bx, \Tby^*(\bx)) \rb + \sumin \lp p_i - w_i \rp \Gx f_i(\bx, \Tby^*(\bx)) \nn \\
	&= \lb \Gx F(\bx, \by^*(\bx)) - \Gx F(\bx, \Tby^*(\bx)) \rb + \sumin \frac{p_i-w_i}{\sqrt{w}_i} \cdot \sqrt{w}_i \Gx f_i(\bx, \Tby^*(\bx)). \nn
\end{align}
Taking norm, using $\Lf$-smoothness and applying Cauchy–Schwarz inequality, we get
\begin{align}
    \norm{\G \Phi(\bx) - \G \TPhi(\bx)}^2 & \leq 2 \Lf^2 \norm{\by^*(\bx) - \Tby^*(\bx)}^2 + 2 \lb \sumin \frac{(p_i-w_i)^2}{w_i} \rb \lb \sumin w_i \norm{\Gx f_i(\bx, \Tby^*(\bx))}^2 \rb \nn \\
    & \leq 2 \Lf^2 \norm{\by^*(\bx) - \Tby^*(\bx)}^2 + 2 \csqdist \lb \heteroscale^2 \norm{\G \TPhi(\bx)}^2 + \hetero^2 \rb, \nn
\end{align}
where the last inequality uses \Cref{assum:bdd_hetero}. Next, note that
\begin{align*}
    \norm{\G \Phi(\bx)}^2 \leq & 2 \norm{\G \Phi(\bx) - \G \TPhi(\bx)}^2 + 2 \norm{\nabla \TPhi(\bx)}^2.
\end{align*}
Therefore, we obtain
\begin{align}
    & \min_{t\in[T]} \norm{\nabla \Phi(\bx^{(t)})}^2
    \leq \frac{1}{T} \sum_{t=0}^{T-1} \norm{\nabla \Phi(\bx^{(t)})}^2 \nn \\
    & \leq 2 \lb 2 \csqdist \heteroscale^2 + 1 \rb \frac{1}{T} \sum_{t=0}^{T-1} \norm{\nabla \TPhi(\bx^{(t)})}^2 + 4 \lb \csqdist \hetero^2 + \Lf^2 \avgtT \norm{\by^*(\bxt) - \Tby^*(\bxt)}^2 \rb \nn \\
    &= 2 \lb 2 \csqdist \heteroscale^2 + 1 \rb \epsilon_{\text{opt}} + 4 \lb \csqdist \hetero^2 + \Lf^2 \avgtT \norm{\by^*(\bxt) - \Tby^*(\bxt)}^2 \rb. \nn
\end{align}
where $\epsilon_{\text{opt}}$ denotes the optimization error in the right hand side of \eqref{eq:thm:NC_SC} in \cref{thm:NC_SC}.
\end{proof}

\begin{proof}[Proof of \cref{cor:NC_SC_comm_cost}]
If clients are weighted equally ($w_i = p_i = 1/n$ for all $i$), with each carrying out $\sync$ steps of local SGDA, as seen in \eqref{eq:thm:NC_SC_simplified} we get
\begin{align}
    \min_{t\in[T]} \norm{\nabla \Phi(\bx^{(t)})}^2 \leq & \mco \Big( \mfrac{\localvar^2 + \varscale^2 \hetero^2}{\sqrt{\selclients \sync T}} + \mfrac{\localvar^2 + \sync \hetero^2}{\sync T} + \lp \mfrac{\numclients - \selclients}{\numclients - 1} \rp \hetero^2 \sqrt{\mfrac{\sync}{\selclients T}} \Big). \nn
    % \label{eq:thm:NC_SC_simplified}
\end{align}
% In the case with FCP and zero systems heterogeneity, with $\aikt = 1$ for all $i, k$, $w_i = \frac{1}{\numclients}$, for all $i \in [n]$ we get
% \begin{align*}
%     \nai_1 &= \sync, \quad \nai_2^2 = \sync, \quad \seff = \sync, \quad \Aw = \Aw' = 1, \quad \Bw = \Bw' = 1, \quad \Cw = \sync - 1, \quad D = (\sync - 1) (\sync - 1 + \varscale^2).
% \end{align*}
% Also, f
\begin{itemize}
    \item For full client participation, this reduces to
    \begin{align*}
        \min_{t \in [T]} \mbe \norm{\G \TPhi(\bxt)}^2 \leq \mco \lp \frac{1}{\sqrt{\numclients \sync T}} + \frac{1}{T} \rp.
    \end{align*}
    To reach an $\epsilon$-stationary point, assuming $\numclients \sync \leq T$, the per-client gradient complexity is $T \sync = \mco \lp \frac{1}{\numclients \epsilon^4} \rp$. Since $\sync \leq T/n$, the minimum number of communication rounds required is $T = \mco \lp \frac{1}{\epsilon^2} \rp$.
    \item For partial participation, $\mco \Big( \lp \mfrac{\numclients - \selclients}{\numclients - 1} \rp \hetero^2 \sqrt{\mfrac{\sync}{\selclients T}} \Big)$ is the dominant term, and we do not get any convergence benefit of multiple local updates. Consequently, per-gradient client complexity and number of communication rounds are both $T \sync = \mco \lp \frac{1}{\selclients \epsilon^4} \rp$, for $\sync = \mco (1)$. However, if the data across clients comes from identical distributions ($\hetero = 0$), then we recover 
    % \begin{align*}
    %     \min_{t \in [T]} \mbe \norm{\G \TPhi(\bxt)}^2 \leq \mco \lp \frac{1}{\sqrt{\selclients \sync T}} + \frac{1}{\sync T} \rp.
    % \end{align*}
    % which result in 
    per-client gradient complexity of $\mco \lp \frac{1}{\selclients \epsilon^4} \rp$, and number of communication rounds $= \mco \lp \frac{1}{\epsilon^2} \rp$.
\end{itemize}
\end{proof}

\subsection{Proofs of the Intermediate Lemmas}
\label{app:NC_SC_PCP_int_results_proofs}

\begin{proof}[Proof of \cref{lem:NC_SC_PCP_wtd_dir_norm_sq}]
\begin{align}
    \mbe \lnr \sumiS \twi \bdxit \rnr^2 &= \mbe \lnr \sumiS \twi \lp \bdxit - \bhxit + \bhxit \rp \rnr^2 \nn \\
    & \quad = \mbe \lnr \sumiS \twi \lp \bdxit - \bhxit \rp \rnr^2 + \mbe \lnr \sumiS \twi \bhxit \rnr^2 \nn \\
    & \quad = \mbe \lb \sumiS \twi^2 \lnr \bdxit - \bhxit \rnr^2 \rb + \mbe \lnr \sumiS \twi \bhxit \rnr^2 \tag{sampling scheme} \\
    & \quad = \frac{n}{P} \sumin w_i^2 \mbe \lnr \bdxit - \bhxit \rnr^2 + \mbe \lnr \sumiS \twi \bhxit \rnr^2 \tag{$\because \twi = w_i \numclients/\selclients$} \\
    & \quad \leq \frac{n}{P} \sumin \frac{w_i^2}{\nai_1^2} \sumikt [ \aikt ]^2 \lb \localvar^2 + \varscale^2 \mbe \lnr \Gx f_i ( \bxitk, \byitk ) \rnr^2 \rb + \mbe \lnr \sumiS \twi \bhxit \rnr^2. \label{eq_proof:lem:NC_SC_PCP_WOR_wtd_dir_norm_sq_1}
\end{align}
where the last inequality follows from \cref{assum:smoothness} and \ref{assum:bdd_var}.
Further, we can bound the second term as follows.
\begin{align}
    & \mbe \lnr \sumiS \twi \bhxit - \sumin w_i \bhxit + \sumin w_i \bhxit \rnr^2 \nn \\
    & = \mbe \lnr \sumin w_i \bhxit \rnr^2 + \mbe \lnr \sumin \mathbb{I} (i \in \clientset) \twi \bhxit - \sumin w_i \bhxit \rnr^2 \tag{(WOR) sampling} \\
    %%%%%%%%%%%%%%%%%%%%%%%%%%%%%%%
    & = \mbe \lnr \sumin w_i \bhxit \rnr^2 + \sumin \mbe \lb \lp (\mathbb{I} (i \in \clientset))^2 \twi^2 + w_i^2 - 2 \mathbb{I} (i \in \clientset) \twi w_i \rp \lnr \bhxit \rnr^2 \rb \nn \\
    & \qquad \qquad + \sum_{i \neq j} \mbe \lan (\mathbb{I} (i \in \clientset) \twi - w_i) \bhxit, (\mathbb{I} (j \in \clientset) \twj - \wj) \bhxjt \ran \nn \\
    & = \mbe \lnr \sumin w_i \bhxit \rnr^2 + \sumin \mbe \lb w_i^2 \lp \frac{\numclients}{\selclients} - 1 \rp \lnr \bhxit \rnr^2 \rb \nn \\
    & \qquad \qquad + \sum_{i \neq j} \mbe \lb \lp \mathbb{I} (i \in \clientset) \cdot \mathbb{I} (j \in \clientset) \twi \twj - \mathbb{I} (j \in \clientset) \twj w_i - \mathbb{I} (i \in \clientset) \twi \wj + w_i \wj \rp \lan \bhxit, \bhxjt \ran \rb \nn \\
    & = \mbe \lnr \sumin w_i \bhxit \rnr^2 + \lp \frac{\numclients}{\selclients} - 1 \rp \sumin \mbe \lb w_i^2 \lnr \bhxit \rnr^2 \rb + \sum_{i \neq j} \mbe \lb w_i \wj \lp \frac{\numclients}{\selclients} \lp \frac{\selclients-1}{\numclients-1} \rp - 1 \rp \lan \bhxit, \bhxjt \ran \rb \nn \\
    & = \frac{\numclients}{\selclients} \lp \frac{\selclients-1}{\numclients-1} \rp \mbe \lnr \sumin w_i \bhxit \rnr^2 + \frac{\numclients}{\selclients} \frac{\numclients - \selclients}{\numclients-1} \sumin w_i^2 \mbe \lnr \bhxit \rnr^2, \label{eq_proof:lem:NC_SC_PCP_WOR_wtd_dir_norm_sq_2}
\end{align}
Next, we bound the second term in \eqref{eq_proof:lem:NC_SC_PCP_WOR_wtd_dir_norm_sq_2}.
\begin{align}
    & \sumin w_i^2 \mbe \lnr \bhxit - \Gx f_i(\bxt, \byt) + \Gx f_i(\bxt, \byt) \rnr^2 \nn \\
    & \leq 2 \Lf^2 \sumin \frac{w_i^2}{\nai_1} \sumikt \aikt \CExytk (i) + 2 (\max_i w_i) \lp \heteroscale^2 \norm{\Gx \TF (\bxt, \byt)}^2 + \hetero^2 \rp, \label{eq_proof:lem:NC_SC_PCP_WOR_wtd_dir_norm_sq_3}
\end{align}
using \cref{assum:bdd_hetero}. $\CExytk (i) \triangleq \mbe \lb \| \bxitk - \bxt \|^2 + \| \byitk - \byt \|^2 \rb$. Substituting \eqref{eq_proof:lem:NC_SC_PCP_WOR_wtd_dir_norm_sq_3} in \eqref{eq_proof:lem:NC_SC_PCP_WOR_wtd_dir_norm_sq_2}, and using the resulting bound in \eqref{eq_proof:lem:NC_SC_PCP_WOR_wtd_dir_norm_sq_1} we get the bound in \eqref{eq:lem:NC_SC_PCP_WOR_wtd_dir_norm_sq}.
\end{proof}

\begin{proof}[Proof of \Cref{lem:NC_SC_PCP_Phi_decay_one_iter}]
\label{proof:lem:NC_SC_PCP_Phi_decay_one_iter}
Since the local functions $\{ f_i \}$ satisfy \cref{assum:SC_y}, $F(\bx, \cdot)$ is $\mu$-SC for any $\bx$.
In the proof, we use the quadratic growth property of $\mu$-SC function $F(\bx, \cdot)$, i.e., for any given $\bx$
\begin{align}
    \frac{\mu}{2} \norm{\by - \by^*(\bx)}^2 \leq F(\bx, \by^*(\bx)) - F(\bx, \by), \quad \text{ for all } \by, \label{eq:quad_growth_PL}
\end{align}
where $\by^*(\bx) = \argmax_{\by'} F(\bx, \by')$. Using $\Lp$-smoothness of $\TPhi(\cdot)$,
\begin{align}
    & \mbe \TPhi(\bxtp) \leq \mbe \TPhi(\bxt) - \mbe \left\langle \G \TPhi(\bxt), \seff \lrsx \frac{1}{|\clientset|} \sumiS \bdxit \right\rangle + \frac{\seff^2 \lrsxsq \Lp}{2} \mbe \lnr \frac{1}{|\clientset|} \sumiS \bdxit \rnr^2 \nn \\
    &= \mbe \TPhi(\bxt) - \seff \lrsx \mbe \lan \G \TPhi(\bxt), \sumin w_i \bhxit \ran + \frac{\seff^2 \lrsxsq \Lp}{2} \mbe \lnr \bdxt \rnr^2 \tag{using \cref{assum:bdd_var} and \eqref{eq:server_agg_WOR}} \\
    &= \mbe \TPhi(\bxt) - \frac{\seff \lrsx}{2} \mbe \lb \norm{\G \TPhi(\bxt)}^2 + \norm{\sumin w_i \bhxit}^2 \rb + \frac{\seff \lrsx}{2} \mbe \norm{\G \TPhi(\bxt) - \sumin w_i \bhxit}^2 \nn \\
    & \quad + \frac{\seff^2 \lrsxsq \Lp}{2} \mbe \lnr \bdxt \rnr^2. \label{eq_proof:lem:NC_SC_PCP_Phi_decay_1}
\end{align}
Next, 
\begin{align}
    & \mbe \norm{\G \TPhi(\bxt) - \sumin w_i \bhxit}^2 \nn \\
    & = \mbe \norm{\sumin w_i \lp \Gx f_i(\bxt, \by^* (\bxt)) - \Gx f_i(\bxt, \byt) + \Gx f_i(\bxt, \byt) - \bhxit \rp}^2 \tag{since $\by^*(\bx) = \argmax_{\by'} F(\bx, \by')$} \\
    & \leq 2 \Lf^2 \mbe \norm{\by^* (\bxt) - \byt}^2 + 2 \mbe \norm{\sumin w_i \lp \Gx f_i(\bxt, \byt) - \frac{1}{\nai_1} \sumikt \aikt \Gx f_i \lp \bxitk, \byitk \rp \rp}^2 \tag{Jensen's inequality; $\Lf$-smoothness; Young's inequality} \\
    & \leq \frac{4 \Lf^2}{\mu} \mbe \lb \TPhi (\bxt) - \TF (\bxt, \byt) \rb + 2 \sumin w_i \frac{1}{\nai_1} \sumikt \aikt \mbe \norm{\Gx f_i(\bxt, \byt) - \Gx f_i (\bxitk, \byitk)}^2 \tag{Quadratic growth of $\mu$-SC functions \eqref{eq:quad_growth_PL}; Jensen's inequality} \\
    & \leq \frac{4 \Lf^2}{\mu} \mbe \lb \TPhi (\bxt) - \TF (\bxt, \byt) \rb + 2 \Lf^2 \sumin \frac{w_i}{\nai_1} \sumikt \aikt \mbe \lb \lnr \bxitk - \bxt \rnr^2 + \lnr \byitk - \byt \rnr^2 \rb \tag{$\Lf$-smoothness} \\
    & = \frac{4 \Lf^2}{\mu} \mbe \lb \TPhi (\bxt) - \TF (\bxt, \byt) \rb + 2 \Lf^2 \sumin \frac{w_i}{\nai_1} \sumikt \aikt \CExytk (i). \label{eq_proof:lem:NC_SC_Phi_decay_2}
\end{align}
where $\CExytk (i) \triangleq \mbe \lb \lnr \bxitk - \bxt \rnr^2 + \lnr \byitk - \byt \rnr^2 \rb$. 
Further, the term containing $\lnr \Gx f_i ( \bxitk, \byitk ) \rnr^2$ in \eqref{eq:lem:NC_SC_PCP_WOR_wtd_dir_norm_sq} is bounded in \cref{lem:NC_SC_avg_grad_x}.
substituting the bounds from \eqref{eq_proof:lem:NC_SC_Phi_decay_2},  \eqref{eq:lem:NC_SC_PCP_WOR_wtd_dir_norm_sq} and \cref{lem:NC_SC_avg_grad_x} into \eqref{eq_proof:lem:NC_SC_PCP_Phi_decay_1}, we get
\begin{align}
    &\mbe \TPhi(\bxtp)
    \leq \mbe \TPhi(\bxt) - \frac{\seff \lrsx}{2} \mbe \lb \norm{\G \TPhi(\bxt)}^2 + \norm{\sumin w_i \bhxit}^2 \rb \nn \\
    & \quad + \frac{\seff \lrsx}{2} \lb \frac{4 \Lf^2}{\mu} \mbe \lb \TPhi (\bxt) - \TF (\bxt, \byt) \rb + 2 \Lf^2 \sumin \frac{w_i}{\nai_1} \sumikt \aikt \CExytk (i) \rb \nn \\
    & \quad + \frac{\seff^2 \lrsxsq \Lp}{2} \lb \frac{\numclients}{\selclients} \lp \frac{\selclients-1}{\numclients-1} \rp \mbe \lnr \sumin w_i \bhxit \rnr^2 + \frac{\numclients}{\selclients} \lp \frac{\numclients - \selclients}{\numclients-1} \rp 2 \Lf^2 \sumin \frac{w_i^2}{\nai_1} \sumikt \aikt \CExytk (i) \rb \nn \\
    & \quad + \frac{\seff^2 \lrsxsq \Lp}{2} \frac{\numclients \localvar^2}{\selclients} \sumin \frac{w_i^2 \nai_2^2}{\nai_1^2} + \frac{\seff^2 \lrsxsq \Lp}{2} \frac{\numclients}{\selclients} \lp \frac{\numclients - \selclients}{\numclients-1} \rp 2 (\max_i w_i) \lp \heteroscale^2 \norm{\Gx \TF (\bxt, \byt)}^2 + \hetero^2 \rp \nn \\
    & \quad + \frac{\seff^2 \lrsxsq \Lp}{2} \frac{\numclients}{\selclients} \varscale^2 \lb 2 \Lf^2 \sumin \frac{w_i^2}{\nai_1^2} \sumikt [ \aikt ]^2 \CExytk (i) + 2 \hetero^2 \lp \max_i \frac{w_i \nai_2^2}{\nai_1^2} \rp \rb \nn \\
    & \quad + \frac{\seff^2 \lrsxsq \Lp}{2} \frac{\numclients}{\selclients} \varscale^2 4 \heteroscale^2 \lp \max_i \frac{w_i \nai_2^2}{\nai_1^2} \rp \lb \frac{2 \Lf^2}{\mu} \mbe \lp \TPhi (\bxt) - \TF (\bxt, \byt) \rp + \lnr \Gx \TPhi ( \bxt ) \rnr^2 \rb \nn \\
    %%%%%%%%%%%%%%%%%%%%%%%%%%%%%%%%%%%%%%%%%%%%%%%%
    & \leq \mbe \TPhi(\bxt) - \frac{7 \seff \lrsx}{16} \mbe \norm{\G \TPhi(\bxt)}^2 - \frac{\seff \lrsx}{2} \lp 1 - \frac{\numclients}{\selclients} \lp \frac{\selclients-1}{\numclients-1} \rp \seff \lrsx \Lp \rp \mbe \norm{\sumin w_i \bhxit}^2 \nn \\
    & \quad + \frac{9 \seff \lrsx \Lf^2}{4 \mu} \mbe \lb \TPhi (\bxt) - \TF (\bxt, \byt) \rb + \frac{5}{4} \seff \lrsx \Lf^2 \sumin \frac{w_i}{\nai_1} \sumikt \aikt \CExytk (i) \nn \\
    & \quad + \frac{\seff^2 \lrsxsq \Lp}{2} \frac{\numclients}{\selclients} \lb \localvar^2 \sumin \frac{w_i^2 \nai_2^2}{\nai_1^2} + \hetero^2 \lp 2 (\max_i w_i) \frac{\numclients - \selclients}{\numclients-1} + 2 \varscale^2 \max_i \frac{w_i \nai_2^2}{\nai_1^2} \rp \rb, \nn
\end{align}
where, 
% $\frac{\seff^2 \lrsxsq \Lp}{2} \lb 2 (\max_i w_i) \frac{\numclients}{\selclients} \lp \frac{\numclients - \selclients}{\numclients-1} \rp + 2 \frac{\numclients}{\selclients} \varscale^2 \max_{i,k} \frac{w_i \aikt}{\nai_1} \rb \leq \frac{\seff \lrsx}{4}$, and $\frac{\seff^2 \lrsxsq \Lp}{2} 4 \heteroscale^2 \lb (\max_i w_i) \frac{\numclients}{\selclients} \lp \frac{\numclients - \selclients}{\numclients-1} \rp + \varscale^2 \frac{\numclients}{\selclients} \max_i \frac{w_i \nai_2^2}{\nai_1^2} \rb \leq \frac{\seff \lrsx}{16}$ follow from the conditions on $\lrsx$.
the coefficients are simplified, using assumptions on the learning rate $\lrsx$
\begin{align*}
    \seff \lrsx \Lp \lb (\max_i w_i) \lp \frac{\numclients - \selclients}{\numclients-1} \rp + \varscale^2 \max_{i,k} \frac{w_i \aikt}{\nai_1} \rb & \leq \frac{\selclients}{4 \numclients} \\
    \seff \lrsx \Lp \lb (\max_i w_i) \lp \frac{\numclients - \selclients}{\numclients-1} \rp + \varscale^2 \max_i \frac{w_i \nai_2^2}{\nai_1^2} \rb & \leq \frac{\selclients}{32 \heteroscale^2 \numclients}.
\end{align*}
This finishes the proof of.
\end{proof}

\begin{proof}[Proof of \cref{lem:NC_SC_PCP_consensus_error}]
We use the client update equations for individual iterates \eqref{eq:client_update_alg_NC_SC}. To bound $\CExytk (i)$, first we bound a single component term $\mbe \lnr \bxitk - \bxt \rnr^2$. For $1 \leq k \leq \sync_i$,
% \begin{align}
%     & \mbe \lnr \bxitk - \bxt \rnr^2 = \lrcxsq \mbe \lnr \sumijk \aijk \lp \Gx f_i \lp \bxitj, \byitj; \xiitj \rp - \Gx f_i \lp \bxitj, \byitj \rp + \Gx f_i \lp \bxitj, \byitj \rp \rp \rnr^2 \nn \\
%     &= \lrcxsq \lb \mbe \lnr \sumijk \aijk \lp \Gx f_i \lp \bxitj, \byitj; \xiitj \rp - \Gx f_i \lp \bxitj, \byitj \rp \rp \rnr^2 + \mbe \lnr \sumijk \aijk \Gx f_i \lp \bxitj, \byitj \rp \rnr^2 \rb \tag{using unbiasedness in \cref{assum:bdd_var}} \\
%     &= \lrcxsq \lb \sumijk [\aijk]^2 \mbe \lnr \Gx f_i \lp \bxitj, \byitj; \xiitj \rp - \Gx f_i \lp \bxitj, \byitj \rp \rnr^2 + \mbe \lnr \sumijk \aijk \Gx f_i \lp \bxitj, \byitj \rp \rnr^2 \rb \nn \\
%     & \leq \lrcxsq \lb \sumijk [\aijk]^2 \localvar^2 + \lp \sumijk \aijk \rp^2 \mbe \lnr \frac{1}{\lp \sumijk \aijk \rp} \sumijk \aijk \Gx f_i \lp \bxitj, \byitj \rp \rnr^2 \rb \tag{using \cref{assum:bdd_var}} \\
%     & \leq \lrcxsq \lb \sumijk [\aijk]^2 \localvar^2 + \lp \sumijk \aijk \rp \sumijk \aijk \mbe \lnr \Gx f_i \lp \bxitj, \byitj \rp \rnr^2 \rb \tag{Jensen's inequality (\cref{lem:jensens})} \\
%     & \leq \lrcxsq \lb \sumijk [\aijk]^2 \localvar^2 + \lp \sumijk \aijk \rp \sum_{k=0}^{\sync_i-1} \aikt \mbe \lnr \Gx f_i \lp \bxitk, \byitk \rp \rnr^2 \rb, \label{eq_proof:lem:NC_SC_PCP_consensus_error_1}
% \end{align}
% {\color{blue}
Using modified variance assumption
\begin{align}
    & \mbe \lnr \bxitk - \bxt \rnr^2 = \lrcxsq \mbe \lnr \sumijk \aijk \lp \Gx f_i \lp \bxitj, \byitj; \xiitj \rp - \Gx f_i \lp \bxitj, \byitj \rp + \Gx f_i \lp \bxitj, \byitj \rp \rp \rnr^2 \nn \\
    &= \lrcxsq \lb \mbe \lnr \sumijk \aijk \lp \Gx f_i \lp \bxitj, \byitj; \xiitj \rp - \Gx f_i \lp \bxitj, \byitj \rp \rp \rnr^2 + \mbe \lnr \sumijk \aijk \Gx f_i \lp \bxitj, \byitj \rp \rnr^2 \rb \tag{using unbiasedness in \cref{assum:bdd_var}} \\
    &= \lrcxsq \lb \sumijk [\aijk]^2 \mbe \lnr \Gx f_i \lp \bxitj, \byitj; \xiitj \rp - \Gx f_i \lp \bxitj, \byitj \rp \rnr^2 + \mbe \lnr \sumijk \aijk \Gx f_i \lp \bxitj, \byitj \rp \rnr^2 \rb \nn \\
    % & \leq \lrcxsq \lb \sumijk [\aijk]^2 \lp \localvar^2 + \varscale^2 \norm{\Gx f_i ( \bxitj, \byitj )}^2 \rp + \lp \sumijk \aijk \rp^2 \mbe \lnr \sumijk \aijk \Gx f_i \lp \bxitj, \byitj \rp \rnr^2 \rb \tag{using \cref{assum:bdd_var}} \\
    & \leq \lrcxsq \lb \sumijk [\aijk]^2 \lp \localvar^2 + \varscale^2 \norm{\Gx f_i ( \bxitj, \byitj )}^2 \rp + \lp \sumijk \aijk \rp \sumijk \aijk \mbe \lnr \Gx f_i \lp \bxitj, \byitj \rp \rnr^2 \rb. \label{eq_proof:lem:NC_SC_PCP_consensus_error_1}
\end{align}
where the last inequality follows from Jensen's inequality (\cref{lem:jensens}). Next, note that
\begin{equation}
    \begin{aligned}
        \frac{1}{\nai_1} \sumikt \aikt \sumijk [\aijk]^2 & \leq \frac{1}{\nai_1} \sumikt \aikt \sum_{k=0}^{\sync_i-2} [\aijk]^2 = \sum_{k=0}^{\sync_i-2} [\aijk]^2 \leq \nai_2^2 - [a^{(t,\sync_i-1)}_{i} (\sync_i)]^2, \\
        \frac{1}{\nai_1} \sumikt \aikt \sumijk \aijk & \leq \frac{1}{\nai_1} \sumikt \aikt \sum_{k=0}^{\sync_i-2} \aijk = \sum_{k=0}^{\sync_i-2} \aijk \leq \nai_1 - [a^{(t,\sync_i-1)}_{i} (\sync_i)],
        \\
        \frac{1}{\nai_1} \sumikt \aikt \sumijk [\aijk]^2 & \leq \frac{1}{\nai_1} \sumikt \aikt \sum_{j=0}^{\sync_i-2} [\aijk]^2 \leq \alpha \cdot \sum_{k=0}^{\sync_i-2} \aikt,
    \end{aligned}
    \label{eq_proof:lem:NC_SC_PCP_consensus_error_2}
\end{equation}
We define $\norm{\boldsymbol a_{i,-1}}_2^2 \triangleq \nai_2^2 - [a^{(t,\sync_i-1)}_{i} (\sync_i)]^2$, $\norm{\boldsymbol a_{i,-1}}_1 \triangleq \nai_1 - [a^{(t,\sync_i-1)}_{i} (\sync_i)]$
% , and $\norm{\boldsymbol a_{i,-1}}_\infty \triangleq \max_{j \in \{0, \dots, \sync_i-2\}} \aijk$, 
for the sake of brevity. Using \eqref{eq_proof:lem:NC_SC_PCP_consensus_error_2}, we bound the individual terms in \eqref{eq_proof:lem:NC_SC_PCP_consensus_error_1}.
\begin{align}
    & \frac{1}{\nai_1} \sumikt \aikt \sumijk [\aijk]^2 \varscale^2 \norm{\Gx f_i ( \bxitj, \byitj )}^2 \nn \\
    % & \leq \frac{2 \varscale^2}{\nai_1} \sumikt \aikt \sumijk [\aijk]^2 \lb \Lf^2 \CExytj + \norm{\Gx f_i ( \bxt, \byt )}^2 \rb \nn \\
    % & \leq \frac{2 \varscale^2}{\nai_1} \sumikt \aikt \sum_{j=0}^{\sync_i-2} [\aijk]^2 \lb \Lf^2 \CExytj + \norm{\Gx f_i ( \bxt, \byt )}^2 \rb \nn \\
    & \leq 2 \varscale^2 \alpha \Lf^2 \sum_{j=0}^{\sync_i-2} [\aijk] \CExytj + 2 \varscale^2 \norm{\boldsymbol a_{i,-1}}_2^2 \norm{\Gx f_i ( \bxt, \byt )}^2.
    \label{eq_proof:lem:NC_SC_PCP_consensus_error_2a}
\end{align}
Similarly, 
\begin{align}
    & \frac{1}{\nai_1} \sumikt \aikt \lp \sumijk \aijk \rp \sumijk \aijk \mbe \lnr \Gx f_i \lp \bxitj, \byitj \rp \rnr^2 \nn \\
    & \leq \frac{2}{\nai_1} \sumikt \aikt \lp \sum_{j=0}^{\sync_i-2} \aijk \rp \sum_{j=0}^{\sync_i-2} \aijk \lb \Lf^2 \CExytj + \norm{\Gx f_i ( \bxt, \byt )}^2 \rb \nn \\
    % & \leq 2 \lp \nai_1 - [a^{(t,\sync_i-1)}_{i}] \rp \sum_{j=0}^{\sync_i-2} \aijk \lb \Lf^2 \CExytj + \norm{\Gx f_i ( \bxt, \byt )}^2 \rb \nn \\
    & \leq 2 \norm{\boldsymbol a_{i,-1}}_1 \Lf^2 \sum_{j=0}^{\sync_i-2} \aijk \CExytj + 2 \norm{\boldsymbol a_{i,-1}}_1^2 \norm{\Gx f_i ( \bxt, \byt )}^2.
    \label{eq_proof:lem:NC_SC_PCP_consensus_error_2b}
\end{align}
Substituting \eqref{eq_proof:lem:NC_SC_PCP_consensus_error_2a}, \eqref{eq_proof:lem:NC_SC_PCP_consensus_error_2b} in \eqref{eq_proof:lem:NC_SC_PCP_consensus_error_1}, we get
\begin{align}
    \frac{1}{\nai_1} \sumikt \aikt \mbe \lnr \bxitk - \bxt \rnr^2 & \leq \lrcxsq \localvar^2 \norm{\boldsymbol a_{i,-1}}_2^2 + 2 \lrcxsq \Lf^2 \lp \norm{\boldsymbol a_{i,-1}}_1 + \varscale^2 \alpha \rp \sumikt \aikt \CExytk(i) \nn \\
    & \quad + 2 \lrcxsq \lp \norm{\boldsymbol a_{i,-1}}_1^2 + \varscale^2 \norm{\boldsymbol a_{i,-1}}_2^2 \rp \mbe \lnr \Gx f_i ( \bxt, \byt ) \rnr^2.
    \label{eq_proof:lem:NC_SC_PCP_consensus_error_3}
\end{align}
Similarly, we can bound the $\by$ error
% \begin{align}
%     & \frac{1}{\nai_1} \sumikt \aikt \mbe \lnr \byitk - \byt \rnr^2 \nn \\
%     & \leq \lrcysq \localvar^2 \lp \nai_2^2 - [a^{(t,\sync_i-1)}_{i}]^2 \rp + 2 \lrcysq \lp \nai_1 - [a^{(t,\sync_i-1)}_{i}] \rp \Lf^2 \sumikt \aikt \mbe \lb \lnr \bxitk - \bxt \rnr^2 + \lnr \byitk - \byt \rnr^2 \rb \nn \\
%     & \quad + 2 \lrcysq \lp \nai_1 - [a^{(t,\sync_i-1)}_{i}] \rp \nai_1 \mbe \lnr \Gy f_i \lp \bxt, \byt \rp \rnr^2. \label{eq_proof:lem:NC_SC_PCP_consensus_error_4}
% \end{align}
% {\color{blue}
\begin{align}
    \frac{1}{\nai_1} \sumikt \aikt \mbe \lnr \byitk - \byt \rnr^2 & \leq \lrcysq \localvar^2 \norm{\boldsymbol a_{i,-1}}_2^2 + 2 \lrcysq \Lf^2 \lp \norm{\boldsymbol a_{i,-1}}_1 + \varscale^2 \alpha \rp \sumikt \aikt \CExytk(i) \nn \\
    & \quad + 2 \lrcysq \lp \norm{\boldsymbol a_{i,-1}}_1^2 + \varscale^2 \norm{\boldsymbol a_{i,-1}}_2^2 \rp \mbe \lnr \Gy f_i ( \bxt, \byt ) \rnr^2.
    \label{eq_proof:lem:NC_SC_PCP_consensus_error_4}
\end{align}
% }
Combining the two bounds in \eqref{eq_proof:lem:NC_SC_PCP_consensus_error_3} and \eqref{eq_proof:lem:NC_SC_PCP_consensus_error_4}, we get
\begin{align}
    & \frac{1}{\nai_1} \sumikt \aikt \mbe \lb \lnr \bxitk - \bxt \rnr^2 + \lnr \byitk - \byt \rnr^2 \rb \leq \lp \lrcxsq + \lrcysq \rp \localvar^2 \norm{\boldsymbol a_{i,-1}}_2^2 \nn \\
    & \quad + 2 \lp \lrcxsq + \lrcysq \rp \Lf^2 \nai_1 \lp \norm{\boldsymbol a_{i,-1}}_1 + \varscale^2 \alpha \rp \frac{1}{\nai_1} \sumikt \aikt \CExytk (i) \nn \\
    & \quad + 2 \lp \norm{\boldsymbol a_{i,-1}}_1^2 + \varscale^2 \norm{\boldsymbol a_{i,-1}}_2^2 \rp \lb \lrcxsq \mbe \lnr \Gx f_i \lp \bxt, \byt \rp \rnr^2 + \lrcysq \mbe \lnr \Gy f_i \lp \bxt, \byt \rp \rnr^2 \rb. \label{eq_proof:lem:NC_SC_PCP_consensus_error_5}
\end{align}
Define $A_m \triangleq 2 \Lf^2 \lp \lrcxsq + \lrcysq \rp \max_i \nai_1 \lp \norm{\boldsymbol a_{i,-1}}_1 + \varscale^2 \alpha \rp$. Rearranging the terms in \eqref{eq_proof:lem:NC_SC_PCP_consensus_error_5}, and taking the weighted sum over agents, we get
\begin{align}
    & \Lf^2 \sumin \frac{w_i}{\nai_1} \sumikt \aikt \CExytk (i) \nn \\
    & \leq \frac{\lp \lrcxsq + \lrcysq \rp \Lf^2 \localvar^2}{1-A_m} \sumin w_i \norm{\boldsymbol a_{i,-1}}_2^2 \nn \\
    & \quad + \frac{2 \Lf^2}{1-A_m} \sumin w_i \lp \norm{\boldsymbol a_{i,-1}}_1^2 + \varscale^2 \norm{\boldsymbol a_{i,-1}}_2^2 \rp \lb \lrcxsq \mbe \lnr \Gx f_i \lp \bxt, \byt \rp \rnr^2 + \lrcysq \mbe \lnr \Gy f_i \lp \bxt, \byt \rp \rnr^2 \rb \nn \\
    & \leq \frac{\lp \lrcxsq + \lrcysq \rp \Lf^2}{1-A_m} \localvar^2 \sumin w_i \norm{\boldsymbol a_{i,-1}}_2^2 + \frac{2 \Lf^2 \TMa}{1-A_m} \lrcxsq \lp \heteroscale^2 \mbe \lnr \sumin w_i \Gx f_i \lp \bxt, \byt \rp \rnr^2 + \hetero^2 \rp \nn \\
    & \quad + \frac{2 \Lf^2 \TMa}{1-A_m} \lrcysq \lp \heteroscale^2 \mbe \lnr \sumin w_i \Gy f_i \lp \bxt, \byt \rp \rnr^2 + \hetero^2 \rp. \label{eq_proof:lem:NC_SC_PCP_consensus_error_6}
\end{align}
where \eqref{eq_proof:lem:NC_SC_PCP_consensus_error_6} follows from \cref{assum:bdd_hetero}, and we define $\TMa \triangleq \max_i \lp \norm{\boldsymbol a_{i,-1}}_1^2 + \varscale^2 \norm{\boldsymbol a_{i,-1}}_2^2 \rp$. 
% Henceforth, we use $\heteroscale = \heteroscale = \heteroscale$.
We bounded $\mbe \lnr \Gx F \lp \bxt, \byt \rp \rnr^2$ in \cref{lem:grad_TF_wrt_x}. Similarly, we can bound $\mbe \lnr \Gy F \lp \bxt, \byt \rp \rnr^2$ as follows.
% \begin{align}
%     \mbe \TPhi(\bxtp) & \leq \mbe \TPhi(\bxt) - \frac{\seff \lrsx}{2} \mbe \norm{\G \TPhi(\bxt)}^2 - \frac{\seff \lrsx}{2} \lp 1 - \seff \lrsx \Lp \rp \mbe \norm{\sumin w_i \bhxit}^2 + \frac{\seff^2 \lrsxsq \Lp}{2} \localvar^2 \Awa \nn \\
%     & \quad + \frac{2 \seff \lrsx \Lf^2}{\mu} \mbe \lb \TPhi (\bxt) - \TF (\bxt, \byt) \rb + \seff \lrsx \frac{\lp \lrcxsq + \lrcysq \rp \Lf^2 \localvar^2}{1-A_m} \sumin w_i \lp \nai_2^2 - [a^{(t,\sync_i-1)}_{i}]^2 \rp \nn \\
%     & \quad + \frac{A_m \seff \lrsx \heteroscale^2}{1-A_m} \lb \frac{\lrcxsq}{\lrcxsq + \lrcysq} \mbe \lnr \Gx F \lp \bxt, \byt \rp \rnr^2 + \frac{\lrcysq}{\lrcxsq + \lrcysq} \mbe \lnr \Gy F \lp \bxt, \byt \rp \rnr^2 \rb + \frac{A_m \seff \lrsx \hetero^2}{1-A_m}.
% \end{align}
\begin{align}
    \mbe \lnr \Gy F \lp \bxt, \byt \rp \rnr^2 & = \mbe \lnr \Gy F \lp \bxt, \byt \rp - \Gy F \lp \bxt, \by^* (\bxt) \rp \rnr^2 \tag{$\because \by^*(\bx) = \argmax_{\by'} F(\bx, \by')$} \\
    % & \leq \Lf^2 \mbe \lnr \by^*(\bxt) - \bxt \rnr^2 \tag{$\Lf$-smoothness (\cref{assum:smoothness})} \\
    & \leq 2 \Lf \mbe \lb \TPhi (\bxt) - \TF (\bxt, \byt) \rb. \label{eq:grad_TF_wrt_y}
\end{align}
using $\Lf$-smoothness and concavity of $F(\bx, \cdot)$ (\cref{lem:smooth_convex}). Also, for the choice of $\lrcx, \lrcy$, we get $A_m \leq 1/2$. 
Consequently, substituting the two bounds in \eqref{eq_proof:lem:NC_SC_PCP_consensus_error_6}, we complete the proof.
% \begin{align}
%     & \Lf^2 \sumin \frac{w_i}{\nai_1} \sumikt \aikt \CExytk (i) \leq 2 \lp \lrcxsq + \lrcysq \rp \Lf^2 \localvar^2 \sumin w_i \norm{\boldsymbol a_{i,-1}}_2^2 + 4 \Lf^2 \TMa \lp \lrcxsq \hetero^2 + \lrcysq \hetero^2 \rp \nn \\
%     & \qquad \qquad + 8 \Lf^2 \TMa \heteroscale^2 \lrcxsq \mbe \lnr \G \TPhi (\bxt) \rnr^2 + 8 \Lf^3 \TMa \heteroscale^2 \lp 2 \kappa \lrcxsq + \lrcysq \rp \mbe \lb \TPhi (\bxt) - \TF (\bxt, \byt) \rb. \label{eq_proof:lem:NC_SC_PCP_consensus_error_8}
% \end{align}
% which finishes the proof.
\end{proof}

\begin{proof}[Proof of \cref{lem:NC_SC_PCP_phi_f_diff}]
Using $\Lf$-smoothness (\cref{assum:smoothness}) of $F(\bx, \cdot)$,
\begin{align}
    & \mbe \TF (\bxtp, \byt) \leq \mbe \TF (\bxtp, \bytp) - \mbe \lan \Gy \TF (\bxtp, \byt), \bytp - \byt \ran + \frac{\Lf}{2} \norm{\bytp - \byt}^2 \nn \\
    &= \mbe \TF (\bxtp, \bytp) - \seff \lrsy \mbe \lan \Gy \TF (\bxtp, \byt), \sumin w_i \bhyit \ran + \frac{\seff^2 \lrsysq \Lf}{2} \mbe \norm{\bdyt}^2, \nn \\
    & \leq \mbe \TF (\bxtp, \bytp) - \frac{\seff \lrsy}{2} \mbe \lb \norm{\Gy \TF (\bxtp, \byt)}^2 + \norm{\sumin w_i \bhyit}^2 - \norm{\Gy \TF (\bxtp, \byt) - \sumin w_i \bhyit}^2 \rb \nn \\
    & \qquad + \frac{\seff^2 \lrsysq \Lf}{2} \mbe \norm{\bdyt}^2.
    \label{eq_proof:lem:NC_SC_PCP_phi_f_diff_1}
\end{align}
% where \eqref{eq_proof:lem:NC_SC_PCP_phi_f_diff_1} follows from the unbiasedness of stochastic gradients. The bound in \eqref{eq_proof:lem:NC_SC_PCP_phi_f_diff_1} is similar to the bound in \eqref{eq_proof:lem:NC_SC_phi_f_diff_2} in \cref{lem:NC_SC_phi_f_diff}, except the term $\mbe \| \frac{1}{\selclients} \sumiS \bhyit \|^2$m which can be bounded as in \cref{lem:NC_SC_PCP_wtd_dir_norm_sq}.
% \begin{align}
%     \mbe \lnr \frac{1}{\selclients} \sumiS \bhyit \rnr^2 \leq 3 \Lf^2 \sumin \frac{w_i}{\nai_1} \sumikt \aikt \CExytk (i) + 3 \lp 1 + \frac{\heteroscale^2-1}{\selclients} \rp \norm{\Gy \TF (\bxt, \byt)}^2 + \frac{3 \hetero^2}{\selclients}. \label{eq_proof:lem:NC_SC_PCP_phi_f_diff_y_norm}
% \end{align}
Next, we bound the individual terms in \eqref{eq_proof:lem:NC_SC_PCP_phi_f_diff_1}. Using quadratic growth of $\mu$-SC functions (\cref{lem:quad_growth})
\begin{align*}
    \mbe \norm{\Gy \TF (\bxtp, \byt)}^2 \geq 2 \mu \mbe \lb \TPhi (\bxtp) - \TF (\bxtp, \byt) \rb.
\end{align*}
Next, we bound $\norm{\Gy \TF (\bxtp, \byt) - \sumin w_i \bhyit}^2$, using similar reasoning as in \eqref{eq_proof:lem:NC_SC_Phi_decay_2}.
\begin{align}
    & \mbe \norm{\Gy \TF (\bxtp, \byt) - \Gy \TF (\bxt, \byt) + \Gy \TF (\bxt, \byt) - \sumin w_i \bhyit}^2 \nn \\
    & \leq 2 \Lf^2 \norm{\bxtp - \bxt}^2 + 2 \norm{\sumin w_i \lp \Gy f_i (\bxt, \byt) - \bhyit \rp}^2 \nn \\
    & \leq 2 \Lf^2 \seff^2 \lrsxsq \mbe \lnr \bdxt \rnr^2 + 2 \Lf^2 \sumin \frac{w_i}{\nai_1} \sumikt \aikt \CExytk (i), \label{eq_proof:lem:NC_SC_PCP_phi_f_diff_2}
\end{align}

We can bound $\mbe \normb{\bdxt}^2$ using \eqref{eq:lem:NC_SC_PCP_WOR_wtd_dir_norm_sq} in \cref{lem:NC_SC_PCP_wtd_dir_norm_sq} to get
\begin{align}
    & \mbe \norm{\bdxt}^2 
    \leq \frac{\numclients (\selclients-1)}{\selclients (\numclients-1)} \mbe \lnr \sumin w_i \bhxit \rnr^2 + \frac{\localvar^2 \numclients}{\selclients} \sumin \frac{w_i^{2} \nai_2^2}{\nai_1^2} +  \frac{2 \hetero^2}{\selclients} \lp \frac{\numclients (\numclients - \selclients)}{\numclients-1} \max_i w_i \rp \nn \\
    & \quad + \frac{\varscale^2 \numclients}{\selclients} \lb 2 \Lf^2 \sumin \frac{w_i^{2}}{\nai_1^2} \sumikt [ \aikt ]^2 \CExytk (i) + 4 \heteroscale^2 \lp \max_i \frac{w_i \nai_2^2}{\nai_1^2} \rp \mbe \lnr \Gx \TPhi ( \bxt ) \rnr^2 \rb \nn \\
    & \quad + \frac{\varscale^2 \numclients}{\selclients} 2 \max_i \frac{w_i \nai_2^2}{\nai_1^2} \lb \hetero^2 + 4 \heteroscale^2 \Lf \kappa \mbe \lb \TPhi (\bxt) - \TF (\bxt, \byt) \rb \rb \nn \\
    & \quad + \frac{\numclients (\numclients - \selclients)}{\numclients-1} \frac{2 \Lf^2}{\selclients} \sumin \frac{w_i^{{2}}}{\nai_1} \sumikt \aikt \CExytk (i) \nn \\
    & \quad + \lp \frac{\numclients (\numclients - \selclients)}{\numclients-1} \max_i w_i \rp \frac{2 \heteroscale^2}{\selclients} \lb 4 \Lf \kappa \mbe \lb \TPhi (\bxt) - \TF (\bxt, \byt) \rb + 2 \lnr \Gx \TPhi ( \bxt ) \rnr^2 \rb \nn \\
    %%%%%%%%%%%%%%%%%%%%%%%%%%%%%%%%%%%%%%%%%
    & \leq \frac{\numclients (\selclients-1)}{\selclients (\numclients-1)} \mbe \lnr \sumin w_i \bhxit \rnr^2 + \frac{\localvar^2 \numclients}{\selclients} \sumin \frac{w_i^{{2}} \nai_2^2}{\nai_1^2} + \frac{2 \numclients \hetero^2}{\selclients} \lp \frac{\numclients - \selclients}{\numclients-1} \max_i w_i + \varscale^2 \max_i \frac{w_i \nai_2^2}{\nai_1^2} \rp \nn \\
    & \quad + \lp \frac{\numclients - \selclients}{\numclients-1} \max_i w_i + \varscale^2 \max_{i,k} \frac{w_i \aikt}{\nai_1} \rp \frac{2 \numclients \Lf^2}{\selclients} \sumin \frac{w_i}{\nai_1} \sumikt \aikt \CExytk (i) \nn \\
    & \quad + \frac{4 \numclients \heteroscale^2}{\selclients} \lp \frac{\numclients - \selclients}{\numclients-1} \max_i w_i + \varscale^2 \max_i \frac{w_i \nai_2^2}{\nai_1^2} \rp \lnr \Gx \TPhi ( \bxt ) \rnr^2 \nn \\
    & \quad + \frac{8 \numclients \heteroscale^2 \Lf \kappa}{\selclients} \lp \frac{\numclients - \selclients}{\numclients-1} \max_i w_i + \varscale^2 \max_i \frac{w_i \nai_2^2}{\nai_1^2} \rp \mbe \lb \TPhi (\bxt) - \TF (\bxt, \byt) \rb. \label{eq_proof:lem:NC_SC_PCP_WOR_phi_f_diff_3a}
\end{align}
Similarly, we can bound $\mbe \normb{\bdyt}^2$ to get
\begin{align}
    \mbe \norm{\bdyt}^2 
    % & \leq \frac{\numclients (\selclients-1)}{\selclients (\numclients-1)} \mbe \lnr \sumin w_i \bhyit \rnr^2 + \frac{\localvar^2 \numclients}{\selclients} \sumin \frac{w_i^{{2}} \nai_2^2}{\nai_1^2} +  \frac{2 \hetero^2}{\selclients} \lp \frac{\numclients (\numclients - \selclients)}{\numclients-1} \max_i w_i \rp \nn \\
    % & \quad + \frac{\varscale^2 \numclients}{\selclients} 2 \Lf^2 \sumin \frac{w_i^{{2}}}{\nai_1^2} \sumikt [ \aikt ]^2 \CExytk (i) \nn \\
    % & \quad + \frac{\varscale^2 \numclients}{\selclients} 2 \max_i \frac{w_i \nai_2^2}{\nai_1^2} \lb \hetero^2 + 2 \heteroscale^2 \Lf \mbe \lb \TPhi (\bxt) - \TF (\bxt, \byt) \rb \rb \nn \\
    % & \quad + \frac{\numclients (\numclients - \selclients)}{\numclients-1} \frac{2 \Lf^2}{\selclients} \sumin \frac{w_i^{{2}}}{\nai_1} \sumikt \aikt \CExytk (i) \nn \\
    % & \quad + \lp \frac{\numclients (\numclients - \selclients)}{\numclients-1} \max_i w_i \rp \frac{2 \heteroscale^2}{\selclients} 2 \Lf \mbe \lb \TPhi (\bxt) - \TF (\bxt, \byt) \rb \nn \\
    %%%%%%%%%%%%%%%%%%%%%%%%%%%%%%%%%%%%%%%%%
    & \leq \frac{\numclients (\selclients-1)}{\selclients (\numclients-1)} \mbe \lnr \sumin w_i \bhyit \rnr^2 + \frac{\localvar^2 \numclients}{\selclients} \sumin \frac{w_i^{{2}} \nai_2^2}{\nai_1^2} + \frac{2 \hetero^2 \numclients}{\selclients} \lp \frac{\numclients - \selclients}{\numclients-1} \max_i w_i + \varscale^2 \max_i \frac{w_i \nai_2^2}{\nai_1^2} \rp \nn \\
    & \quad + \lp \frac{\numclients - \selclients}{\numclients-1} \max_i w_i + \varscale^2 \max_{i,k} \frac{w_i \aikt}{\nai_1} \rp \frac{2 \numclients \Lf^2}{\selclients} \sumin \frac{w_i}{\nai_1} \sumikt \aikt \CExytk (i) \nn \\
    % & \quad + \frac{4 \heteroscale^2}{\selclients} \lp \frac{\numclients (\numclients - \selclients)}{\numclients-1} \max_i w_i + \varscale^2 \numclients \max_i \frac{w_i \nai_2^2}{\nai_1^2} \rp \lnr \Gx \TPhi ( \bxt ) \rnr^2 \nn \\
    & \quad + \frac{4 \heteroscale^2 \Lf \numclients}{\selclients} \lp \frac{\numclients - \selclients}{\numclients-1} \max_i w_i + \varscale^2 \max_i \frac{w_i \nai_2^2}{\nai_1^2} \rp \mbe \lb \TPhi (\bxt) - \TF (\bxt, \byt) \rb. \label{eq_proof:lem:NC_SC_PCP_WOR_phi_f_diff_3b}
\end{align}
Substituting \eqref{eq_proof:lem:NC_SC_PCP_phi_f_diff_2}, \eqref{eq_proof:lem:NC_SC_PCP_WOR_phi_f_diff_3a}, \eqref{eq_proof:lem:NC_SC_PCP_WOR_phi_f_diff_3b} and \cref{lem:NC_SC_PCP_consensus_error} in \eqref{eq_proof:lem:NC_SC_PCP_phi_f_diff_1}, and rearranging the terms, we get
\begin{align}
    & \mbe \TF (\bxtp, \byt) \nn \\
    & \leq \mbe \TF (\bxtp, \bytp) - \frac{\seff \lrsy}{2} \lp 1 - \seff \lrsy \Lf \frac{\numclients (\selclients-1)}{\selclients (\numclients-1)} \rp \mbe \norm{\sumin w_i \bhyit}^2 + \seff^3 \lrsxsq \lrsy \Lf^2 \frac{\numclients (\selclients-1)}{\selclients (\numclients-1)} \mbe \lnr \sumin w_i \bhxit \rnr^2 \nn \\
    & \quad - \seff \lrsy \mu \mbe \lb \TPhi (\bxtp) - \TF (\bxtp, \byt) \rb \nn \\
    & \quad + \seff^3 \lrsxsq \lrsy \Lf^2 \frac{4 \numclients \heteroscale^2}{\selclients} \lp \frac{\numclients - \selclients}{\numclients-1} \max_i w_i + \varscale^2 \max_i \frac{w_i \nai_2^2}{\nai_1^2} \rp \lnr \Gx \TPhi ( \bxt ) \rnr^2 \nn \\
    & \quad + \frac{\seff^2 \lrsysq \Lf}{2} \lp 1 + \frac{2 \seff \lrsxsq \Lf}{\lrsy} \rp \lb \frac{\localvar^2 \numclients}{\selclients} \sumin \frac{w_i^{{2}} \nai_2^2}{\nai_1^2} + \frac{2 \hetero^2 \numclients}{\selclients} \lp \frac{\numclients - \selclients}{\numclients-1} \max_i w_i + \varscale^2 \max_i \frac{w_i \nai_2^2}{\nai_1^2} \rp \rb \nn \\
    & \quad + \lb \seff \lrsy + \frac{\seff^2 \lrsysq \Lf \numclients}{\selclients} \lp \frac{2 \seff \lrsxsq \Lf}{\lrsy} + 1 \rp \lp \frac{\numclients - \selclients}{\numclients-1} \max_i w_i + \varscale^2 \max_{i,k} \frac{w_i \aikt}{\nai_1} \rp \rb \Lf^2 \sumin \frac{w_i}{\nai_1} \sumikt \aikt \CExytk (i) \nn \\
    & \quad + 2 \seff^2 \lrsysq \Lf \lp \frac{4 \seff \lrsxsq \Lf \kappa}{\lrsy} + 1 \rp \frac{\heteroscale^2 \numclients \Lf}{\selclients} \lp \frac{\numclients - \selclients}{\numclients-1} \max_i w_i + \varscale^2 \max_i \frac{w_i \nai_2^2}{\nai_1^2} \rp \mbe \lb \TPhi (\bxt) - \TF (\bxt, \byt) \rb \nn \\
    %%%%%%%%%%%%%%%%%%%%%%%%%%%%%%%%%%%%%%%%%%
    & \leq \mbe \TF (\bxtp, \bytp) - \frac{\seff \lrsy}{2} \lp 1 - \seff \lrsy \Lf \frac{\numclients (\selclients-1)}{\selclients (\numclients-1)} \rp \mbe \norm{\sumin w_i \bhyit}^2 + \seff^3 \lrsxsq \lrsy \Lf^2 \frac{\numclients (\selclients-1)}{\selclients (\numclients-1)} \mbe \lnr \sumin w_i \bhxit \rnr^2 \nn \\
    & \quad - \seff \lrsy \mu \mbe \lb \TPhi (\bxtp) - \TF (\bxtp, \byt) \rb \nn \\
    & \quad + \seff^3 \lrsy \lrsxsq \Lf^2 \frac{4 \heteroscale^2 \numclients}{\selclients} \lp \frac{\numclients - \selclients}{\numclients-1} \max_i w_i + \varscale^2 \max_i \frac{w_i \nai_2^2}{\nai_1^2} \rp \mbe \lnr \Gx \TPhi ( \bxt ) \rnr^2 \nn \\
    & \quad + \seff^2 \lrsysq \Lf \lb \frac{\localvar^2 \numclients}{\selclients} \sumin \frac{w_i^2 \nai_2^2}{\nai_1^2} + \frac{2 \hetero^2 \numclients}{\selclients} \lp \frac{\numclients - \selclients}{\numclients-1} \max_i w_i + \varscale^2 \max_i \frac{w_i \nai_2^2}{\nai_1^2} \rp \rb \tag{$\because 4 \seff \lrsx \Lf \leq 1, \lrsx \kappa \leq \lrsy$} \\
    & \quad + 2 \seff \lrsy \Lf^2 \sumin \frac{w_i}{\nai_1} \sumikt \aikt \CExytk (i) \tag{$\because 8 \seff \lrsx \Lp \leq \selclients$} \\
    & \quad + 4 \seff^2 \lrsysq \Lf^2 \frac{\heteroscale^2 \numclients}{\selclients} \lp \frac{\numclients - \selclients}{\numclients-1} \max_i w_i + \varscale^2 \max_i \frac{w_i \nai_2^2}{\nai_1^2} \rp \mbe \lb \TPhi (\bxt) - \TF (\bxt, \byt) \rb
    \label{eq_proof:lem:NC_SC_PCP_WOR_phi_f_diff_4a}
\end{align}
where we simplify some coefficients using $4 \seff \lrsx \Lf \leq 1, \lrsx \leq \kappa \lrsy$. We rearrange the terms
and use the bound in \cref{lem:NC_SC_PCP_consensus_error} to get
{\small
\begin{align}
    & \mbe \lb \TPhi (\bxtp) - \TF (\bxtp, \bytp) \rb
    \nn \\
    & \leq (1 - \seff \lrsy \mu) \mbe \lb \TPhi (\bxtp) - \TF (\bxtp, \byt) \rb \nn \\
    & \quad - \frac{\seff \lrsy}{2} \lp 1 - \seff \lrsy \Lf \frac{\numclients (\selclients-1)}{\selclients (\numclients-1)} \rp \mbe \norm{\sumin w_i \bhyit}^2 + \seff^3 \lrsxsq \lrsy \Lf^2 \frac{\numclients (\selclients-1)}{\selclients (\numclients-1)} \mbe \lnr \sumin w_i \bhxit \rnr^2 \nn \\
    & \quad + 4 \seff \lrsy \Lf^2 \heteroscale^2 \lb \frac{\seff^2 \lrsxsq \numclients}{\selclients} \lp \frac{\numclients - \selclients}{\numclients-1} \max_i w_i + \varscale^2 \max_i \frac{w_i \nai_2^2}{\nai_1^2} \rp + 4 \TMa \lrcxsq \rb \mbe \lnr \Gx \TPhi ( \bxt ) \rnr^2 \nn \\
    & \quad + \seff^2 \lrsysq \Lf \lb \frac{\numclients \localvar^2}{P} \sumin \frac{w_i^2 \nai_2^2}{\nai_1^2} + \frac{2 \numclients \hetero^2}{P} \lp \frac{\numclients - \selclients}{\numclients-1} \max_i w_i + \varscale^2 \max_i \frac{w_i \nai_2^2}{\nai_1^2} \rp \rb \nn \\
    & \quad + 4 \seff \lrsy \lp \lrcxsq + \lrcysq \rp \Lf^2 \lb \localvar^2 \sumin w_i \norm{\boldsymbol a_{i,-1}}_2^2 + 2 \TMa \hetero^2 \rb \nn \\
    & \quad + 4 \seff \lrsy \Lf^2 \heteroscale^2 \lb \frac{\seff \lrsy \numclients}{\selclients} \lp \frac{\numclients - \selclients}{\numclients-1} \max_i w_i + \varscale^2 \max_i \frac{w_i \nai_2^2}{\nai_1^2} \rp + 4 \Lf \TMa \lp 2 \kappa \lrcxsq + \lrcysq \rp \rb \mbe \lb \TPhi (\bxt) - \TF (\bxt, \byt) \rb
    \label{eq_proof:lem:NC_SC_PCP_WOR_phi_f_diff_5a}
\end{align}
}%
Next, note that
\begin{align}
    & \mbe \lb \TPhi (\bxtp) - \TF (\bxtp, \byt) \rb \nn \\
    &= \mbe \lb \TPhi (\bxtp) - \TPhi (\bxt) + \TPhi (\bxt) - \TF (\bxt, \byt) + \TF (\bxt, \byt) - \TF (\bxtp, \byt) \rb. \label{eq_proof:lem:NC_SC_PCP_phi_f_diff_5b}
\end{align}
Substituting the bound from \cref{lem:NC_SC_PCP_consensus_error} into \cref{lem:NC_SC_PCP_Phi_decay_one_iter}, $\mbe \lb \TPhi (\bxtp) - \TPhi (\bxt) \rb$ can be bounded as follows.
{\small
\begin{align}
    & \mbe \TPhi(\bxtp) - \mbe \TPhi(\bxt) \nn \\
    %%%%%%%%%%%%%%%%%%%%%%%%%%%%%%%%%%%%%%%%%%%%%%%%
    % & \leq - \frac{\seff \lrsx}{2} \lp \frac{7}{8} - 20 \Lf^2 \TMa \heteroscale^2 \lrcxsq \rp \mbe \norm{\G \TPhi(\bxt)}^2 - \frac{\seff \lrsx}{2} \lp 1 - \seff \lrsx \Lp \frac{\numclients (\selclients-1)}{\selclients (\numclients-1)} \rp \mbe \norm{\sumin w_i \bhxit}^2 \nn \\
    % & \quad + \lp \frac{9 \seff \lrsx \Lf^2}{4 \mu} + 10 \seff \lrsx \Lf^3 \TMa \heteroscale^2 \lp 2 \kappa \lrcxsq + \lrcysq \rp \rp \mbe \lb \TPhi (\bxt) - \TF (\bxt, \byt) \rb \nn \\
    % & \quad + \frac{\seff^2 \lrsxsq \Lp}{2} \frac{\numclients}{\selclients} \lb \localvar^2 \sumin \frac{w_i^2 \nai_2^2}{\nai_1^2} + \hetero^2 \lp 2 (\max_i w_i) \frac{\numclients - \selclients}{\numclients-1} + 2 \varscale^2 \max_i \frac{w_i \nai_2^2}{\nai_1^2} \rp \rb \nn \\
    % & \quad + \frac{5}{2} \seff \lrsx \lp \lrcxsq + \lrcysq \rp \Lf^2 \lb \localvar^2 \sumin w_i \norm{\boldsymbol a_{i,-1}}_2^2 + 2 \TMa \hetero^2 \rb \nn \\
    %%%%%%%%%%%%%%%%%%%%%%%%%%%%%%%%%%%%%%%%%%%%%%%%
    & \leq - \frac{3 \seff \lrsx}{8} \mbe \norm{\G \TPhi(\bxt)}^2 - \frac{\seff \lrsx}{2} \lp 1 - \seff \lrsx \Lp \frac{\numclients (\selclients-1)}{\selclients (\numclients-1)} \rp \mbe \norm{\sumin w_i \bhxit}^2 + \frac{5 \seff \lrsx \Lf^2}{2 \mu} \mbe \lb \TPhi (\bxt) - \TF (\bxt, \byt) \rb \nn \\
    & \quad + \frac{\seff^2 \lrsxsq \Lp}{2} \frac{\numclients}{\selclients} \lb \localvar^2 \sumin \frac{w_i^2 \nai_2^2}{\nai_1^2} + \hetero^2 \lp 2 (\max_i w_i) \frac{\numclients - \selclients}{\numclients-1} + 2 \varscale^2 \max_i \frac{w_i \nai_2^2}{\nai_1^2} \rp \rb \nn \\
    & \quad + \frac{5}{2} \seff \lrsx \lp \lrcxsq + \lrcysq \rp \Lf^2 \lb \localvar^2 \sumin w_i \norm{\boldsymbol a_{i,-1}}_2^2 + 2 \TMa \hetero^2 \rb. \label{eq_proof:lem:NC_SC_PCP_WOR_phi_f_diff_6}
\end{align}
}%
where, we use $20 \Lf^2 \TMa \heteroscale^2 \lrcxsq \leq \frac{1}{8}$, $40 \Lf \TMa \heteroscale^2 \lp 2 \kappa \lrcxsq + \lrcysq \rp \leq \frac{1}{\mu}$.
Next, we bound $\mbe \lb \TF (\bxt, \byt) - \TF (\bxtp, \byt) \rb$. Again, using $\Lf$-smoothness of $F(\cdot, \by)$,
{\small
\begin{align}
    & \mbe \lb \TF (\bxt, \byt) - \TF (\bxtp, \byt) \rb \nn \\
    & \leq \mbe \lb \lan -\Gx \TF (\bxt, \byt), \bxtp - \bxt \ran + \frac{\Lf}{2} \norm{\bxtp - \bxt}^2 \rb \nn \\
    & \leq \frac{\seff \lrsx}{2} \mbe \lb \norm{\Gx \TF (\bxt, \byt) - \G \TPhi (\bxt) + \G \TPhi (\bxt)}^2 + \norm{\sumin w_i \bhxit}^2 \rb + \frac{\seff^2 \lrsxsq \Lf}{2} \mbe \norm{\frac{1}{|\clientset|} \sumiS \bdxit}^2 \nn \\
    & \leq \seff \lrsx \mbe \lb \frac{2 \Lf^2}{\mu} \lb \TPhi (\bxt) - \TF (\bxt, \byt) \rb + \norm{\G \TPhi (\bxt)}^2 + \frac{1}{2} \norm{\sumin w_i \bhxit}^2 \rb \tag{$\Lf$-smoothness, \Cref{lem:quad_growth}} \\
    & \qquad + \frac{\seff^2 \lrsxsq \Lf}{2} \mbe \norm{\bdxt}^2 \nn \\
    & \leq \frac{3 \seff \lrsx \Lf^2}{\mu} \mbe \lb \TPhi (\bxt) - \TF (\bxt, \byt) \rb + 2 \seff \lrsx \mbe \norm{\G \TPhi (\bxt)}^2 + \frac{\seff \lrsx}{2} \lp 1 + \seff \lrsx \Lf \frac{\numclients (\selclients-1)}{\selclients (\numclients-1)} \rp \mbe \norm{\sumin w_i \bhxit}^2 \tag{using \eqref{eq_proof:lem:NC_SC_PCP_WOR_phi_f_diff_3a}, \cref{lem:NC_SC_PCP_consensus_error}} \\
    & \quad + \frac{\localvar^2 \seff^2 \lrsxsq \Lf}{2} \frac{\numclients}{\selclients} \lb \sumin \frac{w_i^2 \nai_2^2}{\nai_1^2} + 4 \lp \frac{\numclients - \selclients}{\numclients-1} \max_i w_i + \varscale^2 \max_{i,k} \frac{w_i \aikt}{\nai_1} \rp \lp \lrcxsq + \lrcysq \rp \Lf^2 \sumin w_i \norm{\boldsymbol a_{i,-1}}_2^2 \rb \nn \\
    & \quad + \hetero^2 \seff^2 \lrsxsq \Lf \frac{\numclients}{\selclients} \lp \frac{\numclients - \selclients}{\numclients-1} \max_i w_i + \varscale^2 \max_i \frac{w_i \nai_2^2}{\nai_1^2} \rp \nn \\
    & \quad + \hetero^2 \seff^2 \lrsxsq \Lf \frac{\numclients}{\selclients} 4 \lp \frac{\numclients - \selclients}{\numclients-1} \max_i w_i + \varscale^2 \max_{i,k} \frac{w_i \aikt}{\nai_1} \rp \Lf^2 \TMa \lp \lrcxsq + \lrcysq \rp,
    \label{eq_proof:lem:NC_SC_PCP_WOR_phi_f_diff_7}
\end{align}
}%
where the coefficients are simplified since the choices of learning rates ensures that
\begin{align*}
    \seff \lrsy \Lf \frac{4 \numclients \heteroscale^2}{\selclients} \lp \frac{\numclients - \selclients}{\numclients-1} \max_i w_i + \varscale^2 \max_{i,k} \frac{w_i \aikt}{\nai_1} \rp 4 \Lf^2 \TMa \lrcxsq & \leq 1 \nn \\
    \seff \lrsy \Lf \frac{4 \numclients \heteroscale^2}{\selclients} \lp \frac{\numclients - \selclients}{\numclients-1} \max_i w_i + \varscale^2 \max_i \frac{w_i \nai_2^2}{\nai_1^2} \rp & \leq 1 \nn \\
    \seff \lrsx \frac{4 \numclients \heteroscale^2}{\selclients} \lp \frac{\numclients - \selclients}{\numclients-1} \max_i w_i + \varscale^2 \max_{i,k} \frac{w_i \aikt}{\nai_1} \rp 4 \Lf^2 \TMa \lp 2 \kappa \lrcxsq + \lrcysq \rp & \leq \frac{1}{\mu} \nn \\
    \seff \lrsx \Lf \frac{8 \numclients \heteroscale^2}{\selclients} \lp \frac{\numclients - \selclients}{\numclients-1} \max_i w_i + \varscale^2 \max_i \frac{w_i \nai_2^2}{\nai_1^2} \rp & \leq 1
\end{align*}
We substitute the bounds from \eqref{eq_proof:lem:NC_SC_PCP_WOR_phi_f_diff_6}, \eqref{eq_proof:lem:NC_SC_PCP_WOR_phi_f_diff_7} in \eqref{eq_proof:lem:NC_SC_PCP_phi_f_diff_5b}, and subsequently in \eqref{eq_proof:lem:NC_SC_PCP_WOR_phi_f_diff_5a}, we get
{\small
\begin{align}
    & \mbe \lb \TPhi (\bxtp) - \TF (\bxtp, \bytp) \rb \nn \\
    & \leq (1 - \seff \lrsy \mu) \lb 1 + \frac{3 \seff \lrsx \Lf^2}{\mu} + \frac{5 \seff \lrsx \Lf^2}{2 \mu} \rb \mbe \lb \TPhi (\bxt) - \TF (\bxt, \byt) \rb \nn \\
    & \quad + 4 \seff \lrsy \Lf^2 \heteroscale^2 \lb \frac{\seff \lrsy \numclients}{\selclients} \lp \frac{\numclients - \selclients}{\numclients-1} \max_i w_i + \varscale^2 \max_i \frac{w_i \nai_2^2}{\nai_1^2} \rp + 4 \Lf \TMa \lp 2 \kappa \lrcxsq + \lrcysq \rp \rb \mbe \lb \TPhi (\bxt) - \TF (\bxt, \byt) \rb \nn \\
    & \quad + 4 \seff \lrsy \Lf^2 \heteroscale^2 \lb \frac{\seff^2 \lrsxsq \numclients}{\selclients} \lp \frac{\numclients - \selclients}{\numclients-1} \max_i w_i + \varscale^2 \max_i \frac{w_i \nai_2^2}{\nai_1^2} \rp + 4 \TMa \lrcxsq \rb \mbe \lnr \Gx \TPhi ( \bxt ) \rnr^2 \nn \\
    & \quad - \frac{\seff \lrsy}{2} \lp 1 - \seff \lrsy \Lf \frac{\numclients (\selclients-1)}{\selclients (\numclients-1)} \rp \mbe \norm{\sumin w_i \bhyit}^2 + \seff^3 \lrsxsq \lrsy \Lf^2 \frac{\numclients (\selclients-1)}{\selclients (\numclients-1)} \mbe \lnr \sumin w_i \bhxit \rnr^2 \nn \\
    & \quad + (1 - \seff \lrsy \mu) \lb \frac{13 \seff \lrsx}{8} \mbe \norm{\G \TPhi(\bxt)}^2 + \frac{\seff^2 \lrsxsq}{2} (\Lp + \Lf) \frac{\numclients (\selclients-1)}{\selclients (\numclients-1)} \mbe \norm{\sumin w_i \bhxit}^2 \rb \nn \\
    & \quad + \seff^2 \lrsysq \Lf \lb \frac{\localvar^2 \numclients}{\selclients} \sumin \frac{w_i^2 \nai_2^2}{\nai_1^2} + \frac{2 \hetero^2 \numclients}{\selclients} \lp \frac{\numclients - \selclients}{\numclients-1} \max_i w_i + \varscale^2 \max_i \frac{w_i \nai_2^2}{\nai_1^2} \rp \rb \nn \\
    & \quad + 4 \seff \lrsy \lp \lrcxsq + \lrcysq \rp \Lf^2 \lb \localvar^2 \sumin w_i \norm{\boldsymbol a_{i,-1}}_2^2 + 2 \TMa \hetero^2 \rb \nn \\
    & \quad + (1 - \seff \lrsy \mu) \frac{\localvar^2 \seff^2 \lrsxsq \Lf}{2} \frac{\numclients}{\selclients} \lb \sumin \frac{w_i^2 \nai_2^2}{\nai_1^2} + 4 \lp \frac{\numclients - \selclients}{\numclients-1} \max_i w_i + \varscale^2 \max_{i,k} \frac{w_i \aikt}{\nai_1} \rp \lp \lrcxsq + \lrcysq \rp \Lf^2 \sumin w_i \norm{\boldsymbol a_{i,-1}}_2^2 \rb \nn \\
    & \quad + (1 - \seff \lrsy \mu) \hetero^2 \seff^2 \lrsxsq \Lf \frac{\numclients}{\selclients} \lb \lp \frac{\numclients - \selclients}{\numclients-1} \max_i w_i + \varscale^2 \max_i \frac{w_i \nai_2^2}{\nai_1^2} \rp + 4 \lp \frac{\numclients - \selclients}{\numclients-1} \max_i w_i + \varscale^2 \max_{i,k} \frac{w_i \aikt}{\nai_1} \rp \Lf^2 \TMa \lp \lrcxsq + \lrcysq \rp \rb \nn \\ 
    & \quad + (1 - \seff \lrsy \mu) \frac{5}{2} \seff \lrsx \lp \lrcxsq + \lrcysq \rp \Lf^2 \lb \localvar^2 \sumin w_i \norm{\boldsymbol a_{i,-1}}_2^2 + 2 \TMa \hetero^2 \rb \nn \\
    & \quad + (1 - \seff \lrsy \mu) \frac{\seff^2 \lrsxsq \Lp}{2} \frac{\numclients}{\selclients} \lb \localvar^2 \sumin \frac{w_i^2 \nai_2^2}{\nai_1^2} + \hetero^2 \lp 2 (\max_i w_i) \frac{\numclients - \selclients}{\numclients-1} + 2 \varscale^2 \max_i \frac{w_i \nai_2^2}{\nai_1^2} \rp \rb \label{eq_proof:lem:NC_SC_PCP_WOR_phi_f_diff_8}
\end{align}
}%
Next, we simplify the coefficients of different terms in \eqref{eq_proof:lem:NC_SC_PCP_WOR_phi_f_diff_8}.
\begin{itemize}
    \item Coefficient of $\mbe \lb \TPhi (\bxt) - \TF (\bxt, \byt) \rb$ can be simplified to
    % $1 - \frac{\seff \lrsy \mu}{4}$
    \begin{align*}
        & (1 - \seff \lrsy \mu) \lp 1 + \frac{11 \seff \lrsx \Lf^2}{2 \mu} \rp \\
        & \qquad + 4 \seff \lrsy \Lf^2 \heteroscale^2 \lb \frac{\seff \lrsy \numclients}{\selclients} \lp \frac{\numclients - \selclients}{\numclients-1} \max_i w_i + \varscale^2 \max_i \frac{w_i \nai_2^2}{\nai_1^2} \rp + 4 \Lf \TMa \lp 2 \kappa \lrcxsq + \lrcysq \rp \rb \\
        & \leq 1 - \frac{\seff \lrsy \mu}{4}.
    \end{align*}
    using $\lrsx \leq \frac{\lrsy}{11 \kappa^2}$, $\seff \lrsy \kappa \Lf \heteroscale^2 \frac{\numclients}{\selclients} \max \lcb \varscale^2 \max_i \frac{w_i \nai_2^2}{\nai_1^2}, \frac{\numclients - \selclients}{\numclients-1} \max_i w_i \rcb \leq \frac{1}{64}$, $\kappa \Lf \heteroscale \lrcx \leq \frac{1}{16 \sqrt{2 \TMa}}$ and $\Lf \heteroscale \lrcy \leq \frac{1}{16 \sqrt{\kappa \TMa}}$.
    % \begin{align*}
    %     4 \seff \lrsy \Lf \kappa \heteroscale^2 \frac{\numclients}{\selclients} \frac{\numclients - \selclients}{\numclients-1} \max_i w_i \leq \frac{1}{16} \qquad 4 \seff \lrsy \Lf \kappa \heteroscale^2 \frac{\numclients}{\selclients} \varscale^2 \max_i \frac{w_i \nai_2^2}{\nai_1^2} \leq \frac{1}{16}
    % \end{align*}
    \item Coefficient of $\mbe \lnr \sumin w_i \bhxit \rnr^2$ can be simplified to
    % $2 \seff^2 \lrsxsq \Lp \frac{\numclients (\selclients-1)}{\selclients (\numclients-1)}$
    \begin{align*}
        & \seff^3 \lrsxsq \lrsy \Lf^2 \frac{\numclients (\selclients-1)}{\selclients (\numclients-1)} + \frac{\seff^2 \lrsxsq}{2} (\Lp + \Lf) \frac{\numclients (\selclients-1)}{\selclients (\numclients-1)} \nn \\
        & \leq 2 \seff^2 \lrsxsq \Lp \frac{\numclients (\selclients-1)}{\selclients (\numclients-1)}. \tag{$\because \Lf \leq \Lp, \seff \lrsy \Lf \leq 1$}
    \end{align*}
    \item Coefficient of $\mbe \norm{\G \TPhi(\bxt)}^2$ can be simplified to
    % $\frac{\seff \lrsy}{48 \kappa^2}$
    \begin{align*}
        & (1 - \seff \lrsy \mu) \frac{13}{8} \seff \lrsx + 4 \seff \lrsy \Lf^2 \heteroscale^2 \lb \frac{\seff^2 \lrsxsq \numclients}{\selclients} \lp \frac{\numclients - \selclients}{\numclients-1} \max_i w_i + \varscale^2 \max_i \frac{w_i \nai_2^2}{\nai_1^2} \rp + 4 \TMa \lrcxsq \rb \nn \\
        % & \leq \seff \lrsy \lp \frac{13}{88 \kappa^2} + \frac{1}{16 \kappa^2} + \rp \seff^2 \lrsy \lrsx \frac{\Lf}{8 \Lp} \nn \\
        % & \leq \frac{\seff \lrsx}{2} (1 + 4 A_m \heteroscale^2) + \seff \lrsy 2 A_m \heteroscale^2 \tag{$\because \lrcx \leq \lrcy, A_m \leq \frac{1}{2}$} \\
        % & \leq \lb \frac{\seff \lrsx}{2} \frac{49}{48} + \frac{\seff \lrsy}{96 \kappa^2} \rb \tag{$\because A_m \heteroscale^2 \leq \frac{1}{192 \kappa^2}$} \\
        & \leq \frac{\seff \lrsy}{48 \kappa^2}. \tag{$\frac{\lrsx}{\lrsy} \leq \frac{1}{49 \kappa^2}$}
    \end{align*}
    using $\lrsx \leq \frac{\lrsy}{156 \kappa^2}$, $\lrcx \Lf \heteroscale \leq \frac{1}{64 \kappa \sqrt{\TMa}}$ and $\seff \lrsx \Lf \heteroscale \sqrt{ \frac{\numclients}{\selclients}} \max \lcb \frac{\numclients - \selclients}{\numclients-1} \max_i w_i, \varscale \sqrt{\max_i \frac{w_i \nai_2^2}{\nai_1^2}} \rcb \leq \frac{1}{40 \kappa}$.
    \item Coefficient of $\localvar^2$ can be simplified to
    \begin{align*}
        & 2 \seff^2 \lrsxsq \Lf \frac{\numclients}{\selclients} \lp \frac{\numclients - \selclients}{\numclients-1} \max_i w_i + \varscale^2 \max_{i,k} \frac{w_i \aikt}{\nai_1} \rp \lp \lrcxsq + \lrcysq \rp \Lf^2 \sumin w_i \norm{\boldsymbol a_{i,-1}}_2^2 \nn \\
        & \quad + \seff^2 \lrsysq \Lf \frac{\numclients}{\selclients} \sumin \frac{w_i^2 \nai_2^2}{\nai_1^2} + \frac{\seff^2 \lrsxsq (\Lp + \Lf)}{2} \frac{\numclients}{\selclients} \sumin \frac{w_i^2 \nai_2^2}{\nai_1^2} \nn \\
        & \quad + \seff \lp 4 \lrsy + \frac{5}{2} \lrsx \rp \lp \lrcxsq + \lrcysq \rp \Lf^2 \sumin w_i \norm{\boldsymbol a_{i,-1}}_2^2
        \nn \\
        & \leq \frac{3}{2} \seff^2 \lrsysq \Lf \frac{\numclients}{\selclients} \sumin \frac{w_i^2 \nai_2^2}{\nai_1^2} + \frac{9}{2} \seff \lrsy \lp \lrcxsq + \lrcysq \rp \Lf^2 \sumin w_i \norm{\boldsymbol a_{i,-1}}_2^2 \nn \\
        & \quad + 2 \seff^2 \lrsxsq \Lf \frac{\numclients}{\selclients} \lp \frac{\numclients - \selclients}{\numclients-1} \max_i w_i + \varscale^2 \max_{i,k} \frac{w_i \aikt}{\nai_1} \rp \lp \lrcxsq + \lrcysq \rp \Lf^2 \sumin w_i \norm{\boldsymbol a_{i,-1}}_2^2.
    \end{align*}
    \item Coefficient of $\hetero^2$ can be simplified to
    \begin{align*}
        & \frac{2 \seff^2 \numclients}{\selclients} \lp \lrsysq \Lf + \lrsxsq (\Lf + \Lp) \rp \lp \frac{\numclients - \selclients}{\numclients-1} \max_i w_i + \varscale^2 \max_i \frac{w_i \nai_2^2}{\nai_1^2} \rp \nn \\
        & \quad + \seff \Lf^2 \TMa \lp \lrcxsq + \lrcysq \rp \lp 8 \lrsy + 4 \seff \lrsxsq \Lf \frac{\numclients}{\selclients} \lp \frac{\numclients - \selclients}{\numclients-1} \max_i w_i + \varscale^2 \max_{i,k} \frac{w_i \aikt}{\nai_1} \rp + 5 \lrsx  \rp \nn \\
        & \leq \frac{3 \seff^2 \lrsysq \Lf \numclients}{\selclients} \lp \frac{\numclients - \selclients}{\numclients-1} \max_i w_i + \varscale^2 \max_i \frac{w_i \nai_2^2}{\nai_1^2} \rp + 9 \lrsy \seff \Lf^2 \TMa \lp \lrcxsq + \lrcysq \rp \nn \\
        & \quad + 4 \seff^2 \lrsxsq \Lf^3 \TMa \lp \lrcxsq + \lrcysq \rp \frac{\numclients}{\selclients} \lp \frac{\numclients - \selclients}{\numclients-1} \max_i w_i + \varscale^2 \max_{i,k} \frac{w_i \aikt}{\nai_1} \rp. \nn
    \end{align*}
\end{itemize}
Finally, substituting these coefficients in \eqref{eq_proof:lem:NC_SC_PCP_WOR_phi_f_diff_8}, summing over $t = 0, \hdots, T-1$ and rearranging the terms, we get
{\small
\begin{align}
    & \frac{1}{T} \sumtT \mbe \lb \TPhi (\bxt) - \TF (\bxt, \byt) \rb \nn \\
    & \leq \frac{4}{\seff \lrsy \mu} \lb \frac{\TPhi (\bx^{(0)}) - F(\bx^{(0)}, \by^{(0)})}{ T} - \frac{\mbe \lp \TPhi (\bx^{(T)}) - F(\bx^{(T)}, \by^{(T)}) \rp}{T} \rb + \frac{1}{12 \mu \kappa^2} \frac{1}{T} \sumtT \mbe \norm{\G \TPhi(\bxt)}^2 \nn \\
    & \quad + \frac{8 \seff \lrsxsq \Lp}{\lrsy \mu} \frac{\numclients (\selclients-1)}{\selclients (\numclients-1)} \mbe \lnr \sumin w_i \bhxit \rnr^2 + 18 \kappa \Lf \lp \lrcxsq + \lrcysq \rp \lb \localvar^2 \sumin w_i \norm{\boldsymbol a_{i,-1}}_2^2 + 2 \hetero^2 \TMa \rb \nn \\
    & \quad + \frac{8 \seff \lrsxsq \kappa}{\lrsy} \frac{\numclients}{\selclients} \lp \frac{\numclients - \selclients}{\numclients-1} \max_i w_i + \varscale^2 \max_{i,k} \frac{w_i \aikt}{\nai_1} \rp \lp \lrcxsq + \lrcysq \rp \Lf^2 \lb \localvar^2 \sumin w_i \norm{\boldsymbol a_{i,-1}}_2^2 + 2 \hetero^2 \TMa \rb \nn \\
    & \quad + 6 \seff \lrsy \kappa \frac{\numclients}{\selclients} \lb \localvar^2 \sumin \frac{w_i^2 \nai_2^2}{\nai_1^2} + 2 \hetero^2 \lp \frac{\numclients - \selclients}{\numclients-1} \max_i w_i + \varscale^2 \max_i \frac{w_i \nai_2^2}{\nai_1^2} \rp \rb,
\end{align}
}%
which concludes the proof.
% of \eqref{eq:lem:NC_SC_PCP_WOR_phi_f_diff}, with without-replacement sampling \textbf{(WOR)}.
\end{proof}

\subsection{Auxiliary Lemmas} \label{app:NC_SC_PCP_aux_lemma}

\begin{lemma}
\label{lem:grad_TF_wrt_x}
If the local client function $f_i(\bx, \cdot)$ satisfy Assumptions \ref{assum:smoothness}, \ref{assum:SC_y} ($\Lf$-smoothness and $\mu$-strong concavity in $\by$), then the function $F$
% defined in \eqref{eq:surr_obj} 
satisfies
$$\mbe \normb{\Gx \TF (\bx, \by}^2 \leq 2 \mbe \normb{\G \TPhi (\bx)}^2 + \frac{4 \Lf^2}{\mu} \mbe \lb \TPhi (\bx) - \TF (\bx, \by) \rb$$
\end{lemma}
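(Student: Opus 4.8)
The plan is to decompose $\Gx\TF(\bx,\by)$ around the exact maximizer $\Tby^*(\bx)$ and control the resulting error by the inaccuracy $\TPhi(\bx)-\TF(\bx,\by)$. Since each $f_i$ satisfies \cref{assum:smoothness} and $\sum_i w_i = 1$, the surrogate $\TF$ is itself $\Lf$-smooth; since each $f_i$ satisfies \cref{assum:SC_y}, $\TF(\bx,\cdot)$ is $\mu$-strongly concave, so $\Tby^*(\bx) = \argmax_\by \TF(\bx,\by)$ is unique and, by Danskin's theorem (implicit in \cref{lem:Phi_smooth_nouiehed}), $\G\TPhi(\bx) = \Gx \TF(\bx, \Tby^*(\bx))$.

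First I would write
\begin{align*}
    \Gx\TF(\bx,\by) = \big[\Gx\TF(\bx,\by) - \Gx\TF(\bx,\Tby^*(\bx))\big] + \G\TPhi(\bx),
\end{align*}
and apply $\norm{\mbf u + \mbf v}^2 \leq 2\norm{\mbf u}^2 + 2\norm{\mbf v}^2$ (a special case of \cref{lem:Young}) to get
\begin{align*}
    \mbe\norm{\Gx\TF(\bx,\by)}^2 \leq 2\,\mbe\norm{\Gx\TF(\bx,\by) - \Gx\TF(\bx,\Tby^*(\bx))}^2 + 2\,\mbe\norm{\G\TPhi(\bx)}^2.
\end{align*}
For the first term, $\Lf$-smoothness of $\TF$ gives $\norm{\Gx\TF(\bx,\by) - \Gx\TF(\bx,\Tby^*(\bx))}^2 \leq \Lf^2 \norm{\by - \Tby^*(\bx)}^2$.

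Next I would bound $\norm{\by - \Tby^*(\bx)}^2$ via the quadratic growth property of the $\mu$-strongly concave function $\TF(\bx,\cdot)$ — i.e., applying \cref{lem:quad_growth} to $-\TF(\bx,\cdot)$, equivalently inequality \eqref{eq:quad_growth_PL} used elsewhere in this appendix — which yields $\tfrac{\mu}{2}\norm{\by - \Tby^*(\bx)}^2 \leq \TF(\bx,\Tby^*(\bx)) - \TF(\bx,\by) = \TPhi(\bx) - \TF(\bx,\by)$. Substituting, the first term is at most $\tfrac{2\Lf^2}{\mu}\mbe[\TPhi(\bx) - \TF(\bx,\by)]$, and combining with the bound above gives exactly $\mbe\norm{\Gx\TF(\bx,\by)}^2 \leq 2\,\mbe\norm{\G\TPhi(\bx)}^2 + \tfrac{4\Lf^2}{\mu}\mbe[\TPhi(\bx) - \TF(\bx,\by)]$, as claimed. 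There is no real obstacle here; the only point requiring a line of care is noting that $\TF$ inherits $\Lf$-smoothness and $\mu$-strong concavity from the $f_i$'s (by the triangle inequality and convex combination, respectively), and that the quadratic growth lemma, stated for minimization, applies to the maximization problem after negation.
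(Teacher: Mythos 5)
Your proposal is correct and follows essentially the same route as the paper's proof: decompose $\Gx\TF(\bx,\by)$ around $\G\TPhi(\bx) = \Gx\TF(\bx,\Tby^*(\bx))$, apply the sum-of-squares inequality, bound the difference via $\Lf$-smoothness by $\Lf^2\normb{\by-\Tby^*(\bx)}^2$, and finish with the quadratic growth property of the $\mu$-strongly concave map $\TF(\bx,\cdot)$. Your added remarks on $\TF$ inheriting smoothness/strong concavity and on Danskin's theorem are correct and only make explicit what the paper leaves implicit.
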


\begin{proof}
\begin{align}
    \mbe \norm{\Gx \TF (\bx, \by}^2 & \leq 2 \mbe \lnr \G \TPhi (\bx) \rnr^2 + 2 \mbe \lnr \Gx F \lp \bx, \by \rp - \G \TPhi (\bx) \rnr^2 \nn \\
    & \leq 2 \mbe \lnr \G \TPhi (\bx) \rnr^2 + 2 \Lf^2 \mbe \lnr \by^*(\bx) - \by \rnr^2 \tag{$\Lf$-smoothness (\cref{assum:smoothness})} \\
    & \leq 2 \mbe \lnr \G \TPhi (\bx) \rnr^2 + \frac{4 \Lf^2}{\mu} \mbe \lb \TPhi (\bx) - \TF (\bx, \by) \rb. \tag{\cref{assum:SC_y}}
\end{align}
\end{proof}

\begin{lemma}
\label{lem:NC_SC_avg_grad_x}
If the local client function $f_i(\bx, \cdot)$ satisfy Assumptions \ref{assum:smoothness}, \ref{assum:bdd_hetero} and \ref{assum:SC_y}, then the iterates $\{ \bxitk, \byitk \}_{i, (t,k)}$ generated by \cref{alg_NC_minimax} satisfy
\begin{align}
    & \sumin \frac{w_i^2}{\nai_1^2} \sumikt [ \aikt ]^2 \mbe \normb{\Gx f_i ( \bxitk, \byitk )}^2 \nn \\
    & \leq 2 \sumin \frac{w_i^2}{\nai_1^2} \sumikt [ \aikt ]^2 \Lf^2 \CExytk (i) + 2 \hetero^2 \lp \max_i \frac{w_i \nai_2^2}{\nai_1^2} \rp \nn \\
    & \quad + 4 \heteroscale^2 \lp \max_i \frac{w_i \nai_2^2}{\nai_1^2} \rp \lb \frac{2 \Lf^2}{\mu} \mbe \lp \TPhi (\bxt) - \TF (\bxt, \byt) \rp + \lnr \Gx \TPhi ( \bxt ) \rnr^2 \rb. \nn
\end{align}
\end{lemma}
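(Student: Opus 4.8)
The plan is to decompose each (deterministic) gradient $\Gx f_i(\bxitk,\byitk)$ around the round-$t$ server iterate $(\bxt,\byt)$ and bound the two resulting pieces separately. First I would write $\Gx f_i(\bxitk,\byitk) = \big(\Gx f_i(\bxitk,\byitk) - \Gx f_i(\bxt,\byt)\big) + \Gx f_i(\bxt,\byt)$ and apply $\norm{\mbf u + \mbf v}^2 \le 2\norm{\mbf u}^2 + 2\norm{\mbf v}^2$ (\cref{lem:sum_of_squares} with $K=2$). Taking expectations, the first piece is controlled by $\Lf$-smoothness (\cref{assum:smoothness}): $\mbe\norm{\Gx f_i(\bxitk,\byitk) - \Gx f_i(\bxt,\byt)}^2 \le \Lf^2\,\mbe\big[\norm{\bxitk-\bxt}^2 + \norm{\byitk-\byt}^2\big] = \Lf^2\,\CExytk(i)$, which after multiplying by $w_i^2[\aikt]^2/\nai_1^2$ and summing over $i$ and $k$ produces exactly the first term on the right-hand side of the claim.

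For the second piece, the summand no longer depends on $k$, so I would use the identity $\sumikt [\aikt]^2 = \nai_2^2$ to collapse the inner sum, leaving $2\sumin \frac{w_i^2 \nai_2^2}{\nai_1^2}\,\mbe\norm{\Gx f_i(\bxt,\byt)}^2$. Pulling out the scalar $\max_i \frac{w_i \nai_2^2}{\nai_1^2}$ gives $2\big(\max_i \frac{w_i\nai_2^2}{\nai_1^2}\big)\sumin w_i\,\mbe\norm{\Gx f_i(\bxt,\byt)}^2$, at which point \cref{assum:bdd_hetero}, applied with the weights $\{w_i\}$ (which are non-negative and sum to one), bounds $\sumin w_i\norm{\Gx f_i(\bxt,\byt)}^2 \le \heteroscale^2\norm{\Gx\TF(\bxt,\byt)}^2 + \hetero^2$. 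The $\hetero^2$ contribution here yields the $2\hetero^2\big(\max_i w_i\nai_2^2/\nai_1^2\big)$ term.

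Finally I would invoke \cref{lem:grad_TF_wrt_x} to replace $\mbe\norm{\Gx\TF(\bxt,\byt)}^2$ by $2\,\mbe\norm{\Gx\TPhi(\bxt)}^2 + \frac{4\Lf^2}{\mu}\mbe\big[\TPhi(\bxt)-\TF(\bxt,\byt)\big]$; carrying the factor $2\heteroscale^2\big(\max_i w_i\nai_2^2/\nai_1^2\big)$ through this substitution gives precisely $4\heteroscale^2\big(\max_i w_i\nai_2^2/\nai_1^2\big)\big[\norm{\Gx\TPhi(\bxt)}^2 + \frac{2\Lf^2}{\mu}\mbe(\TPhi(\bxt)-\TF(\bxt,\byt))\big]$. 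Adding the three contributions yields the stated inequality. There is no genuine analytical obstacle; the only point requiring care is the bookkeeping needed to invoke \cref{assum:bdd_hetero} — the coefficient in front of $\norm{\Gx f_i}^2$ is $w_i^2\nai_2^2/\nai_1^2$ rather than $w_i$, so one must first factor out the scalar $\max_i w_i\nai_2^2/\nai_1^2$ so that what remains is a true convex combination $\sum_i w_i(\cdot)$ to which the heterogeneity assumption applies verbatim.
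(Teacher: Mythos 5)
Your proposal is correct and follows essentially the same route as the paper: add and subtract $\Gx f_i(\bxt,\byt)$, bound the drift term via $\Lf$-smoothness, collapse $\sumikt[\aikt]^2$ to $\nai_2^2$, factor out $\max_i w_i\nai_2^2/\nai_1^2$ so that \cref{assum:bdd_hetero} applies to the remaining convex combination, and finish with \cref{lem:grad_TF_wrt_x}. No gaps.
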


\begin{proof}
\begin{align}
    & \sumin \frac{w_i^2}{\nai_1^2} \sumikt [ \aikt ]^2 \mbe \lnr \Gx f_i ( \bxitk, \byitk ) \pm \Gx f_i ( \bxt, \byt ) \rnr^2 \nn \\
    & \leq 2 \sumin \frac{w_i^2}{\nai_1^2} \sumikt [ \aikt ]^2 \Lf^2 \mbe \lb \lnr \bxitk- \bxt \rnr^2 + \lnr \byitk - \byt \rnr^2 \rb + 2 \sumin \frac{w_i^2 \nai_2^2}{\nai_1^2} \mbe \lnr \Gx f_i ( \bxt, \byt ) \rnr^2 \tag{$\Lf$-smoothness} \\
    % & \leq 2 \sumin \frac{w_i^2}{\nai_1^2} \sumikt [ \aikt ]^2 \Lf^2 \CExytk (i) + 2 \lb \max_i \frac{w_i \nai_2^2}{\nai_1^2} \rb \sumin w_i \mbe \lnr \Gx f_i ( \bxt, \byt ) \rnr^2 \nn \\
    & \leq 2 \sumin \frac{w_i^2}{\nai_1^2} \sumikt [ \aikt ]^2 \Lf^2 \CExytk (i) + 2 \lp \max_i \frac{w_i \nai_2^2}{\nai_1^2} \rp \lb \heteroscale^2 \lnr \Gx \TF (\bxt, \byt) \rnr^2 + \hetero^2 \rb \tag{\cref{assum:bdd_hetero}} 
    % \\
    % & \leq 2 \sumin \frac{w_i^2}{\nai_1^2} \sumikt [ \aikt ]^2 \Lf^2 \CExytk (i) + 2 \hetero^2 \lp \max_i \frac{w_i \nai_2^2}{\nai_1^2} \rp \nn \\
    % & \quad + 4 \heteroscale^2 \lp \max_i \frac{w_i \nai_2^2}{\nai_1^2} \rp \lb \frac{2 \Lf^2}{\mu} \mbe \lp \TPhi (\bxt) - \TF (\bxt, \byt) \rp + \lnr \Gx \TPhi ( \bxt ) \rnr^2 \rb. \tag{\cref{lem:grad_TF_wrt_x}}
\end{align}
Using \cref{lem:grad_TF_wrt_x} gives the result.
\end{proof}

\begin{lemma}
\label{lem:NC_SC_avg_grad_y}
% \label{eq_proof:lem:NC_SC_Phi_decay_3b}
If the local client function $f_i(\bx, \cdot)$ satisfy Assumptions \ref{assum:smoothness}, \ref{assum:bdd_hetero} and \ref{assum:SC_y}, then the iterates $\{ \bxitk, \byitk \}_{i, (t,k)}$ generated by \cref{alg_NC_minimax} satisfy
\begin{align}
    & \sumin \frac{w_i^2}{\nai_1^2} \sumikt [ \aikt ]^2 \mbe \normb{\Gy f_i ( \bxitk, \byitk )}^2 \nn \\
    & \leq 2 \sumin \frac{w_i^2}{\nai_1^2} \sumikt [ \aikt ]^2 \Lf^2 \CExytk (i) + 2 \lp \max_i \frac{w_i \nai_2^2}{\nai_1^2} \rp \lb \hetero^2 + 2 \heteroscale^2 \Lf \mbe \lp \TPhi (\bxt) - \TF (\bxt, \byt) \rp \rb. \nn
\end{align}
\end{lemma}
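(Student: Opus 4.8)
The plan is to mirror the proof of \cref{lem:NC_SC_avg_grad_x} verbatim, swapping $\Gx$ for $\Gy$ throughout, and to replace the single step where the two cases genuinely diverge. First I would insert $\pm\,\Gy f_i(\bxt,\byt)$ inside the norm $\normb{\Gy f_i(\bxitk,\byitk)}^2$ and apply \cref{lem:sum_of_squares} with $K=2$, so that each summand is bounded by $2\normb{\Gy f_i(\bxitk,\byitk)-\Gy f_i(\bxt,\byt)}^2+2\normb{\Gy f_i(\bxt,\byt)}^2$. For the first piece, $\Lf$-smoothness (\cref{assum:smoothness}) gives $\normb{\Gy f_i(\bxitk,\byitk)-\Gy f_i(\bxt,\byt)}^2\le \Lf^2\lp\lnr\bxitk-\bxt\rnr^2+\lnr\byitk-\byt\rnr^2\rp$; taking expectations turns the right side into $\Lf^2\CExytk(i)$, and summing against the weights $w_i^2[\aikt]^2/\nai_1^2$ reproduces the first term of the claimed bound.

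For the second piece I would factor out $\sumikt[\aikt]^2=\nai_2^2$ and bound
\begin{align*}
\sumin\frac{w_i^2}{\nai_1^2}\sumikt[\aikt]^2\,\mbe\normb{\Gy f_i(\bxt,\byt)}^2=\sumin\frac{w_i^2\nai_2^2}{\nai_1^2}\mbe\normb{\Gy f_i(\bxt,\byt)}^2\le\lp\max_i\frac{w_i\nai_2^2}{\nai_1^2}\rp\sumin w_i\,\mbe\normb{\Gy f_i(\bxt,\byt)}^2.
\end{align*}
Then \cref{assum:bdd_hetero} (its $\by$-version) yields $\sumin w_i\normb{\Gy f_i(\bxt,\byt)}^2\le\heteroscale^2\normb{\sumin w_i\Gy f_i(\bxt,\byt)}^2+\hetero^2=\heteroscale^2\normb{\Gy\TF(\bxt,\byt)}^2+\hetero^2$.

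The one point where the $\by$-case is simpler than \cref{lem:NC_SC_avg_grad_x} is the final bound on $\normb{\Gy\TF(\bxt,\byt)}^2$. Since $\Tby^*(\bxt)\in\argmax_\by\TF(\bxt,\by)$ and $\TF(\bxt,\cdot)$ is concave and $\Lf$-smooth, we have $\Gy\TF(\bxt,\Tby^*(\bxt))=0$, so applying \cref{lem:smooth_convex} to the convex map $-\TF(\bxt,\cdot)$ — exactly the computation already carried out in \eqref{eq:grad_TF_wrt_y} — gives
\begin{align*}
\mbe\normb{\Gy\TF(\bxt,\byt)}^2=\mbe\normb{\Gy\TF(\bxt,\byt)-\Gy\TF(\bxt,\Tby^*(\bxt))}^2\le 2\Lf\,\mbe\lb\TPhi(\bxt)-\TF(\bxt,\byt)\rb.
\end{align*}
Substituting this into the previous display collects the second piece into $2\lp\max_i\frac{w_i\nai_2^2}{\nai_1^2}\rp\lb\hetero^2+2\heteroscale^2\Lf\,\mbe\lp\TPhi(\bxt)-\TF(\bxt,\byt)\rp\rb$, which is precisely the second term of the statement. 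I do not expect any real obstacle here: unlike the $\bx$-gradient bound, no $\normb{\G\TPhi(\bxt)}^2$ term is produced (the envelope term alone suffices because it is the $\by$-gradient of $\TF$, not its $\bx$-gradient, that vanishes at the maximizer), so no further appeal to \cref{lem:grad_TF_wrt_x} is needed; the only care required is tracking the constants $\heteroscale^2$ and $\Lf$ through the two substitutions.
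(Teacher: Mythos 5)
Your proposal is correct and follows essentially the same route as the paper's proof: the same $\pm\,\Gy f_i(\bxt,\byt)$ insertion, the same use of $\Lf$-smoothness to produce $\Lf^2\CExytk(i)$, the same extraction of $\max_i w_i\nai_2^2/\nai_1^2$ before invoking \cref{assum:bdd_hetero}, and the same final bound $\mbe\normb{\Gy\TF(\bxt,\byt)}^2\le 2\Lf\,\mbe[\TPhi(\bxt)-\TF(\bxt,\byt)]$ via smoothness and concavity of $\TF(\bxt,\cdot)$ as in \eqref{eq:grad_TF_wrt_y}. Your remark that the $\by$-case avoids the $\normb{\G\TPhi(\bxt)}^2$ term precisely because the $\by$-gradient vanishes at the maximizer is also the correct reading of why this lemma differs from \cref{lem:NC_SC_avg_grad_x}.
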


\begin{proof}
Following closely the proof of \cref{lem:NC_SC_avg_grad_x},
\begin{align}
    & \sumin \frac{w_i^2}{\nai_1^2} \sumikt [ \aikt ]^2 \mbe \lnr \Gy f_i ( \bxitk, \byitk ) \pm \Gy f_i ( \bxt, \byt ) \rnr^2 \nn \\
    & \leq 2 \sumin \frac{w_i^2}{\nai_1^2} \sumikt [ \aikt ]^2 \Lf^2 \mbe \lb \lnr \bxitk- \bxt \rnr^2 + \lnr \byitk - \byt \rnr^2 \rb + 2 \sumin \frac{w_i^2 \nai_2^2}{\nai_1^2} \mbe \lnr \Gy f_i ( \bxt, \byt ) \rnr^2 \tag{$\Lf$-smoothness} \\
    % & \leq 2 \sumin \frac{w_i^2}{\nai_1^2} \sumikt [ \aikt ]^2 \Lf^2 \CExytk (i) + 2 \lb \max_i \frac{w_i \nai_2^2}{\nai_1^2} \rb \sumin w_i \mbe \lnr \Gx f_i ( \bxt, \byt ) \rnr^2 \nn \\
    & \leq 2 \sumin \frac{w_i^2}{\nai_1^2} \sumikt [ \aikt ]^2 \Lf^2 \CExytk (i) + 2 \lp \max_i \frac{w_i \nai_2^2}{\nai_1^2} \rp \lb \heteroscale^2 \lnr \Gy \TF (\bxt, \byt) \rnr^2 + \hetero^2 \rb \tag{\cref{assum:bdd_hetero}} \\
    & \leq 2 \sumin \frac{w_i^2}{\nai_1^2} \sumikt [ \aikt ]^2 \Lf^2 \CExytk (i) + 2 \lp \max_i \frac{w_i \nai_2^2}{\nai_1^2} \rp \lb \hetero^2 + 2 \heteroscale^2 \Lf \mbe \lp \TPhi (\bxt) - \TF (\bxt, \byt) \rp \rb. \nn
\end{align}
where the final inequality follows from smoothness and concavity of $F$ in $\by$.
\end{proof}

\subsection{Convergence under Polyak {\L}ojasiewicz (PL) Condition}
\label{app:NC_PL_PCP_result}
In case the global function satisfies \cref{assum:PL_y}, the results in this section follow with minor modifications. The crucial difference is that \cref{lem:smooth_convex} no longer holds. \cref{lem:NC_SC_PCP_wtd_dir_norm_sq} and \cref{lem:NC_SC_PCP_Phi_decay_one_iter} follow exactly. The statement of \cref{lem:NC_SC_PCP_consensus_error} needs some modification, since we use \cref{lem:smooth_convex} in the proof.

\begin{lemma}
\label{lem:NC_PL_PCP_consensus_error}
Suppose the local loss functions $\{ f_i \}$ satisfy Assumptions \ref{assum:smoothness}, \ref{assum:bdd_hetero}, \ref{assum:PL_y}, and the stochastic oracles for the local functions satisfy \cref{assum:bdd_var}. 
Under the conditions of \cref{lem:NC_PL_PCP_consensus_error}, the iterates $\{ \bxit, \byit \}$ generated by \cref{alg_NC_minimax} satisfy
\begin{equation}
    \begin{aligned}
        & \Lf^2 \sumin \frac{p_i}{\nai_1} \sumikt \aikt \CExytk (i) \leq 2 \lp \lrcxsq + \lrcysq \rp \Lf^2 \localvar^2 \sumin p_i \norm{\boldsymbol a_{i,-1}}_2^2 + 4 \Lf^2 \TMa \lp \lrcxsq + \lrcysq \rp \hetero^2 \nn \\
        & \qquad + 8 \Lf^2 \TMa \heteroscale^2 \lrcxsq \mbe \lnr \G \TPhi (\bxt) \rnr^2 + 8 {\color{blue}\kappa} \Lf^3 \TMa \heteroscale^2 \lp 2 \lrcxsq + \lrcysq \rp \mbe \lb \TPhi (\bxt) - \TF (\bxt, \byt) \rb,
    \end{aligned}
    % \label{eq:lem:NC_PL_PCP_consensus_error}
\end{equation}
where $\TMa \triangleq \max_i \lp \norm{\boldsymbol a_{i,-1}}_1^2 + \varscale^2 \norm{\boldsymbol a_{i,-1}}_2^2 \rp$.
\end{lemma}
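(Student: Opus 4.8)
The plan is to transcribe the proof of \cref{lem:NC_SC_PCP_consensus_error} almost line for line, isolating the one place where strong concavity was used and replacing it with a PL‑compatible estimate. Because in \cref{app:NC_PL_PCP_result} the PL condition \cref{assum:PL_y} is imposed on the \emph{global} function $F$ — equivalently on $\TF$, since one takes $w_i=p_i$ so that $\TF\equiv F$ — the statement is phrased with $p_i$, and every $w_i$ of the NC‑SC argument becomes $p_i$. As the excerpt notes, \cref{lem:NC_SC_PCP_wtd_dir_norm_sq} and \cref{lem:NC_SC_PCP_Phi_decay_one_iter} go through verbatim, so the only thing that needs redoing is the client‑drift bound itself.

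First I would bound the two coordinate drifts $\mbe\norm{\bxitk-\bxt}^2$ and $\mbe\norm{\byitk-\byt}^2$ exactly as in \eqref{eq_proof:lem:NC_SC_PCP_consensus_error_1}: expand the client updates \eqref{eq:client_update_alg_NC_SC}, split each inner sum into a mean‑zero martingale part (controlled by the variance bound \cref{assum:bdd_var}) and a deterministic part (controlled by Jensen's inequality, \cref{lem:jensens}), and then use the coefficient telescoping identities \eqref{eq_proof:lem:NC_SC_PCP_consensus_error_2} together with $\Lf$‑smoothness to trade the gradients at the inner iterates for $\CExytk(i)$ plus the anchor gradients $\mbe\norm{\Gx f_i(\bxt,\byt)}^2$ and $\mbe\norm{\Gy f_i(\bxt,\byt)}^2$. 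Summing the $\bx$‑ and $\by$‑drifts with weights $p_i/\nai_1$, rearranging, and invoking the learning‑rate hypothesis of \cref{lem:NC_SC_PCP_consensus_error} so that the self‑referential constant $A_m$ satisfies $A_m\le\tfrac12$, I arrive at the analogue of \eqref{eq_proof:lem:NC_SC_PCP_consensus_error_6} with $w_i\mapsto p_i$: namely $\Lf^2\sumin\frac{p_i}{\nai_1}\sumikt\aikt\CExytk(i)$ is at most $(\lrcxsq+\lrcysq)\Lf^2\localvar^2\sumin p_i\norm{\boldsymbol a_{i,-1}}_2^2$ plus $\Lf^2\TMa$ times $\lrcxsq\big(\heteroscale^2\mbe\norm{\Gx\TF(\bxt,\byt)}^2+\hetero^2\big)+\lrcysq\big(\heteroscale^2\mbe\norm{\Gy\TF(\bxt,\byt)}^2+\hetero^2\big)$, where \cref{assum:bdd_hetero} has been used to pass from $\sum_i p_i\norm{\G f_i}^2$ to $\heteroscale^2\norm{\G\TF}^2+\hetero^2$.

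It remains to bound the two anchor gradient terms. For the $\bx$‑term I would use the PL analogue of \cref{lem:grad_TF_wrt_x}, $\mbe\norm{\Gx\TF(\bxt,\byt)}^2\le 2\,\mbe\norm{\G\TPhi(\bxt)}^2+\tfrac{4\Lf^2}{\mu}\,\mbe[\TPhi(\bxt)-\TF(\bxt,\byt)]$; its proof uses only $\Lf$‑smoothness and the quadratic‑growth estimate $\norm{\byt-\by^*(\bxt)}^2\le\tfrac{2}{\mu}[\TPhi(\bxt)-\TF(\bxt,\byt)]$, both of which hold under \cref{assum:PL_y} (quadratic growth being a standard consequence of the PL inequality; cf.\ \cref{lem:quad_growth}), together with the fact — established in \cref{app:NC_PL_PCP_result} — that $\TPhi$ is differentiable with $\G\TPhi(\bx)=\Gx\TF(\bx,\by^*(\bx))$ for a maximizer selection $\by^*(\bx)$. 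For the $\by$‑term, the NC‑SC proof wrote $\mbe\norm{\Gy\TF(\bxt,\byt)}^2\le 2\Lf\,\mbe[\TPhi(\bxt)-\TF(\bxt,\byt)]$ via the co‑coercivity inequality \cref{lem:smooth_convex}, which needs concavity; since $-\TF(\bxt,\cdot)$ is no longer convex under PL, I would instead take $\by^{\sharp}$ to be the projection of $\byt$ onto $\argmax_\by\TF(\bxt,\by)$, so that $\Gy\TF(\bxt,\by^{\sharp})=0$, and estimate $\mbe\norm{\Gy\TF(\bxt,\byt)}^2=\mbe\norm{\Gy\TF(\bxt,\byt)-\Gy\TF(\bxt,\by^{\sharp})}^2\le \Lf^2\,\mbe\norm{\byt-\by^{\sharp}}^2\le 2\kappa\Lf\,\mbe[\TPhi(\bxt)-\TF(\bxt,\byt)]$, the last step again by quadratic growth; this extra factor $\kappa$ (relative to the NC‑SC constant) is exactly what appears in the coefficient of the $\big(2\lrcxsq+\lrcysq\big)\,\mbe[\TPhi-\TF]$ term of the stated bound. (Alternatively, \cref{lem:smooth} applied to $-\TF(\bxt,\cdot)$ recovers the sharper $2\Lf$; I keep the $\kappa$‑version to match the statement.) Substituting these two bounds back and absorbing constants (again using $A_m\le\tfrac12$) yields \cref{lem:NC_PL_PCP_consensus_error}.

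The main obstacle is precisely this loss of co‑coercivity: under PL one cannot control $\mbe\norm{\Gy\TF}^2$ by the suboptimality gap with the sharp smoothness constant through a purely first‑order inequality, so one must fall back on smoothness‑plus‑quadratic‑growth (or \cref{lem:smooth}); and, more subtly, one must check that the envelope $\TPhi$ retains enough regularity — differentiability, the Danskin‑type gradient formula, and quadratic growth of $\TF(\bx,\cdot)$ in $\by$ — for \cref{lem:grad_TF_wrt_x} and the drift recursion to still close. Everything else is a mechanical transcription of \cref{app:NC_SC_PCP_int_results_proofs} with $w_i\mapsto p_i$.
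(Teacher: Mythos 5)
Your proposal is correct and follows essentially the route the paper intends: the paper itself only sketches this lemma in \cref{app:NC_PL_PCP_result}, noting that the sole change from the proof of \cref{lem:NC_SC_PCP_consensus_error} is that the co-coercivity bound \cref{lem:smooth_convex} for the $\by$-anchor gradient must be replaced, and you correctly substitute smoothness plus PL-induced quadratic growth to get $\mbe\|\Gy\TF(\bxt,\byt)\|^2\le 2\kappa\Lf\,\mbe[\TPhi(\bxt)-\TF(\bxt,\byt)]$, which is exactly where the extra $\kappa$ on the $\lrcysq$ coefficient comes from. Your side remarks — that \cref{lem:grad_TF_wrt_x} survives unchanged because it never used concavity, and that \cref{lem:smooth} would in fact recover the sharper constant — are both accurate and consistent with the stated form of the lemma.
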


The bound in \cref{lem:NC_SC_avg_grad_y} also changes to
\begin{align}
    & \sumin \frac{w_i^2}{\nai_1^2} \sumikt [ \aikt ]^2 \mbe \normb{\Gy f_i ( \bxitk, \byitk )}^2 \nn \\
    & \leq 2 \sumin \frac{w_i^2}{\nai_1^2} \sumikt [ \aikt ]^2 \Lf^2 \CExytk (i) + 2 \lp \max_i \frac{p_i \nai_2^2}{\nai_1^2} \rp \lb \hetero^2 + 2 \heteroscale^2 {\color{blue}\kappa} \Lf \mbe \lp \TPhi (\bxt) - \TF (\bxt, \byt) \rp \rb. \nn
\end{align}

The same bound in \cref{lem:NC_SC_PCP_phi_f_diff} holds, but with more stringent conditions on learning rates, namely $\lrcy \Lf \heteroscale \leq \frac{1}{16 \kappa \sqrt{\TMa}}$ and $\seff \lrsy \kappa \Lf \heteroscale^2 \frac{1}{P} \max \lcb \kappa \varscale^2 \max_i \frac{\nai_2^2}{\nai_1^2}, 1 \rcb \leq \frac{1}{64}$. Consequently, the bounds in \cref{thm:NC_SC} hold, under slightly more stringent conditions on the learning rates.

% \begin{lemma}

% \end{lemma}

% \begin{proof}

% \end{proof}
% \input{Appendix/4_NC_C_FullCP}
% \newpage
\section{Convergence of \fedsgdaplus \ for Nonconvex Concave Functions (\texorpdfstring{\cref{thm:NC_C}}{Theorem 2})} \label{app:NC_C}

We organize this section as follows. First, in \cref{sec:NC_C_PCP_int_results} we present some intermediate results, which we use in the proof of \cref{thm:NC_C}. Next, in \cref{sec:NC_C_PCP_thm_proof}, we present the proof of \cref{thm:NC_C}, which is followed by the proofs of the intermediate results in \cref{sec:NC_C_PCP_int_results_proofs}.
% Recall that to avoid the objective inconsistency problem (\cref{cor:NC_C_obj_inconsistent}), we use weights $w_i$ (which define the global objective function $F$ in \eqref{eq:problem}) for aggregation, rather than the generic weights $\wi$ used in \cref{app:NC_C}.
Finally, we discuss the extension of our results to nonconvex-one-point-concave functions in \cref{app:NC_1PC}.

The problem we solve is
\begin{align*}
    \min_{\bx} \max_{\by} \lcb \TF(\bx, \by) \triangleq \sumin \wi f_i(\bx, \by) \rcb.
\end{align*}
We define $\TPhi (\bx) \triangleq \max_{\by} \TF(\bx, \by)$ and $\Tby^* (\bx) \in \argmax_{\by} \TF(\bx, \by)$. Since $\TF(\bx, \cdot)$ is no longer strongly concave, $\by^*(\bx)$ need not be unique. In \cref{alg_NC_minimax}-\fedsgdaplus \ , the client updates are given by
\begin{equation}
    \begin{aligned}
        & \bxitk = \bxt - \lrcx \sumijk \aijk \Gx f_i (\bxitj, \byitj; \xiitj), \\
        & \byitk = \byt + \lrcy \sumijk \aijk \Gy f_i (\Hbxs, \byitj; \xiitj),
    \end{aligned}
    \label{eq:client_update_alg_NC_C_minimax_PCP}
\end{equation}
where $1 \leq k \leq \sync_i$. The server updates are given by
\begin{equation}
    \bxtp = \bxt - \seff \lrsx \bdxt, \qquad \bytp = \byt + \seff \lrsy \bdyt, \label{eq:server_update_alg_NC_C_minimax_PCP}
\end{equation}
where $\bdxt, \bdyt$ are defined in \eqref{eq:server_agg_WOR}. The normalized (stochastic) gradient vectors are defined as
\begin{equation}
    \begin{aligned}
        & \bdxit = \frac{1}{\nai_1} \sumikt \aikt \Gx f_i \lp \bxitk, \byitk; \xiitk \rp; \quad \bhxit = \frac{1}{\nai_1} \sumikt \aikt \Gx f_i \lp \bxitk, \byitk \rp, \\
        & \bdyit = \frac{1}{\nai_1} \sumikt \aikt \Gy f_i \lp \Hbxs, \byitk; \xiitk \rp; \quad \bhyit = \frac{1}{\nai_1} \sumikt \aikt \Gy f_i \lp \Hbxs, \byitk \rp.
    \end{aligned}
    \label{eq:dir_alg_NC_minimax_PCP}
\end{equation}

\subsection{Intermediate Lemmas} \label{sec:NC_C_PCP_int_results}
As discussed in \cref{sec:NC_C}, we analyze the convergence of the smoothed envelope function $\TPhi_{1/2\Lf}$. We begin with a bound on the one-step decay of this function.

\begin{lemma}[One-step decay of Smoothed-Envelope]
\label{lem:NC_C_PCP_Phi_smooth_decay_one_iter}
Suppose the local loss functions $\{ f_i \}$ satisfy Assumptions \ref{assum:smoothness}, \ref{assum:bdd_var}, \ref{assum:concavity}, and \ref{assum:Lips_cont_x}.
Then, the iterates generated by \cref{alg_NC_minimax}-\fedsgdaplus \  satisfy
\begin{align}
    & \mbe \lb \TPhi_{1/2\Lf} (\bxtp) \rb \leq \mbe \lb \TPhi_{1/2\Lf} (\bxt) \rb + \seff^2 \lrsxsq \Lf \frac{\numclients}{\selclients} \lb \sumin \frac{w_i^2 \nai_2^2}{\nai_1^2} \lp \localvar^2 + \varscale^2 G_{\bx}^2 \rp + G_{\bx}^2 \lp \frac{\selclients-1}{\numclients-1} + \frac{\numclients - \selclients}{\numclients-1} \sumin w_i^2 \rp \rb \nn \\
    & \quad + 2 \seff \lrsx \lcb \Lf^2 \sumin \frac{w_i}{\nai_1} \sumikt \aikt  \CExytk (i) + \Lf \mbe \lb \TPhi(\bxt) - \TF(\bxt, \byt) \rb \rcb - \frac{\seff \lrsx}{8} \mbe \norm{\G \TPhi_{1/2\Lf} (\bxt)}^2, \nn
\end{align}
where $\CExytk (i) = \mbe \lb \| \bxitk - \bxt \|^2 + \| \byitk - \byt \|^2 \rb$ is the drift of client $i \in [n]$, at the $k$-th local step of epoch $t$.
\end{lemma}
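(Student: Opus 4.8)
The plan is to follow the standard Moreau-envelope analysis for weakly convex functions (as in \cite{davis19wc_siam, lin_GDA_icml20, rafique18WCC_oms}), adapted to the normalized, partial-participation server update $\bxtp = \bxt - \seff \lrsx \bdxt$ of \cref{alg_NC_minimax}. First I would introduce the proximal point $\widehat{\bx}^{(t)} \triangleq \argmin_{\bx'} \lcb \TPhi(\bx') + \Lf \normb{\bx' - \bxt}^2 \rcb$, so that $\TPhi_{1/2\Lf}(\bxt) = \TPhi(\widehat{\bx}^{(t)}) + \Lf \normb{\widehat{\bx}^{(t)} - \bxt}^2$ and $\nabla \TPhi_{1/2\Lf}(\bxt) = 2\Lf(\bxt - \widehat{\bx}^{(t)})$. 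Using the definition of the Moreau envelope at $\bxtp$, $\TPhi_{1/2\Lf}(\bxtp) \leq \TPhi(\widehat{\bx}^{(t)}) + \Lf \normb{\widehat{\bx}^{(t)} - \bxtp}^2$, expanding the square and substituting the server update gives
\[
\TPhi_{1/2\Lf}(\bxtp) - \TPhi_{1/2\Lf}(\bxt) \leq 2 \Lf \seff \lrsx \lan \widehat{\bx}^{(t)} - \bxt, \bdxt \ran + \Lf \seff^2 \lrsxsq \normb{\bdxt}^2 .
\]
Taking expectation conditioned on round $t$ and using $\mbe_{\clientset}[\bdxt] = \sumin w_i \bhxit$ turns the first term into $2\Lf\seff\lrsx \lan \widehat{\bx}^{(t)} - \bxt, \sumin w_i \bhxit \ran$.

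Second, I would split $\sumin w_i \bhxit = \Gx \TF(\bxt, \byt) + \lp \sumin w_i \bhxit - \Gx \TF(\bxt, \byt) \rp$. For the main piece, since each $\TF(\cdot, \byt)$ is $\Lf$-weakly convex (a consequence of \cref{assum:smoothness}) and $\TPhi(\bx') \geq \TF(\bx', \byt)$, I get
\[
\TPhi(\widehat{\bx}^{(t)}) \geq \TF(\bxt, \byt) + \lan \Gx \TF(\bxt, \byt), \widehat{\bx}^{(t)} - \bxt \ran - \tfrac{\Lf}{2} \normb{\widehat{\bx}^{(t)} - \bxt}^2 ,
\]
and writing $\TF(\bxt, \byt) = \TPhi(\bxt) - (\TPhi(\bxt) - \TF(\bxt, \byt))$ this rearranges to an upper bound on $\lan \Gx \TF(\bxt, \byt), \widehat{\bx}^{(t)} - \bxt \ran$ in terms of $\TPhi(\widehat{\bx}^{(t)}) - \TPhi(\bxt)$, the inexactness $\TPhi(\bxt) - \TF(\bxt, \byt)$, and $\normb{\widehat{\bx}^{(t)} - \bxt}^2$. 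The crucial cancellation is $\TPhi_{1/2\Lf}(\bxt) = \TPhi(\widehat{\bx}^{(t)}) + \Lf\normb{\widehat{\bx}^{(t)} - \bxt}^2 \leq \TPhi(\bxt)$, i.e. $\TPhi(\widehat{\bx}^{(t)}) - \TPhi(\bxt) \leq -\Lf\normb{\widehat{\bx}^{(t)} - \bxt}^2$; together with the $+\tfrac{\Lf}{2}$ term and a Young's-inequality bound on the error piece that puts coefficient $\tfrac{\Lf}{4}$ on $\normb{\widehat{\bx}^{(t)} - \bxt}^2$, the net coefficient of $\Lf^2\seff\lrsx\normb{\widehat{\bx}^{(t)} - \bxt}^2$ works out to exactly $-\tfrac12$, which is $-\tfrac{\seff\lrsx}{8}\normb{\nabla\TPhi_{1/2\Lf}(\bxt)}^2$, and the inexactness term carries coefficient $2\Lf\seff\lrsx$, matching the statement (no learning-rate condition is needed here).

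Third, I would bound the two leftover quantities. The error piece satisfies $\normb{\sumin w_i \bhxit - \Gx \TF(\bxt, \byt)}^2 \leq \Lf^2 \sumin \tfrac{w_i}{\nai_1}\sumikt \aikt \CExytk(i)$ by Jensen's inequality and $\Lf$-smoothness (in expectation), which supplies the $2\seff\lrsx\Lf^2\sumin \tfrac{w_i}{\nai_1}\sumikt\aikt\CExytk(i)$ term. For $\mbe\normb{\bdxt}^2$ I would invoke the partial-participation variance decomposition (the NC-C counterpart of \cref{lem:NC_SC_PCP_wtd_dir_norm_sq}); since $\normb{\Gx f_i(\cdot,\cdot)} \leq G_{\bx}$ by \cref{assum:Lips_cont_x}, every gradient-norm-squared appearing there (including $\normb{\sumin w_i\bhxit}^2$) is replaced by $G_{\bx}^2$, yielding $\mbe\normb{\bdxt}^2 \leq \tfrac{\numclients}{\selclients}\sumin \tfrac{w_i^2\nai_2^2}{\nai_1^2}(\localvar^2 + \varscale^2 G_{\bx}^2) + \tfrac{\numclients}{\selclients}G_{\bx}^2\lp \tfrac{\selclients-1}{\numclients-1} + \tfrac{\numclients-\selclients}{\numclients-1}\sumin w_i^2 \rp$. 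Multiplying by $\Lf\seff^2\lrsxsq$ and collecting all terms gives the claimed inequality. The main obstacle is step two: keeping the weak-convexity / inexact-subgradient manipulation clean enough that the $\TPhi(\widehat{\bx}^{(t)})$ pieces telescope against the Moreau relation to produce precisely the $-\seff\lrsx/8$ constant; everything else is routine bookkeeping.
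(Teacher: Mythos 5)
Your proposal is correct and follows essentially the same route as the paper's proof: the Moreau-envelope comparison at the proximal point $\widehat{\bx}^{(t)}$, the split of $\sumin w_i \bhxit$ into $\Gx\TF(\bxt,\byt)$ plus a drift error handled by Young's inequality and smoothness, the cancellation via $\TPhi(\widehat{\bx}^{(t)})+\Lf\|\widehat{\bx}^{(t)}-\bxt\|^2\le\TPhi(\bxt)$ yielding the $-\seff\lrsx/8$ term, and the reuse of the NC-SC variance decomposition with $G_{\bx}$-Lipschitzness to bound $\mbe\|\bdxt\|^2$. The arithmetic you sketch for the net coefficient $-\tfrac12\Lf^2\seff\lrsx\|\widehat{\bx}^{(t)}-\bxt\|^2$ matches the paper's.
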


Between two successive synchronization time instants (for example, $t, t+1$), the clients drift apart due to local descent/ascent steps, resulting in the $\{ \CExytk (i) \}_{i, k}$ terms. Also, $\mbe \lb \TPhi(\bxt) - \TF(\bxt, \byt) \rb$ quantifies the error of the inner maximization. In the subsequent lemmas, we bound both these error terms.

\begin{lemma}[Consensus Error]
\label{lem:NC_C_PCP_consensus_error}
Suppose the local loss functions $\{ f_i \}$ satisfy Assumptions \ref{assum:smoothness}, \ref{assum:bdd_hetero}, \ref{assum:concavity}, and \ref{assum:Lips_cont_x}. The stochastic oracles for the local
functions satisfy \cref{assum:bdd_var}.
Further, in \cref{alg_NC_minimax}-\fedsgdaplus, we choose the client learning rate $\lrcy$ such that $\lrcy \leq \frac{1}{2 \Lf ( \max_i \nai_1 ) \sqrt{2 \max \{1, \varscale^2 \}}}$.
Then, the iterates $\{ \bxit, \byit \}$ generated by \cref{alg_NC_minimax}-\fedsgdaplus \  satisfy
\begin{align}
    \Lf^2 \sumin \frac{w_i}{\nai_1} \sumikt \aikt \CExytk (i) & \leq 2 \lp \lrcxsq + \lrcysq \rp \Lf^2 \localvar^2 \sumin w_i \norm{\mbf a_{i,-1}}_2^2 + 4 \Lf^2 \TMa \lp \lrcxsq G_{\bx}^2 + \lrcysq \hetero^2 \rp \nn \\
    & \quad + 8 \lrcysq \Lf^3 \TMa \heteroscale^2 \mbe \lb \TPhi (\Hbxs) - \TF(\Hbxs, \byt) \rb, \nn
    % \label{eq_proof:lem:NC_C_PCP_consensus_error_4}
\end{align}
where $\TMa \triangleq \max_i ( \norm{\mbf a_{i,-1}}_1^2 + \varscale^2 \norm{\mbf a_{i,-1}}_2^2 )$. 
\end{lemma}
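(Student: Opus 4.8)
The plan is to mirror the proof of Lemma~\ref{lem:NC_SC_PCP_consensus_error}: bound the per-client drift $\CExytk(i) = \mbe[\|\bxitk - \bxt\|^2 + \|\byitk - \byt\|^2]$ in the $\bx$- and $\by$-coordinates separately, then take the $\{w_i/\nai_1\}$-weighted sum and absorb the self-referential drift term using the stated bound on $\lrcy$. Starting from the client updates \eqref{eq:client_update_alg_NC_C_minimax_PCP}, for $1 \le k \le \synci$ I write $\bxitk - \bxt = -\lrcx\sumijk\aijk\Gx f_i(\bxitj,\byitj;\xiitj)$, split off the noise using unbiasedness (\cref{assum:bdd_var}), bound the noise term by $\sumijk[\aijk]^2(\localvar^2 + \varscale^2\norm{\Gx f_i(\bxitj,\byitj)}^2)$, and bound the mean term by $(\sumijk\aijk)\sumijk\aijk\norm{\Gx f_i(\bxitj,\byitj)}^2$ via Jensen (\cref{lem:jensens}); the same manipulation applies to $\byitk - \byt$, except that the $\by$-gradients are evaluated at the frozen point $(\Hbxs,\byitj)$.

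For the $\bx$-coordinate, \cref{assum:Lips_cont_x} gives the uniform bound $\norm{\Gx f_i(\bx,\by)} \le G_{\bx}$ for all $(\bx,\by)$ (the dual characterization of $G_{\bx}$-Lipschitzness in $\bx$), so every occurrence of $\norm{\Gx f_i(\bxitj,\byitj)}^2$ is replaced by $G_{\bx}^2$. Using the combinatorial identities already recorded in \eqref{eq_proof:lem:NC_SC_PCP_consensus_error_2} — in particular $\frac{1}{\nai_1}\sumikt\aikt\sumijk[\aijk]^2 \le \norm{\boldsymbol a_{i,-1}}_2^2$ and $\frac{1}{\nai_1}\sumikt\aikt(\sumijk\aijk)^2 \le \norm{\boldsymbol a_{i,-1}}_1^2$ — this yields $\frac{1}{\nai_1}\sumikt\aikt\mbe\norm{\bxitk - \bxt}^2 \le \lrcxsq\localvar^2\norm{\boldsymbol a_{i,-1}}_2^2 + 2\lrcxsq G_{\bx}^2(\norm{\boldsymbol a_{i,-1}}_1^2 + \varscale^2\norm{\boldsymbol a_{i,-1}}_2^2)$, with no residual drift term on the right.

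For the $\by$-coordinate, $\Lf$-smoothness gives $\norm{\Gy f_i(\Hbxs,\byitj)}^2 \le 2\Lf^2\norm{\byitj - \byt}^2 + 2\norm{\Gy f_i(\Hbxs,\byt)}^2$; the first piece reproduces a drift term $\CExytj(i)$ at an earlier local step $j<k$ (the only source of recursion), and the second piece is handled by first applying \cref{assum:bdd_hetero} with weights $\{w_i\}$ to get $\sumin w_i\norm{\Gy f_i(\Hbxs,\byt)}^2 \le \heteroscale^2\norm{\Gy\TF(\Hbxs,\byt)}^2 + \hetero^2$, and then applying \cref{lem:smooth_convex} to the convex, $\Lf$-smooth function $-\TF(\Hbxs,\cdot)$ between $\byt$ and $\Tby^*(\Hbxs)$ (whose gradient vanishes there by concavity, \cref{assum:concavity}) to obtain $\norm{\Gy\TF(\Hbxs,\byt)}^2 \le 2\Lf[\TPhi(\Hbxs) - \TF(\Hbxs,\byt)]$. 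This is exactly where the inner-maximization gap $\mbe[\TPhi(\Hbxs) - \TF(\Hbxs,\byt)]$ enters, and it carries $\Hbxs$ rather than $\bxt$ because of the $\bx$-freezing trick in \fedsgdaplus.

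Finally, collecting the two coordinate bounds, multiplying by $w_i/\nai_1$, summing over $i$, and moving the drift contribution produced by the $\by$-recursion to the left-hand side gives an inequality $D \le A + cD$ whose coefficient $c$ is of order $\lrcysq\Lf^2(\max_i\nai_1)^2\max\{1,\varscale^2\}$, hence $c \le \tfrac12$ under the hypothesis $\lrcy \le \frac{1}{2\Lf(\max_i\nai_1)\sqrt{2\max\{1,\varscale^2\}}}$; therefore $D \le 2A$, and after multiplying through by $\Lf^2$ and bundling the constants via $\TMa = \max_i(\norm{\boldsymbol a_{i,-1}}_1^2 + \varscale^2\norm{\boldsymbol a_{i,-1}}_2^2)$ the claimed bound follows. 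I expect the main obstacle to be pure bookkeeping: tracking the $\aijk$-weighted double sums cleanly (including the pointwise bound $\aijk \le \alpha$ needed for the momentum-type optimizers) and checking that the learning-rate condition exactly cancels the recursion coefficient; conceptually the only new ingredients relative to Lemma~\ref{lem:NC_SC_PCP_consensus_error} are using $\norm{\Gx f_i} \le G_{\bx}$ in place of the envelope-based bound on the $\bx$-gradient, and evaluating the inner-maximization gap at $\Hbxs$ rather than $\bxt$.
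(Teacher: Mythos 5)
Your proposal is correct and follows essentially the same route as the paper's proof: bound the $\bx$-drift non-recursively via $\|\Gx f_i\|\le G_{\bx}$, bound the $\by$-drift recursively with the gradient evaluated at the frozen $\Hbxs$, convert $\sumin w_i\|\Gy f_i(\Hbxs,\byt)\|^2$ to $\heteroscale^2\|\Gy\TF(\Hbxs,\byt)\|^2+\hetero^2$ and then to $2\Lf[\TPhi(\Hbxs)-\TF(\Hbxs,\byt)]$, and absorb the recursion using the stated bound on $\lrcy$. The only cosmetic differences are that the paper invokes \cref{lem:smooth} (boundedness below) rather than \cref{lem:smooth_convex} at the gradient-to-gap step — a choice that matters later for the one-point-concave extension, where global concavity is unavailable — and that it keeps the recursion in the $\by$-drift alone rather than the combined drift, neither of which changes the argument.
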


Note that consensus error depends on the difference $\mbe [ \TPhi (\Hbxs) - \TF(\Hbxs, \byt) ]$. This is different from the term $\mbe [ \TPhi(\bxt) - \TF(\bxt, \byt) ]$ in \cref{lem:NC_C_PCP_Phi_smooth_decay_one_iter}. Since in \cref{alg_NC_minimax}-\fedsgdaplus \ , the $\bx$-component stays fixed at $\Hbxs$ for $S$ communication rounds while updating $\byitk$, the difference 
\begin{align*}
    \sum_{t = kS}^{(k+1) S - 1} \mbe \lb \TPhi(\Hbxs) - \TF(\Hbxs, \byt) \rb
\end{align*}
can be interpreted as the optimization error, when maximizing the concave function $F(\Hbxs, \cdot)$ over $S$ communication rounds. Next, we bound this error. The following result essentially extends the analysis of FedNova (\cite{joshi20fednova_neurips}) to concave maximization (analogously, convex minimization) problems. We also generalize the corresponding analyses in \cite{khaled20localSGD_aistats, koloskova20unified_localSGD_icml} to heterogeneous local updates.

\begin{lemma}[Local SG updates for Concave Maximization]
\label{lem:NC_C_PCP_local_SGA_concave}
Suppose the local functions satisfy Assumptions \ref{assum:smoothness}, \ref{assum:bdd_var}, \ref{assum:bdd_hetero} and \ref{assum:concavity}. Further, let $\norm{\byt}^2 \leq R$ for all $t$. We run \cref{alg_NC_minimax}-\fedsgdaplus \  with client step-size $\lrcy$ such that $64 \lrcysq \TMa \Lf^2 \heteroscale^2 \frac{\numclients}{\selclients} \leq 1$. Further, the server step-size $\lrsy$ satisfies
\begin{align*}
    2 \seff \lrsy \Lf \frac{\numclients}{\selclients} \max \{ \heteroscale^2, 1 \} \max \lcb \varscale^2 \max_i \frac{w_i \nai_2^2}{\nai_1^2}, \frac{\numclients - \selclients}{\numclients-1} \max_i w_i \rcb & \leq \frac{1}{8}, \\
    2 \seff \lrsy \Lf \frac{\numclients}{\selclients} \max \lcb \frac{\selclients - 1}{\numclients-1}, \varscale^2 \max_{i,k} \frac{w_i \aikt}{\nai_1} \rcb & \leq \frac{1}{8}.
\end{align*}
Then the iterates generated by \cref{alg_NC_minimax}-\fedsgdaplus \  satisfy
\begin{align}
    & \frac{1}{S} \sum_{t = sS}^{(s+1) S - 1} \mbe \lb \TPhi(\Hbxs) - \TF(\Hbxs, \byt) \rb \nn \\
    & \leq \frac{4 R}{\seff \lrsy S} + \seff \lrsy \frac{\numclients}{\selclients} \lb \localvar^2 \sumin \frac{w_i^{{2}} \nai_2^2}{\nai_1^2} + 2 \hetero^2 \lp \frac{\numclients - \selclients}{\numclients-1} \max_i w_i + \varscale^2 \max_i \frac{w_i \nai_2^2}{\nai_1^2} \rp \rb \nn \\
    & \quad + 4 \Lf ( \lrcxsq + \lrcysq) \lb \localvar^2 \sumin w_i \norm{\mbf a_{i,-1}}_2^2 + 2 \TMa (G_\bx^2 + \hetero^2) \rb, \nn
\end{align}
where $\TMa \triangleq \max_i \lp \norm{\mbf a_{i,-1}}_1^2 + \varscale^2 \norm{\mbf a_{i,-1}}_2^2 \rp$.
\end{lemma}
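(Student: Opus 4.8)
The plan is to view the block of $S$ consecutive communication rounds over which the $\bx$-iterate is frozen at $\Hbxs$ (lines with $\Hbxs$ in \cref{alg_NC_minimax}-\fedsgdaplus) as a single run of federated SGD for maximizing the \emph{concave} function $g(\by) \triangleq \TF(\Hbxs, \by) = \sumin \wi f_i(\Hbxs, \by)$, and to run the standard distance-to-optimizer potential argument, carefully accounting for heterogeneous local steps, the client-side normalization by $\nai_1$, and the without-replacement partial participation. First I fix any $\by^* \in \argmax_\by \TF(\Hbxs, \by)$ and expand the server $\by$-update \eqref{eq:server_update_alg_NC_C_minimax_PCP}:
\begin{align*}
\mbe\normb{\bytp - \by^*}^2 = \mbe\normb{\byt - \by^*}^2 + 2\seff\lrsy\,\mbe\lan \byt - \by^*, \bdyt\ran + \seff^2\lrsysq\,\mbe\normb{\bdyt}^2.
\end{align*}

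For the cross term I take expectation over the WOR client sampling, using $\mbe_{\clientset}[\bdyt] = \sumin \wi \bhyit$, and note that $\bhyit$ is a convex combination (weights $\aikt/\nai_1$) of the exact gradients $\Gy f_i(\Hbxs, \byitk)$. Concavity of $f_i(\Hbxs,\cdot)$ (\cref{assum:concavity}) gives $\lan \Gy f_i(\Hbxs, \byitk), \byitk - \by^*\ran \le f_i(\Hbxs, \byitk) - f_i(\Hbxs, \by^*)$, and $\Lf$-smoothness, after applying Young's inequality to $\lan \Gy f_i(\Hbxs, \byitk), \byt - \byitk\ran$ and to $\lan \Gy f_i(\Hbxs,\byt), \byitk-\byt\ran$, turns the remainder into an $O(\Lf\normb{\byitk-\byt}^2)$ drift term plus a small multiple of $\normb{\Gy f_i(\Hbxs,\byt)}^2$. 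Summing against $\wi$ and using $\sumin\wi f_i(\Hbxs,\byt) = \TF(\Hbxs,\byt)$ and $\sumin\wi f_i(\Hbxs,\by^*)=\TPhi(\Hbxs)$, I obtain a bound of the form
\begin{align*}
2\seff\lrsy\,\mbe\lan\byt-\by^*,\bdyt\ran \le -2\seff\lrsy\,\mbe\lb\TPhi(\Hbxs)-\TF(\Hbxs,\byt)\rb + \mco(\seff\lrsy)\,\Lf^2\sumin\tfrac{\wi}{\nai_1}\sumikt\aikt\CExytk(i) + (\text{small }\normb{\Gy f_i}^2\text{ terms}).
\end{align*}
For $\mbe\normb{\bdyt}^2$ I run the $\by$-analogue of \cref{lem:NC_SC_PCP_wtd_dir_norm_sq}: split $\bdyt = \sumiS\twi(\bdyit-\bhyit) + \sumiS\twi\bhyit$, apply unbiasedness and the variance bound (\cref{assum:bdd_var}) to the first part and the WOR second-moment identity to the second, then use \cref{assum:bdd_hetero} to convert $\sumin\wi^2\normb{\bhyit}^2$ into drift, a $\normb{\Gy\TF(\Hbxs,\byt)}^2$-type term, and a $\hetero^2$ term; all the $\normb{\Gy f_i(\Hbxs,\cdot)}^2$ contributions are then controlled via smoothness+convexity (\cref{lem:smooth}, $\normb{\Gy f_i(\Hbxs,\by)}^2 \le 2\Lf(f_i(\Hbxs,\by^*_i)-f_i(\Hbxs,\by))$) and \cref{assum:bdd_hetero}, so they too reduce to $\mbe[\TPhi(\Hbxs)-\TF(\Hbxs,\byt)]$ plus drift and constants $\propto \sumin\tfrac{\wi^2\nai_2^2}{\nai_1^2}$, $\hetero^2$, $\localvar^2$.

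Next I substitute \cref{lem:NC_C_PCP_consensus_error} for the consensus error $\sumin\frac{\wi}{\nai_1}\sumikt\aikt\CExytk(i)$, collect all terms, and move the $\mbe[\TPhi(\Hbxs)-\TF(\Hbxs,\byt)]$ contributions to one side. The step-size hypotheses in the statement — $64\lrcysq\TMa\Lf^2\heteroscale^2\frac{\numclients}{\selclients}\le1$ together with the two bounds on $2\seff\lrsy\Lf\frac{\numclients}{\selclients}$ — are precisely what guarantees that the net coefficient of $\mbe[\TPhi(\Hbxs)-\TF(\Hbxs,\byt)]$ is at most $-\seff\lrsy$, so that
\begin{align*}
\seff\lrsy\,\mbe\lb\TPhi(\Hbxs)-\TF(\Hbxs,\byt)\rb \le \mbe\normb{\byt-\by^*}^2 - \mbe\normb{\bytp-\by^*}^2 + \seff^2\lrsysq(\text{variance/PP terms}) + \seff\lrsy(\lrcxsq+\lrcysq)(\text{drift terms}).
\end{align*}
Summing over $t = sS,\dots,(s+1)S-1$ telescopes the distance terms to $\le \mbe\normb{\by^{(sS)}-\by^*}^2 \le 2\mbe\normb{\by^{(sS)}}^2 + 2\normb{\by^*}^2 \le 4R$, dividing by $\seff\lrsy S$ produces the $4R/(\seff\lrsy S)$ term, and repackaging the remaining terms using $\TMa$, $\sumin\wi\normb{\mbf a_{i,-1}}_2^2$, $\frac{\numclients}{\selclients}$, and the $\frac{\numclients-\selclients}{\numclients-1}$ factor yields the claimed inequality.

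The main obstacle is the circular dependence: \cref{lem:NC_C_PCP_consensus_error} itself contains $\mbe[\TPhi(\Hbxs)-\TF(\Hbxs,\byt)]$, the very quantity being bounded, and that term also enters with a favorable sign from the cross term — so the bookkeeping must be arranged so that the learning-rate conditions leave a clean strictly-negative coefficient. Compounding this, in the absence of strong concavity there is no quadratic-growth handle on $\normb{\Gy f_i}^2$; these gradient-norm terms can only be absorbed through the local function gaps (via \cref{lem:smooth}/\cref{lem:smooth_convex} and \cref{assum:bdd_hetero}), and one must check they do not inflate the leading constants. Everything else is a routine FedNova-style computation built on the WOR variance identity.
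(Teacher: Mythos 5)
Your proposal is correct and follows essentially the same route as the paper's proof: the distance-to-$\by^*(\Hbxs)$ potential, the concavity/smoothness split of the cross term into a negative multiple of $\mbe[\TPhi(\Hbxs)-\TF(\Hbxs,\byt)]$ plus drift, the WOR second-moment bound on $\mbe\|\bdyt\|^2$ with \cref{lem:smooth} absorbing gradient norms into the gap, the substitution of \cref{lem:NC_C_PCP_consensus_error} with the step-size conditions resolving the circular dependence, and the telescoping to $4R/(\seff\lrsy S)$. The only cosmetic difference is that you route the inner-product term $\lan \Gy f_i(\Hbxs,\byitk),\byt-\byitk\ran$ through Young's inequality and a $\|\Gy f_i(\Hbxs,\byt)\|^2$ term, whereas the paper bounds it directly by $f_i(\Hbxs,\byt)-f_i(\Hbxs,\byitk)+\tfrac{\Lf}{2}\|\byt-\byitk\|^2$ via smoothness, which avoids that extra term entirely.
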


\begin{remark}
\label{rem:NC_C_localSGD}
It is worth noting that the proof of \cref{lem:NC_C_PCP_local_SGA_concave} does not require global concavity of local functions. Rather, given $\bx$, we only need concavity of local functions $\{ f_i \}$ at some point $\by^* (\bx)$. This is precisely the one-point-concavity assumption (\cref{assum:1pc_y}) discussed earlier in \cite{mahdavi21localSGDA_aistats, sharma22FedMinimax_ICML}. Therefore, \cref{lem:NC_C_PCP_local_SGA_concave} for a much larger class of functions. Further, the bound in \cref{lem:NC_C_PCP_local_SGA_concave} improves the corresponding bounds derived in existing work. As we discuss in \cref{app:NC_1PC}, this helps us achieve improve complexity results for nonconvex-one-point-concave (NC-1PC) functions.
\end{remark}

Next, we bound the difference $\mbe \lb \TPhi(\bxt) - \TF(\bxt, \byt) \rb$.

\begin{lemma}
\label{lem:NC_C_PCP_Phi_f_diff}
Suppose the local functions satisfy Assumptions \ref{assum:smoothness}, \ref{assum:bdd_var}, \ref{assum:bdd_hetero}, \ref{assum:Lips_cont_x}.
Then the iterates generated by \cref{alg_NC_minimax}-\fedsgdaplus \  satisfy
\begin{align}
    & \avgtT \mbe \lb \TPhi(\bxt) - \TF(\bxt, \byt) \rb \leq \frac{1}{T} \sum_{s=0}^{T/S-1} \sum_{t=sS}^{(s+1)S-1} \mbe \lb \TPhi(\Hbxs) - \TF(\Hbxs, \byt) \rb \nn \\
    & \qquad + 2 \seff \lrsx G_{\bx} (S-1) \sqrt{\frac{\numclients}{\selclients}} \sqrt{\sumin \frac{w_i^2 \nai_2^2}{\nai_1^2} \lp \localvar^2 + \varscale^2 G_{\bx}^2 \rp + G_{\bx}^2 \lp \frac{\selclients-1}{\numclients-1} + \frac{\numclients - \selclients}{\numclients-1} \sumin w_i^2 \rp}. \nn
\end{align}
% where the second term is bounded in \cref{lem:NC_C_PCP_local_SGA_concave} above.
\end{lemma}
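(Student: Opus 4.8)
The plan is to bound $\mbe[\TPhi(\bxt) - \TF(\bxt, \byt)]$ by its value at the block anchor $\Hbxs$ (which is frozen for all $t$ in the block $\{sS,\dots,(s+1)S-1\}$) plus the cost of having moved $\bxt$ away from $\Hbxs$. First I would note that each $f_i(\cdot,\by)$ is $G_\bx$-Lipschitz in $\bx$ by \cref{assum:Lips_cont_x}, hence so is $\TF(\cdot,\by) = \sumin w_i f_i(\cdot,\by)$ (since $\sumin w_i = 1$), and therefore the envelope $\TPhi(\cdot) = \max_\by \TF(\cdot,\by)$ is $G_\bx$-Lipschitz as well. Inserting $\pm\TPhi(\Hbxs)$ and $\pm\TF(\Hbxs,\byt)$ and applying these Lipschitz bounds to the first and third brackets of the resulting decomposition gives the pointwise estimate
\begin{equation*}
\TPhi(\bxt) - \TF(\bxt, \byt) \;\le\; \big(\TPhi(\Hbxs) - \TF(\Hbxs, \byt)\big) + 2 G_\bx \lnr \bxt - \Hbxs \rnr ,
\end{equation*}
valid for every $t$ in the $s$-th block.

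Next I would control $\mbe\lnr\bxt - \Hbxs\rnr$. Since the server updates $\bx$ every round while $\Hbxs = \bx^{(sS)}$ is held fixed within the block, telescoping $\bx^{(r+1)} = \bx^{(r)} - \seff\lrsx \bd^{(r)}_{\bx}$ yields $\bxt - \Hbxs = -\seff\lrsx\sum_{r=sS}^{t-1}\bd^{(r)}_{\bx}$. The triangle inequality followed by Jensen's inequality ($\mbe\lnr\bd\rnr \le \sqrt{\mbe\lnr\bd\rnr^2}$) then gives $\mbe\lnr\bxt - \Hbxs\rnr \le \seff\lrsx(t-sS)\max_{r}\sqrt{\mbe\lnr\bd^{(r)}_{\bx}\rnr^2} \le \seff\lrsx(S-1)\,B$, where $B^2$ is the uniform second-moment bound on the aggregated $\bx$-direction that the NC-C analysis produces --- the very bracketed quantity that multiplies $\seff^2\lrsxsq\Lf$ in \cref{lem:NC_C_PCP_Phi_smooth_decay_one_iter}, i.e.
\begin{equation*}
B^2 \;=\; \frac{\numclients}{\selclients}\Big[\,\sumin \tfrac{w_i^2 \nai_2^2}{\nai_1^2}\big(\localvar^2 + \varscale^2 G_\bx^2\big) + G_\bx^2\Big(\tfrac{\selclients-1}{\numclients-1} + \tfrac{\numclients-\selclients}{\numclients-1}\sumin w_i^2\Big)\Big].
\end{equation*}
This bound on $\mbe\lnr\bdxt\rnr^2$ follows from \cref{assum:bdd_var}, \cref{assum:Lips_cont_x} (so that $\lnr\Gx f_i\rnr \le G_\bx$ with no feedback $\lnr\Gx\TF\rnr^2$ term appearing), and the without-replacement sampling calculation of \eqref{eq:server_agg_WOR}, exactly along the lines of \cref{lem:NC_SC_PCP_wtd_dir_norm_sq}. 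Combining the two displays, $\mbe[\TPhi(\bxt) - \TF(\bxt,\byt)] \le \mbe[\TPhi(\Hbxs) - \TF(\Hbxs,\byt)] + 2 G_\bx \seff\lrsx(S-1)\,B$ for every $t$ in the block.

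Finally I would average over $t = 0,\dots,T-1$, regrouping the sum into the $T/S$ blocks (using that $S$ divides $T$): the leading term turns into $\frac{1}{T}\sum_{s=0}^{T/S-1}\sum_{t=sS}^{(s+1)S-1}\mbe[\TPhi(\Hbxs) - \TF(\Hbxs,\byt)]$, while the drift term $2 G_\bx \seff\lrsx(S-1)B$ does not depend on $t$ and hence passes through the average unchanged; writing $B$ as $\sqrt{\numclients/\selclients}$ times the square root of the bracketed expression reproduces the claimed inequality verbatim.

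The main obstacle is the clean second-moment bound $\mbe\lnr\bdxt\rnr^2 \le B^2$: one must obtain it with no residual client-drift or $\lnr\Gx\TF(\bxt,\byt)\rnr^2$ terms on the right-hand side --- which is precisely what $G_\bx$-Lipschitzness in $\bx$ buys over the NC-SC setting --- and one has to track the without-replacement cross terms carefully to land the exact $(\selclients-1)/(\numclients-1)$ and $(\numclients-\selclients)/(\numclients-1)$ coefficients. Everything else is the Lipschitz decomposition plus routine telescoping-and-averaging bookkeeping.
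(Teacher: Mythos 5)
Your proposal is correct and follows essentially the same route as the paper's proof: the same $\pm\TPhi(\Hbxs)$, $\pm\TF(\Hbxs,\byt)$ decomposition controlled by $G_\bx$-Lipschitzness in $\bx$ (the paper phrases the first bracket as $\TF(\bxt,\by^*(\bxt)) - \TF(\Hbxs,\by^*(\bxt))$, which is exactly the standard proof that $\TPhi$ inherits the Lipschitz constant), the same telescoping of the frozen-anchor drift $\bxt-\Hbxs$ with Jensen, and the same second-moment bound $B^2$ on the aggregated $\bx$-direction, which the paper establishes as equation \eqref{eq:lem:NC_C_PCP_WOR_agg_grad_norm} inside the proof of \cref{lem:NC_C_PCP_Phi_smooth_decay_one_iter} using \cref{assum:bdd_var}, \cref{assum:Lips_cont_x}, and the without-replacement sampling identity. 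No gaps.
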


\subsection{Proof of \texorpdfstring{\cref{thm:NC_C}}{Theorem 2}}
\label{sec:NC_C_PCP_thm_proof}
For the sake of completeness, we first state the full statement of \cref{thm:NC_C} here.

\begin{theorem}
\label{thm:NC_C_appendix}
Suppose the local loss functions $\{ f_i \}$ satisfy Assumptions \ref{assum:smoothness}, \ref{assum:bdd_var}, \ref{assum:bdd_hetero}, \ref{assum:concavity}, \ref{assum:Lips_cont_x}. Further, let $\norm{\byt}^2 \leq R$ for all $t$. We run \cref{alg_NC_minimax}-\fedsgdaplus \  with client step-size $\lrcy$ such that $64 \lrcysq \TMa \Lf^2 \max \{ \heteroscale^2 \frac{\numclients}{\selclients}, 1 \} \leq 1$. Further, the server step-size $\lrsy$ satisfies
\begin{align*}
    2 \seff \lrsy \Lf \frac{\numclients}{\selclients} \max \{ \heteroscale^2, 1 \} \max \lcb \varscale^2 \max_i \frac{w_i \nai_2^2}{\nai_1^2}, \frac{\numclients - \selclients}{\numclients-1} \max_i w_i \rcb & \leq \frac{1}{8}, \\
    2 \seff \lrsy \Lf \frac{\numclients}{\selclients} \max \lcb \frac{\selclients - 1}{\numclients-1}, \varscale^2 \max_{i,k} \frac{w_i \aikt}{\nai_1} \rcb & \leq \frac{1}{8}.
\end{align*}
Then the iterates generated by \cref{alg_NC_minimax}-\fedsgdaplus \  satisfy
\begin{equation}
    \begin{aligned}
        & \avgtT \mbe \norm{\G \TPhi_{1/2\Lf} (\bxt)}^2 \leq \mco \lp \frac{\bar{\Delta}_{\TPhi}}{\seff \lrsx T} + \seff \lrsx \Lf \lb \frac{\Aw}{\selclients \seff} \lp \localvar^2 + \varscale^2 G_{\bx}^2 \rp + G_{\bx}^2 \lp \frac{\numclients (\selclients-1)}{\selclients (\numclients-1)} + \Fw \rp \rb \rp \\
        & \quad + \mco \lp \seff \lrsx \Lf G_{\bx} (S-1) \sqrt{ \frac{\Aw}{\selclients \seff} \lp \localvar^2 + \varscale^2 G_{\bx}^2 \rp + G_{\bx}^2 \lp \frac{\numclients (\selclients-1)}{\selclients (\numclients-1)} + \Fw \rp} \rp \\
        & \quad + \mco \lp \frac{\Lf R}{\seff \lrsy S} + \frac{\lrsy \Lf}{\selclients} \lp \Aw \localvar^2 + \hetero^2 \lp \seff \frac{\numclients - \selclients}{\numclients-1} \Ew + \Bw \varscale^2 \rp \rp \rp + \mco \lp \lp \lrcxsq + \lrcysq \rp \Lf^2 \lb \Cw \localvar^2 + D \lp G_{\bx}^2 + \hetero^2 \rp \rb \rp,
    \end{aligned}
    \label{eq:thm:NC_C_1}
\end{equation}
where $\TPhi_{1/2\Lf}(\bx) \triangleq \min_{\bx'} \TPhi (\bx') + \Lf \norm{\bx' - \bx}^2$ is the envelope function, $\bar{\Delta}_{\TPhi} \triangleq \TPhi_{1/2\Lf} (\bx_0) - \min_\bx \TPhi_{1/2\Lf} (\bx)$, $\Aw \triangleq n \seff \sumin \frac{w_i^2 \nai_2^2}{\nai_1^2}$, 
$\Bw \triangleq n \seff \max_i \frac{w_i \nai_2^2}{\nai_1^2}$, 
$\Ew \triangleq \numclients \max_i w_i$,
$\Cw \triangleq \sumin w_i ( \nai_2^2 - [\alpha^{(t,\sync_i-1)}_{i}]^2 )$, and 
$D \triangleq \max_i ( \norm{\mbf a_{i,-1}}_1^2 + \varscale^2\norm{\mbf a_{i,-1}}_2^2 )$, $\Fw \triangleq \frac{\numclients (\numclients - \selclients)}{\selclients (\numclients-1)} \sumin w_i^2$.
With the following parameter values:
$$\lrcx = \lrcy = \Theta \lp \frac{1}{\Lf \bar{\sync} T^{3/8}} \rp, \lrsx = \Theta \lp \frac{P^{1/4}}{(\seff T)^{3/4}} \rp, \quad \lrsy = \Theta \lp \frac{P^{3/4}}{(\seff T)^{1/4}} \rp, \quad S = \Theta \lp \sqrt{\frac{T}{\seff P}} \rp,$$
where $\bar{\sync} = \frac{1}{\numclients} \sumin \sync_i$, we can further simplify to
\begin{align*}
    & \avgtT \mbe \norm{\G \TPhi_{1/2\Lf} (\bxt)}^2 \nn \\
    & \leq \underbrace{\mco \lp \frac{(\bar{\sync}/\seff)^{1/4}}{(\bar{\sync} \selclients T)^{1/4}} \rp + \mco \lp \frac{(\seff \selclients)^{1/4}}{T^{3/4}} \rp}_{\text{Error with full synchronization}} + \underbrace{\mco \lp \lp \frac{\numclients - \selclients}{\numclients-1} \cdot \frac{\Ew}{\selclients T} \rp^{1/4} \rp}_{\text{Partial participation error}} + \underbrace{\mco \lp \frac{\Cw \localvar^2 + D (G_{\bx}^2 + \hetero^2)}{\bar{\sync}^2 T^{3/4}} \rp}_{\text{Error due to local updates}}.
\end{align*}
\end{theorem}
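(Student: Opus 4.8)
The plan is to establish \eqref{eq:thm:NC_C_1} by telescoping the one-step decay of the Moreau envelope and substituting the three auxiliary bounds, and then to obtain the explicit rate by plugging in the prescribed step sizes. First I would sum the inequality of \cref{lem:NC_C_PCP_Phi_smooth_decay_one_iter} over $t = 0, \dots, T-1$: the left-hand side telescopes, so $\mbe[\TPhi_{1/2\Lf}(\bx^{(T)})] - \TPhi_{1/2\Lf}(\bx^{(0)}) \geq -\bar{\Delta}_{\TPhi}$, and after dividing through by $\seff \lrsx T/8$ I am left with $\frac{1}{T}\sumtT \mbe\normb{\G\TPhi_{1/2\Lf}(\bxt)}^2$ on one side and, on the other, (i) the $\seff\lrsx\Lf$-scaled variance / partial-participation term $\sumin \frac{w_i^2\nai_2^2}{\nai_1^2}(\localvar^2 + \varscale^2 G_\bx^2) + G_\bx^2(\frac{\selclients-1}{\numclients-1} + \frac{\numclients-\selclients}{\numclients-1}\sumin w_i^2)$, (ii) the averaged consensus error $\frac{1}{T}\sumtT \Lf^2 \sumin \frac{w_i}{\nai_1}\sumikt \aikt \CExytk(i)$, and (iii) the averaged inner-maximization gap $\frac{1}{T}\sumtT \mbe[\TPhi(\bxt) - \TF(\bxt,\byt)]$.

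Next I would control (ii) and (iii). For (ii) I apply \cref{lem:NC_C_PCP_consensus_error}, which bounds the consensus error by $\mco((\lrcxsq + \lrcysq)\Lf^2[\localvar^2 \sumin w_i\normb{\mbf a_{i,-1}}_2^2 + \TMa(G_\bx^2 + \hetero^2)])$ plus a residual $8\lrcysq \Lf^3 \TMa \heteroscale^2 \mbe[\TPhi(\Hbxs) - \TF(\Hbxs,\byt)]$. For (iii) I apply \cref{lem:NC_C_PCP_Phi_f_diff}, which reduces the average of $\mbe[\TPhi(\bxt) - \TF(\bxt,\byt)]$ to the block average $\frac{1}{T}\sum_s \sum_{t=sS}^{(s+1)S-1}\mbe[\TPhi(\Hbxs) - \TF(\Hbxs,\byt)]$ plus a drift term scaling like $\seff \lrsx G_\bx (S-1)\sqrt{(\cdot)}$, the square root of the same quantity appearing in (i). Both reductions leave the single quantity $\frac{1}{S}\sum_{t=sS}^{(s+1)S-1}\mbe[\TPhi(\Hbxs) - \TF(\Hbxs,\byt)]$, which is exactly what \cref{lem:NC_C_PCP_local_SGA_concave} bounds by $\mco(R/(\seff\lrsy S)) + \mco(\seff\lrsy \tfrac{\numclients}{\selclients}[\localvar^2\sumin \tfrac{w_i^2\nai_2^2}{\nai_1^2} + \hetero^2(\tfrac{\numclients-\selclients}{\numclients-1}\max_i w_i + \varscale^2 \max_i \tfrac{w_i\nai_2^2}{\nai_1^2})]) + \mco(\Lf(\lrcxsq+\lrcysq)[\localvar^2\sumin w_i\normb{\mbf a_{i,-1}}_2^2 + \TMa(G_\bx^2 + \hetero^2)])$. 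Substituting this back (into \cref{lem:NC_C_PCP_Phi_f_diff}, then into \cref{lem:NC_C_PCP_consensus_error}, then into the telescoped inequality), absorbing the residual $8\lrcysq\Lf^3\TMa\heteroscale^2(\cdot)$ into the left side via the condition $64\lrcysq\TMa\Lf^2\heteroscale^2\numclients/\selclients \leq 1$, and rewriting the various $\sumin w_i^2\nai_2^2/\nai_1^2$, $\sumin w_i \normb{\mbf a_{i,-1}}_2^2$, $\TMa$, $\max_i w_i$ factors through $\Aw, \Bw, \Cw, D, \Ew, \Fw$, I would obtain \eqref{eq:thm:NC_C_1}.

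For the explicit rate I would substitute $\lrcx = \lrcy = \Theta(1/(\Lf\bar{\sync} T^{3/8}))$, $\lrsx = \Theta(P^{1/4}/(\seff T)^{3/4})$, $\lrsy = \Theta(P^{3/4}/(\seff T)^{1/4})$, $S = \Theta(\sqrt{T/(\seff P)})$ into \eqref{eq:thm:NC_C_1}, having first checked that these satisfy all the step-size constraints of the three lemmas for $T$ large enough. Tracking the groups of terms separately: $\bar{\Delta}_{\TPhi}/(\seff\lrsx T)$ and $\Lf R/(\seff\lrsy S)$ both collapse to $\mco((\seff\selclients)^{1/4}/T^{3/4})$; the $\seff\lrsx\Lf(\cdot)$ and $(\lrsy\Lf/\selclients)(\cdot)$ terms produce $\mco((\bar\sync/\seff)^{1/4}/(\bar\sync\selclients T)^{1/4})$ together with the partial-participation term $\mco(((\numclients-\selclients)\Ew/((\numclients-1)\selclients T))^{1/4})$; the square-root drift term $\seff\lrsx\Lf G_\bx(S-1)\sqrt{(\cdot)}$ also reduces to $\mco((\seff\selclients)^{1/4}/T^{3/4})$ once the factor $S\lrsx\propto T^{-1/4}$ against the $\sqrt{1/(\selclients\seff)}$ inside the root is accounted for; and since $\lrcxsq+\lrcysq = \Theta(1/(\Lf^2\bar\sync^2 T^{3/4}))$ the $(\lrcxsq+\lrcysq)\Lf^2[\Cw\localvar^2 + D(G_\bx^2+\hetero^2)]$ term yields the $\mco((\Cw\localvar^2 + D(G_\bx^2+\hetero^2))/(\bar\sync^2 T^{3/4}))$ local-updates contribution.

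The main obstacle I expect is the interplay around the inner-loop length $S$: \cref{lem:NC_C_PCP_Phi_f_diff} contributes a drift term growing linearly in $S$, while \cref{lem:NC_C_PCP_local_SGA_concave} contributes the $R/(\seff\lrsy S)$ term decaying in $S$, so $S$ must be chosen to balance these against each other and against the envelope-decay term simultaneously — this is where the $\Theta(\sqrt{T/(\seff P)})$ scaling of $S$ and the $T^{3/8}$ exponent on the client step sizes come from, and checking that these choices are jointly admissible under all the (many) step-size conditions, in particular $\lrsx$ small enough relative to $\lrsy$ and $\lrcy$ meeting both $\lrcy \leq 1/(2\Lf(\max_i \nai_1)\sqrt{2\max\{1,\varscale^2\}})$ and $64\lrcysq\TMa\Lf^2\max\{\heteroscale^2\numclients/\selclients,1\}\leq 1$, is where most of the bookkeeping lives. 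A secondary subtlety is that the gap in \cref{lem:NC_C_PCP_Phi_f_diff} is a block sum over synchronization windows, so one must take $T$ to be an integer multiple of $S$ (or absorb the $\mco(S/T)$ remainder) when passing between $\frac{1}{T}\sumtT$ and $\frac{1}{T}\sum_s\sum_{t=sS}^{(s+1)S-1}$.
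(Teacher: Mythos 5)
Your proposal follows essentially the same route as the paper's proof: telescope the Moreau-envelope decay of \cref{lem:NC_C_PCP_Phi_smooth_decay_one_iter}, control the consensus error via \cref{lem:NC_C_PCP_consensus_error} and the inner-maximization gap via \cref{lem:NC_C_PCP_Phi_f_diff} combined with \cref{lem:NC_C_PCP_local_SGA_concave}, absorb the $\lrcysq\heteroscale^2$ residual using the client step-size condition, and then choose $S,\lrsx,\lrsy,\lrcy$ to balance the $S$-linear drift term against the $R/(\seff\lrsy S)$ term — which is exactly how the paper arrives at the $\Theta(\sqrt{T/(\seff P)})$ and $T^{3/8}$ scalings. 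Aside from minor bookkeeping about which of the two full-synchronization terms each contribution lands in (both appear in the final bound, so nothing is affected), the argument is correct.
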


\begin{proof}
We sum the bound in \cref{lem:NC_C_PCP_Phi_smooth_decay_one_iter} over $t = 0$ to $T-1$ and rearrange the terms to get
\begin{align}
    & \avgtT \mbe \norm{\G \TPhi_{1/2\Lf} (\bxt)}^2 \nn \\
    & \leq \frac{8}{\seff \lrsx T} \sumtT \mbe \lb \TPhi_{1/2\Lf} (\bxt) - \TPhi_{1/2\Lf} (\bxtp) \rb \nn \\
    & \quad + 8 \seff \lrsx \Lf \frac{\numclients}{\selclients} \lb \sumin \frac{w_i^2 \nai_2^2}{\nai_1^2} \lp \localvar^2 + \varscale^2 G_{\bx}^2 \rp + G_{\bx}^2 \lp \frac{\selclients-1}{\numclients-1} + \frac{\numclients - \selclients}{\numclients-1} \sumin w_i^2 \rp \rb \nn \\
    & \quad + 16 \Lf \avgtT \mbe \lb \TPhi(\bxt) - \TF(\bxt, \byt) \rb + \frac{16}{T} \sum_{s=0}^{T/S-1} \sum_{t=sS}^{(s+1)S-1} \Lf^2 \sumin \frac{w_i}{\nai_1} \sumikt \aikt  \CExytk (i) \nn \\
    %%%%%%%%%%%%%%%%%%%%%%%%%%%%%%%%%%
    & \leq \frac{8 \lb \TPhi_{1/2\Lf} (\bx^{(0)}) - \TPhi_{1/2\Lf} (\bx^{(T)}) \rb}{\seff \lrsx T} + 8 \seff \lrsx \Lf \frac{\numclients}{\selclients} \lb \sumin \frac{w_i^2 \nai_2^2}{\nai_1^2} \lp \localvar^2 + \varscale^2 G_{\bx}^2 \rp + G_{\bx}^2 \lp \frac{\selclients-1}{\numclients-1} + \frac{\numclients - \selclients}{\numclients-1} \sumin w_i^2 \rp \rb \nn \\
    & \quad + 32 \lp \lrcxsq + \lrcysq \rp \Lf^2 \localvar^2 \sumin w_i \norm{\mbf a_{i,-1}}_2^2 + 64 \Lf^2 \TMa \lp \lrcxsq G_{\bx}^2 + \lrcysq \hetero^2 \rp \tag{From \cref{lem:NC_C_PCP_consensus_error}} \\
    & \quad + 128 \lrcysq \Lf^3 \TMa \heteroscale^2 \frac{1}{T} \sum_{s=0}^{T/S-1} \sum_{t=sS}^{(s+1)S-1} \mbe \lb \TPhi (\Hbxs) - \TF(\Hbxs, \byt) \rb + 16 \Lf \avgtT \mbe \lb \TPhi(\bxt) - \TF(\bxt, \byt) \rb \nn \\
    %%%%%%%%%%%%%%%%%%%%%%%%%%%%%%%%%%
    & \leq \frac{8 \bar{\Delta}_{\TPhi}}{\seff \lrsx T} + 8 \seff \lrsx \Lf \frac{\numclients}{\selclients} \lb \sumin \frac{w_i^2 \nai_2^2}{\nai_1^2} \lp \localvar^2 + \varscale^2 G_{\bx}^2 \rp + G_{\bx}^2 \lp \frac{\selclients-1}{\numclients-1} + \frac{\numclients - \selclients}{\numclients-1} \sumin w_i^2 \rp \rb \tag{where {$\bar{\Delta}_{\TPhi} \triangleq \TPhi_{1/2\Lf} (\bx_0) - \min_\bx \TPhi_{1/2\Lf} (\bx)$}} \\
    & \quad + 32 \lp \lrcxsq + \lrcysq \rp \Lf^2 \lb \localvar^2 \sumin w_i \norm{\mbf a_{i,-1}}_2^2 + 2 \TMa \lp G_{\bx}^2 + \hetero^2 \rp \rb \nn \\
    & \quad + 32 \seff \lrsx \Lf G_{\bx} (S-1) \sqrt{\frac{\numclients}{\selclients}} \sqrt{ \sumin \frac{w_i^2 \nai_2^2}{\nai_1^2} \lp \localvar^2 + \varscale^2 G_{\bx}^2 \rp + G_{\bx}^2 \lp \frac{\selclients-1}{\numclients-1} + \frac{\numclients - \selclients}{\numclients-1} \sumin w_i^2 \rp} \tag{From \cref{lem:NC_C_PCP_Phi_f_diff}} \\
    & \quad + 18 \Lf \lb \frac{4 R}{\seff \lrsy S} + \seff \lrsy \frac{\numclients}{\selclients} \lp \localvar^2 \sumin \frac{w_i^{{2}} \nai_2^2}{\nai_1^2} + 2 \hetero^2 \lp \frac{\numclients - \selclients}{\numclients-1} \max_i w_i + \varscale^2 \max_i \frac{w_i \nai_2^2}{\nai_1^2} \rp \rp \rb \tag{From \cref{lem:NC_C_PCP_local_SGA_concave}; using $A_m \leq \min \{ \frac{1}{2}, \frac{1}{16 \heteroscale^2} \}$} \\
    & \quad + 72 \lrcysq \Lf^2 \lb \localvar^2 \sumin w_i \norm{\mbf a_{i,-1}}_2^2 + 2 \hetero^2 \TMa \rb.
\end{align}
We can simplify the notation using the constants $\Aw \triangleq \numclients \seff \sumin \frac{w_i^2 \nai_2^2}{\nai_1^2}$, $\Bw \triangleq \seff \numclients \lp \max_i \frac{w_i \nai_2^2}{\nai_1^2} \rp$, $\Ew \triangleq \numclients \max_i w_i$, $\Cw \triangleq \sumin w_i \norm{\mbf a_{i,-1}}_2^2$, $D \triangleq \TMa$, $\Fw \triangleq \frac{\numclients}{\selclients} \sumin w_i^2$
and drop the numerical constants, for simplicity, to get
\begin{align}
    & \avgtT \mbe \norm{\G \TPhi_{1/2\Lf} (\bxt)}^2 \nn \\
    %%%%%%%%%%%%%%%%%%%%%%%%%%%%%%%%%%%
    & \lesssim \frac{\bar{\Delta}_{\TPhi}}{\seff \lrsx T} + \seff \lrsx \Lf \lb \frac{\Aw}{\selclients \seff} \lp \localvar^2 + \varscale^2 G_{\bx}^2 \rp + G_{\bx}^2 \lp \frac{\numclients (\selclients-1)}{\selclients (\numclients-1)} + \frac{(\numclients - \selclients)}{(\numclients-1)} \Fw \rp \rb \nn \\
    & \quad + \seff \lrsx \Lf G_{\bx} (S-1) \sqrt{ \frac{\Aw}{\selclients \seff} \lp \localvar^2 + \varscale^2 G_{\bx}^2 \rp + G_{\bx}^2 \lp \frac{\numclients (\selclients-1)}{\selclients (\numclients-1)} + \frac{(\numclients - \selclients)}{(\numclients-1)} \Fw \rp} \nn \\
    & \quad + \frac{\Lf R}{\seff \lrsy S} + \frac{\lrsy \Lf}{\selclients} \lp \Aw \localvar^2 + \hetero^2 \lp \seff \frac{\numclients - \selclients}{\numclients-1} \Ew + \Bw \varscale^2 \rp \rp + \lp \lrcxsq + \lrcysq \rp \Lf^2 \lb \Cw \localvar^2 + D \lp G_{\bx}^2 + \hetero^2 \rp \rb \nn \\
    %%%%%%%%%%%%%%%%%%%%%%%%%%%%%%%%%%%%%%%%%
    & = \frac{\bar{\Delta}_{\TPhi}}{\seff \lrsx T} + \lrsx \Lf \mc I_1^2 + \frac{\lrsy \Lf \mc I_2}{P} + \Lf \lb \seff \lrsx G_{\bx} (S-1) \mc I_1 + \frac{R}{\seff \lrsy S} \rb \nn \\
    & \quad + \lp \lrcxsq + \lrcysq \rp \Lf^2 \lb \Cw \localvar^2 + D (G_{\bx}^2 + \hetero^2) \rb, \label{eq_proof:thm_NC_C_PCP_1}
\end{align}
where in \eqref{eq_proof:thm_NC_C_PCP_1}, to simplify notation, we have defined $\mc I_1 \triangleq \sqrt{ \frac{\Aw}{\selclients \seff} \lp \localvar^2 + \varscale^2 G_{\bx}^2 \rp + G_{\bx}^2 \lp \frac{\numclients (\selclients-1)}{\selclients (\numclients-1)} + \frac{(\numclients - \selclients)}{(\numclients-1)} \Fw \rp}$, $\mc I_2 \triangleq \Aw \localvar^2 + (\Bw \varscale^2 + \seff \frac{\numclients - \selclients}{\numclients-1} \Ew) \hetero^2$.

Next, we optimize the algorithm parameters $S, \lrsx, \lrsy, \lrcy, \lrcy$ to achieve a tight bound on \eqref{eq_proof:thm_NC_C_PCP_1}. If $R = 0$, we let $S=1$. Else, let $S = \sqrt{\frac{R}{\seff^2 \lrsx \lrsy G_{\bx} \mc I_1}}$. Substituting this in \eqref{eq_proof:thm_NC_C_PCP_1}, we get
\begin{align}
    \avgtT \mbe \norm{\G \TPhi_{1/2\Lf} (\bxt)}^2 & \lesssim \frac{\bar{\Delta}_{\TPhi}}{\seff \lrsx T} + \seff \lrsx \Lf \mc I_1^2 + \frac{\lrsy \Lf \mc I_2}{P} + \Lf \sqrt{\frac{R \lrsx G_{\bx} \mc I_1}{\lrsy}} \nn \\
    & \quad + \lp \lrcxsq + \lrcysq \rp \Lf^2 \lb \Cw \localvar^2 + D (G_{\bx}^2 + \hetero^2) \rb, \label{eq_proof:thm_NC_C_PCP_2}
\end{align}
Next, we focus on the terms in \eqref{eq_proof:thm_NC_C_PCP_2} containing $\lrsy$: $\Lf \lb \frac{\lrsy \mc I_2}{P} + \sqrt{\frac{R \lrsx G_{\bx} \mc I_1}{\lrsy}} \rb$. To optimize these, we choose $\lrsy = \lp \frac{\selclients}{2 \mc I_2} \rp^{2/3} \lp R \lrsx G_{\bx} \mc I_1 \rp^{1/3}$. Substituting in \eqref{eq_proof:thm_NC_C_PCP_2}, we get
\begin{align}
    \avgtT \mbe \norm{\G \TPhi_{1/2\Lf} (\bxt)}^2 & \lesssim \frac{\bar{\Delta}_{\TPhi}}{\seff \lrsx T} + \seff \lrsx \Lf \mc I_1^2 + \Lf \lp \frac{\mc I_2}{\selclients} R \lrsx G_{\bx} \mc I_1 \rp^{1/3} \nn \\
    & \quad + \lp \lrcxsq + \lrcysq \rp \Lf^2 \lb \Cw \localvar^2 + D (G_{\bx}^2 + \hetero^2) \rb,
    \label{eq_proof:thm_NC_C_PCP_3}
\end{align}
Finally, we focus on the terms in \eqref{eq_proof:thm_NC_C_PCP_3} containing $\lrsx$: $\frac{\bar{\Delta}_{\TPhi}}{\seff \lrsx T} + \Lf \lp \frac{\mc I_2}{P} R \lrsx G_{\bx} \mc I_1 \rp^{1/3}$. We ignore the higher order linear term. With $\lrsx = \lp \frac{3 \bar{\Delta}_{\TPhi}}{\seff \Lf T} \rp^{3/4} \lp \frac{\mc I_2}{P} R G_{\bx} \mc I_1 \rp^{-1/4}$, 
% $\lrcx = \lrcy = \Theta \lp \sqrt{\frac{\numclients}{\bar{\sync} T}} \rp$ 
and absorbing numerical constants inside $\mco (\cdot)$ we get,
\begin{align}
    & \avgtT \mbe \norm{\G \TPhi_{1/2\Lf} (\bxt)}^2 \lesssim \frac{\lp \mc I_1 \mc I_2 R G_{\bx} \rp^{1/4}}{(\seff \selclients T)^{1/4}} + \frac{(\seff \selclients)^{1/4}}{T^{3/4}} \lp \mc I_1 \mc I_2 R G_{\bx} \rp^{-1/4} \mc I_1^2 \nn \\
    & \qquad + \lp \lrcxsq + \lrcysq \rp \Lf^2 \lb \Cw \localvar^2 + D (G_{\bx}^2 + \hetero^2) \rb, \nn \\
    %%%%%%%%%%%%%%%%%%%%%%%%%%%%%%%%%%%%%%%
    &= \mco \lp \frac{\lp \mc I_1 \mc I_2 \rp^{1/4}}{(\seff \selclients T)^{1/4}} \rp + \mco \lp \frac{(\seff \selclients)^{1/4}}{T^{3/4}} \mc I_2^{3/4} \rp + \mco \lp \lp \lrcxsq + \lrcysq \rp \Lf^2 \lb \Cw \localvar^2 + D (G_{\bx}^2 + \hetero^2) \rb \rp, \nn \\
    & \leq \mco \lp \frac{(\bar{\sync}/\seff)^{1/4}}{(\bar{\sync} \selclients T)^{1/4}} \rp + \mco \lp \frac{\lp \frac{\numclients (\numclients - \selclients)}{\numclients-1} \max_i w_i \rp^{1/4} \hetero}{(\selclients T)^{1/4}} \rp + \mco \lp \frac{(\seff \selclients)^{1/4}}{T^{3/4}} \rp \nn \\
    & \quad + \mco \lp \lp \lrcxsq + \lrcysq \rp \Lf^2 \lb \Cw \localvar^2 + D (G_{\bx}^2 + \hetero^2) \rb \rp,
    \label{eq_proof:thm_NC_C_PCP_4}
\end{align}
where in \eqref{eq_proof:thm_NC_C_PCP_4}, we have shown dependence only on $\sync, \numclients, \selclients, T$. Lastly, we specify the algorithm parameters in terms of $n,T, \seff, \bar{\sync}$.
$$\lrsx = \Theta \lp \frac{P^{1/4}}{(\seff T)^{3/4}} \rp, \quad \lrsy = \Theta \lp \frac{P^{3/4}}{(\seff T)^{1/4}} \rp, \quad S = \Theta \lp \sqrt{\frac{T}{\seff P}} \rp.$$
Finally, choosing the client learning rates $\lrcx = \lrcy = \frac{1}{\Lf \bar{\sync} T^{3/8}}$, we get
\begin{align*}
    \avgtT \mbe \norm{\G \TPhi_{1/2\Lf} (\bxt)}^2 & \leq \mco \lp \frac{(\bar{\sync}/\seff)^{1/4}}{(\bar{\sync} \selclients T)^{1/4}} \rp + \mco \lp \frac{1}{(\selclients T)^{1/4}} \lp \frac{\numclients (\numclients - \selclients)}{(\numclients-1)} \max_i w_i \rp^{1/4} \rp + \mco \lp \frac{(\seff \selclients)^{1/4}}{T^{3/4}} \rp \nn \\
    & \quad + \mco \lp \frac{\Cw \localvar^2 + D (G_{\bx}^2 + \hetero^2)}{\bar{\sync}^2 T^{3/4}} \rp.
\end{align*}
\end{proof}

\paragraph{Convergence in terms of $F$}
\label{app:NC_C_bias_conv}
\begin{proof}[Proof of \cref{cor:NC_C_obj_inconsistent}]
Following \cite{lin_GDA_icml20}, we define
\begin{align}
    \begin{matrix}
        \TPhi_{1/2\Lf} (\bx) \triangleq \min_{\bx'} \lcb \TPhi (\bx') + \Lf \norm{\bx' - \bx}^2 \rcb; \qquad & \qquad \widetilde{\bx} \triangleq \argmin_{\bx'} \lcb \TPhi (\bx') + \Lf \norm{\bx' - \bx}^2 \rcb, \\ 
        \Phi_{1/2\Lf} (\bx) \triangleq \min_{\bx'} \lcb \Phi (\bx') + \Lf \norm{\bx' - \bx}^2 \rcb; \qquad & \qquad \bar{\bx} \triangleq \argmin_{\bx'} \lcb \Phi (\bx') + \Lf \norm{\bx' - \bx}^2 \rcb.
    \end{matrix}
    \label{eq:defn_Phi_TPhi}
\end{align}
Also, it follows from Lemma~2.2 in \cite{davis19wc_siam} that $\G \TPhi_{1/2\Lf} (\bx) = 2 \Lf (\bx - \widetilde{\bx})$ and $\G \Phi_{1/2\Lf} (\bx) = 2 \Lf (\bx - \bar{\bx})$. Therefore,
\begin{align*}
    \norm{\G \Phi_{1/2\Lf} (\bx)}^2 \leq & 2 \norm{\G \Phi_{1/2\Lf} (\bx) - \G \TPhi_{1/2\Lf} (\bx)}^2 + 2 \norm{\G \TPhi_{1/2\Lf} (\bx)}^2 \nn \\
    &= 8 \Lf^2 \norm{\widetilde{\bx} - \bar{\bx}}^2 + 2 \norm{\G \TPhi_{1/2\Lf} (\bx)}^2
\end{align*}
Consequently, we obtain
\begin{align}
    \min_{t\in[T]} \norm{\nabla \Phi_{1/2\Lf} (\bx^{(t)})}^2
    & \leq \frac{1}{T} \sum_{t=0}^{T-1} \norm{\nabla \Phi_{1/2\Lf} (\bx^{(t)})}^2 \nn \\
    & \leq \frac{2}{T} \sum_{t=0}^{T-1} \lb \norm{\nabla \TPhi_{1/2\Lf} (\bx^{(t)})}^2 + 4 \Lf^2 \norm{\widetilde{\bx}^{(t)} - \bar{\bx}^{(t)}}^2 \rb.
\end{align}
where $\widetilde{\bx}^{(t)}, \bar{\bx}^{(t)}$ follow the same definition as in \eqref{eq:defn_Phi_TPhi}, with $\bx$ replaced with $\bxt$.
\end{proof}

\begin{proof}[Proof of \cref{cor:NC_C_comm_cost}]
% In the case with FCP and zero systems heterogeneity, with $\aikt = 1$ for all $i, k$, $w_i = \frac{1}{\numclients}$, for all $i \in [n]$ we get
% \begin{align*}
%     \Cw = \sync - 1, \quad D = (\sync - 1) (\sync - 1 + \varscale^2).
% \end{align*}
% Also, for full client participation, $\Ew = 0$. Consequently, we get
% \begin{align*}
%     \mbe \lnr \G \TPhi_{1/2\Lf} (\bbxT) \rnr & \leq \lb \frac{1}{T} \sumtT \mbe \lnr \G \TPhi_{1/2\Lf} (\bxt) \rnr^2 \rb^{1/2} \leq \mco \lp \frac{1}{(\numclients \sync T)^{1/8}} + \frac{(\numclients \sync)^{1/8}}{T^{3/8}} \rp.
% \end{align*}
% To reach an $\epsilon$-stationary point, assuming $\numclients \sync \leq T$, the per-client gradient complexity is $T \sync = \mco \lp \frac{1}{\numclients \epsilon^8} \rp$. The communication complexity is $T = \mco \lp \frac{1}{\epsilon^4} \rp$.

If clients are weighted equally ($w_i = p_i = 1/n$ for all $i$), with each carrying out $\sync$ steps of local SGDA+, then \eqref{eq:thm:NC_C} reduces to
\begin{align}
    & \min_{t \in [T]} 
    % \mfrac{1}{T} \sumtT 
    \mbe \normb{\G \Phi_{1/2\Lf} (\bxt)}^2 \leq \mco \Big( \mfrac{1}{(\sync \selclients T)^{1/4}} + \mfrac{(\sync \selclients)^{1/4}}{T^{3/4}} \Big) + \mco \lp \mfrac{\localvar^2 + \sync (G_{\bx}^2 + \hetero^2)}{\sync T^{3/4}} \rp + \mco \Big( \Big( \mfrac{\numclients - \selclients}{\numclients-1} \cdot \mfrac{1}{\selclients T } \Big)^{1/4} \Big).
    \nn
\end{align}
% In the case with FCP and zero systems heterogeneity, with $\aikt = 1$ for all $i, k$, $w_i = \frac{1}{\numclients}$, for all $i \in [n]$ we get
% \begin{align*}
%     \nai_1 &= \sync, \quad \nai_2^2 = \sync, \quad \seff = \sync, \quad \Aw = \Aw' = 1, \quad \Bw = \Bw' = 1, \quad \Cw = \sync - 1, \quad D = (\sync - 1) (\sync - 1 + \varscale^2).
% \end{align*}
% Also, f
\begin{itemize}
    \item For full client participation, this reduces to
    \begin{align}
        & \min_{t \in [T]} 
        % \mfrac{1}{T} \sumtT 
        \mbe \normb{\G \Phi_{1/2\Lf} (\bxt)}^2 \leq \mco \Big( \mfrac{1}{(\sync \numclients T)^{1/4}} + \mfrac{(\sync \numclients)^{1/4}}{T^{3/4}} \Big).
        \nn
    \end{align}
    To reach an $\epsilon$-stationary point, assuming $\numclients \sync \leq T$, the per-client gradient complexity is $T \sync = \mco \lp \frac{1}{\numclients \epsilon^8} \rp$. Since $\sync \leq T/n$, the minimum number of communication rounds required is $T = \mco \lp \frac{1}{\epsilon^4} \rp$.
    \item For partial participation, $\mco \Big( \Big( \mfrac{\numclients - \selclients}{\numclients-1} \cdot \mfrac{1}{\selclients T } \Big)^{1/4} \Big)$ is the dominant term, and we do not get any convergence benefit of multiple local updates. Consequently, per-gradient client complexity and number of communication rounds are both $T \sync = \mco \lp \frac{1}{\selclients \epsilon^8} \rp$, for $\sync = \mco (1)$. However, if the data across clients comes from identical distributions ($\hetero = 0$), then we recover 
    % \begin{align*}
    %     \min_{t \in [T]} \mbe \norm{\G \TPhi(\bxt)}^2 \leq \mco \lp \frac{1}{\sqrt{\selclients \sync T}} + \frac{1}{\sync T} \rp.
    % \end{align*}
    % which result in 
    per-client gradient complexity of $\mco \lp \frac{1}{\selclients \epsilon^8} \rp$, and number of communication rounds $= \mco \lp \frac{1}{\epsilon^4} \rp$.
\end{itemize}
\end{proof}

\paragraph{Special Cases}
\begin{itemize}
    \item Centralized, deterministic case $(\localvar = \hetero = 0, \heteroscale = 1, \seff = n = 1)$: in this case $\Aw = \Bw = 1, \Cw = D = 0$. Also, $\mc I_1 = G_{\bx} \sqrt{\varscale^2+1}, \mc I_2 = 0$. The bound in \eqref{eq:thm:NC_C_1} reduces to
    \begin{align}
        \avgtT \mbe \norm{\G \TPhi_{1/2\Lf} (\bxt)}^2 & \leq \mco \lp \frac{\bar{\Delta}_{\TPhi}}{\lrsx T} + \lrsx \Lf G_{\bx}^2 \lb (\varscale^2+1) + (S-1) \sqrt{\varscale^2+1} \rb + \frac{\Lf R}{\lrsy S} \rp. \label{eq_proof:thm_NC_C_special_case_1}
    \end{align}
    % We choose $\lrsy = \Theta (1/\Lf)$. If $R = 0$, we let $S=1$. Else, let $S = \sqrt{\frac{2.5 R \Lf}{\lrsx G_{\bx}^2 \sqrt{\varscale^2+1}}}$. Substituting this in \eqref{eq_proof:thm_NC_C_special_case_1}, we get
    % \begin{align}
    %     \avgtT \mbe \norm{\G \TPhi_{1/2\Lf} (\bxt)}^2 & \leq \Theta \lp \frac{\bar{\Delta}_{\TPhi}}{\lrsx T} + \lrsx \Lf G_{\bx}^2 + \Lf G_{\bx} \sqrt{\lrsx R \Lf} \rp. \label{eq_proof:thm_NC_C_special_case_1}
    % \end{align}
    For $\varscale = 0$, \eqref{eq_proof:thm_NC_C_special_case_1} yields the convergence result in \cite{lin_GDA_icml20}.
    \item Single node, stochastic case $(\hetero = 0, \heteroscale = 1, \seff = n = 1)$: in this case $\Aw = \Bw = 1, \Cw = D = 0$. Also, $\mc I_1 = \sqrt{\localvar^2 + (\varscale^2+1) G_{\bx}^2}, \mc I_2 = \localvar^2$. The bound in \eqref{eq:thm:NC_C_1} reduces to
    \begin{align}
        \avgtT \mbe \norm{\G \TPhi_{1/2\Lf} (\bxt)}^2 & \leq \Theta \lp \frac{\bar{\Delta}_{\TPhi}}{\lrsx T} + \lrsx \Lf (\localvar^2 + (\varscale^2+1) G_{\bx}^2) + \lrsy \Lf \localvar^2 \rp \nn \\
        & \quad + \Theta \lp \Lf \lb \lrsx G_{\bx} (S-1) \sqrt{\localvar^2 + (\varscale^2+1) G_{\bx}^2} + \frac{R}{\lrsy S} \rb \rp. \label{eq_proof:thm_NC_C_special_case_2}
    \end{align}
    Again, for $\varscale = 0$, \eqref{eq_proof:thm_NC_C_special_case_2} yields the convergence result in \cite{lin_GDA_icml20}.
    \item Multiple equally weighted $(w_i = 1/n, \forall \ i \in [n])$ clients, full client participation, stochastic case with synchronous client updates $(\seff = 1)$: in this case $\Aw = 1, \Bw = 1, \Cw = D = 0, 0$. The bound in \eqref{eq:thm:NC_C_1} reduces to
    % {\small
    % \begin{align}
    %     & \avgtT \mbe \norm{\G \TPhi_{1/2\Lf} (\bxt)}^2 \leq \Theta \lp \frac{\bar{\Delta}_{\TPhi}}{\lrsx T} + \lrsx \Lf \lp G_{\bx}^2 + \sumin w_i^2 (\localvar^2 + \varscale^2 G_{\bx}^2) \rp \rp \nn \\
    %     & \quad + \Theta \lp \lrsy \Lf \lp \localvar^2 \sumin w_i^2 + 2 \hetero^2 \varscale^2 (\max_i w_i) \rp + \lrsx \Lf G_{\bx} (S-1) \sqrt{G_{\bx}^2 + \sumin w_i^2 (\localvar^2 + \varscale^2 G_{\bx}^2)} + \frac{R \Lf}{\lrsy S} \rp. \label{eq_proof:thm_NC_C_special_case_3a}
    % \end{align}
    % }%
    % If $w_i \Awprox 1/n$ for all the clients, then
    \begin{align}
        & \avgtT \mbe \norm{\G \TPhi_{1/2\Lf} (\bxt)}^2 \leq \Theta \lp \frac{\bar{\Delta}_{\TPhi}}{\lrsx T} + \lrsx \Lf \lp G_{\bx}^2 + \frac{\localvar^2 + \varscale^2 G_{\bx}^2}{\numclients} \rp \rp \nn \\
        & \quad + \Theta \lp \frac{\lrsy \Lf}{\numclients} (\localvar^2 + \hetero^2 \varscale^2) + \lrsx \Lf G_{\bx} (S-1) \sqrt{G_{\bx}^2 + \frac{\localvar^2 + \varscale^2 G_{\bx}^2}{\numclients}} + \frac{R \Lf}{\lrsy S} \rp, \label{eq_proof:thm_NC_C_special_case_3b}
    \end{align}
    Note that unlike existing analyses of synchronous update algorithms \cite{woodworth2020minibatch_NeurIPS20, yun2021minibatch_ICLR22, sharma22FedMinimax_ICML}, the bound in \eqref{eq_proof:thm_NC_C_special_case_3b} depends on the inter-client heterogeneity $\hetero^2$. This is due to the more general noise assumption (\cref{assum:bdd_var}). In the existing works, $\varscale^2$ is assumed zero, in which case, the bound in \eqref{eq_proof:thm_NC_C_special_case_3b} is also independent of $\hetero^2$. See \cref{sec:aux} for a more detailed explanation.
    % which ensures linear speedup in number of clients $n$. This is also the bound achieved in \cite{sharma22FedMinimax_ICML}, under the simpler variance assumption ($\varscale^2=0$).
    \item Multiple, equally weighted $(w_i = 1/n, \forall \ i \in [n])$ clients, full client participation, multiple, but equal number of client updates $(\sync_i = \seff = \sync, \forall \ i \in [n])$. In this case $\Aw = \Bw = 1, \Cw = \sync - 1, D = (\sync-1) (\sync-1+\varscale^2)$. The bound in \eqref{eq:thm:NC_C_1} then reduces to
    {\small
    \begin{align}
        & \avgtT \mbe \norm{\G \TPhi_{1/2\Lf} (\bxt)}^2 \leq \Theta \lp \frac{\bar{\Delta}_{\TPhi}}{\sync \lrsx T} + \sync \lrsx \Lf \lp G_{\bx}^2 + \frac{\localvar^2 + \varscale^2 G_{\bx}^2}{\numclients \sync} \rp + \frac{\lrsy \Lf (\localvar^2 + \varscale^2 \hetero^2)}{\numclients} \rp \label{eq_proof:thm_NC_C_special_case_4} \\
        & + \Theta \lp \Lf \lb \sync \lrsx G_{\bx} (S-1) \sqrt{G_{\bx}^2 + \frac{\localvar^2 + \varscale^2 G_{\bx}^2}{\numclients \sync}} + \frac{R}{\sync \lrsy S} \rb + (\sync-1) \lp \lrcxsq + \lrcysq \rp \Lf^2 \lb \localvar^2 + (\sync - 1 + \varscale^2) (G_{\bx}^2 + \hetero^2) \rb \rp. \nn
    \end{align}
    }%
    For $\varscale = \heteroscale = 0$, this setting reduces to the one considered in \cite{sharma22FedMinimax_ICML}. However, as stated earlier, our bound on the local update error is tighter.
    % However, the bound in \eqref{eq_proof:thm_NC_C_special_case_4} has two advantages compared to \cite{sharma22FedMinimax_ICML}: 1) our bound improves the local updates based error term, with the $\localvar^2$ term increasing linearly in $\sync$, rather than quadratic, as in \cite{sharma22FedMinimax_ICML}, and 2) due to separate client-side and server-side learning rates, the local-updates based error can be controlled without impacting the convergence rate adversely. This results in better higher order terms, as seen in \eqref{eq:thm:NC_C_conv_rate}, \eqref{eq:thm:NC_C_conv_rate}, resulting in more communication savings.
\end{itemize}

\subsection{Proofs of the Intermediate Lemmas}
\label{sec:NC_C_PCP_int_results_proofs}

\begin{proof}[Proof of \cref{lem:NC_C_PCP_Phi_smooth_decay_one_iter}]
Using the definition in \eqref{eq:defn_Phi_TPhi} $\bbxt = \argmin_\bx \TPhi (\bx) + \Lf \norm{\bx - \bxt}^2$. Also, note that
\begin{align}
    \TPhi_{1/2\Lf} (\bxtp) & \leq \TPhi (\bbxt) + \Lf \norm{\bbxt - \bxtp}^2.
    \label{eq:lem:NC_C_PCP_Phi_smooth_decay_one_iter_1}
\end{align}
Using the $\bx$ updates in \eqref{eq:server_update_alg_NC_C_minimax_PCP},
\begin{align}
    & \mbe \norm{\bbxt - \bxtp}^2 = \mbe \norm{\bbxt - \bxt + \seff \lrsx \sumiS \twi \bdxit}^2 \nn \\
    &= \mbe \norm{\bbxt - \bxt}^2 + \seff^2 \lrsxsq \mbe \norm{\sumiS \twi \bdxit}^2 + 2 \seff \lrsx \mbe \lan \bbxt - \bxt, \sumin w_i \bhxit \ran \nn \\
    %%%%%%%%%%%%%%%%%%%%%%%%%%%%%%%%%%%
    & \leq \mbe \norm{\bbxt - \bxt}^2 + 2 \seff \lrsx \mbe \lan \bbxt - \bxt, \Gx \TF(\bxt, \byt) \ran + \seff^2 \lrsxsq \mbe \norm{\sumiS \twi \bdxit}^2 \nn \\
    & \quad + \seff \lrsx \mbe \lb \frac{\Lf}{2} \norm{\bbxt - \bxt}^2 + \frac{2}{\Lf} \norm{\sumin \frac{w_i}{\nai_1} \sumikt \aikt \lp \Gx f_i (\bxitk, \byitk) - \Gx f_i (\bxt, \byt) \rp}^2 \rb \nn \\
    %%%%%%%%%%%%%%%%%%%%%%%%%%%%%%%%%%%
    & \leq \mbe \norm{\bbxt - \bxt}^2 + \seff^2 \lrsxsq \mbe \norm{\sumiS \twi \bdxit}^2 \\
    & \quad + \seff \lrsx \mbe \lb \frac{\Lf}{2} \norm{\bbxt - \bxt}^2 + 2 \Lf \sumin \frac{w_i}{\nai_1} \sumikt \aikt  \CExytk (i) + 2 \lan \bbxt - \bxt, \Gx \TF(\bxt, \byt) \ran \rb,
    \label{eq:lem:NC_C_PCP_Phi_smooth_decay_one_iter_2}
\end{align}
where \eqref{eq:lem:NC_C_PCP_Phi_smooth_decay_one_iter_2} follows from $\Lf$-smoothness (\cref{assum:smoothness}) and Jensen's inequality.
From \eqref{eq_proof:lem:NC_SC_PCP_WOR_wtd_dir_norm_sq_1}, \eqref{eq_proof:lem:NC_SC_PCP_WOR_wtd_dir_norm_sq_2}, we can bound $\mbe \normb{\sum_{i \in \mc C^{(t')}} \twi \bd_{\mbf x, i}^{(t')}}^2$ as follows.
\begin{align}
    \mbe \norm{\sum_{i \in \mc C^{(t')}} \twi \bd_{\mbf x, i}^{(t')}}^2 & \leq \frac{\numclients}{\selclients} \sumin \frac{w_i^2}{\nai_1^2} \sumikt [ \aikt ]^2 \lb \localvar^2 + \varscale^2 \mbe \lnr \Gx f_i ( \bxitk, \byitk ) \rnr^2 \rb \nn \\
    & \quad + \frac{\numclients}{\selclients} \lp \frac{\selclients-1}{\numclients-1} \rp \mbe \lnr \sumin w_i \bhxit \rnr^2 + \frac{\numclients}{\selclients} \frac{\numclients - \selclients}{\numclients-1} \sumin w_i^2 \mbe \lnr \bhxit \rnr^2 \nn \\
    & \leq \frac{\numclients}{\selclients} \sumin \frac{w_i^2 \nai_2^2}{\nai_1^2} \lp \localvar^2 + \varscale^2 G_{\bx}^2 \rp + \frac{\numclients (\selclients-1)}{\selclients (\numclients-1)} G_{\bx}^2 + \frac{\numclients (\numclients - \selclients)}{\selclients (\numclients-1)} G_{\bx}^2 \sumin w_i^2, \label{eq:lem:NC_C_PCP_WOR_agg_grad_norm}
\end{align}
where the final inequality by using \cref{assum:Lips_cont_x}. Next, we bound the inner product term in \eqref{eq:lem:NC_C_PCP_Phi_smooth_decay_one_iter_2}. Using $\Lf$-smoothness of $F$ (\cref{assum:smoothness}):
\begin{align}
    & \mbe \lan \bbxt - \bxt, \Gx \TF(\bxt, \byt) \ran \leq \mbe \lb \TF(\bbxt, \byt) - \TF(\bxt, \byt) + \frac{\Lf}{2} \norm{\bbxt - \bxt}^2 \rb \nn \\
    % & \leq \mbe \lb \TPhi(\bbxt) - \TF(\bxt, \byt) + \frac{\Lf}{2} \norm{\bbxt - \bxt}^2 \rb \nn \\
    & \leq \mbe \lb \TPhi(\bbxt) + \Lf \norm{\bbxt - \bxt}^2 \rb - \mbe \TF(\bxt, \byt) - \frac{\Lf}{2} \mbe \norm{\bbxt - \bxt}^2 \nn \\
    & \leq \mbe \lb \TPhi(\bxt) + \Lf \norm{\bxt - \bxt}^2 \rb - \mbe \TF(\bxt, \byt) - \frac{\Lf}{2} \mbe \norm{\bbxt - \bxt}^2 \tag{by definition of $\bbxt$} \\
    & \leq \mbe \lb \TPhi(\bxt) - \TF(\bxt, \byt) - \frac{\Lf}{2} \norm{\bbxt - \bxt}^2 \rb. \label{eq:lem:NC_C_PCP_Phi_smooth_decay_one_iter_3}
\end{align}
Substituting the bounds from \eqref{eq:lem:NC_C_PCP_Phi_smooth_decay_one_iter_2} and \eqref{eq:lem:NC_C_PCP_Phi_smooth_decay_one_iter_3} into \eqref{eq:lem:NC_C_PCP_Phi_smooth_decay_one_iter_1}, we get
\begin{align}
    & \mbe \lb \TPhi_{1/2\Lf} (\bxtp) \rb \leq \mbe \lb \TPhi (\bbxt) + \Lf \norm{\bbxt - \bxt}^2 \rb \nn \\
    & \quad + \seff^2 \lrsxsq \Lf \frac{\numclients}{\selclients} \lb \sumin \frac{w_i^2 \nai_2^2}{\nai_1^2} \lp \localvar^2 + \varscale^2 G_{\bx}^2 \rp + G_{\bx}^2 \lp \frac{\selclients-1}{\numclients-1} + \frac{\numclients - \selclients}{\numclients-1} \sumin w_i^2 \rp \rb \nn \\
    & \quad + 2 \seff \lrsx \Lf^2 \sumin \frac{w_i}{\nai_1} \sumikt \aikt  \CExytk (i) + 2 \seff \lrsx \Lf \mbe \lb \TPhi(\bxt) - \TF(\bxt, \byt) \rb - \frac{\seff \lrsx \Lf^2}{2} \mbe \norm{\bbxt - \bxt}^2 \nn \\
    %%%%%%%%%%%%%%%%%%%%%%%%%%%%%%%%%%%%%%%%%
    & \leq \mbe \lb \TPhi_{1/2\Lf} (\bxt) \rb + \seff^2 \lrsxsq \Lf \frac{\numclients}{\selclients} \lb \sumin \frac{w_i^2 \nai_2^2}{\nai_1^2} \lp \localvar^2 + \varscale^2 G_{\bx}^2 \rp + G_{\bx}^2 \lp \frac{\selclients-1}{\numclients-1} + \frac{\numclients - \selclients}{\numclients-1} \sumin w_i^2 \rp \rb \nn \\
    & \quad + 2 \seff \lrsx \lcb \Lf^2 \sumin \frac{w_i}{\nai_1} \sumikt \aikt  \CExytk (i) + \Lf \mbe \lb \TPhi(\bxt) - \TF(\bxt, \byt) \rb \rcb - \frac{\seff \lrsx}{8} \mbe \norm{\G \TPhi_{1/2\Lf} (\bxt)}^2, \nn
    % \label{eq:lem:NC_C_PCP_Phi_smooth_decay_one_iter_4}
\end{align}
where we use $\G \TPhi_{1/2\Lf} (\bx) = 2 \Lf (\bx - \bbx)$ from \eqref{eq:defn_Phi_TPhi}.
\end{proof}

\begin{proof}[Proof of \cref{lem:NC_C_PCP_consensus_error}]
We use the client update equations for individual iterates in \eqref{eq:client_update_alg_NC_C_minimax_PCP}. To bound $\CExytk (i)$, first we bound the $\bx$-error $\mbe \lnr \bxitk - \bxt \rnr^2$. 
Starting from \eqref{eq_proof:lem:NC_SC_PCP_consensus_error_1}, using \cref{assum:Lips_cont_x}, for $1 \leq k \leq \sync_i$,
\begin{align}
    \frac{1}{\nai_1} \sumikt \aikt \mbe \lnr \bxitk - \bxt \rnr^2 & \leq \frac{\lrcxsq}{\nai_1} \sumikt \aikt \lb \sumijk [\aijk]^2 \lp \localvar^2 + \varscale^2 G_{\bx}^2 \rp + \lp \sumijk \aijk \rp \sumijk \aijk G_{\bx}^2 \rb \nn \\
    & \leq \lrcxsq \lb \localvar^2 \norm{\mbf a_{i,-1}}_2^2 + G_{\bx}^2 \lp \norm{\mbf a_{i,-1}}_1^2 + \varscale^2 \norm{\mbf a_{i,-1}}_2^2 \rp \rb,
    \label{eq_proof:lem:NC_C_PCP_consensus_error_1a}
\end{align}
where we use \eqref{eq_proof:lem:NC_SC_PCP_consensus_error_2}.
Next, we bound $\mbe \lnr \byitk - \byt \rnr^2$, using the bound from \eqref{eq_proof:lem:NC_SC_PCP_consensus_error_4}, to get
\begin{align}
    \frac{1}{\nai_1} \sumikt \aikt \mbe \lnr \byitk - \byt \rnr^2 & \leq \lrcysq \localvar^2 \norm{\mbf a_{i,-1}}_2^2 + 2 \lrcysq \Lf^2 \lp \norm{\mbf a_{i,-1}}_1 + \varscale^2 \alpha \rp \sumikt \aikt \CEytk(i) \nn \\
    & \quad + 2 \lrcysq \lp \norm{\mbf a_{i,-1}}_1^2 + \varscale^2 \norm{\mbf a_{i,-1}}_2^2 \rp \mbe \lnr \Gy f_i ( \Hbxs, \byt ) \rnr^2.
    \label{eq_proof:lem:NC_C_PCP_consensus_error_1b}
\end{align}
Compared to \eqref{eq_proof:lem:NC_SC_PCP_consensus_error_4}, the difference is the presence of $\CEytk(i)$ in \eqref{eq_proof:lem:NC_C_PCP_consensus_error_1b}, rather than $\CExytk(i)$.
Taking a weighted sum over agents in \eqref{eq_proof:lem:NC_C_PCP_consensus_error_1b}, we get
\begin{align}
    \Lf^2 \sumin \frac{w_i}{\nai_1} \sumikt \aikt \CEytk (i) & \leq 2 \lrcysq \Lf^2 \lb \localvar^2 \sumin w_i \norm{\mbf a_{i,-1}}_2^2 + 2 \TMa \lp \heteroscale^2 \mbe \lnr \Gy \TF( \Hbxs, \byt ) \rnr^2 + \hetero^2 \rp \rb.
    \label{eq_proof:lem:NC_C_PCP_consensus_error_2}
\end{align}
where, we choose $\lrcy$ such that $A_m \triangleq 2 \Lf^2 \lrcysq \max_i \nai_1 \lp \norm{\mbf a_{i,-1}}_1 + \varscale^2 \alpha \rp \leq \frac{1}{2}$, and define $\TMa \triangleq \max_i \lp \norm{\mbf a_{i,-1}}_1^2 + \varscale^2 \norm{\mbf a_{i,-1}}_2^2 \rp$.
Next, it follows from $\Lf$-smoothness (\cref{assum:smoothness}) and \cref{lem:smooth} that
\begin{align*}
    \mbe \lnr \Gy \TF \lp \Hbxs, \byt \rp \rnr^2 & \leq 2 \Lf \mbe \lb \TPhi (\Hbxs) - \TF(\Hbxs, \byt) \rb.
\end{align*}
Subsequently, combining \eqref{eq_proof:lem:NC_C_PCP_consensus_error_1a} and \eqref{eq_proof:lem:NC_C_PCP_consensus_error_2}, we get
\begin{align}
    \Lf^2 \sumin \frac{w_i}{\nai_1} \sumikt \aikt \CExytk (i) & \leq 2 \lp \lrcxsq + \lrcysq \rp \Lf^2 \localvar^2 \sumin w_i \norm{\mbf a_{i,-1}}_2^2 + 4 \Lf^2 \TMa \lp \lrcxsq G_{\bx}^2 + \lrcysq \hetero^2 \rp \nn \\
    & \quad + 8 \lrcysq \Lf^3 \TMa \heteroscale^2 \mbe \lb \TPhi (\Hbxs) - \TF(\Hbxs, \byt) \rb. \nn
    % \label{eq_proof:lem:NC_C_PCP_consensus_error_4}
\end{align}
which finishes the proof.
\end{proof}

\begin{proof}[Proof of \cref{lem:NC_C_PCP_local_SGA_concave}]
We define $\by^* (\Hbxs) \in \argmax_\by \TF(\Hbxs, \by)$. Then,
\begin{align}
    & \mbe \norm{\bytp - \by^* (\Hbxs)}^2 \overset{\eqref{eq:server_update_alg_NC_C_minimax_PCP}}{=} \mbe \norm{\byt + \seff \lrsy \bdyt - \by^* (\Hbxs)}^2 \nn \\
    &= \mbe \norm{\byt - \by^* (\Hbxs)}^2 + \seff^2 \lrsysq \mbe \norm{\bdyt}^2 + 2 \seff \lrsy \mbe \lan \byt - \by^* (\Hbxs), \sumin w_i \bhyit \ran. \label{eq_proof:lem:NC_C_PCP_local_SGA_concave_1}
\end{align}
$\mbe \norm{\bdyt}^2$ is bounded in \eqref{eq_proof:lem:NC_SC_PCP_WOR_phi_f_diff_3b}. We only need to further bound $\mbe \lnr \sumin w_i \bhyit \rnr^2$ which appears in \eqref{eq_proof:lem:NC_SC_PCP_WOR_phi_f_diff_3b}.
\begin{align}
    \mbe \lnr \sumin w_i \bhyit \rnr^2 & \leq \mbe \lnr \sumin \frac{w_i}{\nai_1} \sumikt [ \aikt ] \lp \Gy f_i ( \Hbxs, \byitk ) - \Gy f_i ( \Hbxs, \byt ) + \Gy f_i ( \Hbxs, \byt ) \rp \rnr^2 \nn \\
    %%%%%%%%%%%%%%%%%%%%%%%%%%%%%%
    & \leq 2 \Lf^2 \sumin \frac{w_i}{\nai_1} \sumikt [ \aikt ] \CEytk (i) + 2 \lnr \Gy \TF (\Hbxs, \byt) \rnr^2 \tag{Jensen's inequality} \\
    %%%%%%%%%%%%%%%%%%%%%%%%%%%%%%
    & \leq 2 \Lf^2 \sumin \frac{w_i}{\nai_1} \sumikt [ \aikt ] \CEytk (i) + 4 \Lf \mbe \lb \TPhi (\Hbxs) - \TF (\Hbxs, \byt) \rb. \label{eq_proof:lem:NC_C_PCP_local_SGA_concave_2}
\end{align}
Next, we bound the third term in \eqref{eq_proof:lem:NC_C_PCP_local_SGA_concave_1}.
\begin{align}
    & \mbe \lan \byt - \by^* (\Hbxs), \sumin w_i \bhyit \ran = \mbe \lan \byt - \by^* (\Hbxs), \sumin \frac{w_i}{\nai_1} \sumikt [ \aikt ] \Gy f_i ( \Hbxs, \byitk ) \ran \nn \\
    &= \sumin \frac{w_i}{\nai_1} \sumikt [ \aikt ] \mbe \lb \lan \byt - \byitk, \Gy f_i ( \Hbxs, \byitk ) \ran + \lan \byitk - \by^* (\Hbxs), \Gy f_i ( \Hbxs, \byitk ) \ran \rb \nn \\
    & \leq \sumin \frac{w_i}{\nai_1} \sumikt [ \aikt ] \mbe \Bigg[ f_i ( \Hbxs, \byt ) - f_i ( \Hbxs, \byitk ) + \frac{\Lf}{2} \norm{\byt - \byitk}^2 \tag{$\Lf$-smoothness} \\
    & \qquad \qquad \qquad \qquad \qquad \qquad + f_i ( \Hbxs, \byitk ) - f_i ( \Hbxs, \by^* (\Hbxs) ) \Bigg] \tag{Concavity in $\by$} \\
    &= \frac{\Lf}{2} \sumin \frac{w_i}{\nai_1} \sumikt [ \aikt ] \CEytk (i) - \mbe \lb \TPhi(\Hbxs) - \TF(\Hbxs, \byt) \rb. \label{eq_proof:lem:NC_C_PCP_local_SGA_concave_3}
\end{align}
Substituting \eqref{eq_proof:lem:NC_SC_PCP_WOR_phi_f_diff_3b}, \eqref{eq_proof:lem:NC_C_PCP_local_SGA_concave_2}, \eqref{eq_proof:lem:NC_C_PCP_local_SGA_concave_3} in \eqref{eq_proof:lem:NC_C_PCP_local_SGA_concave_1}, we get
\begin{align}
    & \mbe \norm{\bytp - \by^* (\Hbxs)}^2 \nn \\
    % & \leq \mbe \norm{\byt - \by^* (\Hbxs)}^2 + \seff \lrsy \Lf \sumin \frac{w_i}{\nai_1} \sumikt [ \aikt ] \CEytk (i) - 2 \seff \lrsy \mbe \lb \TPhi(\Hbxs) - \TF(\Hbxs, \byt) \rb \nn \\
    % & \quad + \seff^2 \lrsysq \frac{\numclients (\selclients-1)}{\selclients (\numclients-1)} \lb 2 \Lf^2 \sumin \frac{w_i}{\nai_1} \sumikt [ \aikt ] \CEytk (i) + 4 \Lf \mbe \lb \TPhi (\Hbxs) - F (\Hbxs, \byt) \rb \rb \nn \\
    % & \quad + \seff^2 \lrsysq \lb \frac{\localvar^2 \numclients}{\selclients} \sumin \frac{w_i^{{2}} \nai_2^2}{\nai_1^2} + \frac{2 \hetero^2 \numclients}{\selclients} \lp \frac{\numclients - \selclients}{\numclients-1} \max_i w_i + \varscale^2 \max_i \frac{w_i \nai_2^2}{\nai_1^2} \rp \rb \nn \\
    % & \quad + \seff^2 \lrsysq \lp \frac{\numclients - \selclients}{\numclients-1} \max_i w_i + \varscale^2 \max_{i,k} \frac{w_i \aikt}{\nai_1} \rp \frac{2 \numclients \Lf^2}{\selclients} \sumin \frac{w_i}{\nai_1} \sumikt \aikt \CExytk (i) \nn \\
    % & \quad + \seff^2 \lrsysq \frac{4 \heteroscale^2 \Lf \numclients}{\selclients} \lp \frac{\numclients - \selclients}{\numclients-1} \max_i w_i + \varscale^2 \max_i \frac{w_i \nai_2^2}{\nai_1^2} \rp \mbe \lb \TPhi (\Hbxs) - F (\Hbxs, \byt) \rb \nn \\
    %%%%%%%%%%%%%%%%%%%%%%%%%%%%%%%%%%%%
    & \leq \mbe \norm{\byt - \by^* (\Hbxs)}^2 + \seff^2 \lrsysq \lb \frac{\localvar^2 \numclients}{\selclients} \sumin \frac{w_i^{{2}} \nai_2^2}{\nai_1^2} + \frac{2 \hetero^2 \numclients}{\selclients} \lp \frac{\numclients - \selclients}{\numclients-1} \max_i w_i + \varscale^2 \max_i \frac{w_i \nai_2^2}{\nai_1^2} \rp \rb \nn \\
    & \quad - 2 \seff \lrsy \lp 1 - 2 \seff \lrsy \Lf \frac{\numclients}{\selclients} \lb \frac{\selclients - 1}{\numclients-1} + \heteroscale^2 \lp \frac{\numclients - \selclients}{\numclients-1} \max_i w_i + \varscale^2 \max_i \frac{w_i \nai_2^2}{\nai_1^2} \rp \rb \rp \mbe \lb \TPhi(\Hbxs) - \TF(\Hbxs, \byt) \rb \nn \\
    & \quad + \seff \lrsy \Lf \lb 1 + 2 \seff \lrsy \Lf \frac{\numclients}{\selclients} \lp \frac{\selclients - 1}{\numclients-1} + \frac{\numclients - \selclients}{\numclients-1} \max_i w_i + \varscale^2 \max_{i,k} \frac{w_i \aikt}{\nai_1} \rp \rb \sumin \frac{w_i}{\nai_1} \sumikt [ \aikt ] \CExytk (i), \label{eq_proof:lem:NC_C_PCP_local_SGA_concave_4}
\end{align}
since $\CEytk \leq \CExytk$. Using the bound on $\CExytk$ from \cref{lem:NC_C_PCP_consensus_error},
% \begin{align}
%     & \sumin \frac{w_i}{\nai_1} \sumikt \aikt \mbe \lnr \byitk - \byt \rnr^2 \nn \\
%     & \leq 2 \lrcysq \localvar^2 \sumin w_i \norm{\mbf a_{i,-1}}_2^2 + 4 \lrcysq \TMa \lb 2 \Lf \heteroscale^2 \mbe \lp \TPhi(\Hbxs) - \TF(\Hbxs, \byt) \rp + \hetero^2 \rb. \label{eq_proof:lem:NC_C_PCP_local_SGA_concave_5}
% \end{align}
\begin{align}
    \sumin \frac{w_i}{\nai_1} \sumikt \aikt \CExytk (i) & \leq 2 \lp \lrcxsq + \lrcysq \rp \localvar^2 \sumin w_i \norm{\mbf a_{i,-1}}_2^2 + 4 \TMa \lp \lrcxsq G_{\bx}^2 + \lrcysq \hetero^2 \rp \nn \\
    & \quad + 8 \lrcysq \Lf \TMa \heteroscale^2 \mbe \lb \TPhi (\Hbxs) - \TF(\Hbxs, \byt) \rb. 
    \label{eq_proof:lem:NC_C_PCP_local_SGA_concave_5}
\end{align}
We substitute \eqref{eq_proof:lem:NC_C_PCP_local_SGA_concave_5} in \eqref{eq_proof:lem:NC_C_PCP_local_SGA_concave_4}, and simplify the terms using the choice of $\lrsy, \lrcy$ to get
\begin{align*}
    & \mbe \norm{\bytp - \by^* (\Hbxs)}^2 \nn \\
    & \leq \mbe \norm{\byt - \by^* (\Hbxs)}^2 + \seff^2 \lrsysq \lb \frac{\localvar^2 \numclients}{\selclients} \sumin \frac{w_i^{{2}} \nai_2^2}{\nai_1^2} + \frac{2 \hetero^2 \numclients}{\selclients} \lp \frac{\numclients - \selclients}{\numclients-1} \max_i w_i + \varscale^2 \max_i \frac{w_i \nai_2^2}{\nai_1^2} \rp \rb \nn \\
    & \quad - \seff \lrsy \mbe \lb \TPhi(\Hbxs) - \TF(\Hbxs, \byt) \rb + 4 \seff \lrsy \Lf ( \lrcxsq + \lrcysq) \lb \localvar^2 \sumin w_i \norm{\mbf a_{i,-1}}_2^2 + 2 \TMa (G_\bx^2 + \hetero^2) \rb.
\end{align*}
using $\lrsy, \lrcy$ that satisfy
\begin{align*}
    2 \seff \lrsy \Lf \frac{\numclients}{\selclients} \lp \frac{\selclients - 1}{\numclients-1} + \frac{\numclients - \selclients}{\numclients-1} \max_i w_i + \varscale^2 \max_{i,k} \frac{w_i \aikt}{\nai_1} \rp & \leq 1, \\
    2 \seff \lrsy \Lf \frac{\numclients}{\selclients} \lb \frac{\selclients - 1}{\numclients-1} + \heteroscale^2 \lp \frac{\numclients - \selclients}{\numclients-1} \max_i w_i + \varscale^2 \max_i \frac{w_i \nai_2^2}{\nai_1^2} \rp \rb & \leq \frac{1}{4}, \\
    2 \Lf \frac{\numclients}{\selclients} \lb 8 \lrcysq \TMa \Lf \heteroscale^2 \rb & \leq \frac{1}{4}
\end{align*}
Then the coefficient of $\mbe \lb \TPhi(\Hbxs) - \TF(\Hbxs, \byt) \rb$ can we bounded by $- \seff \lrsy$. 
Consequently, by rearranging the terms and summing over $t$, we get the result.
\begin{align}
    & \frac{1}{S} \sum_{t = sS}^{(s+1) S - 1} \mbe \lb \TPhi(\Hbxs) - \TF(\Hbxs, \byt) \rb \nn \\
    & \leq \frac{\mbe \norm{\by^{sS} - \by^* (\Hbxs)}^2}{\seff \lrsy S} + \seff \lrsy \frac{\numclients}{\selclients} \lb \localvar^2 \sumin \frac{w_i^{{2}} \nai_2^2}{\nai_1^2} + 2 \hetero^2 \lp \frac{\numclients - \selclients}{\numclients-1} \max_i w_i + \varscale^2 \max_i \frac{w_i \nai_2^2}{\nai_1^2} \rp \rb \nn \\
    & \quad + 4 \Lf ( \lrcxsq + \lrcysq) \lb \localvar^2 \sumin w_i \norm{\mbf a_{i,-1}}_2^2 + 2 \TMa (G_\bx^2 + \hetero^2) \rb. \nn
\end{align}
\end{proof}

\begin{proof}[Proof of \cref{lem:NC_C_PCP_Phi_f_diff}]
Let $t = sS, sS+1, \dots, (s+1) S - 1$, where $k$ is a positive integer. Let $\Hbxs$ is the latest snapshot iterate for the $\by$-update in \cref{alg_NC_minimax}-\fedsgdaplus \ . Then
\begin{align}
    & \mbe \lb \TPhi(\bxt) - \TF(\bxt, \byt) \rb \nn \\
    &= \mbe \lb \TF(\bxt, \by^*(\bxt)) - \TF(\Hbxs, \by^*(\Hbxs)) + \TF(\Hbxs, \by^*(\Hbxs)) - \TF(\Hbxs, \byt) + \TF(\Hbxs, \byt) - \TF(\bxt, \byt) \rb \nn \\
    & \leq \mbe \lb \TF(\bxt, \by^*(\bxt)) - \TF(\Hbxs, \by^*(\bxt)) \rb + \mbe \lb \TF(\Hbxs, \by^*(\Hbxs)) - \TF(\Hbxs, \byt) \rb + G_{\bx} \mbe \norm{\bxt - \Hbxs} \nn \\
    & \leq 2 G_{\bx} \mbe \norm{\bxt - \Hbxs} + \mbe \lb \TPhi(\Hbxs) - \TF(\Hbxs, \byt) \rb. \label{eq_proof:lem:NC_C_PCP_Phi_f_diff_1}
\end{align}
where, $\by^*(\cdot) \in \argmax_\by \TF(\cdot, \by)$ and \eqref{eq_proof:lem:NC_C_PCP_Phi_f_diff_1} follows from $G_{\bx}$-Lipschitz continuity of $F(\cdot, \by)$ (\cref{assum:Lips_cont_x}). Next, we see that
\begin{align}
    \mbe \norm{\Hbxs - \bxt} & \leq \sqrt{\mbe \norm{\Hbxs - \bxt}^2} \tag{Jensen's inequality} \nn \\
    & \overset{\eqref{eq:server_update_alg_NC_C_minimax_PCP}}{=} \sqrt{\mbe \norm{\seff \lrsx \sum_{t'=sS}^{t-1} \sum_{i \in \mc C^{(t')}} \twi \bd_{\mbf x, i}^{(t')}}^2} \nn \\
    & \leq \seff \lrsx \sqrt{(S-1) \sum_{t'=sS}^{t-1} \mbe \norm{\sum_{i \in \mc C^{(t')}} \twi \bd_{\mbf x, i}^{(t')}}^2} \nn \\
    & \leq \seff \lrsx (S-1) \sqrt{\frac{\numclients}{\selclients}} \sqrt{\sumin \frac{w_i^2 \nai_2^2}{\nai_1^2} \lp \localvar^2 + \varscale^2 G_{\bx}^2 \rp + G_{\bx}^2 \lp \frac{\selclients-1}{\numclients-1} + \frac{\numclients - \selclients}{\numclients-1} \sumin w_i^2 \rp}. \tag{from \eqref{eq:lem:NC_C_PCP_WOR_agg_grad_norm}}
\end{align}
Using this bound in \eqref{eq_proof:lem:NC_C_PCP_Phi_f_diff_1}, and summing over $t$, we get
\begin{align}
    & \frac{1}{S} \sum_{t=sS}^{(s+1)S-1} \mbe \lb \TPhi(\bxt) - \TF(\bxt, \byt) \rb \leq \frac{1}{S} \sum_{t=sS}^{(s+1)S-1} \mbe \lb \TPhi(\Hbxs) - \TF(\Hbxs, \byt) \rb \nn \\
    & \qquad + 2 \seff \lrsx G_{\bx} (S-1) \sqrt{\frac{\numclients}{\selclients}} \sqrt{\sumin \frac{w_i^2 \nai_2^2}{\nai_1^2} \lp \localvar^2 + \varscale^2 G_{\bx}^2 \rp + G_{\bx}^2 \lp \frac{\selclients-1}{\numclients-1} + \frac{\numclients - \selclients}{\numclients-1} \sumin w_i^2 \rp}. \nn
    % \label{eq_proof:lem:NC_C_PCP_Phi_f_diff_2}
\end{align}
Finally, summing
% \eqref{eq_proof:lem:NC_C_PCP_Phi_f_diff_2} 
over $s = 0$ to $T/S - 1$ we get the result.
% \begin{align*}
%     & \frac{1}{T} \sum_{s=0}^{T/S-1} \sum_{t=sS}^{(s+1)S-1} \mbe \lb \TPhi(\bxt) - \TF(\bxt, \byt) \rb \leq 2 \seff \lrsx G_{\bx} (S-1) \sqrt{G_{\bx}^2 + \sumin \frac{w_i^2 \nai_2^2}{\nai_1^2} \lp \localvar^2 + \varscale^2 G_{\bx}^2 \rp} \nn \\
%     & \quad + \frac{4 R}{\seff \lrsy S} + \seff \lrsy \lb \localvar^2 \sumin \frac{w_i^2 \nai_2^2}{\nai_1^2} + 2 \hetero^2 \varscale^2 \max_i \frac{w_i \nai_2^2}{\nai_1^2} \rb \nn \\
%     & \quad + 4 \lrcysq \Lf \lb \localvar^2 \sumin w_i \lp \nai_2^2 - [\alpha^{(t,\sync_i-1)}_{i}]^2 \rp + 2 \hetero^2 (\varscale^2+1) \max_i \nai_1 \lp \nai_1 - [\alpha^{(t,\sync_i-1)}_{i}] \rp \rb. \nn
%     % \label{eq_proof:lem:NC_C_PCP_Phi_f_diff_3}
% \end{align*}
\end{proof}

\subsection{Extending the result for Nonconvex One-Point-Concave (NC-1PC) Functions (\texorpdfstring{\cref{thm:NC_1PC}}{Theorem 3})}
\label{app:NC_1PC}
Carefully revisting the proof of \cref{thm:NC_C}, we notice that \cref{lem:NC_C_PCP_Phi_smooth_decay_one_iter} and \cref{lem:NC_C_PCP_consensus_error} do not rely on the concavity assumption. \cref{lem:NC_C_PCP_local_SGA_concave} does use concavity of local functions $\{ f_i \}$. However, it is only needed to derive \eqref{eq_proof:lem:NC_C_PCP_local_SGA_concave_3}. Further, this only requires concavity of local functions at a global point $\by^*(\widehat{\bx}^{(s)})$. Therefore, as mentioned earlier in \cref{rem:NC_C_localSGD}, it holds even for NC-1PC functions. This is an independent result in itself, since we have extended the existing convergence result of local stochastic gradient method for convex minimization (concave maximization) problems, to a much more general one-point-convex minimization (or one-point-convex maximization) problem. Therefore, we restate it here for the more general case.

\begin{lemma}[Local SG updates for One-Point-Concave Maximization]
\label{lem:NC_1PC_PCP_local_SGA_concave}
Suppose the local loss functions $\{ f_i \}$ satisfy Assumptions \ref{assum:smoothness}, \ref{assum:bdd_var}, \ref{assum:bdd_hetero}, \ref{assum:Lips_cont_x}. Suppose for all $\bx$, all the $f_i$'s satisfy \cref{assum:1pc_y} at a common global minimizer $\by^* (\bx)$, and that $\normb{\byt}^2 \leq R$ for all $t$. If we run \cref{alg_NC_minimax}-\fedsgdaplus \  with same conditions on the client and server step-sizes $\lrcy, \lrsy$ respectively, as in \cref{lem:NC_C_PCP_local_SGA_concave}, then the iterates generated by \cref{alg_NC_minimax}-\fedsgdaplus \  also satisfy the bound in \cref{lem:NC_C_PCP_local_SGA_concave}.
% \begin{align}
%     & \frac{1}{S} \sum_{t = sS}^{(s+1) S - 1} \mbe \lb \TPhi(\Hbxs) - \TF(\Hbxs, \byt) \rb \nn \\
%     & \leq \frac{4 R}{\seff \lrsy S} + \seff \lrsy \frac{\numclients}{\selclients} \lb \localvar^2 \sumin \frac{w_i^{{2}} \nai_2^2}{\nai_1^2} + 2 \hetero^2 \lp \frac{\numclients - \selclients}{\numclients-1} \max_i w_i + \varscale^2 \max_i \frac{w_i \nai_2^2}{\nai_1^2} \rp \rb \nn \\
%     & \quad + 4 \Lf ( \lrcxsq + \lrcysq) \lb \localvar^2 \sumin w_i \norm{\mbf a_{i,-1}}_2^2 + 2 \TMa (G_\bx^2 + \hetero^2) \rb, \nn
% \end{align}
% where $\TMa \triangleq \max_i \lp \norm{\mbf a_{i,-1}}_1^2 + \varscale^2 \norm{\mbf a_{i,-1}}_2^2 \rp$.
\end{lemma}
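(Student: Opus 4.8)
The plan is to show that the proof of \cref{lem:NC_C_PCP_local_SGA_concave} invokes global concavity in exactly one place, and that this one place is covered verbatim by \cref{assum:1pc_y} once we exploit the hypothesis that all $f_i$ share the common maximizer $\by^*(\bx)$. First I would re-trace the chain \eqref{eq_proof:lem:NC_C_PCP_local_SGA_concave_1}--\eqref{eq_proof:lem:NC_C_PCP_local_SGA_concave_4} and flag every inequality that could secretly use concavity. The decomposition \eqref{eq_proof:lem:NC_C_PCP_local_SGA_concave_1} is an identity; its second term is controlled by \eqref{eq_proof:lem:NC_SC_PCP_WOR_phi_f_diff_3b}, which is concavity-free; its first term is bounded using \eqref{eq_proof:lem:NC_C_PCP_local_SGA_concave_2}, whose only delicate step is $\mbe\|\Gy\TF(\Hbxs,\byt)\|^2 \le 2\Lf\,\mbe[\TPhi(\Hbxs)-\TF(\Hbxs,\byt)]$; this is \cref{lem:smooth} applied to the $\Lf$-smooth function $-\TF(\Hbxs,\cdot)$, which is bounded below by $-\TPhi(\Hbxs)$ because $\by^*(\Hbxs)$ attains $\max_\by\TF(\Hbxs,\by)$, so no concavity enters. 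The same is true of \cref{lem:NC_C_PCP_consensus_error} and of \cref{lem:NC_C_PCP_Phi_smooth_decay_one_iter}, both of which only ever use $\Lf$-smoothness (including the ``other side of the descent lemma'' bound $\lan\nabla_x g(\bx),\bx'-\bx\ran \le g(\bx')-g(\bx)+\tfrac{\Lf}{2}\|\bx'-\bx\|^2$, valid for any smooth $g$). Hence the entire scaffolding is available unchanged.

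The single concavity-dependent step is the inequality tagged ``Concavity in $\by$'' inside \eqref{eq_proof:lem:NC_C_PCP_local_SGA_concave_3}, i.e. $\lan \Gy f_i(\Hbxs,\byitk),\byitk-\by^*(\Hbxs)\ran \le f_i(\Hbxs,\byitk)-f_i(\Hbxs,\by^*(\Hbxs))$. Under \cref{assum:1pc_y}, $f_i(\Hbxs,\cdot)$ is one-point-concave at the common point $\by^*(\Hbxs)\in\argmax_\by f_i(\Hbxs,\by)$, which is precisely this inequality, so the step survives with no modification. The only further bookkeeping is the telescoping in \eqref{eq_proof:lem:NC_C_PCP_local_SGA_concave_3}: after multiplying by $w_i/\nai_1$, using $\sumikt \aikt=\nai_1$, and summing over $i$, the ``constant-in-$k$'' contribution is $\sumin w_i\bigl(f_i(\Hbxs,\byt)-f_i(\Hbxs,\by^*(\Hbxs))\bigr) = \TF(\Hbxs,\byt)-\sumin w_i f_i(\Hbxs,\by^*(\Hbxs))$. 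Since $\by^*(\Hbxs)$ maximizes \emph{every} $f_i(\Hbxs,\cdot)$, it also maximizes the weighted average, so $\sumin w_i f_i(\Hbxs,\by^*(\Hbxs)) = \max_\by\TF(\Hbxs,\by) = \TPhi(\Hbxs)$, and the expression equals $-\bigl[\TPhi(\Hbxs)-\TF(\Hbxs,\byt)\bigr]$, reproducing \eqref{eq_proof:lem:NC_C_PCP_local_SGA_concave_3} exactly.

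With \eqref{eq_proof:lem:NC_C_PCP_local_SGA_concave_3} reproduced, the remainder is word-for-word the proof of \cref{lem:NC_C_PCP_local_SGA_concave}: substitute \eqref{eq_proof:lem:NC_SC_PCP_WOR_phi_f_diff_3b}, \eqref{eq_proof:lem:NC_C_PCP_local_SGA_concave_2} and \eqref{eq_proof:lem:NC_C_PCP_local_SGA_concave_3} into \eqref{eq_proof:lem:NC_C_PCP_local_SGA_concave_1}, bound the drift $\sumin\frac{w_i}{\nai_1}\sumikt\aikt\CExytk(i)$ via \cref{lem:NC_C_PCP_consensus_error}, impose the same step-size conditions on $(\lrcy,\lrsy)$ so that the coefficient of $\mbe[\TPhi(\Hbxs)-\TF(\Hbxs,\byt)]$ is at most $-\seff\lrsy$, then rearrange and telescope over $t\in\{sS,\dots,(s+1)S-1\}$ using $\|\byt\|^2\le R$. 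This yields exactly the bound of \cref{lem:NC_C_PCP_local_SGA_concave}, which is the assertion of \cref{lem:NC_1PC_PCP_local_SGA_concave}.

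There is no substantive obstacle — the work is verification, not new analysis. The point worth stating carefully is \emph{why} one-point-concavity at a single common point suffices: it supplies the gradient inequality at $\by^*(\Hbxs)$ needed in \eqref{eq_proof:lem:NC_C_PCP_local_SGA_concave_3}, and its ``commonness'' is what lets $\sumin w_i f_i(\Hbxs,\by^*(\Hbxs))$ collapse to $\TPhi(\Hbxs)$; without a shared maximizer the telescoping would leave a residual nonpositive gap $\TPhi(\Hbxs)-\sumin w_i\max_\by f_i(\Hbxs,\by)$ one would have to track. I would also double-check that \cref{lem:smooth} is stated with only a lower-boundedness hypothesis (it is, ``bounded below by $g^*$''), so that its use on $-\TF(\Hbxs,\cdot)$ is legitimate. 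As the authors note, the resulting statement is of independent interest: a convergence guarantee for Local SGD on one-point-convex objectives (equivalently, Local SGA on one-point-concave ones) with heterogeneous local steps, matching the rate known in the convex case.
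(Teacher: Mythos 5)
Your proposal is correct and follows essentially the same route as the paper, which likewise observes that concavity enters the proof of \cref{lem:NC_C_PCP_local_SGA_concave} only through the step tagged ``Concavity in $\by$'' in \eqref{eq_proof:lem:NC_C_PCP_local_SGA_concave_3}, and that one-point-concavity of each $f_i$ at the common maximizer $\by^*(\Hbxs)$ supplies exactly that inequality while every other ingredient (including \cref{lem:smooth}, which needs only lower-boundedness) is concavity-free. Your added remark on why the \emph{common} maximizer is needed for $\sumin w_i f_i(\Hbxs,\by^*(\Hbxs))$ to collapse to $\TPhi(\Hbxs)$ is a point the paper leaves implicit, but it does not change the argument.
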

Next, \cref{lem:NC_C_PCP_Phi_f_diff} also holds irrespective of concavity. Therefore, the resulting convergence result in \cref{thm:NC_C} for nonconvex-concave minimax problems holds for a much larger class of functions. We restate the modified theorem statement briefly.

\begin{theorem*}
Suppose the local loss functions $\{ f_i \}$ satisfy Assumptions \ref{assum:smoothness}, \ref{assum:bdd_var}, \ref{assum:bdd_hetero}, \ref{assum:Lips_cont_x}. Suppose for all $\bx$, all the $f_i$'s satisfy \cref{assum:1pc_y} at a common global minimizer $\by^* (\bx)$, and that $\normb{\byt}^2 \leq R$ for all $t$. 
If we run \cref{alg_NC_minimax}-\fedsgdaplus \  with the same conditions on the client and server step-sizes $\lrcy, \lrsy$ respectively, as in \cref{thm:NC_C_appendix}, then the iterates generated by \cref{alg_NC_minimax}-\fedsgdaplus \  also satisfy the bound in \cref{thm:NC_C_appendix}.
\end{theorem*}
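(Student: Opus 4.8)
The plan is to show that the proof of \cref{thm:NC_C_appendix} transfers essentially verbatim once every appeal to \cref{assum:concavity} is re-examined and found to survive under one-point-concavity at a common maximizer. First I would lay out the dependency structure: the rate in \cref{thm:NC_C_appendix} is assembled by summing the one-step envelope decay of \cref{lem:NC_C_PCP_Phi_smooth_decay_one_iter} over $t$, then substituting in turn the consensus bound of \cref{lem:NC_C_PCP_consensus_error}, the inner-maximization bound of \cref{lem:NC_C_PCP_local_SGA_concave}, and the duality-gap bound of \cref{lem:NC_C_PCP_Phi_f_diff}, and finally tuning $S,\lrsx,\lrsy,\lrcx,\lrcy$. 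So it suffices to certify that each of these four lemmas continues to hold under \cref{assum:smoothness}, \cref{assum:bdd_var}, \cref{assum:bdd_hetero}, \cref{assum:Lips_cont_x} together with \cref{assum:1pc_y} at a common maximizer $\by^*(\cdot)$, in place of \cref{assum:concavity}.

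For \cref{lem:NC_C_PCP_Phi_smooth_decay_one_iter}, the proof uses only $\Lf$-smoothness, Jensen's inequality, the variance/Lipschitz bound \eqref{eq:lem:NC_C_PCP_WOR_agg_grad_norm}, and the defining property of the proximal point $\bbxt$; concavity never enters, so this lemma is inherited unchanged. The same is true of \cref{lem:NC_C_PCP_Phi_f_diff}, which relies only on $G_{\bx}$-Lipschitz continuity in $\bx$ and the identity $\TF(\Hbxs,\by^*(\Hbxs))=\TPhi(\Hbxs)$. In \cref{lem:NC_C_PCP_consensus_error} the only place concavity could intervene is the estimate $\mbe\lnr\Gy\TF(\Hbxs,\byt)\rnr^2\le 2\Lf\,\mbe\lb\TPhi(\Hbxs)-\TF(\Hbxs,\byt)\rb$, obtained from \cref{lem:smooth}; that lemma requires only that $-\TF(\Hbxs,\cdot)$ be $\Lf$-smooth and bounded below, and boundedness below is exactly what the NC-1PC hypothesis supplies through the existence of the common global maximizer $\by^*(\Hbxs)$. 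Hence \cref{lem:NC_C_PCP_consensus_error} also carries over unchanged.

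The one genuine use of concavity is in \cref{lem:NC_C_PCP_local_SGA_concave}, at \eqref{eq_proof:lem:NC_C_PCP_local_SGA_concave_3}: there, after splitting $\lan\byt-\by^*(\Hbxs),\sumin w_i\bhyit\ran$ and treating the $\lan\byt-\byitk,\cdot\ran$ contribution by $\Lf$-smoothness, one bounds $\lan\byitk-\by^*(\Hbxs),\Gy f_i(\Hbxs,\byitk)\ran\le f_i(\Hbxs,\byitk)-f_i(\Hbxs,\by^*(\Hbxs))$, which is precisely the inequality of \cref{assum:1pc_y} applied with $\bx=\Hbxs$ and $\by=\byitk$. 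Taking the $w_i$-weighted sum and using that $\by^*(\Hbxs)$ simultaneously maximizes every $f_i(\Hbxs,\cdot)$ — hence $\TF(\Hbxs,\cdot)$ — the right-hand side collapses to $\TF(\Hbxs,\byitk)-\TPhi(\Hbxs)$, reproducing \eqref{eq_proof:lem:NC_C_PCP_local_SGA_concave_3} word for word; the remainder of that proof is concavity-free, so its conclusion holds for NC-1PC functions, which is exactly \cref{lem:NC_1PC_PCP_local_SGA_concave}.

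With all four ingredients re-established, summing, substituting, and optimizing the parameters exactly as in the proof of \cref{thm:NC_C_appendix} yields the claimed bound, and specializing to $w_i=p_i=1/n$, $\sync_i=\sync$ gives the $\mco(1/(\numclients\epsilon^8))$ per-client gradient complexity and $T/\sync=\mco(1/\epsilon^4)$ communication rounds asserted in \cref{thm:NC_1PC}. I expect the main obstacle to be the bookkeeping around the phrase ``common global minimizer'': one must verify that every occurrence of $\by^*(\bx)\in\argmax_\by\TF(\bx,\by)$ in the original argument can be taken to be the single point furnished by \cref{assum:1pc_y} — legitimate because a maximizer of each $f_i(\bx,\cdot)$ is a maximizer of their convex combination — and that no estimate tacitly uses concavity of $\TF(\bx,\cdot)$ at a point other than this $\by^*(\bx)$. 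Everything beyond that verification is mechanical inheritance from the NC-C proof.
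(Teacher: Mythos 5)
Your proposal is correct and follows essentially the same route as the paper's own argument in \cref{app:NC_1PC}: it verifies that \cref{lem:NC_C_PCP_Phi_smooth_decay_one_iter}, \cref{lem:NC_C_PCP_consensus_error}, and \cref{lem:NC_C_PCP_Phi_f_diff} never invoke concavity, and that the single use of concavity in \cref{lem:NC_C_PCP_local_SGA_concave}, at \eqref{eq_proof:lem:NC_C_PCP_local_SGA_concave_3}, needs only \cref{assum:1pc_y} at the common maximizer $\by^*(\Hbxs)$ (which also maximizes the $w_i$-weighted sum), after which the parameter tuning of \cref{thm:NC_C_appendix} is inherited verbatim. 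Your explicit observation that the consensus lemma relies on \cref{lem:smooth} (smoothness plus boundedness below) rather than concavity is a slightly more careful accounting than the paper's, but it is the same proof.
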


\begin{remark}
Again, choosing client weights $\{ w_i \}$ the same as in the original global objective $\{ p_i \}$, we get convergence in terms of the original objective $F$.
\end{remark}
% \input{Appendix/NC_1PC}
% \newpage

\section{Additional Experiments}
\label{app:add_exp}

For communicating parameters and related information amongst the clients, ethernet connections were used.
Our algorithm was implemented using parallel training tools in PyTorch 1.0.0 and Python 3.6.3.

For both robust NN Training and fair classification experiments, we use batch-size of $32$ in all the algorithms. Momentum parameter $0.9$ is used only in Momentum Local SGDA(+).

% \begin{figure}[ht]
%      \centering
%      \begin{subfigure}{0.3\textwidth}
%          \centering
%          \includegraphics[width=0.25\textwidth]{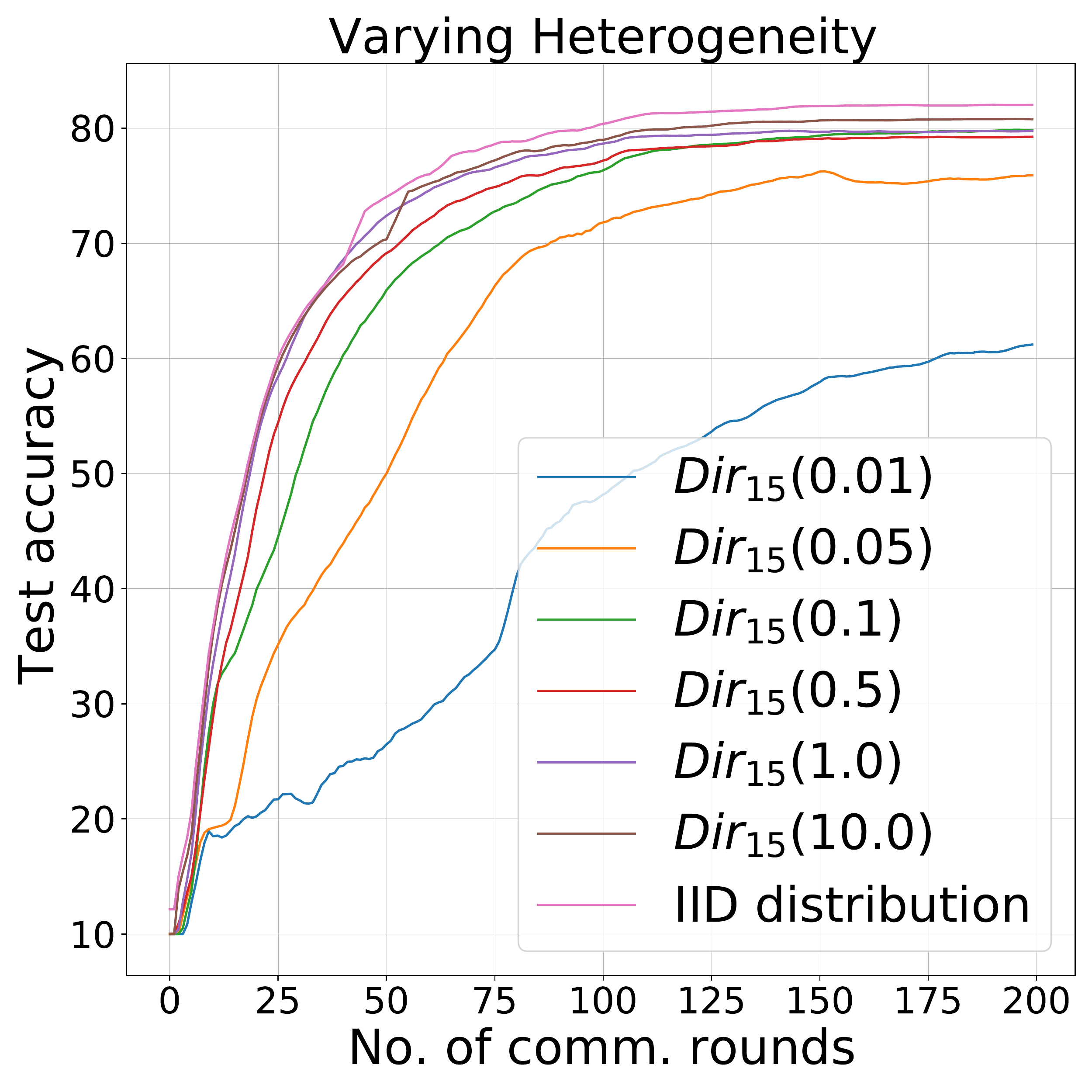}
%         \caption{\label{fig:robustNN_vary_hetero}}
%      \end{subfigure}
%      ~
%      \begin{subfigure}{0.3\textwidth}
%          \centering
%          \includegraphics[width=0.25\textwidth]{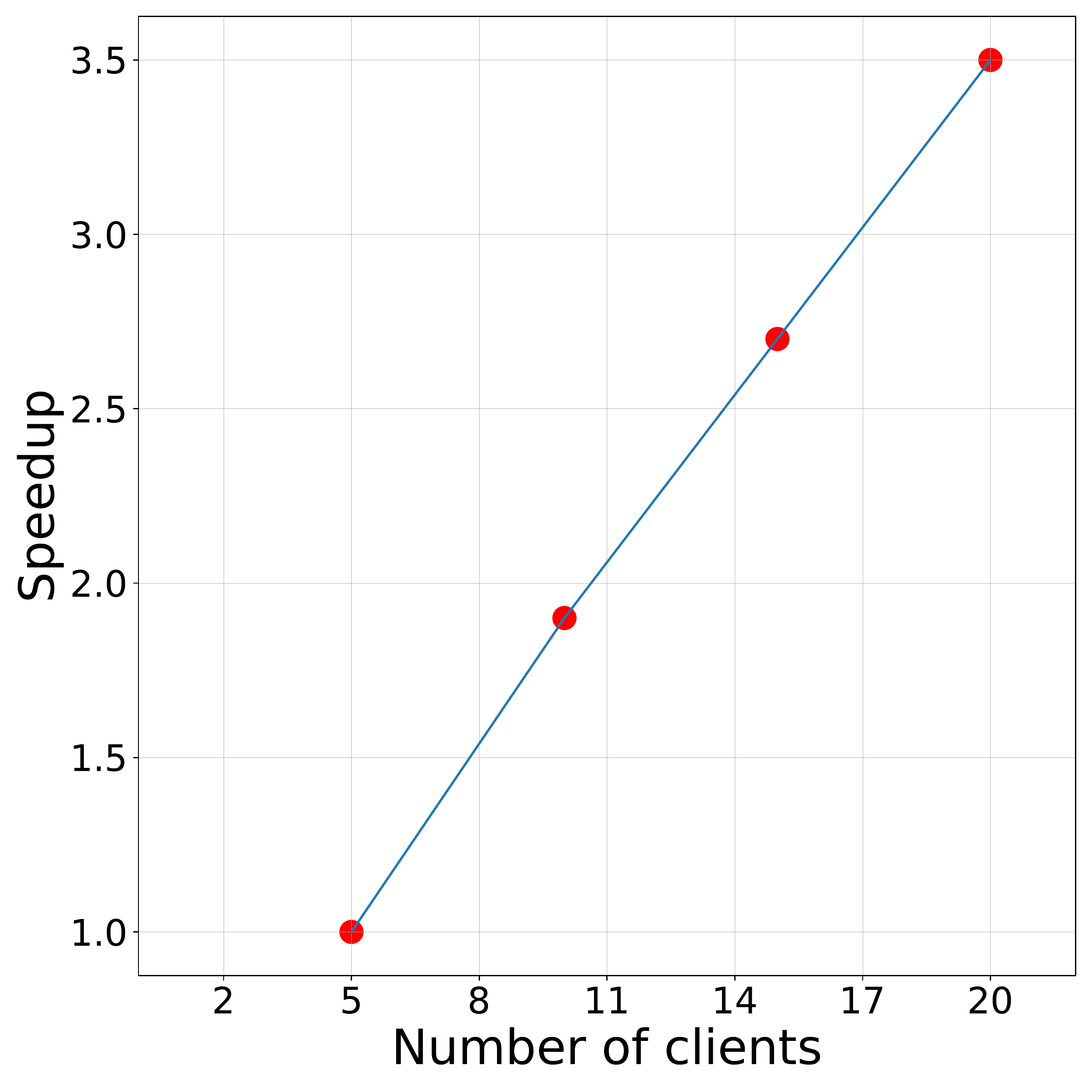}
%          \caption{\label{fig:robustNN_linear_speedup}}
%      \end{subfigure}
%      \caption{Comparison of the effects of $\sync$ on the performance of Local SGDA and Momentum Local SGDA algorithms, for the robust NN training problem on the FashionMNIST dataset, with the VGG11 model. The figures show the robust test loss and robust test accuracy.}
% \end{figure}

\paragraph{Robust NN Training}
Here we further explore performance of \fedsgdaplus \ on the robust NN training problem. We use VGG-11 model to classify CIFAR10 dataset. In \cref{fig:robustNN_vary_hetero}, we demonstrate the effect of increasing data heterogeneity across clients, whle in \cref{fig:robustNN_linear_speedup} we show the advantage of using multiple clients for the federated minimax problem. With $k$-fold increase in $\numclients$, we observe an almost $k$-fold drop in the number of communication rounds needed to reach a target test accuracy ($70\%$ here.).

\begin{figure}[h]
    \centering
    \includegraphics[width=0.35\textwidth]{figures/robustNN/varying_hetero.pdf}
    \caption{Effect of inter-client data heterogeneity (quantified by $\alpha$) on the performance of \fedsgdaplus. \label{fig:robustNN_vary_hetero}}
\end{figure}

\begin{figure}[h]
    \centering
    \includegraphics[width=0.25\textwidth]{figures/robustNN/varying_clients.pdf}
    \caption{Effect of increasing client-set on the performance of \fedsgdaplus \ in a robust NN training task. \label{fig:robustNN_linear_speedup}}
\end{figure}

\paragraph{Fair Classification}
We also demonstrate the impact of partial client participation in the fair classification problem. \cref{fig:fairclass_partial} complements \cref{fig:fairclass_partial} in the main paper, evaluating fairness of a VGG11 model on CIFAR10 dataset. We have plotted the test accuracy of the model over the worst distribution. With an increasing number of participating clients, the performance consistently improves.

\begin{figure}[t]
    \centering
    \includegraphics[width=0.45\textwidth]{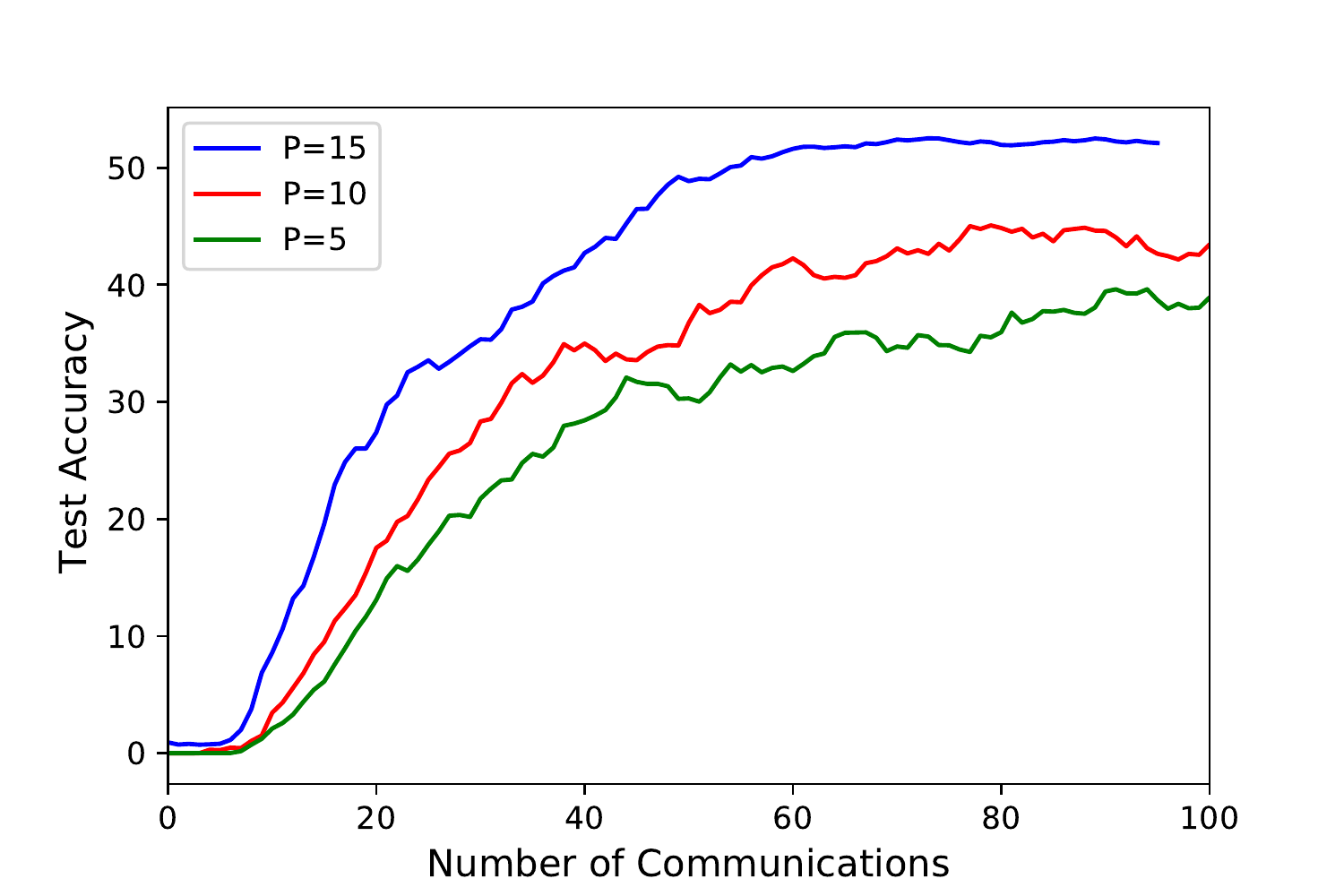}
    \caption{Effect of partial client participation on the performance of \fedsgda \ in a fair image classification task. \label{fig:fairclass_partial}}
\end{figure}
	
\end{document}